\newif \ificlr
\newif \ifappendixon
\newif \ifanalysisonly
\newif \ifcomments
\newcommand{\CB}[1]{{\textcolor{blue}{CB: #1}}}
\newcommand{\LL}[1]{{\textcolor{olive}{LL: #1}}}
\newcommand{\IG}[1]{{\textcolor{red}{IG: #1}}}
\newcommand{\CB}[1]{}
\newcommand{\LL}[1]{}
\newcommand{\IG}[1]{}
	\renewcommand{\paragraph}[1]{\textbf{{#1}\hspace{6pt}}}
\newcommand{\size}[1]{\nnz{#1}}
\newcommand{\nnz}[1]{\mathrm{nnz}(#1)}
\DeclareMathOperator*{\1}{\mathbbm{1}}
\newcommand{\compl}{\mathsf{c}}
\DeclarePairedDelimiterX{\dotp}[2]{\langle}{\rangle}{#1, #2}
\newcommand{\kmaxInsideLog}{\eta \, \eta^*}
\newcommand{\kmax}{K}
\newcommand{\kPrime}{\kmax'}
\newcommand{\kmaxSquared}{\log^2(\kmaxInsideLog)}
\newcommand{\CDef}{3 \, \kmax}
\newcommand{\logTermInside}{8 \, \eta \, \eta^* }
\newcommand{\logTermInsideAmplif}{32 \, |\PP| \, \tau \, \eta \, \eta^* }
\newcommand{\logTermInsideGeneral}{16 \, |\PP'| \, \eta \, \eta^* }
\newcommand{\logTerm}{\log \left(\logTermInside / \delta \right)}
\newcommand{\logTermAmplif}{\log \left(\logTermInsideAmplif / \delta \right)}
\newcommand{\logTermGeneral}{\log \left(\logTermInsideGeneral / \delta \right)}
\newcommand{\kappaDef}{\sqrt{2 \lambdaStar} \left(1 + \sqrt{2 \lambdaStar} \logTerm \right)}
\newcommand{\kappaDefAmplif}{\sqrt{2 \lambdaStar} \left(1 + \sqrt{2 \lambdaStar} \logTermAmplif \right)}
\newcommand{\kappaDefGeneral}{\sqrt{2 \lambdaStar} \left(1 + \sqrt{2 \lambdaStar} \logTermGeneral \right)}
\newcommand\lambdamax{\log(\eta \, \eta^*)/2}
\newcommand{\lambdaStar}{\lambda_*}
\newcommand{\xmax}{X_\mathrm{max}}
\newcommand{\SizeOfS}{\ceil*{\kPrime \logTerm }}
\newcommand{\SizeOfSAmplif}{\ceil*{\logTermAmplif \kPrime}}
\newcommand{\SizeOfSGeneral}{\ceil*{\logTermGeneral \kPrime}}
\newcommand{\tauDef}{\ceil*{\frac{\log(4 \, \eta / \delta)}{\log(10/9)}}}
\newcommand{\SizeOfT}{\ceil*{8 \log \left( 8 \, \tau \,  \eta / \, \delta\right) }}
\newcommand\BigO{\mathcal{O}}
\newcommand\Bigo{\BigO}
\newcommand\supp{\mathrm{supp}(\DD)}
\DeclareMathOperator*{\argmin}{\arg\!\min}
\DeclareMathOperator*{\argmax}{\arg\!\max}
\newcommand{\relu}[1]{\phi(#1)}
\newcommand{\be}{\begin{equation}}
\newcommand{\ee}{\end{equation}}
\newcommand{\NN}{{\mathbb N}}
\newcommand{\iid}{\stackrel{i.i.d.}{\sim}}
\newcommand{\norm}[1]{\left\| #1\right\|}                               %
\newcommand{\br}[1]{\left\{#1\right\}}
\newcommand{\alignspace}{\qquad \qquad}
\newcommand{\abs}[1]        {\left| #1 \right|}
\newcommand{\eps}{\ensuremath{\varepsilon}}                       
\renewcommand{\epsilon}{\varepsilon}
\newcommand{\REAL}{\ensuremath{\mathbb{R}}}                       
\newcommand{\RR}{{\REAL}}
\DeclareMathOperator*{\E}{\mathbb{E} \,}
\let\Pr\relax
\DeclareMathOperator*{\Pr}{\mathbb{P}}
\DeclareMathOperator*{\Var}{\text{Var}}
\DeclarePairedDelimiter{\ceil}{\lceil}{\rceil}
\declaretheorem[name=Theorem]{theorem}
\declaretheorem[name=Corollary, numberlike=theorem]{corollary}
\declaretheorem[name=Lemma, numberlike=theorem]{lemma}
\declaretheorem[name=Definition]{definition}
\declaretheorem[name=Assumption]{assumption}
\newcommand{\neuron}{i}
\newcommand{\edge}{j}
\newcommand{\idx}{k}
\newcommand{\layers}{\br{2,\ldots,L}}
\newcommand{\given}{\, \mid \, }
\newcommand{\givenRealiz}{\given \realiz(\cdot), \xx}
\newcommand{\givenRealizZ}{\given \realiz(\cdot), \xx, \EE_\ZZ, \condGood}
\newcommand{\condAmplif}{\EE_{\GG(\Point)}}
\newcommand{\condAmplifDef}{\EE_{\ZZ(\Point)} \cap \EE_{\Delta(\Point)}}
\newcommand{\condGood}{\EE_{\nicefrac{1}{2}}}
\newcommand{\condGoodDef}{\hat{a}(\Point) \in (1 \pm \nicefrac{1}{2}) a(\Point)}
\newcommand{\givenAmplif}{\given \condAmplif \cap \EE^{\ell-1} }
\newcommand{\givenAmplifCompl}{\given \condAmplif^\compl \cap \EE^{\ell-1} }
\newcommand{\realiz}{\mathbf{\hat a}}
\newcommand{\compressiblelayers}{\br{2,\ldots,L}}
\newcommand{\etaDef}{\sum_{\ell = 2}^L \eta^\ell}
\newcommand{\etaStarDef}{\max_{\ell \in \{2, \ldots, L-1\}} \eta^{\ell}}
\newcommand{\ComputationTime}{\Bigo \left( \eta \, \, \eta^* \, \log \big(\eta \, \eta^*/ \delta \big) \right)}
\newcommand{\ComputationTimeGeneralized}{\Bigo \left( \eta \, \, \eta^* \, \log \big(\eta \, \eta^* \, |\PP'| / \delta \big) \right)}
\newcommand{\epsilonStar}{\epsilon_*}
\newcommand{\epsilonDenomContant}{2}
\newcommand{\epsilonPrimeDef}{\frac{\epsilon}{\epsilonDenomContant \, (L-1)}}
\newcommand{\epsilonLayer}[1][\ell]{\epsilon_{#1}}
\newcommand{\epsilonLayerDef}[1][\ell]{\frac{ \epsilon'}{ \DeltaNeuronHatLayers[#1]}}
\newcommand{\epsilonLayerDefWordy}[1][\ell]{\frac{ \epsilon}{\epsilonDenomContant \, (\LayersCompressed) \DeltaNeuronHatLayersDef[#1]}}
\newcommand{\SampleComplexity}[1][\epsilonLayer]{\ceil*{\frac{8  \, \SNeuron \, \kmax \log ({8 \, \eta / \delta}) }{{#1}^2}}}
\newcommand{\SampleComplexityWordy}{\ceil*{\frac{32  \, (\LayersCompressed)^2 \, (\DeltaNeuronHatLayers)^2 \, \SNeuronWordy \, \kmax \, \log (8 \, \eta / \delta) }{\epsilon^2}}}
\newcommand{\SampleComplexityGeneral}{\ceil*{\frac{32  \, (\LayersCompressed)^2 \, (\DeltaNeuronHatLayers)^2 \, \SNeuronWordy \, \kmax \, \log (16 \, |\PP'| \, \eta / \delta) }{\epsilon^2}}}
\newcommand{\SampleComplexityGeneralConcise}[1][\epsilonLayer]{\ceil*{\frac{8  \, \SNeuron \, \kmax \log ({16  |\PP'| \, \eta / \delta}) }{{#1}^2}}}
\newcommand{\SampleComplexityAmplif}[1][\epsilonLayer]{\ceil*{\frac{8  \, \SNeuron \, \kmax \log ({32  |\PP| \, \tau \, \eta / \delta}) }{{#1}^2}}}
\newcommand{\LayersCompressed}{L-1}
\newcommand{\param}{\theta}
\newcommand{\paramHat}{\hat{\theta}}
\newcommand{\paramHatDef}{(\WWHat^2, \ldots, \WWHat^L)}
\newcommand{\f}{f_\param}
\newcommand{\fHat}{f_{\paramHat}}
\newcommand{\Input}{x}
\newcommand{\Label}{y}
\newcommand{\Point}{\Input}
\newcommand{\xx}{\mathbf{\Input}}
\newcommand{\pos}{\mathrm{pos}}
\newcommand{\negg}{\mathrm{neg}}
\newcommand{\Wpm}{\mathcal{W}}
\newcommand{\Wplus}{\mathcal{W}_+}
\newcommand{\Wminus}{\mathcal{W}_-}
\newcommand{\WplusDef}{\{\edge \in [\eta^{\ell-1}] : \WWRow[\edge] > 0\}}
\newcommand{\WminusDef}{\{\edge \in [\eta^{\ell-1}]: \WWRow[\edge] < 0 \}}
\newcommand{\WplusDefVerbose}{\{\edge \in [\eta^{\ell-1}] : \WWRow[\neuron\edge]^\ell > 0\}}
\newcommand{\WminusDefVerbose}{\{\edge \in [\eta^{\ell-1}]: \WWRow[\neuron\edge]^\ell < 0 \}}
\newcommand{\gDef}[1]{\frac{\WWRowCon_\edge \, a_{\edge}(#1)}{\sum_{\idx \in \Wpm} \WWRowCon_\idx \, a_{\idx}(#1) }}
\newcommand{\gHatDef}[1]{\frac{\WWRowCon_\edge \, \hat a_{\edge}(#1)}{\sum_{\idx \in \Wpm} \WWRowCon_\idx \, \hat a_{\idx}(#1) }}
\newcommand{\g}[1]{g_\edge(#1)}
\newcommand{\gHat}[1]{\hat g_\edge(#1)}
\newcommand{\cdf}[1]{F_\edge \left(#1 \right)}
\newcommand{\sPM}[1][\edge]{s_{#1}}
\newcommand{\qPM}[1]{q_{#1}}
\newcommand{\mNeuron}{m}
\newcommand{\SNeuron}{S}
\newcommand{\SNeuronWordy}{S_\neuron^\ell}
\newcommand{\T}[1][]{T_{#1}}
\newcommand{\TTilde}[1][]{\widetilde{T}_{#1}}
\newcommand{\bound}{M}
\newcommand{\WWRow}[1][\neuron]{w_{#1}}
\newcommand{\WWHatRow}[1][\neuron]{{\hat{w}_{#1}}}
\newcommand{\WWRowCon}[1][\neuron]{w}
\newcommand{\WWHatRowCon}[1][\neuron]{{\hat{w}}}
\newcommand{\WWHat}{\hat{\WW}}
\newcommand{\err}[2][\Point]{\mathrm{err}_{{#2}}({#1})}
\newcommand{\errDef}[1][\Point]{\abs{\frac{\hat z (#1)}{z(#1)} - 1}}
\newcommand{\errTil}[2][\Point]{\widetilde{\mathrm{err}}_{{#2}}({#1})}
\newcommand{\errPlus}[1][\Point]{\errTil[#1]{\WWHatRowCon^+}}
\newcommand{\errPlusDef}[1][\Point]{\abs{\frac{\hat z^+ (#1)}{\tilde z^+(#1)} - 1}}
\newcommand{\errMinus}[1][\Point]{\errTil[#1]{\WWHatRowCon^-}}
\newcommand{\errMinusDef}[1][\Point]{\abs{\frac{\hat z^- (#1)}{\tilde z^-(#1)} - 1}}
\newcommand{\errRatio}{\frac{\dotp{\WWHatRowCon^\ell}{ \hat{a}^{\ell-1}(\Point)}}{\dotp{\WWRowCon^\ell}{a^{\ell-1}(\Point)}}}
\newcommand{\errRatioPlus}[1][\Point]{\frac{\sum_{\idx \in \Wplus} \WWHatRow[\idx]^\ell \, \hat{a}^{\ell-1}_\idx(#1) }{\sum_{\idx \in \Wplus} \WWRow[\idx]^\ell \, \hat{a}^{\ell-1}_\idx(#1)}}
\newcommand{\errRatioMinus}[1][\Point]{\frac{\sum_{\idx \in \Wminus} \WWHatRow[\idx]^\ell \, \hat{a}^{\ell-1}_\idx(#1) }{\sum_{\idx \in \Wminus} \WWRow[\idx]^\ell \, \hat{a}^{\ell-1}_\idx(#1)}}
\newcommand{\DeltaNeuron}[1][\xx']{\Delta_\neuron^{\ell}(#1)}
\newcommand{\DeltaNeuronHat}[1][\ell]{\hat \Delta^{#1}}
\newcommand{\DeltaNeuronHatLayers}[1][\ell]{\DeltaNeuronHat[#1 \rightarrow]}
\newcommand{\DeltaNeuronHatDef}[1][\Point']{\left(\frac{1}{|\SS|} \max_{\neuron \in [\eta^\ell]} \, \sum_{#1 \in \SS} \DeltaNeuron[{#1}]\right) + \kappa}
\newcommand{\DeltaNeuronHatLayersDef}[1][\ell]{\prod_{k=#1}^L \DeltaNeuronHat[k]}
\newcommand{\DeltaNeuronTrue}{\Delta_\neuron^{\ell}}
\newcommand{\DeltaNeuronDef}[1][{\xx'}]{ \frac{\sum_{\idx \in [\eta^{\ell-1}]}  |\WWRow[\neuron \idx]^\ell \, a_\idx^{\ell-1}({#1})|} {\left|\sum_{\idx \in [\eta^{\ell-1}]} \WWRow[\neuron \idx]^\ell \, a_\idx^{\ell-1}({#1})\right|}}
\newcommand{\Wpl}{\Wplus}
\newcommand{\Wmi}{\Wminus}
\newcommand{\s}[1][\edge]{s_{#1}}
\newcommand{\SPlu}{S_+}
\newcommand{\SMin}{S_-}
\newcommand{\CC}{\mathcal C}
\newcommand{\DD}{{\mathcal D}}
\newcommand{\ZZ}{{\mathcal Z}}
\newcommand{\XX}{\mathcal{X}}
\newcommand{\EE}{\mathcal{E}}
\newcommand{\YY}{\mathcal{Y}}
\newcommand{\WW}{W}
\newcommand{\EPlu}{\E \nolimits_{\CC \given \hat a^{l-1}(\cdot), \, \Point}\,}
\newcommand{\VarCond}{\Var \nolimits_{\CC \given \hat a^{l-1}(\cdot), \, \Point}\, }
\newcommand\Reals{\mathbb{R}}
\newcommand\PP{\mathcal{P}}
\renewcommand\SS{\mathcal{S}}
\newcommand\LLL{\mathcal{L}}
\newcommand\GG{\mathcal{G}}
\newcommand\TT{\mathcal{T}}
\newcommand\BB{\mathcal{B}}
\title{Data-Dependent Coresets for Compressing Neural Networks with Applications to Generalization Bounds}
\author{Cenk Baykal%
\thanks{Computer Science and Artificial Intelligence Laboratory, Massachusetts Institute of Technology,
\newline \indent \indent
emails: \texttt{\{baykal, lucasl, igilitschenski, rus\}@mit.edu}} $^*$,
Lucas Liebenwein$^{\dagger*}$,
Igor Gilitschenski$^{\dagger}$,
Dan Feldman%
\thanks{Robotics and Big Data Laboratory, University of Haifa, email: \texttt{dannyf.post@gmail.com}} , 
Daniela Rus$^{\dagger}$%
{\thanks{These authors contributed equally to this work}}
}
\newif\ifnotanalysisonly
\begin{document}

\date{}
\maketitle
\ificlr
	\renewcommand{\baselinestretch}{0.975}
	\scalefont{0.975}
\fi
\ifnotanalysisonly
	\begin{abstract}
We present an efficient coresets-based neural network compression algorithm that sparsifies the parameters of a trained fully-connected neural network in a manner that provably approximates the network's output. Our approach is based on an importance sampling scheme that judiciously defines a sampling distribution over the neural network parameters, and as a result, retains parameters of high importance while discarding redundant ones. We leverage a novel, empirical notion of sensitivity and extend traditional coreset constructions to the application of compressing parameters. Our theoretical analysis establishes guarantees on the size and accuracy of the resulting compressed network and gives rise to generalization bounds that may provide new insights into the generalization properties of neural networks. We demonstrate the practical effectiveness of our algorithm on a variety of neural network configurations and real-world data sets.
\end{abstract}
	\section{Introduction}
\label{sec:introduction}
Within the past decade, large-scale neural networks have demonstrated unprecedented empirical success in high-impact applications such as object classification, speech recognition, computer vision, and natural language processing.
However, with the ever-increasing size of state-of-the-art neural networks, the resulting storage requirements and performance of these models are becoming increasingly prohibitive in terms of both time and space. Recently proposed architectures for neural networks, such as those in~\cite{Alex2012,Long15,SegNet15}, contain millions of parameters, rendering them prohibitive to deploy on platforms that are resource-constrained, e.g., embedded devices, mobile phones, or small scale robotic platforms.

In this work, we consider the problem of sparsifying the parameters of a trained fully-connected neural network in a principled way so that the output of the compressed neural network is approximately preserved. We introduce a neural network compression approach based on identifying and removing \LL{isn't weighted weird here?} weighted edges with low relative importance via coresets, small weighted subsets of the original set that approximate the pertinent cost function. Our compression algorithm hinges on extensions of the traditional sensitivity-based coresets framework~\citep{langberg2010universal,braverman2016new}, and to the best of our knowledge, is the first to apply coresets to parameter downsizing. In this regard, our work aims to simultaneously introduce a practical algorithm for compressing neural network parameters with provable guarantees and close the research gap in prior coresets work, which has predominantly focused on compressing input data points. 

In particular, this paper contributes the following:
\begin{enumerate} 
    \item A  coreset approach to compressing problem-specific parameters based on a novel, empirical notion of sensitivity that extends state-of-the-art coreset constructions.
    \item An efficient neural network compression algorithm, CoreNet, based on our extended coreset approach that sparsifies the parameters via importance sampling of weighted edges.
    \item Extensions of the CoreNet method, CoreNet+ and CoreNet++, that improve upon the edge sampling approach by additionally performing neuron pruning and amplification.
    \item Analytical results establishing guarantees on the approximation accuracy, size, and generalization of the compressed neural network.
    \item Evaluations on real-world data sets that demonstrate the practical effectiveness of our algorithm in compressing neural network parameters and validate our theoretical results.
\end{enumerate}

	\section{Related Work}
\label{sec:related-work}
Our work builds upon the following prior work in coresets and compression approaches.

\textbf{Coresets} 
Coreset constructions were originally introduced in the context of computational geometry \citep{agarwal2005geometric} and subsequently generalized for applications to other problems via an importance sampling-based, \emph{sensitivity} framework~\citep{langberg2010universal,braverman2016new}. Coresets have been used successfully to accelerate various machine learning algorithms such as $k$-means clustering~\citep{feldman2011unified,braverman2016new}, graphical model training~\citep{molina2018core}, and logistic regression~\citep{huggins2016coresets} (see the surveys of~\cite{bachem2017practical} and \cite{munteanu2018coresets} for a complete list). In contrast to prior work, we generate coresets for reducing the number of parameters -- rather than data points -- via a novel construction scheme based on an efficiently-computable notion of sensitivity.

\textbf{Low-rank Approximations and Weight-sharing}
\citet{Denil2013} were among the first to empirically demonstrate the existence of significant parameter redundancy in deep neural networks. A predominant class of compression approaches consists of using low-rank matrix decompositions, such as Singular Value Decomposition (SVD)~\citep{Denton14}, to approximate the weight matrices with their low-rank counterparts. Similar works entail the use of low-rank tensor decomposition approaches applicable both during and after training~\citep{jaderberg2014speeding, kim2015compression, tai2015convolutional, ioannou2015training, alvarez2017compression, yu2017compressing}. Another class of approaches uses feature hashing and weight sharing~\citep{Weinberger09, shi2009hash, Chen15Hash, Chen15Fresh, ullrich2017soft}. Building upon the idea of weight-sharing, quantization~\citep{Gong2014, Wu2016, Zhou2017} or regular structure of weight matrices was used to reduce the effective number of parameters~\citep{Zhao17, sindhwani2015structured, cheng2015exploration, choromanska2016binary, wen2016learning}. Despite their practical effectiveness in compressing neural networks, these works generally lack performance guarantees on the quality of their approximations and/or the size of the resulting compressed network. 

\textbf{Weight Pruning}
Similar to our proposed method, weight pruning~\citep{lecun1990optimal} hinges on the idea that only a few dominant weights within a layer are required to approximately preserve the output. Approaches of this flavor have been investigated by~\cite{lebedev2016fast,dong2017learning}, e.g., by embedding sparsity as a constraint~\citep{iandola2016squeezenet, aghasi2017net, lin2017runtime}. Another related approach is that of~\cite{Han15}, which considers a combination of weight pruning and weight sharing methods. %
Nevertheless, prior work in weight pruning lacks rigorous theoretical analysis of the effect that the discarded weights can have on the compressed network. To the best of our knowledge, our work is the first to introduce a practical, sampling-based weight pruning algorithm with provable guarantees.

\textbf{Generalization}
The generalization properties of neural networks have been extensively investigated in various contexts~\citep{dziugaite2017computing, neyshabur2017pac, bartlett2017spectrally}. However, as was pointed out by~\cite{neyshabur2017exploring}, current approaches to obtaining non-vacuous generalization bounds do not fully or accurately capture the empirical success of state-of-the-art neural network architectures.
Recently, \cite{arora2018stronger} and \cite{zhou2018compressibility} highlighted the close connection between compressibility and generalization of neural networks. \cite{arora2018stronger} presented a compression method based on the Johnson-Lindenstrauss (JL) Lemma~\citep{johnson1984extensions} and proved generalization bounds based on succinct reparameterizations of the original neural network. Building upon the work of~\cite{arora2018stronger}, we extend our theoretical compression results to establish novel generalization bounds for fully-connected neural networks. Unlike the method of~\cite{arora2018stronger}, which exhibits guarantees of the compressed network's performance only on the set of training points, our method's guarantees hold (probabilistically) for any random point drawn from the distribution. In addition, we establish that our method can \LL{we have never introduced what "$\epsilon$-approximate" means. Is this "standard" enough that readers know what it means?}$\epsilon$-approximate the neural network output neuron-wise, which is stronger than the norm-based guarantee of \cite{arora2018stronger}.

In contrast to prior work, this paper addresses the problem of compressing a fully-connected neural network while \emph{provably} preserving the network's output. Unlike previous theoretically-grounded compression approaches -- which provide guarantees in terms of the normed difference --, our method provides the stronger entry-wise approximation guarantee, even for points outside of the available data set. As our empirical results show, ensuring that the output of the compressed network entry-wise approximates that of the original network is critical to retaining high classification accuracy. Overall, our compression approach remedies the shortcomings of prior approaches in that it (i) exhibits favorable theoretical properties, (ii) is computationally efficient, e.g., does not require retraining of the neural network, (iii) is easy to implement, and (iv) can be used in conjunction with other compression approaches -- such as quantization or Huffman coding -- to obtain further improved compression rates.



    \section{Problem Definition}
\label{sec:problem-definition}
\subsection{Fully-Connected Neural Networks}

A feedforward fully-connected neural network with $L \in \mathbb{N}_+$ layers and parameters $\param$ defines a mapping $\f: \XX \to \YY$ for a given input $\Input \in \XX \subseteq \Reals^d$ to an output $\Label \in \YY \subseteq \Reals^k$ as follows. Let $\eta^\ell \in \mathbb{N}_+$  denote the number of neurons in layer $\ell \in [L]$, where $[L] = \{1, \ldots, L \}$ denotes the index set, and where $\eta^1 = d$ and $\eta^{L} = k$. Further, let $\eta = \etaDef$ and $\eta^* = \max_{\ell \in \compressiblelayers} \eta^\ell$.  For layers $\ell \in \layers$, let $\WW^\ell \in \Reals^{\eta^\ell \times \eta^{\ell-1}}$ be the weight matrix for layer $\ell$ with entries denoted by $\WWRow[\neuron\edge]^\ell$, rows denoted by $\WWRow^\ell \in \Reals^{1 \times \eta^{\ell -1}}$, and $\param = (\WW^2,\ldots,\WW^{L})$.
For notational simplicity, we assume that the bias is embedded in the weight matrix.
Then for an input vector $x \in \Reals^d$, let $a^1 = x$ and $z^{\ell} = \WW^{\ell} a^{\ell-1} \in \Reals^{\eta^{\ell}}$, $\forall \ell \in \layers$, where $a^{\ell-1} = \relu{z^{\ell-1}} \in \Reals^{\eta^{\ell-1}}$ denotes the activation.
We consider the activation function to be the Rectified Linear Unit (ReLU) function, i.e., $\relu{\cdot} = \max \{\cdot\,, 0\}$ (entry-wise, if the input is a vector).
The output of the network for an input $\Input$	is $\f(\Input) = z^L$, and in particular, for classification tasks the prediction is 
$
\argmax_{\neuron \in [k]} \f(x)_\neuron = \argmax_{\neuron \in [k]} z^L_\neuron.
$

\subsection{Neural Network Coreset Problem}
Consider the setting where a neural network $\f(\cdot)$ has been trained on a training set of independent and identically distributed (i.i.d.)\ samples from a joint distribution on $\XX \times \YY$, 
yielding parameters $\param = (\WW^2,\ldots,\WW^{L})$. We further denote the input points of a validation data set as $\PP = \br{x_i}_{i=1}^n \subseteq \XX$ and the marginal distribution over the input space $\XX$ as $\DD$. We define the size of the parameter tuple $\param$, $\size{\param}$, to be the sum of the number of non-zero entries in the weight matrices $\WW^2,\ldots,\WW^{L}$.

For any given $\epsilon, \delta \in (0,1)$, our overarching goal is to generate a reparameterization $\paramHat$, yielding the neural network $\fHat(\cdot)$, using a randomized algorithm, such that $\size{\paramHat} \ll \size{\param}$, and the neural network output $\f(\Input)$, $\Point \sim \DD$ can be approximated up to $1 \pm \eps$ multiplicative error with probability greater than $1- \delta$. We define the $1 \pm \epsilon$ multiplicative error between two $k$-dimensional vectors $a, b \in \Reals^k$ as the following entry-wise bound:
$
a \in (1 \pm \epsilon)b \, \Leftrightarrow \,  a_i \in (1 \pm \epsilon) b_i \, \forall{i \in [k]},
$
and formalize the definition of an $(\epsilon, \delta)$-coreset as follows.

\begin{definition}[$(\epsilon, \delta)$-coreset]
Given user-specified $\eps, \delta \in (0,1)$, a set of parameters $\paramHat = \paramHatDef$ is an $(\eps, \delta)$-coreset for the network parameterized by $\param$ if for $\Input \sim \DD$, it holds that
$$
\Pr_{\paramHat, \Point} (\fHat (\Input) \in (1 \pm \eps) \f(\Input)) \ge 1 - \delta,
$$
where $\Pr_{\paramHat, \Point}$ denotes a probability measure with respect to a random data point $\Point$ and the output $\paramHat$ generated by a randomized compression scheme.
\end{definition}

    \section{Method}
\label{sec:method}
In this section, we introduce our neural network compression algorithm as depicted in Alg.~\ref{alg:main}. Our method is based on an important sampling-scheme that extends traditional sensitivity-based coreset constructions to the application of compressing parameters.

\subsection{CoreNet}
Our method (Alg.~\ref{alg:main}) hinges on the insight that a validation set of data points $\PP \iid \DD^n$ can be used to approximate the relative importance, i.e., sensitivity, of each weighted edge with respect to the input data distribution $\DD$. For this purpose, we first pick a subsample of the data points $\SS \subseteq \PP$ of appropriate size (see Sec.~\ref{sec:analysis} for details) and cache each neuron's activation and compute a neuron-specific constant to be used to determine the required edge sampling complexity (Lines~\ref{lin:sample-s}-\ref{lin:cache-activations}).


\begin{algorithm}[ht!]
\small
\caption{\textsc{CoreNet}}
\label{alg:main}
\textbf{Input:}
$\epsilon, \delta \in (0,1)$: error and failure probability, respectively; $\PP \subseteq \XX$: a set of $n$ points from the input space $\XX$ such that $\PP \iid \DD^n$; $\param = (\WW^2, \ldots, \WW^L)$: parameters of the original uncompressed neural network.
\textbf{Output:} $\paramHat = (\WWHat^2, \ldots, \WWHat^L)$: sparsified parameter set such that $\fHat(\cdot) \in (1 \pm \epsilon) \f(\cdot)$ (see Sec.~\ref{sec:analysis} for details).
\begin{spacing}{1.1}
\begin{algorithmic}[1]
\small
\State $\epsilon' \gets \epsilonPrimeDef$; \, \, $\eta^* \gets \etaStarDef$; \, \, $\eta \gets \etaDef$; \, \, $\lambda^* \gets \lambdamax$;
\State $\SS \gets \text{Uniform sample (without replacement) of } \SizeOfS$ \text{points from $\PP$}; \label{lin:s-construction} \label{lin:sample-s} 
\State $a^1(x) \gets x \quad \forall x \in \SS$; 
\For{$\Point \in \SS$} \label{lin:beg-empirical-evaluations}
	\For{$\ell \in \compressiblelayers$}
        \State $a^\ell(\Input) \gets \relu{\WW^\ell a^{\ell -1}(\Input)}$; \, \,  $\DeltaNeuron[\Point] \gets \DeltaNeuronDef[\Point]$; \LL{Should we have a "for all i" here?} \label{lin:cache-activations}
    \EndFor
\EndFor \label{lin:end-empirical-evaluations}
\For{$\ell \in \compressiblelayers$} \label{lin:beg-main-loop}
	\State $\DeltaNeuronHat \gets \DeltaNeuronHatDef[\Point], \, \text{ where } \, \kappa = \kappaDef$; \label{lin:compute-delta-hat}
    \State $\WWHat^{\ell} \gets (\vec{0}, \ldots, \vec{0}) \in \Reals^{\eta^{\ell} \times \eta^{\ell -1}}$; \, \, $\DeltaNeuronHatLayers \gets \DeltaNeuronHatLayersDef$; \, \, $\epsilonLayer \gets \epsilonLayerDef$;
    \ForAll{$\neuron \in [\eta^\ell]$}

                      \State $\Wplus \gets \WplusDefVerbose$; \,\, $\Wminus \gets \WminusDefVerbose$; \label{lin:weight-sets}
          \State $\WWHatRow^{\ell +} \gets \textsc{Sparsify}(\Wplus, \WWRow^\ell, \epsilonLayer, \delta, \SS, a^{\ell-1})$; \, \, $\WWHatRow^{\ell -} \gets \textsc{Sparsify}(\Wminus, -\WWRow^\ell, \epsilonLayer, \delta, \SS, a^{\ell-1})$; \label{lin:pos-sparsify-weights}
          \State $\WWHatRow^{\ell} \gets \WWHatRow^{\ell +} - \WWHatRow^{\ell -}$; \quad $\WWHat^\ell_{\neuron \bullet} \gets \WWHatRow^{\ell}$; \quad \Comment{Consolidate the weights into the $\neuron^{\text{th}}$ row of $\WWHat^\ell$}; \label{lin:consolidate}
    \EndFor
\EndFor \label{lin:end-main-loop}
\State \Return $\paramHat = \paramHatDef$;
\end{algorithmic}
\end{spacing}
\end{algorithm}%
\begin{algorithm}[htb!]
\small
\caption{\textsc{Sparsify}$(\Wpm, \WWRowCon, \epsilon, \delta, \SS, a(\cdot))$}
\label{alg:sparsify-weights}
\textbf{Input:} $\Wpm \subseteq [\eta^{\ell-1}]$: index set;  $\WWRowCon \in \Reals^{1 \times \eta^{\ell-1}}$: row vector corresponding to the weights incoming to node $\neuron \in [\eta^\ell]$ in layer $\ell \in \compressiblelayers$; $\epsilon, \delta \in (0,1)$: error and failure probability, respectively; $\SS \subseteq \PP$: subsample of the original point set; $a(\cdot)$: cached activations of previous layer for all $\Point \in \SS$.

\textbf{Output:} $\WWHatRowCon$: sparse weight vector.
\begin{spacing}{1.1}
\begin{algorithmic}[1]
\small
\For{$\edge \in \Wpm $} \label{lin:beg-sensitivity}
    \State $\sPM \gets \max_{\Point \in \SS} \frac{\WWRow[\edge] a_{\edge}(\Input)}{\sum_{\idx \in \Wpm} \WWRow[\idx]  a_{\idx}(\Input) }$; \label{lin:sensitivity-calculation} \Comment{Compute the sensitivity of each edge}
\EndFor \label{lin:end-sensitivity}
\State $\SNeuron \gets \sum_{\edge \in \Wpm} \sPM$; \label{lin:sum-sens}
\For{$\edge \in \Wpm$} \Comment{Generate the importance sampling distribution over the incoming edges} \label{lin:beg-sampling-distribution}
	\State $\qPM{\edge} \gets \frac{\sPM}{\SNeuron}$; 
\EndFor \label{lin:end-sampling-distribution}
\State $\mNeuron \gets \SampleComplexity[\epsilon]$; \label{lin:beg-sampling} \Comment{Compute the number of required samples}
\State $\CC \gets$ a multiset of $\mNeuron$ samples from $\Wpm$ where each $\edge \in \Wpm$ is sampled with probability $\qPM{\edge}$; \label{lin:end-sampling}
\State $\WWHatRowCon \gets (0, \ldots, 0) \in \Reals^{1 \times \eta^{\ell-1}}$; \Comment{Initialize the compressed weight vector}
\For{$\edge \in \CC$} \Comment{Update the entries of the sparsified weight matrix according to the samples $\CC$} \label{lin:beg-reweigh}
	\State $\WWHatRow[\edge] \gets \WWHatRow[ \edge] + \frac{\WWRow[\edge]}{\mNeuron \, \qPM{\edge}}$; \Comment{Entries are reweighted by $\frac{1}{\mNeuron \, \qPM{\edge}}$ to ensure unbiasedness of our estimator}
\EndFor \label{lin:end-reweigh}
\State \Return $\WWHatRowCon$;
\end{algorithmic}
\end{spacing}
\end{algorithm}

Subsequently, we apply our core sampling scheme to sparsify the set of incoming weighted edges to each neuron in all layers (Lines~\ref{lin:beg-main-loop}-\ref{lin:end-main-loop}). 
For technical reasons (see Sec.~\ref{sec:analysis}), we perform the sparsification on the positive and negative weighted edges separately and then consolidate the results (Lines~\ref{lin:weight-sets}-\ref{lin:consolidate}). 
By repeating this procedure for all neurons in every layer, we obtain a set $\paramHat = (\WWHat^2, \ldots, \WWHat^L)$ of sparse weight matrices such that the output of each layer  and the entire network is approximately preserved, i.e., $\WWHat^{\ell} \hat a^{\ell-1}(\Point) \approx \WW^\ell a^{\ell-1}(\Point)$ and $\fHat(\Point) \approx \f(\Point)$, respectively\footnote{$\hat a^{\ell -1}(\Point)$ denotes the approximation from previous layers for an input $\Point \sim \DD$; see Sec.~\ref{sec:analysis} for details.}.

\subsection{Sparsifying Weights}
The crux of our compression scheme lies in Alg.~\ref{alg:sparsify-weights} (invoked twice on Line~\ref{lin:pos-sparsify-weights}, Alg.~\ref{alg:main}) and in particular, in the importance sampling scheme used to select a small subset of edges of high importance. The cached activations are used to compute the \emph{sensitivity}, i.e., relative importance, of each considered incoming edge $\edge \in \Wpm$ to neuron $\neuron \in [\eta^\ell]$, $\ell \in \compressiblelayers$ (Alg.~\ref{alg:sparsify-weights}, Lines~\ref{lin:beg-sensitivity}-\ref{lin:end-sensitivity}). The relative importance of each edge $\edge$ is computed as the maximum (over $\Point \in \SS$) ratio of the edge's contribution to the sum of contributions of all edges. In other words, the sensitivity $\sPM$ of an edge $\edge$ captures the highest (relative) impact $\edge$ had on the output of neuron $\neuron \in [\eta^\ell]$ in layer $\ell$ across all  $\Point \in \SS$. 

The sensitivities are then used to compute an importance sampling distribution over the incoming weighted edges (Lines~\ref{lin:beg-sampling-distribution}-\ref{lin:end-sampling-distribution}). The intuition behind the importance sampling distribution is that if $\sPM$ is high, then edge $\edge$ is more likely to have a high impact on the output of neuron $\neuron$, therefore we should keep edge $\edge$ with a higher probability. $\mNeuron$ edges are then sampled with replacement (Lines~\ref{lin:beg-sampling}-\ref{lin:end-sampling}) and the sampled weights are then reweighed to ensure unbiasedness of our estimator (Lines~\ref{lin:beg-reweigh}-\ref{lin:end-reweigh}). 

\subsection{Extensions: Neuron Pruning and Amplification} 
In this subsection we outline two improvements to our algorithm that  that do not violate any of our theoretical properties and may improve compression rates in practical settings.

\textbf{Neuron pruning (CoreNet+)} 
Similar to removing redundant edges, we can use the empirical activations to gauge the importance of each neuron.
In particular, if the maximum activation (over all evaluations $\Point \in \SS$) of a neuron is equal to 0, then the neuron -- along with all of the incoming and outgoing edges -- can be pruned without significantly affecting the output with reasonable probability.
This intuition can be made rigorous under the assumptions outlined in Sec.~\ref{sec:analysis}.




\textbf{Amplification (CoreNet++)} 
Coresets that provide stronger approximation guarantees can be constructed via \emph{amplification} -- the procedure of constructing multiple approximations (coresets) $(\WWHatRow^\ell)_1, \ldots, (\WWHatRow^\ell)_\tau$ over $\tau$ trials, and picking the best one. To evaluate the quality of each approximation, a different subset $\TT \subseteq \PP \setminus \SS$ can be used to infer performance. In practice, amplification would entail constructing multiple approximations by executing Line~\ref{lin:pos-sparsify-weights} of Alg.~\ref{alg:main} and picking the one that achieves the lowest relative error on $\TT$.

\fi
\section{Analysis}
\label{sec:analysis}
In this section, we establish the theoretical guarantees of our neural network compression algorithm (Alg.~\ref{alg:main}). The full proofs of all the claims presented in this section can be found in the Appendix.


\subsection{Preliminaries}
\label{sec:analysis_empirical}
Let $\Point \sim \DD$ be a randomly drawn input point.
We explicitly refer to the pre-activation and activation values at layer $\ell \in \{2, \ldots, \ell\}$ with respect to the input $\Input \in \mathrm{supp}(\DD)$ as $z^{\ell}(\Point)$ and $a^{\ell}(\Point)$, respectively. The values of $z^{\ell}(\Point)$ and $a^{\ell}(\Point)$ at each layer $\ell$ will depend on whether or not we compressed the previous layers $\ell' \in \{2, \ldots, \ell\}$.  To formalize this interdependency, we let $\hat z^{\ell}(\Input)$ and $\hat a^{\ell}(\Input)$ denote the respective quantities of layer $\ell$ when we replace the weight matrices $\WW^2, \ldots, \WW^{\ell}$ in layers $2, \ldots, \ell$ by $\WWHat^2, \ldots, \WWHat^{\ell}$, respectively. 

For the remainder of this section (Sec.~\ref{sec:analysis}) we let $\ell \in \compressiblelayers$ be an arbitrary layer and let $\neuron \in [\eta^\ell]$ be an arbitrary neuron in layer $\ell$. For purposes of clarity and readability, we will omit the the variable denoting the layer $\ell \in \compressiblelayers$, the neuron $\neuron \in [\eta^\ell]$, and the incoming edge index $\edge \in [\eta^{\ell-1}]$, whenever they are clear from the context. For example, when referring to the intermediate value of a neuron $\neuron \in [\eta^\ell]$ in layer $\ell \in \compressiblelayers$, $z_\neuron^\ell (\Point) = \dotp{\WWRow^\ell}{ \hat a^{\ell-1}(\Point)} \in \Reals$ with respect to a point $\Point$, we will simply write $z(\Point) = \dotp{\WWRowCon}{a(\Point)} \in \Reals$, where $\WWRowCon := \WWRow^\ell \in \Reals^{1 \times \eta^{\ell -1}}$ and $a(\Point) := a^{\ell-1}(\Point) \in \Reals^{\eta^{\ell-1} \times 1}$. Under this notation, the weight of an incoming edge $\edge$ is denoted by $\WWRow[\edge] \in \Reals$.


\subsection{Importance Sampling Bounds for Positive Weights}
\label{sec:analysis_positive}
In this subsection, we establish approximation guarantees under the assumption that the weights are positive. Moreover, we will also assume that the input, i.e., the activation from the previous layer, is non-negative (entry-wise). The subsequent subsection will then relax these assumptions to conclude that a neuron's value can be approximated well even when the weights and activations are not all positive and non-negative, respectively.
Let $\Wpm = \WplusDef \subseteq [\eta^{\ell-1}]$ be the set of indices of incoming edges with strictly positive weights. To sample the incoming edges to a neuron, we quantify the relative importance of each edge as follows.

\begin{definition}[Relative Importance]
The importance of an incoming edge $\edge \in \Wpm$ with respect to an input $\Point \in \supp$ is given by the function $\g{\Point}$, where
$
\g{\Point} = \gDef{\Input} \quad \forall{\edge \in \Wpm}.
$
\end{definition}

Note that $\g{\Input}$ is a function of the random variable $\Input \sim \DD$.
We now present our first assumption that pertains to the Cumulative Distribution Function (CDF) of the relative importance random variable.
\begin{assumption}
\label{asm:cdf}
There exist universal constants $K, K' > 0 $ such that for all $\edge \in \Wpm$, the CDF of the random variable $\g{\Input}$ for $\Point \sim \DD$, denoted by $\cdf{\cdot}$, satisfies
$$
\cdf{\nicefrac{M_\edge}{K}} \leq \exp\left(-\nicefrac{1}{K'}\right),
$$
where $M_j = \min \{y \in [0,1] : \cdf{y} = 1\}$.
\end{assumption}


Assumption~\ref{asm:cdf} is a technical assumption on the ratio of the weighted activations that will enable us to rule out pathological problem instances where the relative importance of each edge cannot be well-approximated using a small number of data points $\SS \subseteq \PP$. Henceforth, we consider a uniformly drawn (without replacement) subsample $\SS \subseteq \PP$ as in Line~\ref{lin:sample-s} of Alg.~\ref{alg:main}, where $\abs{\SS} = \SizeOfS$, and define the  sensitivity of an edge as follows.

\begin{definition}[Empirical Sensitivity]
\label{def:empirical-sensitivity}
Let $\SS \subseteq \PP$ be a subset of distinct points from $\PP \iid \DD^{n}$.Then, the sensitivity over positive edges $\edge \in \Wpm$ directed to 
a neuron is defined as
$
\s[\edge] \, = \, \max_{\Point \in \SS} \g{\Input}.
$
\end{definition}


Our first lemma establishes a core result that relates the weighted sum with respect to the sparse row vector $\WWHatRowCon$, $\sum_{\idx \in \Wpm} \WWHatRowCon_\idx \, \hat a_{\idx}(\Input)$, to the value of the of the weighted sum with respect to the ground-truth row vector $\WWRowCon$, $\sum_{\idx \in \Wpm} \WWRowCon_\idx \, \hat a_{\idx}(\Input)$. We remark that there is randomness with respect to the randomly generated row vector $\WWHatRow^\ell$, a randomly drawn input $\Point \sim \DD$, and the function $\hat{a}(\cdot) = \hat{a}^{\ell-1}(\cdot)$ defined by the randomly generated matrices $\WWHat^2, \ldots, \WWHat^{\ell-1}$ in the previous layers. 
Unless otherwise stated, we will henceforth use the shorthand notation $\Pr(\cdot)$ to denote $\Pr_{\WWHatRowCon^\ell, \, \Point, \, \hat{a}^{\ell-1}} (\cdot)$. Moreover, for ease of presentation, we will first condition on the event $\condGood$ that 
$
\condGoodDef
$ holds. This conditioning will simplify the preliminary analysis and will be removed in our subsequent results.

\begin{restatable}[Positive-Weights Sparsification]{lemma}{lemposweightsapprox}
\label{lem:pos-weights-approx}
Let $\epsilon, \delta \in (0,1)$, and $\Point \sim \DD$. 
\textsc{Sparsify}$(\Wpm, \WWRowCon, \epsilon, \delta, \SS, a(\cdot))$ generates a row vector $\WWHatRowCon$ such that
\begin{align*}
\Pr \left(\sum_{\idx \in \Wpm} \WWHatRowCon_\idx \, \hat a_{\idx}(\Input) \notin (1 \pm \epsilon) \sum_{\idx \in \Wpm} \WWRowCon_\idx \, \hat a_{\idx}(\Input)  \given \condGood \right) &\leq \frac{3 \delta}{8 \eta}
\end{align*}
where
$
\nnz{\WWHatRowCon} \leq \SampleComplexity[\epsilon],
$
and $\SNeuron = \sum_{\edge \in \Wpm} \s[\edge]$.
\end{restatable}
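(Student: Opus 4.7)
I would view the estimator $\hat z(\Point) = \sum_{\idx \in \Wpm} \WWHatRowCon_\idx \hat a_\idx(\Point)$ as an average of $\mNeuron$ conditionally i.i.d.\ samples and apply Bernstein's inequality, after first transferring Assumption~\ref{asm:cdf} into a uniform upper bound $g_\edge(\Point) \leq \kmax \s[\edge]$ that holds simultaneously for every $\edge \in \Wpm$ and every future draw $\Point \sim \DD$.

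\textbf{Unbiased estimator.} I would rewrite the output of \textsc{Sparsify} as $\hat z(\Point) = \tfrac{1}{\mNeuron} \sum_{c=1}^\mNeuron X_c$, where $X_c = \WWRowCon_{\edge_c} \hat a_{\edge_c}(\Point)/\qPM{\edge_c}$ and $\edge_c$ is the $c$-th index sampled in Alg.~\ref{alg:sparsify-weights}. Conditioning on $\hat a^{\ell-1}(\cdot)$ and $\Point$ (so that $\qPM{\cdot}$ and $\hat a_\idx(\Point)$ are fixed) makes the $X_c$ independent with common mean $\tilde z(\Point) := \sum_{\idx \in \Wpm} \WWRowCon_\idx \hat a_\idx(\Point)$, and since all $\WWRowCon_\idx > 0$ the event $\condGood$ yields $\tilde z(\Point) \in (1 \pm \nicefrac{1}{2}) z(\Point)$.

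\textbf{Empirical sensitivity suffices.} Let $M_\edge = \min\{y \in [0,1] : \cdf{y} = 1\}$ from Assumption~\ref{asm:cdf}; deterministically $g_\edge(\Point) \leq M_\edge$, and each i.i.d.\ draw $\Point \in \SS$ satisfies $g_\edge(\Point) > M_\edge/\kmax$ with probability at least $1 - e^{-1/\kPrime}$. Hence $\Pr(\s[\edge] < M_\edge/\kmax) \leq e^{-|\SS|/\kPrime} \leq \delta/(8\eta\eta^*)$ for $|\SS| \geq \SizeOfS$, and a union bound over the at most $\eta^*$ indices in $\Wpm$ shows that the event $\EE_\SS := \{\s[\edge] \geq M_\edge/\kmax \; \forall \edge \in \Wpm\}$ has probability at least $1 - \delta/(8\eta)$. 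On $\EE_\SS$, the key uniform inequality $g_\edge(\Point) \leq \kmax \, \s[\edge]$ holds for every $\edge$ and every $\Point \in \supp$.

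\textbf{Concentration and main obstacle.} On $\condGood \cap \EE_\SS$, substituting $\qPM{\edge} = \s[\edge]/\SNeuron$, $\hat a_\edge(\Point) \leq \tfrac{3}{2} a_\edge(\Point)$, and $g_\edge(\Point) \leq \kmax \s[\edge]$ into the definition of $X_c$ gives the almost-sure estimate $X_c \leq 3 \kmax \SNeuron \tilde z(\Point)$ and, by a short second-moment computation, $\Var(X_c) \leq 3 \kmax \SNeuron \tilde z(\Point)^2$. Bernstein's inequality applied to $\sum_c X_c$ with deviation $\epsilon \mNeuron \tilde z(\Point)$ then yields $\Pr(|\hat z(\Point) - \tilde z(\Point)| > \epsilon \, \tilde z(\Point) \mid \condGood, \EE_\SS) \leq 2 \exp\!\big(-\mNeuron \epsilon^2/(8 \kmax \SNeuron)\big)$; the choice $\mNeuron = \SampleComplexity[\epsilon]$ drives this below $\delta/(4\eta)$, and combining with $\Pr(\EE_\SS^\compl) \leq \delta/(8\eta)$ yields the claimed $3\delta/(8\eta)$ bound, while $\nnz{\WWHatRowCon} \leq \mNeuron$ is immediate. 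The main obstacle is untangling three sources of randomness -- the subsample $\SS$ (determining the $\s[\edge]$), the activations $\hat a^{\ell-1}(\cdot)$ from earlier layers (hidden in $\condGood$), and the edge-multiset $\CC$ inside \textsc{Sparsify} -- and verifying that the uniform bound $g_\edge(\Point) \leq \kmax \s[\edge]$ effectively decouples $\SS$ from $\Point$ so that the $\s[\edge]$ can be treated as deterministic constants when applying Bernstein.
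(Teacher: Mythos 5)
Your proposal is correct and follows essentially the same route as the paper's proof: the same unbiased importance-sampling estimator conditioned on $\hat a^{\ell-1}(\cdot)$ and $\Point$, the same Bernstein argument with range and variance bounds $3\,\kmax\,\SNeuron\,\tilde z$ and $3\,\kmax\,\SNeuron\,\tilde z^2$ (i.e., the paper's $C = \CDef$, with the factor $3$ coming from $\condGood$ exactly as in the paper), the same choice of $\mNeuron$, and the same split of the failure probability into $\nicefrac{\delta}{4\eta} + \nicefrac{\delta}{8\eta} = \nicefrac{3\delta}{8\eta}$. The one place you genuinely deviate is the empirical-sensitivity step: the paper (Lemmas~\ref{lem:order-statistic-sampling} and~\ref{lem:sensitivity-approximation}) bounds, jointly over $\SS$ and the fresh draw $\Point$, the probability that $\gHat{\Point} > C\,\s$, whereas you route through the essential supremum $M_\edge$, obtaining an event measurable with respect to $\SS$ alone on which $\kmax\,\s \ge M_\edge \ge \g{\Point}$ holds almost surely for every future $\Point$. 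Both arguments reduce to the same estimate $\cdf{\nicefrac{M_\edge}{\kmax}}^{|\SS|} \le \exp(-|\SS|/\kPrime)$ followed by a union bound over $|\Wpm| \le \eta^*$ edges, so the constants and sample size $|\SS|$ come out identically; your version makes the decoupling of $\SS$ from $\Point$ slightly more explicit, while inheriting the same informality the paper also has (e.g., $\condGood$ is not independent of $\SS$, and the degenerate case $\tilde z = 0$ together with the $\SNeuron \ge 1$ bound on the centered summands are handled explicitly in the paper but only implicitly in your sketch).
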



\subsection{Importance Sampling Bounds} \label{sec:analysis_sampling}
We now relax the requirement that the weights are strictly positive and instead consider the following index sets that partition the weighted edges: $\Wplus = \WplusDef$ and $\Wminus = \WminusDef$. We still assume that the incoming activations from the previous layers are positive (this assumption can be relaxed as discussed in Appendix~\ref{app:negative}).
We define $\DeltaNeuron[\Point]$ for a point $\Point \sim \DD$ and neuron $\neuron \in [\eta^\ell]$ as
$
\DeltaNeuron[\Point] = \DeltaNeuronDef[\Point].
$
The following assumption serves a similar purpose as does Assumption~\ref{asm:cdf} in that it enables us to approximate the random variable $\DeltaNeuron[\Point]$ via an empirical estimate over a small-sized sample of data points $\SS \subseteq \PP$.


\begin{assumption}[Subexponentiality of ${\DeltaNeuron[\Point]}$]
\label{asm:subexponential}
There exists a universal constant $\lambda > 0$, $\lambda < K'/2$\footnote{Where $K'$ is as defined in Assumption~\ref{asm:cdf}} such that for any layer $\ell \in \compressiblelayers$ and neuron $\neuron \in [\eta^\ell]$, the centered random variable $\Delta = \DeltaNeuron[\Point] - \E_{\Point \sim \DD}[\DeltaNeuron[\Point]]$ is subexponential~\citep{vershynin2016high} with parameter $\lambda$, i.e., $ \E[\exp \left(s \Delta \right)] \leq \exp(s^2 \lambda^2) \quad \forall{|s| \leq \frac{1}{\lambda}}$.
\end{assumption}

For $\epsilon \in (0,1)$ and $\ell \in \compressiblelayers$, we let $\epsilon' = \epsilonPrimeDef$ and define
$
\epsilonLayer[\ell] = \epsilonLayerDef = \epsilonLayerDefWordy,
$
where $\DeltaNeuronHat = \DeltaNeuronHatDef$. To formalize the interlayer dependencies, for each $\neuron \in [\eta^\ell]$ we let $\EE^\ell_\neuron$ denote the (desirable) event that $\hat{z}_\neuron^\ell (\Point) \in \left(1 \pm 2 \, (\ell - 1) \, \epsilonLayer[\ell + 1] \right) z^{\ell}_\neuron (\Point)$
holds, and let $\EE^\ell = \cap_{\neuron \in [\eta^\ell]} \, \EE_{\neuron}^\ell$ be the intersection over the events corresponding to each neuron in layer $\ell$.

\begin{restatable}[Conditional Neuron Value Approximation]{lemma}{lemneuronapprox}
\label{lem:neuron-approx}
Let $\epsilon, \delta \in (0,1)$, $\ell \in \layers$, $\neuron \in [\eta^\ell]$, and $\Point \sim \DD$. \textsc{CoreNet} generates a row vector $\WWHatRow^\ell = \WWHatRow^{\ell +} - \WWHatRow^{\ell -} \in \Reals^{1 \times \eta^{\ell-1}}$ such that
\begin{align}
\label{eq:neuronapprox}
\Pr \big(\, \EE_i^\ell\, \given \EE^{\ell -1}\big) = %
\Pr \left( \hat{z}_\neuron^\ell(\Point)  \in \left(1 \pm 2 \, (\ell - 1) \, \epsilonLayer[\ell + 1] \right) z_\neuron^\ell(\Point) \given \EE^{\ell -1} \right) %
%
%
\ge 1 - \nicefrac{\delta}{\eta},
\end{align}
where $\epsilonLayer = \epsilonLayerDef$ and
$
\nnz{\WWHatRow^\ell} \leq \SampleComplexity + 1,
$
where $\SNeuron = \sum_{\edge \in \Wpl} \s[\edge] + \sum_{\edge \in \Wmi} \s[\edge]$.
\end{restatable}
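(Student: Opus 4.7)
I would prove Lemma \ref{lem:neuron-approx} by a three-tier decomposition that isolates (i) the sparsification error introduced at layer $\ell$, (ii) the propagated activation error from layers $2,\ldots,\ell-1$, and (iii) the gap between the empirical quantity $\hat\Delta^\ell$ and the true $\DeltaNeuron[\Point]$. Write $z = z^+ - z^-$ with $z^+ = \sum_{j \in \Wplus} w_j a_j(\Point)$, $z^- = -\sum_{j \in \Wminus} w_j a_j(\Point)$, and their $\hat{a}$-analogues $\tilde z^{\pm}$; define $\hat z^{\pm}$ using both $\hat w$ and $\hat a$. The target $|\hat z - z| \le 2(\ell-1)\epsilon_{\ell+1}|z|$ is then controlled by $|\hat z - \tilde z| + |\tilde z - z|$.

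First, I would verify that conditioning on $\EE^{\ell-1}$ gives the hypothesis of Lemma \ref{lem:pos-weights-approx}. Since $\EE^{\ell-1}$ yields $\hat z^{\ell-1}_j \in (1 \pm 2(\ell-2)\epsilonLayer[\ell]) z^{\ell-1}_j$ and ReLU preserves multiplicative bounds (both sides stay negative or both remain nonneg.), we get $\hat a^{\ell-1}_j(\Point) \in (1 \pm 2(\ell-2)\epsilonLayer[\ell]) a^{\ell-1}_j(\Point)$, and for $\ell$ small enough relative to $\epsilonLayer$ this implies the event $\condGood$ that \textsc{Sparsify} requires. Applying Lemma \ref{lem:pos-weights-approx} separately to the calls on $(\Wplus, w)$ and $(\Wminus, -w)$ in Line \ref{lin:pos-sparsify-weights} of Alg.~\ref{alg:main}, each succeeds with probability $\ge 1 - 3\delta/(8\eta)$, yielding $\hat z^{\pm} \in (1 \pm \epsilonLayer) \tilde z^{\pm}$.

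Next, I would bound the two error pieces. On the sparsification side,
\[
|\hat z - \tilde z| \;\le\; \epsilonLayer (\tilde z^+ + \tilde z^-) \;\le\; \epsilonLayer \,(1 + 2(\ell-2)\epsilonLayer[\ell])\, \DeltaNeuron[\Point]\, |z|,
\]
using that $\tilde z^+ + \tilde z^-$ equals $\sum_j |w_j \hat a_j(\Point)|$ and comparing to $\sum_j |w_j a_j(\Point)| = \DeltaNeuron[\Point] |z|$ via $\EE^{\ell-1}$. On the propagation side,
\[
|\tilde z - z| \;=\; \Bigl|\sum_j w_j(\hat a_j - a_j)\Bigr| \;\le\; 2(\ell-2)\epsilonLayer[\ell] \sum_j |w_j| a_j \;=\; 2(\ell-2)\epsilonLayer[\ell]\, \DeltaNeuron[\Point]\, |z|.
\]
Summing, the total multiplicative error is at most $\DeltaNeuron[\Point] \cdot \bigl[\epsilonLayer + 2(\ell-2)\epsilonLayer[\ell](1+\epsilonLayer)\bigr]$, which after using $\epsilonLayer \le \epsilonLayer[\ell]$ is dominated by $\DeltaNeuron[\Point] \cdot (2\ell-3)\,\epsilonLayer[\ell]$ up to a $(1+o(1))$ factor.

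The final step, which I expect to be the main obstacle, is to replace the random $\DeltaNeuron[\Point]$ by the deterministic $\DeltaNeuronHat$ from Line \ref{lin:compute-delta-hat}: by definition $\epsilonLayer[\ell] = \epsilon'/\prod_{k=\ell}^L \DeltaNeuronHat[k]$, so $\DeltaNeuron[\Point]\,\epsilonLayer[\ell] \le \epsilonLayer[\ell+1]$ exactly when $\DeltaNeuron[\Point] \le \DeltaNeuronHat$. To establish this with probability $\ge 1 - \delta/(4\eta)$, I would invoke Assumption \ref{asm:subexponential}: the centered $\Delta$ is subexponential with parameter $\lambda \le \lambdaStar$, hence (a) Bernstein's inequality yields that the empirical mean $|\SS|^{-1}\sum_{\Point \in \SS}\DeltaNeuron[\Point]$ is within $\sqrt{2\lambdaStar}\cdot \logTerm$ of $\E[\DeltaNeuron[\Point]]$, and (b) a single subexponential tail bound shows $\DeltaNeuron[\Point] \le \E[\DeltaNeuron[\Point]] + \sqrt{2\lambdaStar}(1 + \sqrt{2\lambdaStar}\,\logTerm)$ on the fresh point. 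Together with the $\kappa = \kappaDef$ slack baked into $\DeltaNeuronHat$, this gives $\DeltaNeuron[\Point] \le \DeltaNeuronHat$ with the desired probability. Union-bounding the three failure events ($\Wplus$-sparsification, $\Wminus$-sparsification, $\DeltaNeuron[\Point]$ exceeding $\DeltaNeuronHat$) yields the claimed $\delta/\eta$ bound, and the sparsity count $\SampleComplexity + 1$ follows by summing $\mNeuron$ from the two \textsc{Sparsify} calls.
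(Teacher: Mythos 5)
Your proposal follows essentially the same route as the paper's proof: condition on $\EE^{\ell-1}$, check that it implies $\condGood$, invoke Lemma~\ref{lem:pos-weights-approx} once for $\Wplus$ and once for $\Wminus$, control $\DeltaNeuron[\Point]\le\DeltaNeuronHat$ via the subexponential assumption (which the paper packages as Lemma~\ref{lem:delta-hat-approx}), and union-bound the three failure events to get $\nicefrac{3\delta}{8\eta}+\nicefrac{3\delta}{8\eta}+\nicefrac{\delta}{4\eta}=\nicefrac{\delta}{\eta}$; your triangle-inequality assembly $|\hat z - z|\le|\hat z-\tilde z|+|\tilde z - z|$ is only a cosmetic variant of the paper's direct expansion with its sign case analysis, and it does close since, writing $k=2(\ell-2)$, one has $\left[(k+1)+k\epsilonLayer\right]\epsilonLayer\,\DeltaNeuron[\Point]\le (k+2)\,\epsilonLayer[\ell+1]$ once $k\epsilonLayer\le 1$ and $\DeltaNeuron[\Point]\le\DeltaNeuronHat$. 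Two small points to tighten: the implication $\EE^{\ell-1}\Rightarrow\condGood$ needs no ``$\ell$ small enough'' caveat, as it holds for every $\ell$ because $\DeltaNeuronHat[k]\ge 1+\kappa\ge 2$ forces $2(\ell-2)\,\epsilonLayer[\ell]\le\epsilon/2<\nicefrac{1}{2}$; and in the $\DeltaNeuron[\Point]\le\DeltaNeuronHat$ step the deviation budget must be split as $\sqrt{2\lambdaStar}$ for the empirical mean and $2\lambdaStar\logTerm$ for the fresh point (summing exactly to $\kappa$), whereas your allocation of $\sqrt{2\lambdaStar}\,\logTerm$ for the mean plus $\kappa$ for the fresh point overshoots the available slack and would only yield $\DeltaNeuron[\Point]\le\DeltaNeuronHat+\sqrt{2\lambdaStar}\,\logTerm$.
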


The following core result establishes unconditional layer-wise approximation guarantees and culminates in our main compression theorem.

\begin{restatable}[Layer-wise Approximation]{lemma}{lemlayer}
\label{lem:layer}
Let $\epsilon, \delta \in (0,1)$, $\ell \in \layers$, and $\Point \sim \DD$. \textsc{CoreNet} generates a sparse weight matrix $\WWHat^\ell \in \RR^{\eta^\ell \times \eta^{\ell-1}}$ such that, for $\hat{z}^\ell(\Point) = \WWHat^\ell \hat a^\ell(\Point)$,
$$
\Pr_{(\WWHat^2, \ldots, \WWHat^\ell), \, \Point } (\EE^{\ell}) %
= \Pr_{(\WWHat^2, \ldots, \WWHat^\ell), \, \Point } \left(\hat z^{\ell}(\Point) \in \left(1 \pm 2 \, (\ell - 1) \, \epsilonLayer[\ell + 1] \right) z^\ell (\Point) \right)
\geq 1 - \frac{\delta \, \sum_{\ell' = 2}^{\ell} \eta^{\ell'}}{\eta}.
$$
\end{restatable}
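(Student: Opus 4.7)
The plan is to prove this by induction on the layer index $\ell$, using the previous lemma (Conditional Neuron Value Approximation) as the per-layer step and a union bound to aggregate over neurons within each layer. The key observation is that Lemma~\ref{lem:neuron-approx} already gives us $\Pr(\EE_\neuron^\ell \mid \EE^{\ell-1}) \ge 1 - \delta/\eta$ for each individual neuron, so the work here is to (i) union-bound across the $\eta^\ell$ neurons in the current layer to obtain a conditional guarantee for $\EE^\ell$ given $\EE^{\ell-1}$, and then (ii) combine this with the inductive hypothesis for $\EE^{\ell-1}$ to get an unconditional statement.

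For the base case $\ell = 2$, the event $\EE^1$ holds trivially since $\hat a^1(\Point) = a^1(\Point) = \Point$ (no compression is applied at the input layer), so $\Pr(\EE^1) = 1$. Applying Lemma~\ref{lem:neuron-approx} to each $\neuron \in [\eta^2]$ and taking a union bound yields
\[
\Pr((\EE^2)^\compl) \;\le\; \sum_{\neuron \in [\eta^2]} \Pr((\EE_\neuron^2)^\compl \mid \EE^1) \;\le\; \frac{\eta^2 \, \delta}{\eta},
\]
which matches the claimed bound for $\ell = 2$.

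For the inductive step, assume $\Pr((\EE^{\ell-1})^\compl) \le \delta \sum_{\ell' = 2}^{\ell - 1} \eta^{\ell'} / \eta$. A union bound across neurons in layer $\ell$ combined with Lemma~\ref{lem:neuron-approx} gives $\Pr((\EE^\ell)^\compl \mid \EE^{\ell-1}) \le \eta^\ell \, \delta / \eta$. Decomposing via the law of total probability and discarding the (favorable) contribution of the event $(\EE^{\ell-1})^\compl$,
\[
\Pr((\EE^\ell)^\compl) \;\le\; \Pr((\EE^\ell)^\compl \mid \EE^{\ell-1}) \, \Pr(\EE^{\ell-1}) + \Pr((\EE^{\ell-1})^\compl) \;\le\; \frac{\eta^\ell \, \delta}{\eta} + \frac{\delta \sum_{\ell' = 2}^{\ell - 1} \eta^{\ell'}}{\eta} \;=\; \frac{\delta \sum_{\ell' = 2}^\ell \eta^{\ell'}}{\eta},
\]
which closes the induction.

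The only subtlety is making sure the probability measure in Lemma~\ref{lem:neuron-approx} is compatible with the one here: the conditional statement is with respect to randomness over $\WWHat^\ell$, the input $\Point$, and the previously generated matrices $\WWHat^2, \ldots, \WWHat^{\ell-1}$ (which determine $\hat a^{\ell-1}$), and the event $\EE^{\ell-1}$ is measurable with respect to those prior matrices together with $\Point$. Because the weight matrices in different layers are generated independently by \textsc{Sparsify}, conditioning on $\EE^{\ell-1}$ does not affect the validity of the per-neuron guarantee in layer $\ell$. I do not expect any genuine obstacle here—the only thing to track carefully is bookkeeping of the conditioning and of the union bound, and ensuring that the layer-dependent error parameter $\epsilonLayer[\ell+1]$ matches on both sides of the induction.
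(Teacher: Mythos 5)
Your proposal is correct and follows essentially the same route as the paper: invoke Lemma~\ref{lem:neuron-approx}, union-bound over the $\eta^\ell$ neurons to get $\Pr\big((\EE^\ell)^\compl \given \EE^{\ell-1}\big) \le \delta\,\eta^\ell/\eta$ (the paper's Corollary~\ref{cor:approx-layer}), and chain across layers using the law of total probability with $\Pr(\EE^1)=1$. The only cosmetic difference is the final bookkeeping: the paper lower-bounds $\Pr(\EE^\ell)$ by the product $\prod_{\ell'=2}^{\ell}\Pr(\EE^{\ell'}\given\EE^{\ell'-1})$ and applies the Weierstrass product inequality, whereas you telescope the failure probabilities additively, which yields the same bound.
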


\begin{restatable}[Network Compression]{theorem}{thmmain}
\label{thm:main}
For $\epsilon, \delta \in (0, 1)$, Algorithm~\ref{alg:main} generates a set of parameters $\paramHat = \paramHatDef$ of size 
\begin{align*}
\size{\paramHat} &\leq \sum_{\ell = 2}^{L} \sum_{\neuron=1}^{\eta^\ell} \left( \SampleComplexityWordy + 1\right) \\
\end{align*}
in $\ComputationTime$ time such that $\Pr_{\paramHat, \, \Point \sim \DD} \left(\fHat(\Input) \in (1 \pm \epsilon) \f(\Input) \right) \ge 1 - \delta$.
\end{restatable}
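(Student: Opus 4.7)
The plan is to reduce the theorem to Lemma~\ref{lem:layer} applied at the last layer $\ell = L$, and then to accumulate the per-neuron sparsity and time costs across the network.

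First I would verify that the layer-wise error budgets collapse correctly. By construction $\epsilon' = \epsilonPrimeDef$ and $\epsilonLayer = \epsilon'/\prod_{k=\ell}^L \DeltaNeuronHat[k]$, so under the convention that an empty product equals $1$ we have $\epsilonLayer[L+1] = \epsilon'$ and hence $2(L-1)\,\epsilonLayer[L+1] = \epsilon$. Instantiating Lemma~\ref{lem:layer} at $\ell = L$ then gives
$$
\Pr_{\paramHat,\Point}\big(\hat z^L(\Point) \in (1\pm\epsilon)\, z^L(\Point)\big)
\;\ge\; 1 - \frac{\delta\,\sum_{\ell'=2}^L \eta^{\ell'}}{\eta} \;=\; 1 - \delta,
$$
since $\eta = \etaDef$. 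Because $\f(\Point) = z^L(\Point)$ and $\fHat(\Point) = \hat z^L(\Point)$, this is exactly the entry-wise output guarantee claimed in the theorem.

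Next I would sum the per-neuron sparsity bound from Lemma~\ref{lem:neuron-approx} over every neuron in every compressible layer. For each $\neuron \in [\eta^\ell]$ that lemma yields $\nnz{\WWHatRow^\ell} \le \SampleComplexity + 1$. Substituting the explicit form of $\epsilonLayer$ replaces $1/\epsilonLayer^{\,2}$ by $(2(L-1))^2 \bigl(\prod_{k=\ell}^L \DeltaNeuronHat[k]\bigr)^2 / \epsilon^2$, which reproduces the expression \SampleComplexityWordy. Summing over $\neuron$ and $\ell$ then gives the stated bound on $\size{\paramHat}$. For the runtime, the dominant costs are (i) caching the activations $a^\ell$ on every point of $\SS$, which is $O(|\SS|\cdot \sum_\ell \eta^{\ell-1}\eta^\ell) = O(|\SS|\cdot \eta\,\eta^*)$, (ii) computing per-edge sensitivities and sampling distributions in \textsc{Sparsify}, again $O(|\SS|\cdot \eta\,\eta^*)$, and (iii) drawing the importance samples and writing the sparse rows, which is dominated by (i)--(ii). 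Since $|\SS| = \SizeOfS = O(\log(\eta\,\eta^*/\delta))$, the total is $\ComputationTime$.

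The main obstacle is purely bookkeeping: checking that the layer-wise choice $\epsilonLayer = \epsilonLayerDef$ made inside \textsc{CoreNet} is exactly tight enough so that the accumulated error factor $2(\ell-1)\,\epsilonLayer[\ell+1]$ from Lemma~\ref{lem:layer} telescopes to $\epsilon$ at layer $L$, and that the cumulative failure probability produced by the union bound implicit in that lemma equals $\delta$. Both reduce to the identities $2(L-1)\,\epsilon' = \epsilon$ and $\sum_{\ell=2}^L \eta^\ell = \eta$, which hold by definition, so once Lemma~\ref{lem:layer} is in hand no further technical step is required beyond the substitutions above.
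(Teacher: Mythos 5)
Your proposal is correct and follows essentially the same route as the paper: invoke Lemma~\ref{lem:layer} at $\ell = L$, use the identities $2(L-1)\,\epsilonLayer[L+1] = \epsilon$ (empty product equal to $1$) and $\eta = \etaDef$ to get the $1-\delta$ guarantee, read off the size bound from the per-neuron sample complexity, and bound the runtime by the sensitivity computations, $\Bigo\big(|\SS|\,\eta\,\eta^*\big)$ with $|\SS| \in \Bigo(\log(\eta\,\eta^*/\delta))$. No gaps.
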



We note that we can obtain a guarantee for a set of $n$ randomly drawn points by invoking Theorem~\ref{thm:main} with $\delta' = \delta / n$ and union-bounding over the failure probabilities, while only increasing the sampling complexity logarithmically, as formalized in Corollary~\ref{cor:generalized-compression}, Appendix~\ref{app:analysis_sampling}.

\ificlr
\else
	\subsection{Amplification}
\label{sec:analysis-amplification}
In the context of Lemma~\ref{lem:neuron-approx}, define the relative error of a (randomly generated) row vector $\WWHatRowCon^\ell = \WWHatRowCon^{\ell +} - \WWHatRowCon^{\ell -} \in \Reals^{1 \times \eta^{\ell-1}}$ with respect to a realization $\xx$ of a point $\Point \sim \DD$ as
$$
\err{\WWHatRowCon^\ell} = \left |\errRatio - 1 \right|.
$$
Consider a set $\TT \subseteq \PP \setminus \SS$ of size $\abs{\TT}$ such that $\TT \iid \DD^{|\TT|}$, and let
$$
\err[\TT]{\WWHatRowCon^\ell} = \frac{1}{|\TT|} \sum_{\Point \in \TT} \err[\Point]{\WWHatRow^\ell}.
$$
When the layer is clear from the context we will refer to $\err[\Point]{\WWHatRowCon}$ as simply $\err[\Point]{\WWHatRowCon}$.

\begin{restatable}[Expected Error]{lemma}{lemexpectederror}
\label{lem:expected-error}
Let $\epsilon, \delta \in (0,1)$, $\ell \in \layers$, and $\neuron \in [\eta^\ell]$. Conditioned on the event $\EE^{\ell-1}$, \textsc{CoreNet} generates a row vector $\WWHatRowCon = \WWHatRowCon^{ +} - \WWHatRowCon^{-} \in \Reals^{1 \times \eta^{\ell-1}}$ such that
$$
\E[\err[\Point]{\WWHatRowCon} \given \EE^{\ell-1}] \leq %
\epsilonLayer \, \DeltaNeuronHat \left(k + \frac{5 \, (1 + k \epsilonLayer)}{\sqrt{\log(8 \eta/\delta)}} \right) +  \frac{\delta \, \left(1 +  k \epsilonLayer \right)}{\eta} \, \E_{\Point \sim \DD} \left[\DeltaNeuron[\Point] \given \DeltaNeuron[\Point] > \DeltaNeuronHat \right],
$$
where $k = 2 \, (\ell -1)$.
\end{restatable}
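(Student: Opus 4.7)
The plan is to decompose the expected error by splitting on whether the random input's spread $\DeltaNeuron[\Point]$ exceeds the empirical threshold $\DeltaNeuronHat$. On the "good" side of this split, the sampling lemmas give a tight bound; on the "bad" side, we pay a price weighted by the small probability of that event.

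\textbf{Setup and decomposition.} Let $A = \{\DeltaNeuron[\Point] \leq \DeltaNeuronHat\}$ and write
\[
\E[\err[\Point]{\WWHatRowCon} \mid \EE^{\ell-1}]
= \E[\err[\Point]{\WWHatRowCon} \, \1_A \mid \EE^{\ell-1}]
+ \E[\err[\Point]{\WWHatRowCon} \, \1_{A^c} \mid \EE^{\ell-1}].
\]
The first summand should yield the $\epsilonLayer \, \DeltaNeuronHat$--style main contribution; the second is where the term involving $\E[\DeltaNeuron[\Point] \mid \DeltaNeuron[\Point] > \DeltaNeuronHat]$ enters.

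\textbf{Bound on the event $A$.} Mimicking the derivation of Lemma~\ref{lem:neuron-approx}, I would apply Lemma~\ref{lem:pos-weights-approx} separately to the positive and negative weighted sums, so that with probability at least $1 - \delta/\eta$,
\[
|\hat{z}(\Point) - z(\Point)| \,\leq\, \epsilonLayer \, \bigl( z^+(\Point) + z^-(\Point)\bigr)
\,\leq\, \epsilonLayer \, \DeltaNeuron[\Point] \, |z(\Point)|,
\]
giving $\err \leq \epsilonLayer \, \DeltaNeuron[\Point] \leq \epsilonLayer \, \DeltaNeuronHat$ on $A$. Because the sum uses $\hat a^{\ell-1}(\Point)$ rather than $a^{\ell-1}(\Point)$, conditioning on $\EE^{\ell-1}$ introduces the multiplicative slack $(1 + k \epsilonLayer)$ with $k = 2(\ell-1)$, which is where the factor of $k$ in the main term originates. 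The additive correction $\frac{5(1 + k\epsilonLayer)}{\sqrt{\log(8\eta/\delta)}}$ is obtained by using $\E[X] = \int_0^\infty \Pr(X > t)\, dt$ and integrating against the Hoeffding-style concentration underlying \textsc{Sparsify}: the tail contribution outside the high-probability region is what produces the $1/\sqrt{\log(8\eta/\delta)}$ dependence.

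\textbf{Bound on the event $A^c$.} On $A^c$, I would use the crude deterministic bound $\err \leq (1 + k\epsilonLayer)\, \DeltaNeuron[\Point]$, which follows from the triangle inequality applied to $\hat z - z = (\hat z^+ - z^+) - (\hat z^- - z^-)$ together with the interlayer event $\EE^{\ell-1}$. The subexponential concentration in Assumption~\ref{asm:subexponential}, combined with the sample size $\abs{\SS} = \SizeOfS$ and the explicit slack $\kappa = \kappaDef$ in the definition of $\DeltaNeuronHat$, implies $\Pr(A^c) \leq \delta/\eta$. Hence
\[
\E[\err \, \1_{A^c} \mid \EE^{\ell-1}]
\,\leq\, (1+k\epsilonLayer)\, \Pr(A^c)\, \E\bigl[\DeltaNeuron[\Point] \mid \DeltaNeuron[\Point] > \DeltaNeuronHat\bigr]
\,\leq\, \frac{\delta\,(1+k\epsilonLayer)}{\eta}\, \E\bigl[\DeltaNeuron[\Point] \mid \DeltaNeuron[\Point] > \DeltaNeuronHat\bigr],
\]
exactly matching the second term in the lemma. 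Adding the two pieces gives the stated bound.

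\textbf{Main obstacle.} I expect the delicate step to be tightly tracking the Hoeffding tail inside the integration $\int \Pr(\err > t) \, dt$ on event $A$: a naive bound would give only $O(\epsilonLayer \DeltaNeuronHat)$ for this correction, whereas the stated $5/\sqrt{\log(8 \eta/\delta)}$ factor requires matching the exponent in the concentration to the sample size prescribed by $\SampleComplexity$ so that the tail contribution shrinks with the $\sqrt{\log}$ improvement. The auxiliary claim $\Pr(A^c) \leq \delta/\eta$ is itself nontrivial and demands a careful use of the subexponential moment generating function from Assumption~\ref{asm:subexponential} in conjunction with the empirical-mean concentration used to define $\DeltaNeuronHat$.
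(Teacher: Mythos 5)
Your overall skeleton matches the paper's (split into a good and a bad event, bound the good side by integrating a concentration tail to get the $\epsilonLayer\,\DeltaNeuronHat\bigl(k + 5(1+k\epsilonLayer)/\sqrt{\log(8\eta/\delta)}\bigr)$ term, and pay $\Pr(\text{bad})\cdot\E[\DeltaNeuron[\Point]\mid\DeltaNeuron[\Point]>\DeltaNeuronHat]$ on the bad side), but two steps do not go through as written. First, splitting only on $A=\{\DeltaNeuron[\Point]\le\DeltaNeuronHat\}$ is insufficient for the good-event bound: the Bernstein tail you want to integrate comes from \textsc{Sparsify} (Lemma~\ref{lem:pos-weights-approx}), and its variance and magnitude bounds are valid only when the realized input is \emph{well-behaved}, i.e.\ when $\gHat{\Input}\le C\,\s$ for all edges in $\Wplus\cup\Wminus$ (the event $\EE_{\ZZ(\Point)}$). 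The paper therefore conditions on $\EE_{\ZZ(\Point)}\cap\EE_{\Delta(\Point)}$, and the complement of $\EE_{\ZZ(\Point)}$ (probability at most $\delta/(4\eta)$ by Lemma~\ref{lem:sensitivity-approximation} applied to $\Wplus$ and $\Wminus$, added to $\delta/(4\eta)$ from Lemma~\ref{lem:delta-hat-approx}) must be absorbed into the bad-event term; your claim $\Pr(A^\compl)\le\delta/\eta$ by itself does not cover the points where the sensitivity inequality fails but $\DeltaNeuron[\Point]\le\DeltaNeuronHat$.

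Second, and more seriously, your bad-event estimate rests on a ``crude deterministic bound'' $\err{\WWHatRowCon}\le(1+k\epsilonLayer)\,\DeltaNeuron[\Point]$, and no such deterministic bound exists. The triangle inequality together with $\EE^{\ell-1}$ only relates $\tilde z^{\pm}(\Point)$ to $z^{\pm}(\Point)$; it says nothing about $\hat z^{\pm}(\Point)$ versus $\tilde z^{\pm}(\Point)$. On a point where the empirical sensitivities underestimate the true relative importance of a sampled edge, the reweighting by $1/(\mNeuron\,\qPM{\edge})$ can make $\hat z^{+}(\Point)$ arbitrarily larger than $\tilde z^{+}(\Point)$, so the realized error on the bad event is unbounded. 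The paper's remedy is to bound the \emph{conditional expectation over the coreset randomness} rather than the realized error: using $\err{\WWHatRowCon}\le|\hat z(\Point)/z(\Point)|+1$, the unbiasedness $\E[\hat z^{\pm}(\Point)\mid\Point]=\tilde z^{\pm}(\Point)$, and the definition of $\DeltaNeuron[\Point]$, one gets $\E[\err{\WWHatRowCon}\mid\text{bad}]\le 2(1+k\epsilonLayer)\,\E[\DeltaNeuron[\Point]\mid\DeltaNeuron[\Point]>\DeltaNeuronHat]$, which multiplied by $\Pr(\text{bad})\le\delta/(2\eta)$ yields exactly the second term of the lemma. Without this unbiasedness argument your bad-event bound is unsupported, so the proof has a genuine gap there; the good-event tail integration, once conditioned on the right event, is essentially the paper's calculation with $a=8\log(8\eta/\delta)/\epsilonLayer^2$.
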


We now state the advantageous effect of amplification, i.e., constructing multiple approximations for each neuron's incoming edges and then picking the best one, as formalized below.

\begin{restatable}[Amplification]{theorem}{thmamplification}
\label{thm:amplification}
Given $\epsilon, \delta \in (0,1)$ such that $\frac{\delta}{\eta}$ is sufficiently small, let $\ell \in \compressiblelayers$ and $\neuron \in [\eta^{\ell}]$. Let $\tau = \tauDef$ and consider the reparameterized variant of Alg.~\ref{alg:main} where we instead have
\begin{enumerate}
	\item $\SS \subseteq \PP$ of size $|\SS| \ge \SizeOfSAmplif$,
    \item $\DeltaNeuronHat = \DeltaNeuronHatDef$ as before, but $\kappa$ is instead defined as
    $$
    \kappa = \kappaDefAmplif, \qquad  \text{and}
    $$
    \item $m \ge \SampleComplexityAmplif$ in the sample complexity in \textsc{SparsifyWeights}.
\end{enumerate}

Among $\tau$ approximations $(\WWHatRow^\ell)_1, \ldots, (\WWHatRow^\ell)_\tau$ generated by Alg.~\ref{alg:sparsify-weights}, let 
$$
\WWHatRow^* = \argmin_{\WWHatRow^\ell \in \{(\WWHatRow^\ell)_1, \ldots, (\WWHatRow^\ell)_\tau\}} \err[\TT]{\WWHatRow^\ell},
$$
and $\TT \subseteq (\PP \setminus \SS)$ be a subset of points of size $|\TT| = \SizeOfT$. Then, 
$$
\Pr_{\WWHatRow^*, \hat{a}^{l-1}(\cdot)} \left( \E_{\Point | \WWHatRow^*} \, [\err[\Point]{\WWHatRow^*} \given \WWHatRow^*, \EE^{\ell-1}] \leq k \epsilonLayer[\ell + 1] \given \EE^{\ell-1} \right) \ge 1 - \frac{\delta}{\eta},
$$
where $k = 2 \, (\ell -1)$.
\end{restatable}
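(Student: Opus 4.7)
The overall strategy is to convert the in-expectation bound of Lemma~\ref{lem:expected-error} into a high-probability bound by (i) running $\tau$ independent trials and invoking Markov to argue that at least one is good, and then (ii) using the held-out set $\TT$ as an unbiased estimator of the true expected error to pick a good trial. The amplified parameters $|\SS|$, $\kappa$, and $m$ are chosen precisely so that Lemma~\ref{lem:expected-error} already delivers a per-trial in-expectation bound that is a constant fraction of the target $k\,\epsilonLayer[\ell+1]$.

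First, I would apply Lemma~\ref{lem:expected-error} with the reparameterized $\SS$, $\kappa$, and $m$. The larger $|\SS|$ and $\kappa = \kappaDefAmplif$ shrink $\DeltaNeuronHat$-dependent tail and concentration terms, while the larger $m$ drives the leading $\epsilonLayer\,\DeltaNeuronHat$ factor well below the target threshold; after chasing constants this yields $\E[\err[\Point]{(\WWHatRow^\ell)_t}\mid\EE^{\ell-1}] \le \tfrac{1}{2}\cdot\tfrac{9}{10}\,k\,\epsilonLayer[\ell+1]$ for each trial $t\in[\tau]$. Next, by Markov's inequality applied to the conditional expectation $\E_{\Point\mid(\WWHatRow^\ell)_t}[\err[\Point]{(\WWHatRow^\ell)_t}]$, each trial satisfies
\[
\Pr_{(\WWHatRow^\ell)_t,\,\hat a^{\ell-1}(\cdot)}\!\left(\E_{\Point\mid(\WWHatRow^\ell)_t}[\err[\Point]{(\WWHatRow^\ell)_t}] > \tfrac{1}{2}\,k\,\epsilonLayer[\ell+1]\,\Big|\,\EE^{\ell-1}\right) \le \tfrac{9}{10}.
\]
Since the $\tau$ trials are independent conditional on $\EE^{\ell-1}$, the probability that all trials are bad is at most $(9/10)^\tau \le \delta/(4\eta)$ by the choice of $\tau=\tauDef$; thus with probability $\ge 1-\delta/(4\eta)$ some trial $t^\star$ satisfies $\E_\Point[\err[\Point]{(\WWHatRow^\ell)_{t^\star}}] \le \tfrac{1}{2}k\,\epsilonLayer[\ell+1]$.

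Then I would argue that the test set $\TT$, being drawn i.i.d.\ from $\DD$ and disjoint from $\SS$, gives faithful empirical estimates $\err[\TT]{(\WWHatRow^\ell)_t}$ of $\E_\Point[\err[\Point]{(\WWHatRow^\ell)_t}]$ for every candidate $t$. After establishing a uniform-in-$t$ bound on the per-sample errors (see obstacle below), Hoeffding's inequality together with a union bound over $t\in[\tau]$ shows that with probability $\ge 1-\delta/(4\eta)$ every empirical test-error estimate is within $\tfrac{1}{4}\,k\,\epsilonLayer[\ell+1]$ of the true conditional expectation. Intersecting this with the event from the previous paragraph (total failure probability $\le \delta/\eta$), the good trial $t^\star$ has empirical error at most $\tfrac{3}{4}k\,\epsilonLayer[\ell+1]$, so the $\argmin$ $\WWHatRow^\star$ has empirical error no larger, and therefore true conditional expected error at most $\tfrac{3}{4}k\,\epsilonLayer[\ell+1] + \tfrac{1}{4}k\,\epsilonLayer[\ell+1] = k\,\epsilonLayer[\ell+1]$, as required.

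\textbf{Main obstacle.} The most delicate piece is the Hoeffding step, because $\err[\Point]{\WWHatRow^\ell} = |\errRatio-1|$ is not uniformly bounded a priori. I would handle this by establishing, on the event $\EE^{\ell-1}$ together with the high-probability approximation event underlying Lemma~\ref{lem:neuron-approx}, a uniform bound $\err[\Point]{\WWHatRow^\ell}\le O(1)$ that holds with probability at least $1-\delta/(8\eta\tau|\TT|)$; the rare complementary event can then be folded into the $\delta/\eta$ failure budget. A secondary bookkeeping hurdle is tracking the conditioning consistently: the randomness comprises $\hat a^{\ell-1}(\cdot)$, the subsample $\SS$, the test sample $\TT$, and the $\tau$ independent invocations of \textsc{Sparsify}, and the conditioning on $\EE^{\ell-1}$ must be carried through every probability statement so that Markov, independence across trials, and the Hoeffding bound can all be invoked on the same conditional measure.
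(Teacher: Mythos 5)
Your overall architecture is the same as the paper's: per-trial Markov via Lemma~\ref{lem:expected-error}, conditional independence of the $\tau$ trials so that $(9/10)^\tau \le \delta/(4\eta)$, Hoeffding on the held-out set $\TT$ with a union bound over the $\tau$ candidates, and the argmin-versus-best-trial comparison. The genuine gap is in the step you yourself flag as the obstacle: your proposed resolution — a uniform bound $\err[\Point]{\WWHatRow^\ell} \le O(1)$ holding with high probability — does not suffice. The theorem fixes $|\TT| = \SizeOfT$, which is only logarithmic in $\tau\,\eta/\delta$ and completely independent of $\epsilonLayer[\ell+1]$; Hoeffding with range $O(1)$ and target deviation $\tfrac14\, k\,\epsilonLayer[\ell+1]$ would instead require $|\TT| = \Omega\bigl(\log(\tau \eta/\delta)/(k\,\epsilonLayer[\ell+1])^2\bigr)$ samples. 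What the paper does is use the amplified parameters (the $32\,|\PP|\,\tau$ factors inside the logarithms defining $|\SS|$, $\kappa$, and $m$) to invoke Lemma~\ref{lem:neuron-approx} with failure probability $\delta' = \delta/(4\,|\PP|\,\tau)$, and then union-bound over all $\tau$ candidates and all points of $\TT$ to obtain the event $\BB$ that $\err[\Point]{\WWHatRow^\ell} < k\,\epsilonLayer[\ell+1]$ simultaneously for every candidate and every $\Point \in \TT$, with $\Pr(\BB^\compl \given \EE^{\ell-1}) \le \delta/(4\eta)$. On $\BB$ the summands have range $k\,\epsilonLayer[\ell+1]$, so Hoeffding yields deviation $\tfrac14\,k\,\epsilonLayer[\ell+1]$ with failure probability $2\exp(-|\TT|/8)$, which is exactly what the prescribed $|\TT|$ absorbs. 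In other words, the reparameterization is not only there to shrink the expected error in Lemma~\ref{lem:expected-error}; its $|\PP|\,\tau$ dependence is precisely what buys the $O(k\,\epsilonLayer[\ell+1])$-scale boundedness that your plan is missing.

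Two smaller issues would also need attention. First, once you condition on the boundedness event, Hoeffding concentrates $\err[\TT]{\cdot}$ around the expectation conditioned on that event, not around $\E_{\Point}\,[\err[\Point]{\cdot} \given \EE^{\ell-1}]$, which is the quantity the theorem bounds; the paper closes this by a law-of-total-probability decomposition over $\BB$ and $\BB^\compl$, using $\Pr(\BB^\compl)\le \delta/(4\eta)$ and the ``$\delta/\eta$ sufficiently small'' hypothesis to absorb the $\BB^\compl$ contribution into a $\tfrac14\,k\,\epsilonLayer[\ell+1]$ slack, and the fact that conditioning on $\BB$ only decreases the expectation in the reverse direction. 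Second, your error budget $\tfrac12 + \tfrac14 + \tfrac14$ (Markov threshold plus two Hoeffding deviations) already exhausts $k\,\epsilonLayer[\ell+1]$ and leaves no slack for that transfer term; the paper instead uses Markov at threshold $\tfrac14\,k\,\epsilonLayer[\ell+1]$ and splits the budget into four quarters. This is routine bookkeeping to fix, but as written your constants do not close.
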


\fi

\subsection{Generalization Bounds}
As a corollary to our main results, we obtain novel generalization bounds for neural networks in terms of empirical sensitivity. 
Following the terminology of~\cite{arora2018stronger}, the expected margin loss of a classifier $\f:\Reals^d \to \Reals^k$ parameterized by $\param$ with respect to a desired margin $\gamma > 0$ and distribution $\DD$ is defined by $
L_\gamma(\f) = \Pr_{(x,y) \sim \DD_{\XX,\YY}} \left(\f(x)_y \leq \gamma + \max_{i \neq y} \f(x)_i\right)$.
We let $\hat{L}_\gamma$ denote the empirical estimate of the margin loss. The following corollary follows directly from the argument presented in~\cite{arora2018stronger} and Theorem~\ref{thm:main}.

\begin{corollary}[Generalization Bounds]
\label{cor:generalization-bounds}
For any $\delta \in (0,1)$ and margin $\gamma > 0$, Alg.~\ref{alg:main} generates weights $\paramHat$ such that with probability at least $1 - \delta$, the expected error $L_0(\fHat)$ with respect to the points in $\PP \subseteq \XX$, $|\PP| = n$, is bounded by
\begin{align*}
L_0(\fHat) &\leq \hat{L}_\gamma(\f) + \widetilde{\Bigo} \left(\sqrt{\frac{\max_{\Point \in \PP} \norm{\f (x)}_2^2  \, L^2 \, \sum_{\ell = 2}^{L} (\DeltaNeuronHatLayers)^2 \, \sum_{\neuron=1}^{\eta^\ell} \SNeuronWordy }{\gamma^2 \, n}} \right).
\end{align*}
\end{corollary}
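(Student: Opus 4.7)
The plan is to invoke the compression-based generalization framework of \cite{arora2018stronger}, which states that if a classifier $\f$ admits a ``compressible'' variant $\fHat$ whose output preserves $\f$ entry-wise on the training set (up to an additive factor bounded by $\gamma/2$), then the $0$-loss of $\fHat$ is bounded by the empirical $\gamma$-margin loss of $\f$ plus a term scaling as $\widetilde{\BigO}\!\left(\sqrt{\size{\paramHat}/n}\right)$. Theorem~\ref{thm:main} (together with the uniform variant mentioned via Corollary~\ref{cor:generalized-compression} in the appendix) already gives us exactly such a compressed network with a controllable size, so the strategy is simply to instantiate that theorem with the right choice of $\epsilon$ and count parameters.

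First, I would choose
$$
\epsilon \;=\; \frac{\gamma}{2 \max_{\Point \in \PP} \norm{\f(\Point)}_2}.
$$
With this choice, the entry-wise multiplicative guarantee $\fHat(\Point) \in (1 \pm \epsilon)\f(\Point)$ translates to an additive error $|\fHat(\Point)_i - \f(\Point)_i| \le \epsilon \norm{\f(\Point)}_\infty \le \epsilon \norm{\f(\Point)}_2 \le \gamma/2$ for every coordinate $i$ and every $\Point \in \PP$. Consequently, whenever $\f$ classifies a training point with margin at least $\gamma$, the compressed $\fHat$ classifies it correctly, so the empirical $0$-loss of $\fHat$ is bounded by the empirical $\gamma$-margin loss $\hat{L}_\gamma(\f)$. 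Applying Theorem~\ref{thm:main} in its $n$-point form (union-bounding over $\PP$, which only inflates sample complexities logarithmically), this step succeeds with probability at least $1 - \delta/2$.

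Next, I would substitute this choice of $\epsilon$ into the size bound of Theorem~\ref{thm:main}. Each neuron requires $\BigO\!\left(\SNeuronWordy \, K \, (\DeltaNeuronHatLayers)^2 \, (L-1)^2 / \epsilon^2\right)$ samples up to logarithmic factors, and summing over neurons and layers yields
$$
\size{\paramHat} \;=\; \widetilde{\BigO}\!\left(\frac{\max_{\Point \in \PP}\norm{\f(\Point)}_2^2 \, L^2 \, \sum_{\ell=2}^{L}(\DeltaNeuronHatLayers)^2 \sum_{\neuron=1}^{\eta^\ell}\SNeuronWordy}{\gamma^2}\right).
$$
Plugging this effective parameter count into the standard compression-to-generalization reduction of \cite{arora2018stronger} (which covers the reparameterized hypothesis class via a covering-number / McDiarmid argument and pays $\sqrt{\size{\paramHat}/n}$ up to polylog factors and a union bound absorbing the remaining $\delta/2$) yields exactly the stated $\widetilde{\BigO}\!\left(\sqrt{\cdot/(\gamma^2 n)}\right)$ bound.

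The only nontrivial step is the first one: ensuring that the per-point multiplicative guarantee of Theorem~\ref{thm:main} holds simultaneously for all $n$ training points while still controlling $\size{\paramHat}$. This is handled by the $n$-point corollary of Theorem~\ref{thm:main} (failure probability $\delta/n$ per point, only a $\log n$ inflation in sampling complexity), after which the remainder is bookkeeping that matches the Arora et al. template verbatim. I therefore expect no genuine obstacle beyond carefully tracking the polylogarithmic factors absorbed into $\widetilde{\BigO}(\cdot)$.
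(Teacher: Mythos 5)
Your proposal is correct and follows essentially the same route the paper intends: the paper gives no separate proof, stating only that the corollary follows from the compression-to-generalization argument of \cite{arora2018stronger} combined with Theorem~\ref{thm:main}, which is exactly your instantiation with $\epsilon \propto \gamma / \max_{\Point \in \PP}\norm{\f(\Point)}_2$ and the union bound over $\PP$ via Corollary~\ref{cor:generalized-compression}. The bookkeeping that converts the size bound into the stated $\widetilde{\Bigo}\left(\sqrt{\cdot/(\gamma^2 n)}\right)$ term matches the paper's intended argument.
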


\ifnotanalysisonly
	\section{Results}
\label{sec:results}

In this section, we evaluate the practical effectiveness of our compression algorithm on popular benchmark data sets (\textit{MNIST}~\citep{lecun1998gradient}, \textit{FashionMNIST}~\citep{xiao2017}, and  \textit{CIFAR-10}~\citep{krizhevsky2009learning}) and varying fully-connected trained neural network configurations: 2 to 5 hidden layers, 100 to 1000 hidden units, either fixed hidden sizes or decreasing hidden size denoted by \emph{pyramid} in the figures. We further compare the effectiveness of our sampling scheme in reducing the number of non-zero parameters of a network, i.e., in sparsifying the weight matrices, to that of uniform sampling, Singular Value Decomposition (SVD), and current state-of-the-art sampling schemes for matrix sparsification~\citep{drineas2011note,achlioptas2013matrix,kundu2014note}, which are based on matrix norms -- $\ell_1$ and $\ell_2$ (Frobenius). The details of the experimental setup and results of additional evaluations may be found in Appendix~\ref{app:results}. 

\paragraph{Experiment Setup}
We compare against three variations of our compression algorithm: (i) sole edge sampling (CoreNet), (ii) edge sampling with neuron pruning (CoreNet+), and (iii) edge sampling with neuron pruning and amplification (CoreNet++). For comparison, we evaluated the average relative error in output ($\ell_1$-norm) and average drop in classification accuracy relative to the accuracy of the uncompressed network. Both metrics were evaluated on a previously unseen test set.


\paragraph{Results}
Results for varying architectures and datasets are depicted in Figures~\ref{fig:classification} and ~\ref{fig:error} for the average drop in classification accuracy and relative error ($\ell_1$-norm), respectively. As apparent from Figure~\ref{fig:classification}, we are able to compress networks to about 15\% of their original size without significant loss of accuracy for networks trained on \textit{MNIST} and \textit{FashionMNIST}, and to about 50\% of their original size for \textit{CIFAR}.

\begin{figure}[htb!]
\centering
\includegraphics[width=0.325\textwidth]{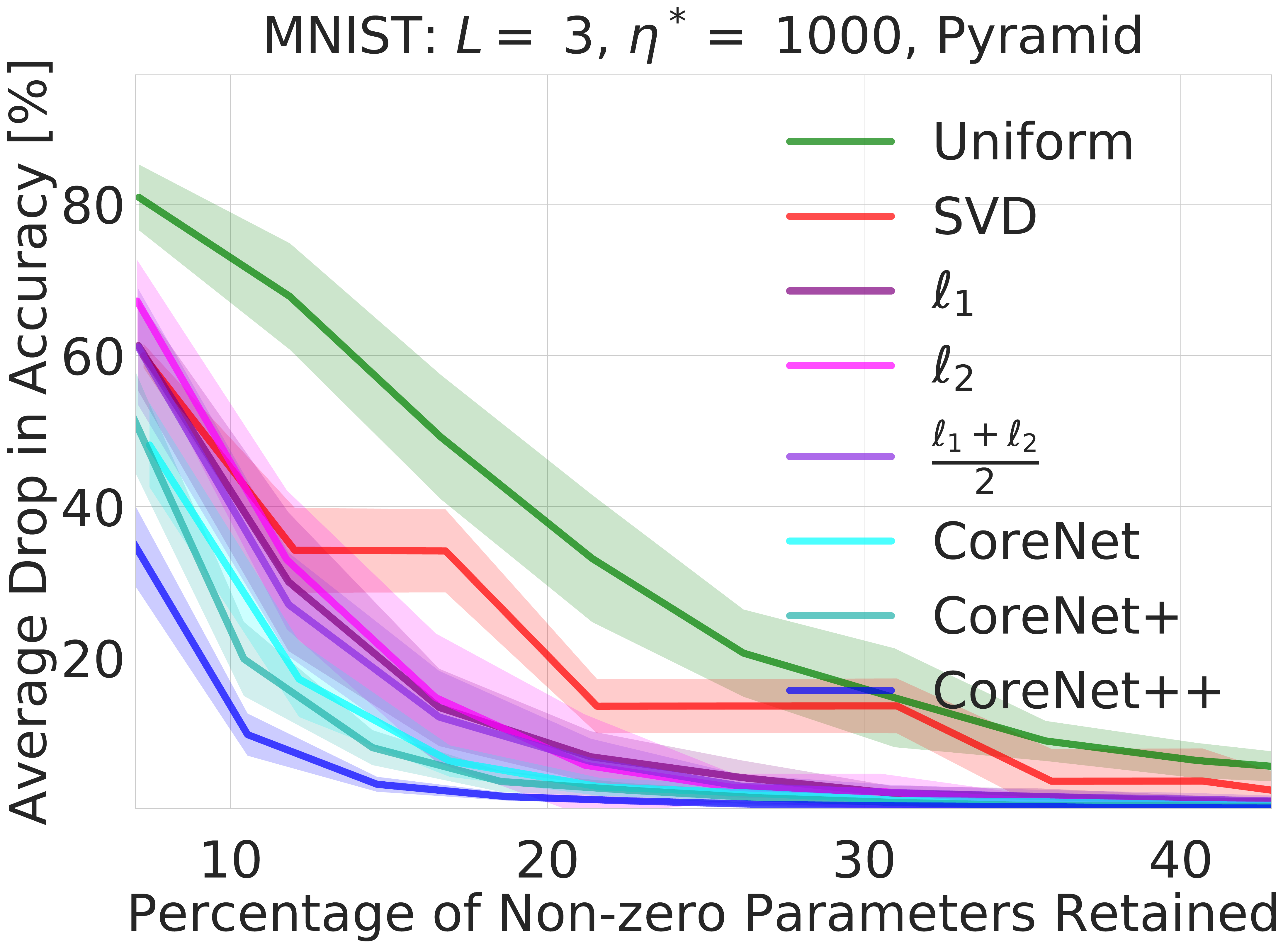} 
\includegraphics[width=0.325\textwidth]{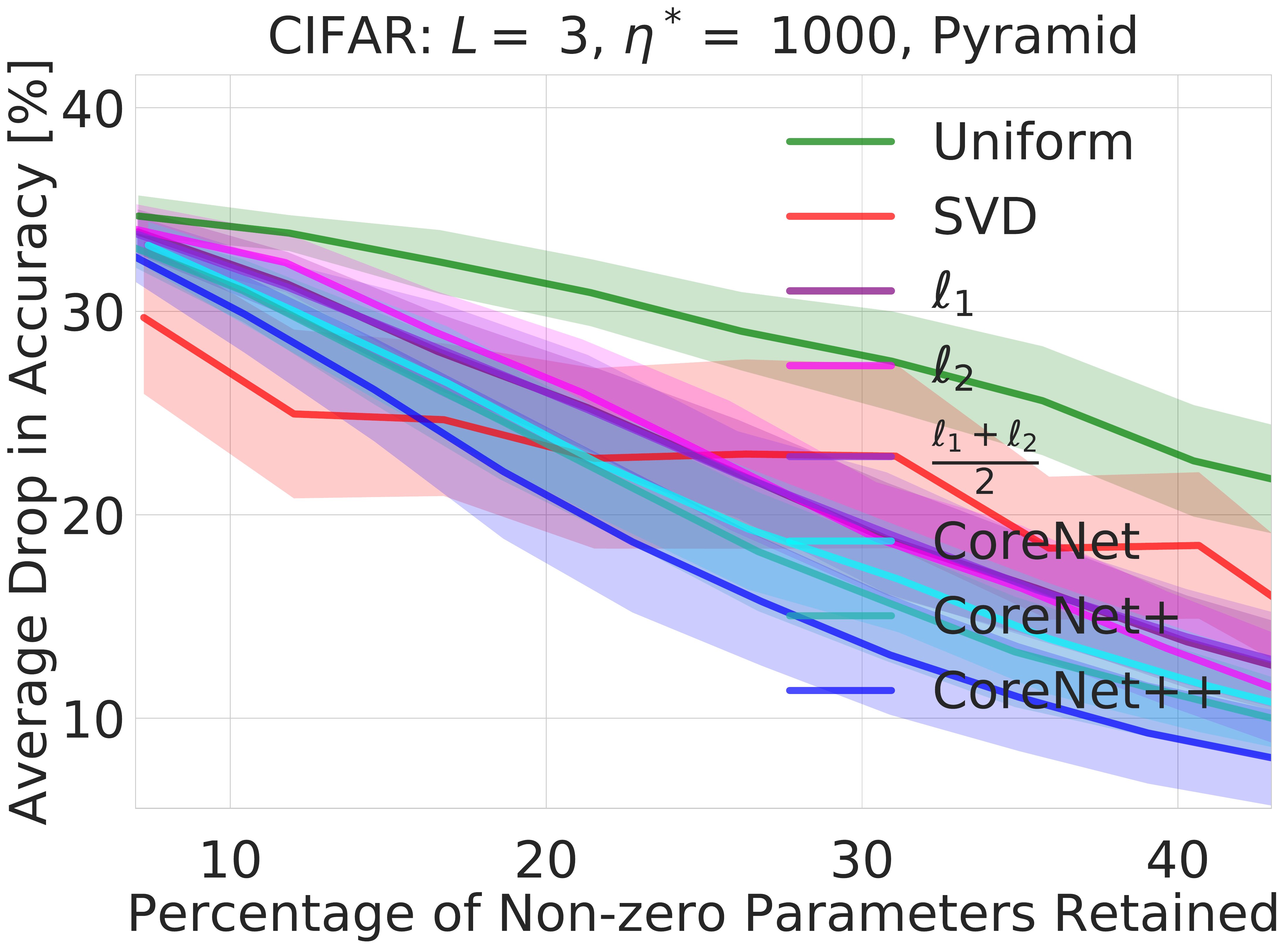}
\includegraphics[width=0.325\textwidth]{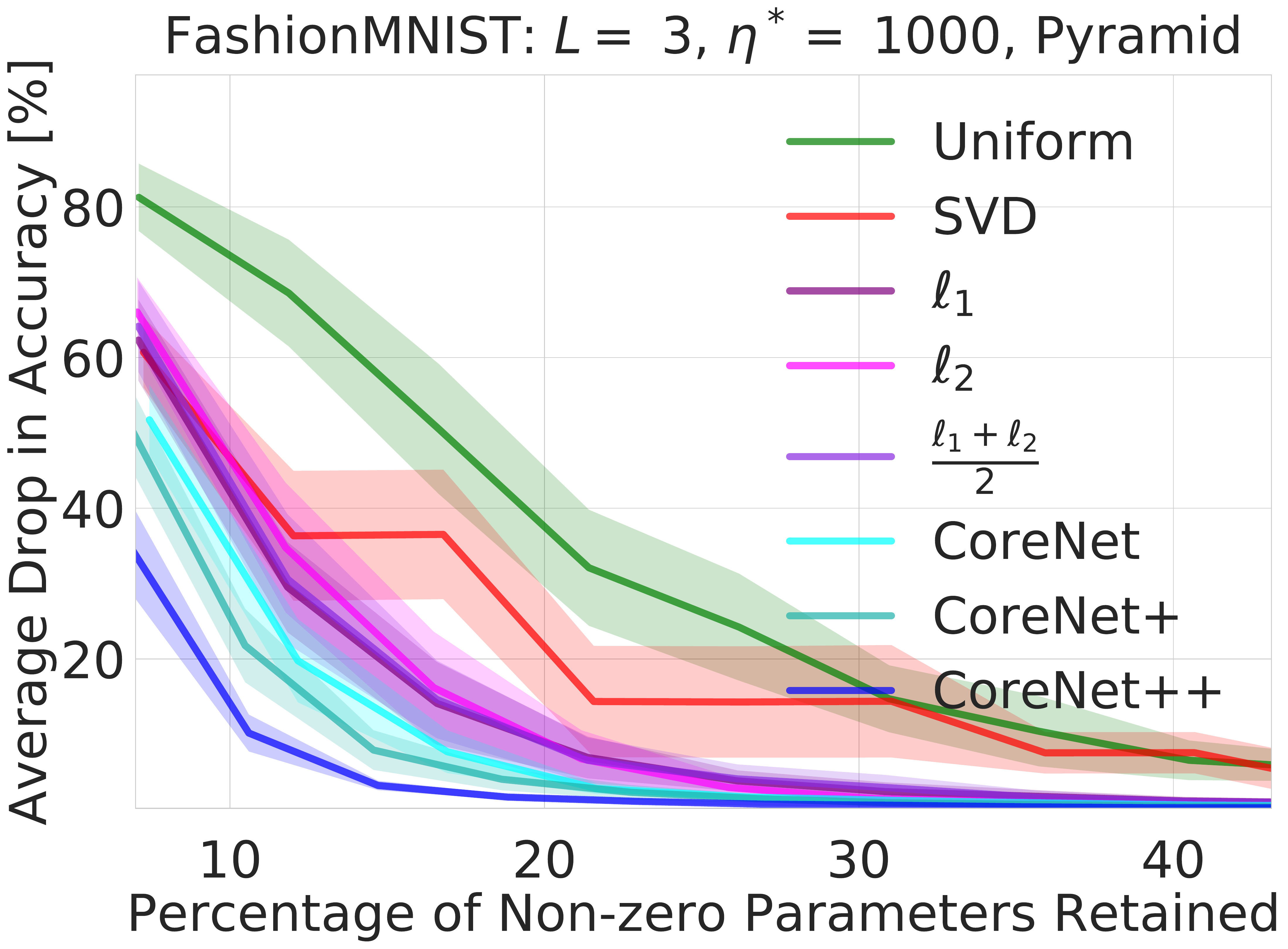}%
\caption{Evaluation of drop in classification accuracy after compression against the \textit{MNIST}, \textit{CIFAR}, and \textit{FashionMNIST} datasets with varying number of hidden layers ($L$) and number of neurons per hidden layer ($\eta^*$). Shaded region corresponds to values within one standard deviation of the mean.}
\label{fig:classification}
\end{figure}%
\begin{figure}[htb!]
\centering
\includegraphics[width=0.325\textwidth]{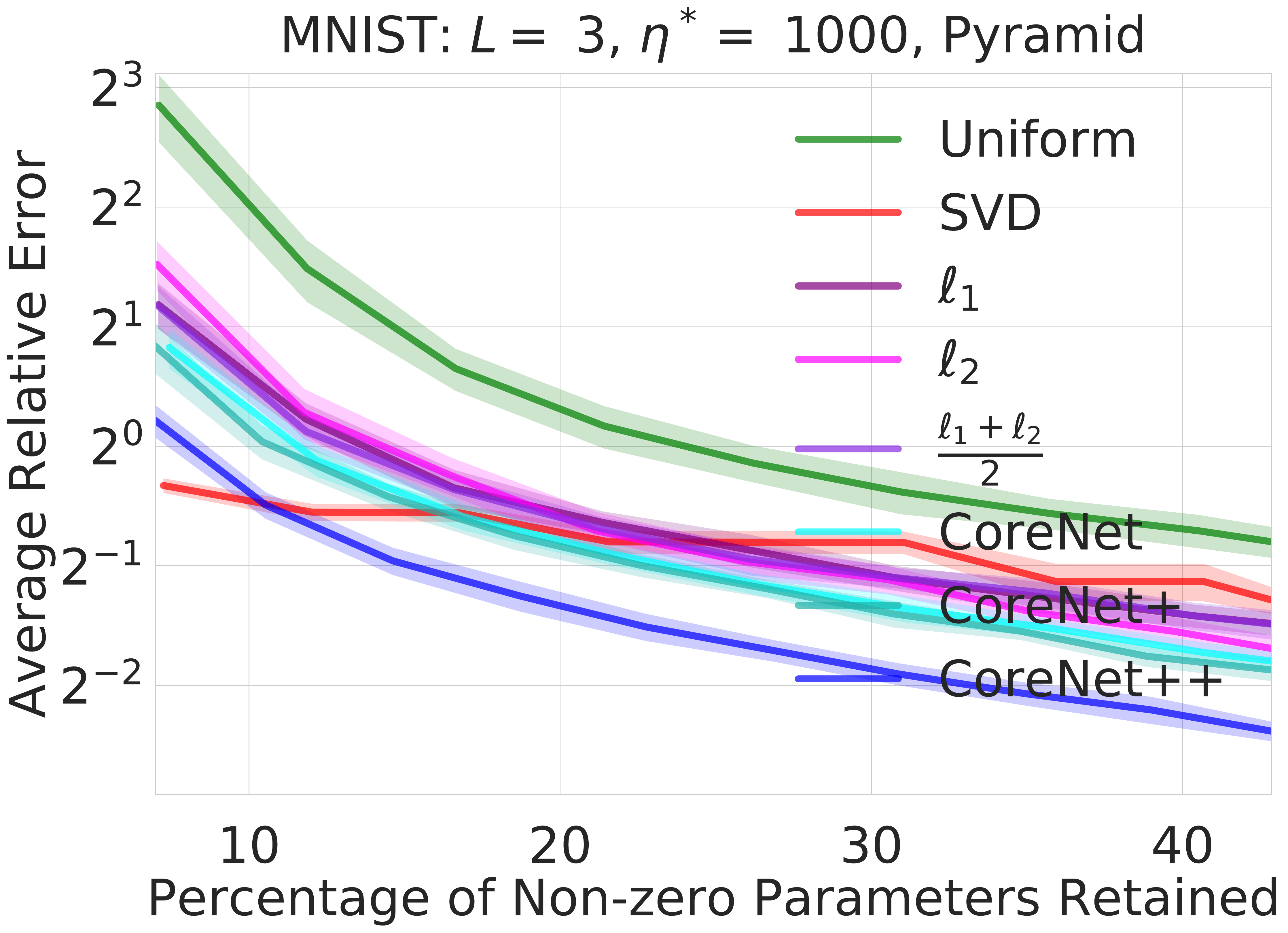} 
\includegraphics[width=0.325\textwidth]{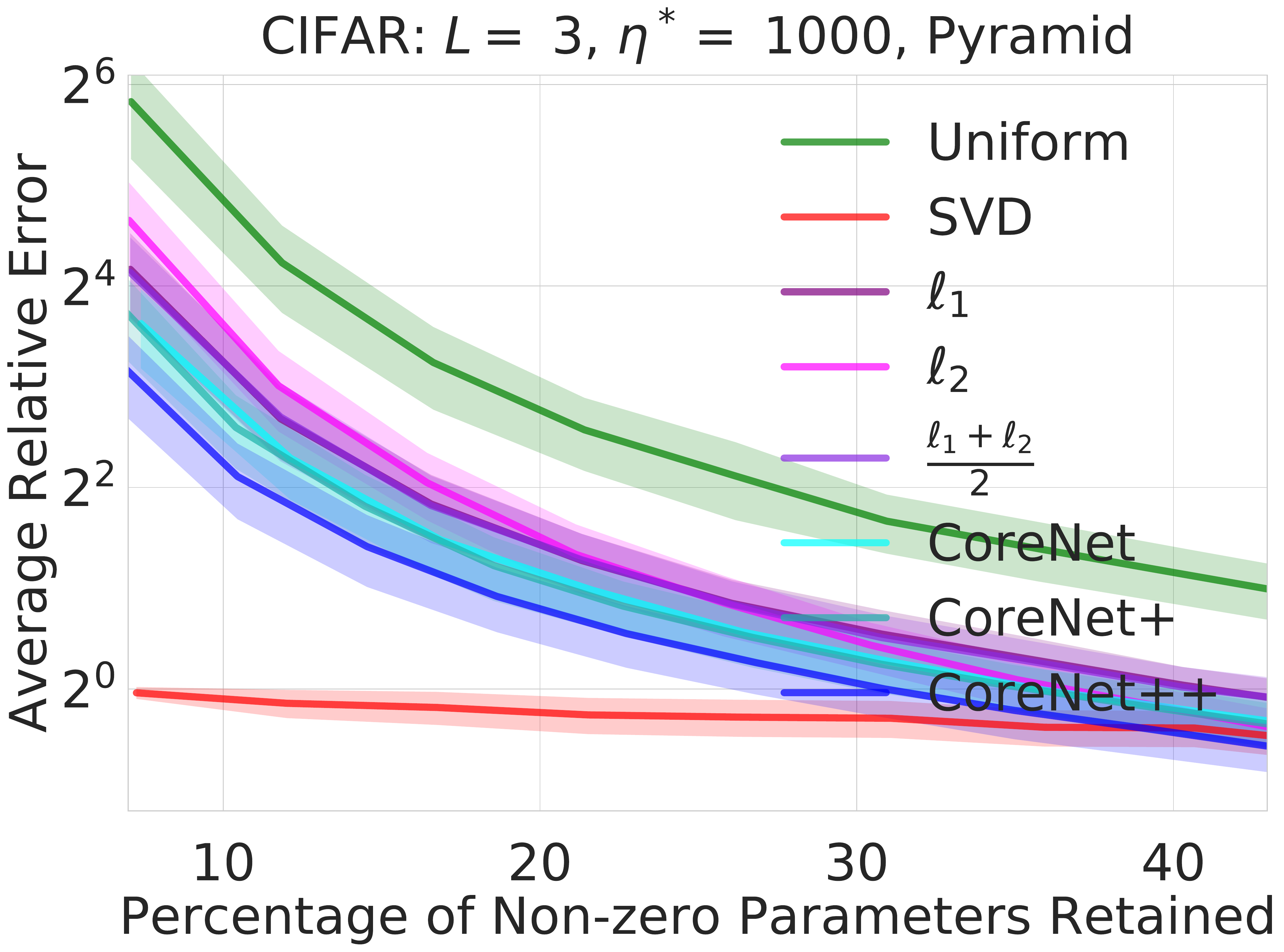}
\includegraphics[width=0.325\textwidth]{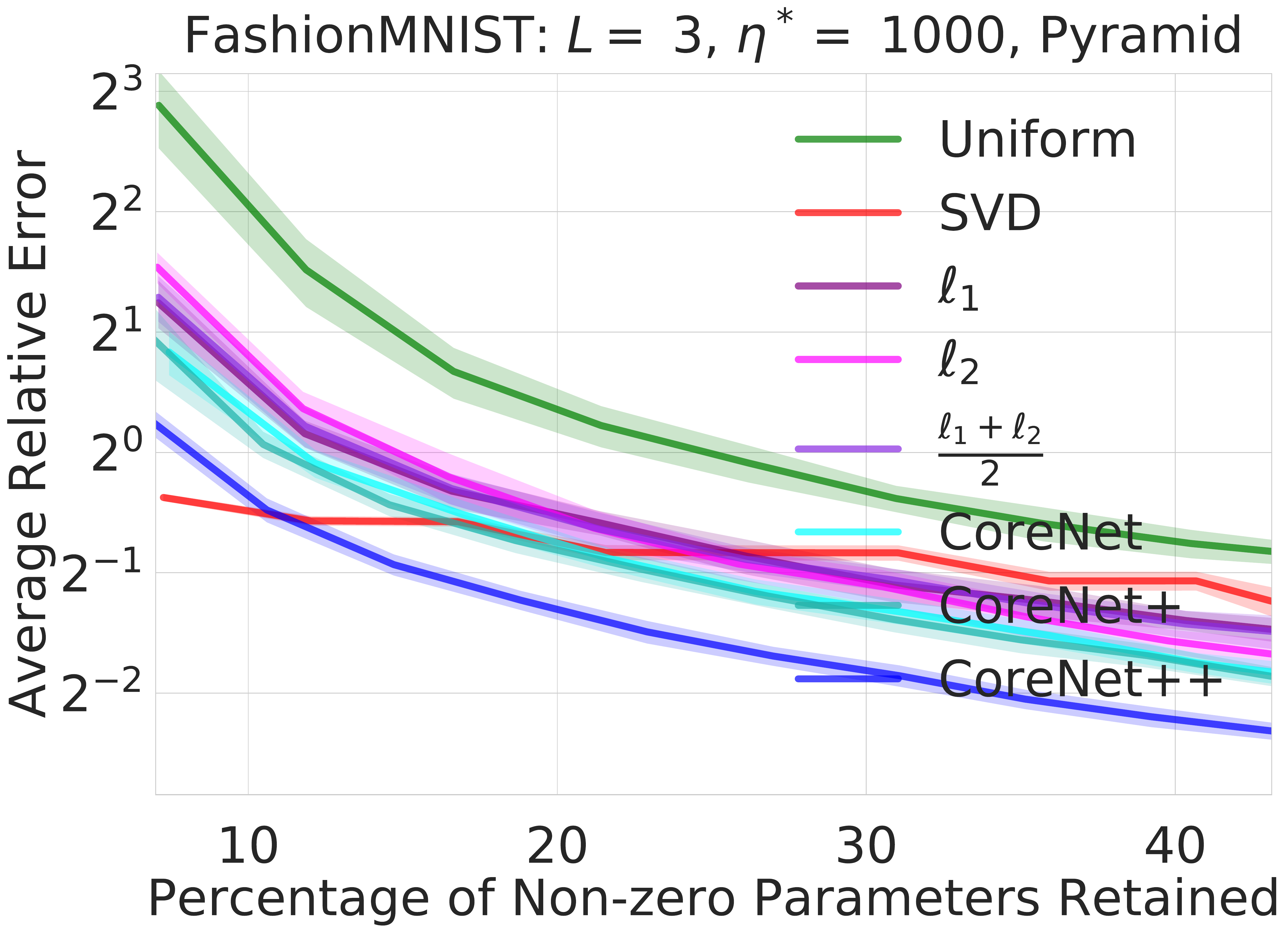}%
\caption{Evaluation of relative error after compression against the \textit{MNIST}, \textit{CIFAR}, and \textit{FashionMNIST} datasets with varying number of hidden layers ($L$) and number of neurons per hidden layer ($\eta^*$). 
}
\label{fig:error}
\end{figure}%

\paragraph{Discussion}
The simulation results presented in this section validate our theoretical results established in Sec.~\ref{sec:analysis}. In particular, our empirical results indicate that we are able to outperform networks compressed via competing methods in matrix sparsification across all considered experiments and trials. The results presented in this section further suggest that empirical sensitivity can effectively capture the relative importance of neural network parameters, leading to a more informed importance sampling scheme. Moreover, the relative performance of our algorithm tends to increase as we consider deeper architectures. These findings suggest that our algorithm may also be effective in compressing modern convolutional architectures, which tend to be very deep.


\FloatBarrier
    \section{Conclusion} 
\label{sec:conclusion}
We presented a coresets-based neural network compression algorithm for compressing the parameters of a trained fully-connected neural network in a manner that approximately preserves the network's output. Our method and analysis extend traditional coreset constructions to the application of compressing parameters, which may be of independent interest. Our work distinguishes itself from prior approaches in that it establishes theoretical guarantees on the approximation accuracy and size of the generated compressed network.
As a corollary to our analysis, we obtain generalization bounds for neural networks, which may provide novel insights on the generalization properties of neural networks. We empirically demonstrated the practical effectiveness of our compression algorithm on a variety of neural network configurations and real-world data sets. In future work, we plan to extend our algorithm and analysis to compress Convolutional Neural Networks (CNNs) and other network architectures. We conjecture that our compression algorithm can be used to reduce storage requirements of neural network models and enable fast inference in practical settings.

    \section*{Acknowledgments}
This research was supported in part by the  National Science Foundation award IIS-1723943. We thank Brandon Araki and Kiran Vodrahalli for valuable discussions and helpful suggestions. We would also like to thank Kasper Green Larsen, Alexander Mathiasen, and Allan Gronlund for pointing out an error in an earlier formulation of Lemma~\ref{lem:order-statistic-sampling}. 
\fi

\bibliographystyle{iclr2019_conference}
\bibliography{refs}

\ifappendixon
	\appendix
	\section{Proofs of the Analytical Results in Section~\ref{sec:analysis}}
\label{sec:appendix}
This section includes the full proofs of the technical results given in Sec.~\ref{sec:analysis}.

\subsection{Analytical Results for Section~\ref{sec:analysis_positive} (Importance Sampling Bounds for Positive Weights)}
\label{app:analysis_empirical}

\subsubsection{Order Statistic Sampling}
We now establish a couple of technical results that will quantify the accuracy of our approximations of edge importance (i.e., sensitivity).

\begin{lemma}
\label{lem:order-statistic-sampling}
Let $K, K' > 0$ be universal constants and let $\DD$ be a distribution with CDF $F(\cdot)$ satisfying $F(\nicefrac{M}{K}) \leq \exp(-1/K')$, where $M = \min \{x \in [0,1] : F(x) = 1\}$. Let $\PP = \{X_1, \ldots, X_n\}$ be a set of $n = |\PP|$ i.i.d. samples each drawn from the distribution $\DD$. Let $X_{n+1} \sim \DD$ be an i.i.d. sample. Then,
\begin{align*}
    \Pr \left(K \, \max_{X \in \PP} X < X_{n+1} \right) \leq \exp(-n/K) 
\end{align*}
\end{lemma}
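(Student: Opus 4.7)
The plan is to reduce the event $\{K \max_{X \in \PP} X < X_{n+1}\}$ to a pure tail event on the empirical maximum by exploiting the essential upper bound $M$ on the support of $\DD$. The key observation is that $X_{n+1} \sim \DD$ satisfies $X_{n+1} \leq M$ almost surely, since by definition $F(M) = 1$ (and the support is contained in $[0,1]$, which follows from the hypothesis $M \in [0,1]$).

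Concretely, I would proceed in three steps. First, I would note the almost-sure implication
\[
\{K \max_{X \in \PP} X < X_{n+1}\} \;\subseteq\; \{K \max_{X \in \PP} X < M\} \;=\; \{\max_{X \in \PP} X < M/K\},
\]
which decouples the bound from the new sample $X_{n+1}$. Second, by independence of the $X_i$'s, the probability of the right-hand event factorizes:
\[
\Pr\!\left(\max_{X \in \PP} X < M/K\right) \;=\; \prod_{i=1}^n \Pr(X_i < M/K) \;\leq\; F(M/K)^n.
\]
Third, invoking the tail hypothesis $F(M/K) \leq \exp(-1/K')$ yields
\[
F(M/K)^n \;\leq\; \exp(-n/K'),
\]
which is the desired bound (the lemma's conclusion reads $\exp(-n/K)$, and I would flag that the argument as written produces $\exp(-n/K')$ from the stated assumption; this is either a typographical conflation of the two constants in the hypothesis or an implicit choice $K = K'$).

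There is no real obstacle: the lemma is essentially a one-line observation that the CDF tail bound at the threshold $M/K$ lifts to an exponential-in-$n$ decay for the empirical maximum, once we have used $X_{n+1} \leq M$ to eliminate dependence on $X_{n+1}$. The only mildly delicate point is ensuring that $\Pr(X_i < M/K) \leq F(M/K)$ (as opposed to $\Pr(X_i \leq M/K)$), which holds because $\Pr(X_i < y) \leq \Pr(X_i \leq y) = F(y)$ for every $y$; this is why the strict inequality in the event definition causes no issue.
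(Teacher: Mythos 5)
Your proposal is correct and takes essentially the same route as the paper's proof: both reduce the claim to $F(M/K)^n \leq \exp(-n/K')$ via the i.i.d.\ factorization, the only difference being that you eliminate $X_{n+1}$ upfront through the almost-sure bound $X_{n+1} \leq M$, whereas the paper conditions on $X_{n+1} = x$, integrates over $[0,M]$, and uses monotonicity of $F$. Your flag about the exponent is also warranted: the $\exp(-n/K)$ in the statement is a typo, since the paper's own proof (and its subsequent use in Lemma~\ref{lem:sensitivity-approximation}) yields $\exp(-n/K')$.
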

\begin{proof}
Let $\xmax = \max_{X \in \PP}$; then,
\begin{align*}
\Pr(K \, \xmax < X_{n+1}) &= \int_{0}^M \Pr(\xmax < \nicefrac{x}{K} | X_{n+1} = x) \, d \Pr(x) \\
&= \int_{0}^M \Pr\left(X < \nicefrac{x}{K} \right)^n \, d \Pr(x) &\text{since $X_1, \ldots, X_n$ are i.i.d.} \\
&\leq \int_{0}^M F(\nicefrac{x}{K})^n \, d \Pr(x) &\text{where $F(\cdot)$ is the CDF of $X \sim \DD$} \\
&\leq  F(\nicefrac{M}{K})^n \int_{0}^M \, d \Pr(x) &\text{by monotonicity of $F$} \\
&= F(\nicefrac{M}{K})^n \\
&\leq \exp(-n/K')  &\text{CDF Assumption},
\end{align*}
and this completes the proof.
\end{proof}

We now proceed to establish that the notion of empirical sensitivity is a good approximation for the relative importance. For this purpose, let the relative importance $\gHat{\Input}$ of an edge $\edge$ after the previous layers have already been compressed be
$$
\gHat{\Input} = \gHatDef{\Input}.
$$

\begin{lemma}[Empirical Sensitivity Approximation]
\label{lem:sensitivity-approximation}
Let $\epsilon \in (0,1/2), \delta \in (0,1)$, $\ell \in \compressiblelayers$, Consider a set $\SS = \{\Point_1, \ldots, \Point_n\} \subseteq \PP$ of size $|\SS| \ge \SizeOfS$. Then, conditioned on the event $\condGood$ occurring, i.e., $\condGoodDef$,
$$
\Pr_{\Point \sim \DD} \left(\exists{\edge \in \Wpm} : C \, \s < \gHat{\Input} \given \condGood \right) \leq \frac{\delta} {8 \, \eta},
$$
where $C = \CDef$ and $\Wpm \subseteq [\eta^{\ell-1}]$.
\end{lemma}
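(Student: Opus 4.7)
The goal is to show that the empirical sensitivity $s_\edge$ computed from the subsample $\SS$ upper bounds, up to a constant factor $C = 3K$, the ``post-compression'' relative importance $\gHat{\Point}$ for every $\edge \in \Wpm$ simultaneously, with failure probability at most $\delta/(8\eta)$ over the draw of $\Point \sim \DD$. The plan has two distinct ingredients: a deterministic step that relates $\gHat{\cdot}$ to $g_\edge(\cdot)$ using the conditioning event $\condGood$, and a probabilistic step that controls $g_\edge(\Point)$ relative to $s_\edge = \max_{\Input \in \SS} g_\edge(\Input)$ via the order statistic lemma.

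\textbf{Step 1 (deterministic comparison).} Under $\condGood$, for every index $\idx$ we have $\hat a_\idx(\Point) \in (1 \pm \tfrac{1}{2}) a_\idx(\Point)$. Since every $\WWRowCon_\idx$ for $\idx \in \Wpm$ is positive and the activations are non-negative, the numerator of $\gHat{\Point}$ is at most $(3/2)\,\WWRowCon_\edge\, a_\edge(\Point)$ and its denominator is at least $(1/2) \sum_{\idx \in \Wpm} \WWRowCon_\idx\, a_\idx(\Point)$. Dividing yields the pointwise bound $\gHat{\Point} \le 3\,\g{\Point}$ whenever $\condGood$ holds. Consequently, under $\condGood$, the event $\{C\, s_\edge < \gHat{\Point}\} = \{3K\, s_\edge < \gHat{\Point}\}$ is contained in $\{K\, s_\edge < g_\edge(\Point)\}$.

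\textbf{Step 2 (order statistic).} Assumption~\ref{asm:cdf} states that the CDF $\cdf{\cdot}$ of $g_\edge(\Input)$ satisfies $\cdf{M_\edge/K} \le \exp(-1/K')$. Since $\SS$ is a uniform subsample from the i.i.d.\ set $\PP$ and $\Point \sim \DD$ can be treated as an additional independent draw, Lemma~\ref{lem:order-statistic-sampling} applied to $\{g_\edge(\Input) : \Input \in \SS \cup \{\Point\}\}$ gives
\[
\Pr\bigl(K\, s_\edge < g_\edge(\Point)\bigr) \;=\; \Pr\Bigl(K\, \max_{\Input \in \SS} g_\edge(\Input) < g_\edge(\Point)\Bigr) \;\le\; \exp(-|\SS|/K').
\]
Combining with Step 1 gives $\Pr\bigl(C\, s_\edge < \gHat{\Point} \mid \condGood\bigr) \le \exp(-|\SS|/K')$ for each fixed $\edge$ (treating the conditioning on $\condGood$ as built into the setup, as is standard in the subsequent layer-wise argument in Lemma~\ref{lem:pos-weights-approx}).

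\textbf{Step 3 (union bound and sample-size choice).} Union-bounding over the at most $\eta^{\ell - 1} \le \eta^*$ edges in $\Wpm$, the overall failure probability is at most $\eta^* \exp(-|\SS|/K')$. Plugging in $|\SS| \ge \SizeOfS = \lceil K' \log(8\,\eta\,\eta^*/\delta)\rceil$ makes this at most $\delta/(8\eta)$, which is the advertised bound.

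\textbf{Anticipated obstacle.} The main subtlety is the interaction between the conditioning on $\condGood$ (an event depending on both $\Point$ and the compressed matrices $\WWHat^{2},\ldots,\WWHat^{\ell-1}$ from earlier layers) and the i.i.d.\ structure needed to invoke Lemma~\ref{lem:order-statistic-sampling}. I expect the write-up to sidestep this by fixing the realization of $\hat a(\cdot)$ (so that $s_\edge$, which depends only on the \emph{uncompressed} activations on $\SS$, and $g_\edge(\Point)$ retain their joint i.i.d.\ law in $\Point$ and $\SS$), treating $\condGood$ as a purely deterministic restriction of Step~1. Verifying that nothing is lost by this reduction is the only non-routine bookkeeping in the argument; the rest is a mechanical composition of the order statistic bound with the $(1 \pm 1/2)$ perturbation bound on the activations.
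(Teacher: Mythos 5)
Your proposal is correct and mirrors the paper's own argument step for step: the same deterministic bound $\gHat{\Input} \le 3\,\g{\Input}$ under $\condGood$ with $C = 3K$, the same application of Lemma~\ref{lem:order-statistic-sampling} to get $\exp(-|\SS|/K')$ per edge, and the same union bound over $|\Wpm| \le \eta^*$ edges with the stated sample size. The conditioning subtlety you flag is handled in the paper at essentially the same level of informality (via the implication $\bigl(\hat{\EE} \given \condGood\bigr) \Rightarrow \EE$), so nothing further is needed.
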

\begin{proof}
Consider an arbitrary $\edge \in \Wpm$ and $\Input' \in \SS$ corresponding to $\g{\Input'}$ with CDF $\cdf{\cdot}$ and recall that $M = \min \{x \in [0,1] : \cdf{x} = 1\}$ as in Assumption~\ref{asm:cdf}. Note that by Assumption~\ref{asm:cdf}, we have 
$$
F(\nicefrac{M}{K}) \leq \exp(-1/K'),
$$
and so the random variables $\g{\Input'}$ for $\Input' \in \SS$ satisfy the CDF condition required by Lemma~\ref{lem:order-statistic-sampling}. Now let $\EE$ be the event that $K \, \s < \g{\Input}$ holds. Applying Lemma~\ref{lem:order-statistic-sampling}, we obtain
\begin{align*}
\Pr( \EE) &= \Pr(K \, \s < \g{\Input} ) = \Pr \left(K \, \max_{\Point' \in \SS} \g{\Input'} < \g{\Input} \right) \leq \exp(-|\SS|/K').
\end{align*}

Now let $\hat{\EE}$ denote the event that the inequality $C \s < \gHat{\Input} = \gHatDef{\Input}$ holds and note that the right side of the inequality is defined with respect to $\gHat{\Input}$ and not $\g{\Input}$. Observe that since we conditioned on the event $\condGood$, we have
that 
$
\condGoodDef.
$

Now assume that event $\hat{\EE}$ holds and note that by the implication above, we have
\begin{align*}
C \, \s < \gHat{\Input} &= \gHatDef{\Input} \leq  \frac{ (1 + \nicefrac{1}{2}) \WWRowCon_\edge \, a_{\edge}(\Input)}{(1 - \nicefrac{1}{2}) \sum_{\idx \in \Wpm} \WWRowCon_\idx \,  a_{\idx}(\Input) } \\
&\leq 3 \cdot \gDef{\Input} = 3 \, \g{\Input}.
\end{align*}
where the second inequality follows from the fact that $\nicefrac{1 + 1/2}{1 - 1/2} \leq 3$. Moreover, since we know that $C \ge 3 K$, we conclude that if event $\hat{\EE}$ occurs, we obtain the inequality
$$
3 \, K \, \s \leq 3 \, \g{\Input} \Leftrightarrow K \, \s \leq \g{\Input},
$$
which is precisely the definition of event $\EE$. Thus, we have shown the conditional implication $\big(\hat{\EE} \given \condGood \big) \Rightarrow \EE$, which implies that
\begin{align*}
\Pr(\hat{\EE} \given \condGood) &= \Pr(C \, \s < \gHat{\Input} \given \condGood) \leq \Pr(\EE) \\
&\leq \exp(-|\SS|/K').
\end{align*}

Since our choice of $\edge \in \Wpm$ was arbitrary, the bound applies for any $\edge \in \Wpm$. Thus, we have by the union bound
\begin{align*}
\Pr(\exists{\edge \in \Wpm} \,: C \, \s < \gHat{\Input} \given \condGood) &\leq \sum_{\edge \in \Wpm} \Pr(C \, \s < \gHat{\Input} \given \condGood) \leq \abs{\Wpm} \exp(-|\SS|/K') \\
&= \left(\frac{|\Wpm|}{\eta^*} \right) \frac{\delta}{8 \eta} \leq \frac{\delta}{8 \eta}.
\end{align*}
\end{proof}

In practice, the set $\SS$ referenced above is chosen to be a subset of the original data points, i.e., $\SS \subseteq \PP$ (see Alg.~\ref{alg:main}, Line~\ref{lin:s-construction}). Thus, we henceforth assume that the size of the input points $|\PP|$ is large enough (or the specified parameter $\delta \in (0,1)$ is sufficiently large) so that $|\PP| \ge |\SS|$.


\subsubsection{Proof of Lemma~\ref{lem:pos-weights-approx}}
We now state the proof of Lemma~\ref{lem:pos-weights-approx}. In this subsection, we establish approximation guarantees under the assumption that the weights are strictly positive. The next subsection will then relax this assumption to conclude that a neuron's value can be approximated well even when the weights are not all positive.

\lemposweightsapprox*

\begin{proof}
Let $\epsilon, \delta \in (0,1)$ be arbitrary. Moreover, let $\CC$ be the coreset with respect to the weight indices $\Wpm \subseteq [\eta^{\ell-1}]$ used to construct $\WWHatRowCon$. Note that as in \textsc{Sparsify}, $\CC$ is a multiset sampled from $\Wpm$ of size
$
\mNeuron = \SampleComplexity[\epsilon],
$
where $\SNeuron = \sum_{\edge \in \Wpm} \s $ and $\CC$ is sampled according to the probability distribution $q$ defined by
$$
\qPM{\edge} = \frac{\s}{\SNeuron} \qquad \forall{\edge \in \Wpm}.
$$
Let $\realiz(\cdot)$ be an arbitrary realization of the random variable $\hat{a}^{\ell-1}(\cdot)$, let $\xx$ be a realization of $\Point \sim \DD$, and let
$$
\hat{z} = \sum_{\idx \in \Wpm} \WWHatRow[ \idx] \, \realiz_\idx(\xx)
$$
be the approximate intermediate value corresponding to the sparsified matrix $\WWHatRowCon$ and let
$$
\tilde z = \sum_{\idx \in \Wpm} \WWRow[ \idx] \, \realiz_\idx(\xx).
$$

Now define $\EE$ to be the (favorable) event that $\hat z$ $\epsilon$-approximates $\tilde z$, i.e., $\hat z \in (1 \pm \epsilon) \tilde z$, We will now show that the complement of this event, $\EE^\compl$, occurs with sufficiently small probability. Let $\ZZ \subseteq \supp$ be the set of \emph{well-behaved} points (defined implicitly with respect to neuron $\neuron \in [\eta^\ell]$ and realization $\realiz$) and defined as follows:
$$
\ZZ = \left\{\Input' \in \supp \, : \, \gHat{\Input'} \leq C \s  \quad \forall{\edge \in \Wpm} \right \},
$$
where $C = \CDef$. Let $\EE_{\ZZ}$ denote the event that $\xx \in \ZZ$ where $\xx$ is a realization of $\Point \sim \DD$.

\paragraph{Conditioned on $\EE_\ZZ$, event $\EE^\compl$ occurs with probability $\leq \frac{\delta}{4 \eta}$:}
Let $\xx$ be a realization of $\Point \sim \DD$ such that $\xx \in \ZZ$ and let $\CC = \{c_1, \ldots, c_{\mNeuron}\}$ be $\mNeuron$ samples from $\Wpm$ with respect to distribution $\qPM{}$ as before. Define $\mNeuron$ random variables $\T[c_1], \ldots, \T[c_\mNeuron]$ such that for all $\edge \in \CC$
\begin{align}
\label{eqn:tplu-defn}
\T[\edge] &= \frac{\WWRow[\edge]  \, \realiz_{\edge} (\xx) }{\mNeuron \, \qPM{\edge}}=  \frac{\SNeuron \, \WWRow[ \edge]  \, \realiz_{\edge} (\xx) }{\mNeuron \, \s[\edge]}.
\end{align}
For any $\edge \in \CC$, we have for the conditional expectation of $\T[\edge]$:
\begin{align*}
\E [\T[\edge] \givenRealizZ] &= \sum_{\idx \in \Wpm} \frac{\WWRow[\idx] \, \realiz_{\idx} (\xx)}{\mNeuron \, \qPM{\idx}} \cdot \qPM{\idx} \\
&= \sum_{\idx \in \Wpm} \frac{\WWRow[\idx]  \, \realiz_\idx (\xx)}{\mNeuron} \\
&= \frac{\tilde z}{\mNeuron},
\end{align*}
where we use the expectation notation $\E[\cdot]$ with the understanding that it denotes the conditional expectation $\EPlu[\cdot]$. Moreover, we also note that conditioning on the event $\EE_\ZZ$ (i.e., the event that $\xx \in \ZZ$) does not affect the expectation of $\T[\edge]$. 
Let $\T = \sum_{\edge \in \CC} \T[\edge] = \hat z$ denote our approximation and note that by linearity of expectation,
$$
\E[\T  \givenRealizZ ] = \sum_{\edge \in \CC} \E [\T[\edge] \givenRealizZ ] = \tilde z
$$
Thus, $\hat z = \T$ is an unbiased estimator of $\tilde z$ for any realization $\realiz(\cdot)$ and $\xx$; thus, we will henceforth refer to $\E[\T  \given \realiz(\cdot), \, \xx ]$ as simply $\tilde z$ for brevity.

For the remainder of the proof we will assume that $\tilde z > 0$, since otherwise, $\tilde z = 0$ if and only if $\T[\edge] = 0$ for all $\edge \in \CC$ almost surely, which follows by the fact that $\T[\edge] \ge 0$ for all $\edge \in \CC$ by definition of $\Wpm$ and the non-negativity of the ReLU activation. Therefore, in the case that $\tilde z = 0$, it follows that 
$$
\Pr (|\hat{z} - \tilde z| > \epsilon \tilde z \givenRealiz) = \Pr(\hat{z} > 0  \givenRealiz) = \Pr( 0 > 0)  = 0,
$$
which trivially yields the statement of the lemma,
where in the above expression, $\Pr(\cdot)$ is short-hand for the conditional probability $\Pr_{\WWHatRowCon \given \hat a^{l-1}(\cdot), \, \Point}(\cdot)$.

We now proceed with the case where $\tilde z > 0$ and leverage the fact that $\xx \in \ZZ$\footnote{Since we conditioned on the event $\EE_\ZZ$.} to establish that for all $\edge \in \Wpm$:
\begin{align}
C \s &\ge \gHat{\xx} = \frac{\WWRow[\edge] \, \realiz_{\edge}(\xx)}{\sum_{\idx \in \Wpm} \WWRow[ \idx] \, \realiz_\idx(\xx)} = \frac{\WWRow[\edge] \, \realiz_{\edge}(\xx)}{\tilde z} \nonumber \\
\Leftrightarrow \quad \frac{\WWRow[\edge] \, \realiz_{\edge}(\xx)}{\s[\edge]} &\leq C \, \tilde z. \label{eqn:sens-inequality}
\end{align}
Utilizing the inequality established above, we bound the conditional variance of each $\T[\edge], \, \edge \in \CC$ as follows
\begin{align*}
\Var(\T[\edge] \givenRealizZ) &\leq \E[(\T[\edge])^2 \givenRealizZ] \\
&= \sum_{\idx \in \Wpm} \frac{(\WWRow[\idx] \, \realiz_{\idx}(\xx))^2}{(\mNeuron \, \qPM{\idx})^2} \cdot \qPM{\idx} \\
&= \frac{\SNeuron}{\mNeuron^2} \, \sum_{\idx \in \Wpm} \frac{(\WWRow[\idx] \, \realiz_{\idx}(\xx))^2}{\s[\idx]} \\
&\leq \frac{\SNeuron}{\mNeuron^2} \, \left(\sum_{\idx \in \Wpm}\WWRow[\idx]  \, \realiz_{\idx}(\xx) \right) C\, \tilde z \\
&= \frac{\SNeuron \, C \, \tilde z^2}{\mNeuron^2},
\end{align*}
where $\Var(\cdot)$ is short-hand for $\VarCond(\cdot)$.
Since $\T$ is a sum of (conditionally) independent random variables, we obtain 
\begin{align}
\label{eqn:varplu-bound}
\Var(\T \givenRealizZ) &= \mNeuron \Var(\T[\edge] \givenRealizZ) \\
&\leq \frac{\SNeuron \, C \, \tilde z^2}{\mNeuron}.  \nonumber
\end{align}

Now, for each $\edge \in \CC$ let 
$$
\TTilde[\edge] = \T[\edge] - \E [\T[\edge] \givenRealizZ] = \T[\edge] - \tilde z,
$$
and let $\TTilde = \sum_{\edge \in \CC} \TTilde[\edge]$. Note that by the fact that we conditioned on the realization $\xx$ of $\Point$ such that $\xx \in \ZZ$ (event $\EE_\ZZ$), we obtain by definition of $\T[\edge]$ in \eqref{eqn:tplu-defn} and the inequality \eqref{eqn:sens-inequality}:
\begin{equation}
\label{eqn:tplu-bound}
\T[\edge] = \frac{\SNeuron \, \WWRow[\edge] \, \realiz_{\edge} (\xx) }{\mNeuron \, \s[\edge]} \leq \frac{\SNeuron \, C \,  \tilde z}{\mNeuron}.
\end{equation}
We also have that $\SNeuron \ge 1$ by definition. More specifically, using the fact that the maximum over a set is greater than the average and rearranging sums, we obtain
\begin{align*}
\SNeuron &= \sum_{\edge \in \Wpm} \s =  \sum_{\edge \in \Wpm} \max_{\xx' \in \SS} \,\, \g{\xx'}  \\
&\ge \frac{1}{|\SS|} \sum_{\edge \in \Wpm} \sum_{\xx' \in \SS} \g{\xx'} =  \frac{1}{|\SS|}  \sum_{\xx' \in \SS} \sum_{\edge \in \Wpm} \g{\xx'} \\
&= \frac{1}{|\SS|} \sum_{\xx' \in \SS} 1 = 1.
\end{align*}
Thus, the inequality established in \eqref{eqn:tplu-bound} with the fact that $\SNeuron \ge 1$ we obtain an upper bound on the absolute value of the centered random variables: 
\begin{equation}
\label{eqn:tplutilde-bound}
|\TTilde[\edge]| =  \left|\T[\edge] - \frac{\tilde z}{\mNeuron}\right| \leq \frac{\SNeuron \, C \,  \tilde z}{\mNeuron} = \bound, 
\end{equation}
which follows from the fact that:
\paragraph{if $\T[\edge] \ge \frac{\tilde z}{\mNeuron}$:} Then, by our bound in \eqref{eqn:tplu-bound} and the fact that $\frac{\tilde z}{\mNeuron} \ge 0$, it follows that
\begin{align*}
\abs{\TTilde[\edge]} &= \T[\edge] - \frac{\tilde z}{\mNeuron} \leq \frac{\SNeuron \, C \, \tilde z}{\mNeuron} - \frac{\tilde z}{\mNeuron} \leq \frac{\SNeuron \, C \, \tilde z}{\mNeuron}.
\end{align*}
\paragraph{if $\T[\edge] < \frac{\tilde z}{\mNeuron}$:} Then, using the fact that $\T[\edge] \ge 0$ and $\SNeuron \ge 1$, we obtain
\begin{align*}
\abs{\TTilde[\edge]} &= \frac{\tilde z}{\mNeuron} - \T[\edge] \leq \frac{\tilde z}{\mNeuron} \leq \frac{\SNeuron \, C \, \tilde z}{\mNeuron}.
\end{align*}

Applying Bernstein's inequality to both $\TTilde$ and $-\TTilde$ we have by symmetry and the union bound,
\begin{align*}
\Pr (\EE^\compl \givenRealizZ) &= \Pr \left(\abs{\T - \tilde z} \ge \epsilon \tilde z \givenRealizZ \right) \\
&\leq 2 \exp \left(-\frac{\epsilon^2 \tilde z^2}{2 \Var(\T \givenRealiz) + \frac{2 \, \epsilon \, \tilde z \bound}{3}}\right) \\
&\leq 2 \exp \left(-\frac{\epsilon^2 \tilde z^2}{ \frac{2 \SNeuron C \, \tilde z^2}{\mNeuron} + \frac{2 \SNeuron \, C \,  \tilde z^2}{3 \mNeuron}} \right) \\
&= 2 \exp \left(-\frac{3 \, \epsilon^2 \, \mNeuron}{8 \SNeuron \, C } \right) \\
&\leq \frac{\delta}{4 \eta },
\end{align*}
where the second inequality follows by our upper bounds on $\Var(\T \givenRealiz)$ and $\abs{\TTilde[\edge]}$ and the fact that $\epsilon \in (0,1)$, and the last inequality follows by our choice of $\mNeuron = \SampleComplexity[\epsilon]$. This establishes that for any realization $\realiz(\cdot)$ of $\hat a^{l-1}(\cdot)$ and a realization $\xx$ of $\Point$ satisfying $\xx \in \ZZ$, the event $\EE^\compl$ occurs with probability at most $\frac{\delta}{4 \eta}$.

\paragraph{Removing the conditioning on $\EE_\ZZ$:}
We have by law of total probability
\begin{align*}
\Pr (\EE \given \realiz(\cdot), \, \condGood)
&\ge \int_{\xx \in \ZZ} 
\Pr(\EE \givenRealizZ) \Pr_{\Point \sim \DD} (\Point = \xx \given \realiz(\cdot), \, \condGood) \, d \xx  \\
&\ge \left(1 - \frac{\delta}{4 \eta }\right)  \int_{\xx \in \ZZ}  \Pr_{\Point \sim \DD} (\Point = \xx \given \realiz(\cdot), \, \condGood) \, d \xx  \\
&= \left(1 - \frac{\delta}{4 \eta }\right)  \Pr_{\Point \sim \DD} (\EE_\ZZ \given \realiz(\cdot), \, \condGood) \\
&\ge \left(1 - \frac{\delta}{4 \eta }\right) \left(1 - \frac{\delta } {8 \eta }\right) \\
&\ge 1 - \frac{3 \delta}{8 \eta}
\end{align*}
where the second-to-last inequality follows from the fact that $\Pr (\EE^\compl \givenRealizZ) \leq \frac{\delta}{4 \eta }$ as was established above and the last inequality follows by Lemma~\ref{lem:sensitivity-approximation}.

\paragraph{Putting it all together}
Finally, we marginalize out the random variable $\hat a^{\ell -1}(\cdot)$ to establish
\begin{align*}
\Pr(\EE \given \condGood ) &= \int_{\realiz(\cdot)} \Pr (\EE \given \realiz(\cdot), \, \condGood ) \Pr(\realiz(\cdot) \given \condGood ) \, d \realiz(\cdot) \\
&\ge \left(1 - \frac{3 \delta}{8 \eta}\right) \int_{\realiz(\cdot)} \Pr(\realiz(\cdot) \given \condGood ) \, d \realiz(\cdot) \\
&= 1 - \frac{3 \delta}{8 \eta}.
\end{align*}
Consequently, 
\begin{align*}
\Pr(\EE^\compl \given \condGood ) &\leq 1 - \left(1 - \frac{3 \delta}{8 \eta}\right) = \frac{3 \delta}{8 \eta},
\end{align*}
and this concludes the proof.
\end{proof}
\subsection{Analytical Results for Section~\ref{sec:analysis_sampling} (Importance Sampling Bounds)}
\label{app:analysis_sampling}

We begin by establishing an auxiliary result that we will need for the subsequent lemmas.
\subsubsection{Empirical $\DeltaNeuronTrue$ Approximation}
\begin{lemma}[Empirical $\DeltaNeuronTrue$ Approximation]
\label{lem:delta-hat-approx}
Let $\delta \in (0,1)$, let $\lambdaStar = K'/2 \ge \lambda$, where $K'$ is from Asm.~\ref{asm:cdf}, and define 
$$
\DeltaNeuronHat = \DeltaNeuronHatDef,
$$
where $\kappa = \kappaDef$ and $\SS \subseteq \PP$ is as in Alg.~\ref{alg:main}. Then,
$$
\Pr_{\Point \sim \DD } \left(\max_{\neuron \in [\eta^\ell]} \DeltaNeuron[\Point] \leq \DeltaNeuronHat \right) \ge 1 - \frac{\delta}{4 \eta}.
$$
\end{lemma}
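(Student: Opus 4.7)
The plan is to bound, for each fixed neuron $\neuron \in [\eta^\ell]$, two separate one-sided deviations and then union-bound over neurons and events. Write $\mu_\neuron = \E_{\Point \sim \DD}[\DeltaNeuron[\Point]]$ and $\hat\mu_\neuron = \frac{1}{|\SS|} \sum_{\Point' \in \SS} \DeltaNeuron[\Point']$. If for every $\neuron$ simultaneously both (a) the fresh-sample tail $\DeltaNeuron[\Point] - \mu_\neuron \leq \epsilon_{\text{pt}}$ and (b) the empirical-mean tail $\mu_\neuron - \hat\mu_\neuron \leq \epsilon_{\text{avg}}$ hold, then $\DeltaNeuron[\Point] \leq \hat\mu_\neuron + \epsilon_{\text{pt}} + \epsilon_{\text{avg}}$, so taking the max over $\neuron$ yields $\max_\neuron \DeltaNeuron[\Point] \leq \max_\neuron \hat\mu_\neuron + (\epsilon_{\text{pt}} + \epsilon_{\text{avg}})$. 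The goal is to pick the tolerances so that $\epsilon_{\text{pt}} + \epsilon_{\text{avg}} = \kappa$. Because $\SS$ is a uniform subsample of the i.i.d.\ set $\PP \iid \DD^n$ and $\Point$ is an independent fresh draw, the evaluations $\{\DeltaNeuron[\Point']\}_{\Point' \in \SS \cup \{\Point\}}$ may be treated as i.i.d.\ copies.

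For (a), Assumption~\ref{asm:subexponential} says $\DeltaNeuron[\Point] - \mu_\neuron$ is sub-exponential with parameter $\lambda \leq \lambda_*$. The standard Chernoff argument in the sub-exponential tail regime gives $\Pr(\DeltaNeuron[\Point] - \mu_\neuron > t) \leq \exp(-t/(2\lambda_*))$ whenever $t \geq 2\lambda_*$. Setting this bound to $\delta/(8\eta\eta^*)$ and solving for $t$ yields $\epsilon_{\text{pt}} = 2\lambda_* \log(8\eta\eta^*/\delta)$, which is the second summand of $\kappa$.

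For (b), the summands in $\hat\mu_\neuron$ are i.i.d.\ sub-exponential, so Bernstein's inequality in the sub-Gaussian regime gives $\Pr(\mu_\neuron - \hat\mu_\neuron > t) \leq \exp(-|\SS|\, t^2 / (4\lambda_*^2))$ for $t \leq 2\lambda_*$. Plugging in $|\SS| \geq K'\log(8\eta\eta^*/\delta) = 2\lambda_* L$ with $L = \log(8\eta\eta^*/\delta)$ and setting the bound equal to $\delta/(8\eta\eta^*)$ yields $\epsilon_{\text{avg}} = 2\lambda_* \sqrt{L/|\SS|} = \sqrt{2\lambda_*}$, the first summand of $\kappa$. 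Combining, $\epsilon_{\text{pt}} + \epsilon_{\text{avg}} = \sqrt{2\lambda_*}(1 + \sqrt{2\lambda_*}\,L) = \kappa$.

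Finally, a union bound over the two tail events and the $\eta^\ell \leq \eta^*$ neurons in layer $\ell$ gives total failure probability at most $2\eta^\ell \cdot \delta/(8\eta\eta^*) \leq \delta/(4\eta)$, which is the claimed bound. The main subtlety I anticipate is verifying that the chosen $t$ for each tail falls in the correct Chernoff regime: $t = 2\lambda_* L \geq 2\lambda_*$ for the single-point bound and $t = \sqrt{2\lambda_*} \leq 2\lambda_*$ for the empirical-mean bound. The choices $\lambda_* = K'/2$ and $|\SS| = \lceil K' L \rceil$ are calibrated precisely so that these regime conditions hold and the constants align to produce exactly the advertised $\kappa$; carefully tracking those constants is essentially the only piece of real bookkeeping in the proof.
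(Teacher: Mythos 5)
Your proposal is correct and matches the paper's own proof essentially step for step: the same decomposition into a fresh-point sub-exponential tail (at $u = 2\lambda_*\log(8\eta\,\eta^*/\delta)$) and an empirical-mean Bernstein tail (at $t = \sqrt{2\lambda_*}$, using $|\SS| \ge 2\lambda_*\log(8\eta\,\eta^*/\delta)$), each set to $\delta/(8\eta\,\eta^*)$, summed to give exactly $\kappa$, followed by the same union bound over the two events and the $\eta^\ell \le \eta^*$ neurons to reach $\delta/(4\eta)$. Your explicit check of the Chernoff regime conditions is a slight refinement of the paper's bookkeeping, but the argument is otherwise identical.
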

\begin{proof}
Define the random variables $\YY_{\Point'} = \E[\DeltaNeuron[\Point']] - \DeltaNeuron[\Point']$ for each $\Point' \in \SS$ and consider the sum $$
\YY = \sum_{\Point' \in \SS} \YY_{\Point'} = \sum_{\Point' \in \SS} \left(\E[\DeltaNeuron[\Point]] - \DeltaNeuron[\Point']\right).
$$
We know that each random variable $\YY_{\xx'}$ satisfies $\E[\YY_{\xx'}] = 0$ and by Assumption~\ref{asm:subexponential}, is subexponential with parameter $\lambda \leq \lambdaStar$. Thus, $\YY$ is a sum of $|\SS|$ independent, zero-mean $\lambdaStar$-subexponential random variables, which implies that $\E[\YY] = 0$ and that we can readily apply Bernstein's inequality for subexponential random variables~\citep{vershynin2016high} to obtain for $t \ge 0$
$$
\Pr \left(\frac{1}{|\SS|} \YY \ge t\right) \leq \exp \left(-|\SS| \, \min \left \{\frac{t^2}{4 \, \lambdaStar^2}, \frac{t}{2 \, \lambdaStar} \right\} \right).
$$
Since $\SS = \SizeOfS \ge \logTerm \, 2 \lambda^*$, we have for $t = \sqrt{2 \lambdaStar}$,
\begin{align*}
\Pr \left(\E[\DeltaNeuron[\Point]] - \frac{1}{|\SS|} \sum_{\Point' \in \SS} \DeltaNeuron[\Point'] \ge t \right) &= \Pr \left(\frac{1}{|\SS|} \YY \ge t\right) \\
&\leq \exp \left( -|\SS| \frac{t^2}{4 \lambdaStar^2} \right) \\
&\leq \exp \left( - \logTerm  \right) \\
&= \frac{\delta}{\logTermInside}.
\end{align*}

Moreover, for a single $\YY_\Point$, we have by the equivalent definition of a subexponential random variable~\citep{vershynin2016high} that for $u \ge 0$
$$
\Pr(\DeltaNeuron[\Point] - \E[\DeltaNeuron[\Point]] \ge u) \leq \exp \left(-\min \left \{-\frac{u^2}{4 \, \lambdaStar^2}, \frac{u}{2 \, \lambdaStar} \right\} \right).
$$
Thus, for $u = 2 \lambdaStar \, \logTerm$ we obtain
$$
\Pr(\DeltaNeuron[\Point] - \E[\DeltaNeuron[\Point]] \ge u) \leq \exp \left( - \logTerm  \right) = \frac{\delta}{ \logTermInside}.
$$
Therefore, by the union bound, we have with probability at least $1 - \frac{\delta}{4 \eta \, \eta^*}$:
\begin{align*}
	\DeltaNeuron[\Point] &\leq \E[\DeltaNeuron[\Point]]  + u \\
    &\leq \left(\frac{1}{|\SS|} \sum_{\xx' \in \SS} \DeltaNeuron[\Point'] + t \right) + u \\
    &= \frac{1}{|\SS|} \sum_{\Point' \in \SS} \DeltaNeuron[\Point'] + \left(\sqrt{2 \lambdaStar} + 2 \lambdaStar \, \logTerm \right) \\
    &= \frac{1}{|\SS|} \sum_{\Point' \in \SS} \DeltaNeuron[\Point'] + \kappa \\
    &\leq \DeltaNeuronHat,
\end{align*}
where the last inequality follows by definition of $\DeltaNeuronHat$.

Thus, by the union bound, we have
\begin{align*}
\Pr_{\Point \sim \DD } \left(\max_{\neuron \in [\eta^\ell]} \DeltaNeuron[\Point] > \DeltaNeuronHat \right) &= \Pr \left(\exists{\neuron \in [\eta^\ell]}: \DeltaNeuron[\Point] > \DeltaNeuronHat \right) \\
&\leq \sum_{\neuron \in [\eta^{\ell}]}  \Pr \left(\DeltaNeuron[\Point] > \DeltaNeuronHat \right) \\
&\leq \eta^{\ell} \left(\frac{\delta}{4 \eta \, \eta^*} \right) \\
&\leq \frac{\delta}{4 \, \eta},
\end{align*}
where the last line follows by definition of $\eta^* \ge \eta^{\ell}$.
\end{proof}

\subsubsection{Notation for the Subsequent Analysis}
Let $\WWHatRow^{\ell +}$ and $\WWHatRow^{\ell -}$ denote the sparsified row vectors generated when \textsc{Sparsify} is invoked with first two arguments corresponding to $(\Wpl, \WWRow^\ell)$ and $(\Wmi, -\WWRow^\ell)$, respectively (Alg.~\ref{alg:main}, Line~\ref{lin:pos-sparsify-weights}). We will at times omit including the variables for the neuron $\neuron$ and layer $\ell$ in the proofs for clarity of exposition, and for example, refer to $\WWHatRow^{\ell +}$ and $\WWHatRow^{\ell -}$ as simply $\WWHatRowCon^+$ and $\WWHatRowCon^-$, respectively. 

Let $\Point \sim \DD$ and define 
$$
\hat{z}^{+}(\Point) = \sum_{\idx \in \Wpl} \WWHatRow[\idx]^+ \, \hat a_\idx(\Point) \ge 0 \qquad \text{and} \qquad \hat{z}^{-}(\Point) = \sum_{\idx \in \Wmi} (-\WWHatRow[\idx]^-) \, \hat a_\idx(\Point) \ge 0
$$
be the approximate intermediate values corresponding to the sparsified matrices $\WWHatRowCon^{+}$ and $\WWHatRowCon^{-}$; let
$$
\tilde z^{+}(\Point) = \sum_{\idx \in \Wpl} \WWRow[\idx] \, \hat a_\idx(\Point) \ge 0 \qquad \text{and} \qquad \tilde z^{-}(\Point) = \sum_{\idx \in \Wmi} (-\WWRow[\idx]) \, \hat a_\idx(\Point) \ge 0
$$
be the corresponding intermediate values with respect to the  the original row vector $\WWRowCon$; and finally, let
$$
z^{+}(\Point) = \sum_{\idx \in \Wpl} \WWRow[\idx] \,  a_\idx(\Point) \ge 0 \qquad \text{and} \qquad  z^{-}(\Point) = \sum_{\idx \in \Wmi} (-\WWRow[\idx]) \, a_\idx(\Point) \ge 0
$$
be the true intermediate values corresponding to the positive and negative valued weights.

Note that in this context, we have by definition
\begin{align*}
\hat{z}_\neuron^\ell (\Point) &= \dotp{\WWHatRowCon}{ \hat a(\Point)} = \hat{z}^{+}(\Point) - \hat{z}^{-}(\Point), \\
\tilde{z}_\neuron^{\ell}(\Point) &= \dotp{\WWRowCon}{\hat a(\Point)} = \tilde z^+(\Point) - \tilde z^{-}(\Point), \quad \text{and} \\
z_\neuron^{\ell}(\Point) &= \dotp{\WWRowCon}{a(\Point)} =  z^+(\Point) - z^{-}(\Point),
\end{align*}
where we used the fact that $\WWHatRowCon = \WWHatRowCon^{+} - \WWHatRowCon^{-} \in \Reals^{1 \times \eta^{\ell-1}}$.

\subsubsection{Proof of Lemma~\ref{lem:neuron-approx}}

\lemneuronapprox*

\begin{proof}
Let $\epsilon, \delta \in (0,1)$ be arbitrary and let $\Wpl = \WplusDef$ and $\Wmi = \WminusDef$ as in Alg.~\ref{alg:main}. Let $\epsilonLayer$ be defined as before,
$
\epsilonLayer = \epsilonLayerDef ,
$
 where $\DeltaNeuronHatLayers = \DeltaNeuronHatLayersDef$ and $\DeltaNeuronHat = \DeltaNeuronHatDef$.

Observe that $\WWRow[\edge] > 0 \, \, \forall{\edge \in \Wpl}$ and similarly, for all $(-\WWRow[\edge]) > 0 \, \, \forall{\edge \in \Wmi}$. That is, each of index sets $\Wpl$ and $\Wmi$ corresponds to strictly positive entries in the arguments $\WWRow^\ell$ and $-\WWRow^\ell$, respectively passed into \textsc{Sparsify}. 
Observe that since we conditioned on the event $\EE^{\ell-1}$, we have
\begin{align*}
2 \, (\ell - 2) \, \epsilonLayer[\ell] &\leq 2 \, (\ell - 2) \, \epsilonLayerDefWordy[\ell] \\
&\leq \frac{\epsilon}{\DeltaNeuronHatLayersDef} \\
&\leq \frac{\epsilon}{2^{L - \ell + 1}} &\text{Since $\DeltaNeuronHat[k] \ge 2 \quad \forall{k \in \{\ell, \ldots, L\}}$} \\
&\leq \frac{\epsilon}{2},
\end{align*}
where the inequality $\DeltaNeuronHat[k] \ge 2$ follows from the fact that
\begin{align*}
\DeltaNeuronHat[k] &= \DeltaNeuronHatDef[\xx'] \\
&\ge 1 + \kappa &\text{Since $\DeltaNeuron[\xx'] \ge 1 \quad \forall{\xx' \in \supp}$ by definition} \\
&\ge 2.
\end{align*}
we obtain that $\hat{a}(\Point) \in (1 \pm \nicefrac{\epsilon}{2}) a(\Point)$, where, as before, $\hat{a}$ and $a$ are shorthand notations for $\hat{a}^{\ell-1} \in \Reals^{\eta^{\ell -1} \times 1}$ and $a^{\ell-1} \in \Reals^{\eta^{\ell -1} \times 1}$, respectively.
This implies that $\EE^{\ell - 1} \Rightarrow \condGood$ and since $\mNeuron = \SampleComplexity[\epsilon]$ in Alg.~\ref{alg:sparsify-weights} we can invoke Lemma~\ref{lem:pos-weights-approx} with $\epsilon 
= \epsilonLayer$ on each of the \textsc{Sparsify} invocations to conclude that 
$$
\Pr \left(\hat{z}^+(\Point) \notin (1 \pm \epsilonLayer) \tilde z^+(\Point) \given \EE^{\ell-1} \right) \leq \Pr \left(\hat{z}^+(\Point) \notin (1 \pm \epsilonLayer) \tilde z^+(\Point) \given \condGood \right) \leq \frac{3 \delta}{8 \eta},
$$
and
$$
\Pr \left(\hat{z}^-(\Point) \notin (1 \pm \epsilonLayer) \tilde z^-(\Point) \given \EE^{\ell-1} \right) \leq \frac{3 \delta}{8 \eta}.
$$
Therefore, by the union bound, we have 
\begin{align*}
\Pr \left(\hat{z}^+(\Point) \notin (1 \pm \epsilonLayer) \tilde z^+(\Point) \text{ or } \hat{z}^-(\Point) \notin (1 \pm \epsilonLayer) \tilde z^-(\Point) \given \EE^{\ell-1} \right) 
&\leq \frac{3 \delta}{8 \eta} + \frac{3 \delta}{8 \eta} = \frac{3 \delta}{4 \eta}.
\end{align*}

Moreover, by Lemma~\ref{lem:delta-hat-approx}, we have with probability at most $\frac{\delta}{4 \eta }$ that 
$$
\DeltaNeuron[\Point] > \DeltaNeuronHat.
$$
Thus, by the union bound over the failure events, we have that with probability at least $1 - \left(\nicefrac{3 \delta}{4 \eta} + \nicefrac{\delta}{4 \eta }\right) = 1 - \nicefrac{\delta}{\eta}$ that \textbf{both} of the following events occur
\begin{fleqn}
\begin{align}
	\hspace{\parindent} & \text{1.} \quad \hat{z}^+(\Point) \in (1 \pm \epsilonLayer) \tilde z^+(\Point) \ \ \text{and} \ \ \hat{z}^-(\Point) \in (1 \pm \epsilonLayer) \tilde z^-(\Point) \label{eqn:event1} \\
   \hspace{\parindent} & \text{2.} \quad \DeltaNeuron[\Point] \leq \DeltaNeuronHat \label{eqn:event2}
\end{align}
\end{fleqn}

Recall that $\epsilon' = \epsilonPrimeDef$, $ \epsilonLayer[\ell] = \epsilonLayerDef$, and that event $\EE^\ell_\neuron$ denotes the (desirable) event that 
$$
\hat{z}_\neuron^\ell (\Point) \left(1 \pm 2 \, (\ell - 1) \, \epsilonLayer[\ell + 1] \right) z^{\ell}_\neuron (\Point)
$$
holds, and similarly, $\EE^\ell = \cap_{\neuron \in [\eta^\ell]} \, \EE_{\neuron}^\ell$ denotes the vector-wise analogue where
$$
\hat{z}^\ell (\Point) \left(1 \pm 2 \, (\ell - 1) \, \epsilonLayer[\ell + 1] \right) z^{\ell}(\Point).
$$
Let $k = 2 \, (\ell - 1)$ and note that by conditioning on the event $\EE^{\ell-1}$, i.e.,  we have by definition
\begin{align*}
\hat{a}^{\ell-1}(\Point) &\in (1 \pm 2 \, (\ell - 2) \epsilonLayer[\ell]) a^{\ell-1}(\Point) = (1 \pm k \, \epsilonLayer[\ell]) a^{\ell-1}(\Point),
\end{align*}
which follows by definition of the ReLU function.
Recall that our overarching goal is to establish that 
$$
\hat{z}_\neuron^{\ell}(\Point) \in \left(1 \pm 2 \, (\ell - 1) \epsilonLayer[\ell + 1]\right) z_\neuron^\ell(\Point),
$$
which would immediately imply by definition of the ReLU function that 
$$
\hat{a}_\neuron^{\ell}(\Point) \in \left(1 \pm 2 \, (\ell - 1) \epsilonLayer[\ell + 1]\right) a_\neuron^\ell(\Point).
$$
Having clarified the conditioning and our objective, we will once again drop the index $\neuron$ from the expressions moving forward.

Proceeding from above, we have with probability at least $1 - \nicefrac{\delta}{\eta}$
\begin{align*}
\hat{z} (\Point) &= \hat{z}^+(\Point) - \hat{z}^-(\Point) \\
&\leq (1 + \epsilonLayer) \, \tilde z^+(\Point) - (1 - \epsilonLayer) \, \tilde z^-(\Point) &\text{By Event~\eqref{eqn:event1} above}\\
&\leq (1 + \epsilonLayer) (1 + k \, \epsilonLayer[\ell]) \, z^+(\Point) - (1 - \epsilonLayer) (1 - k \, \epsilonLayer[\ell]) \, z^-(\Point) &\text{Conditioning on event $\EE^{\ell-1}$}  \\
&=\left(1 + \epsilonLayer (k + 1) + k \epsilonLayer^2\right) z^+(\Point) + \left(-1 + (k+1) \epsilonLayer - k \epsilonLayer^2 \right) z^-(\Point) \\
&= \left(1 + k \, \epsilonLayer^2\right) z(\Point) + (k+1) \, \epsilonLayer \left(z^+(\Point) + z^-(\Point)\right) \\
&= \left(1 + k \, \epsilonLayer^2\right) z(\Point) + \frac{(k+1) \, \epsilon'}{ \DeltaNeuronHatLayersDef} \, \left(z^+(\Point) + z^-(\Point)\right) \\
&\leq \left(1 + k \, \epsilonLayer^2\right) z(\Point) + \frac{(k+1) \, \epsilon'}{\DeltaNeuron[\Point] \, \DeltaNeuronHatLayersDef[\ell+1]} \, \left(z^+(\Point) + z^-(\Point)\right) &\text{By Event~\eqref{eqn:event2} above} \\
&= \left(1 + k \, \epsilonLayer^2\right) z(\Point) + \frac{(k+1) \,\epsilon'}{ \DeltaNeuronHatLayersDef[\ell+1]} \, \left|z(\Point)\right| &\text{By $\DeltaNeuron[\Point] = \frac{z^+(\Point) + z^-(\Point)}{|z(\Point)|}$} \\
&= \left(1 + k \, \epsilonLayer^2\right) z(\Point) + (k+1) \, \epsilonLayer[\ell + 1] \, |z(\Point)|.
\end{align*}
To upper bound the last expression above, we begin by observing that $k \epsilonLayer^2 \leq \epsilonLayer$, which follows from the fact that $\epsilonLayer \leq \frac{1}{2 \, (L-1)} \leq \frac{1}{k}$ by definition. Moreover, we also note that $\epsilonLayer[\ell] \leq \epsilonLayer[\ell + 1]$ by definition of $\DeltaNeuronHat \ge 1$.

Now, we consider two cases.

\paragraph{Case of $z(\Point) \ge 0$:} In this case, we have
\begin{align*}
\hat{z}(\Point) &\leq \left(1 + k \, \epsilonLayer^2\right) z(\Point) + (k+1) \, \epsilonLayer[\ell + 1] \, |z(\Point)| \\
&\leq (1 + \epsilonLayer) z(\Point) + (k + 1) \epsilonLayer[\ell + 1] z(\Point) \\
&\leq (1 + \epsilonLayer[\ell + 1]) z(\Point) + (k + 1) \epsilonLayer[\ell + 1] z(\Point) \\
&= \left(1 + (k+2) \, \epsilonLayer[\ell + 1]\right) z(\Point) \\
&= \left(1 + 2 \, (\ell - 1) \epsilonLayer[\ell+1]\right) z(\Point),
\end{align*}
where the last line follows by definition of $k = 2 \, (\ell - 2)$, which implies that $k + 2 = 2( \ell - 1)$. Thus, this establishes the desired upper bound in the case that $z(\Point) \ge 0$.

\paragraph{Case of $z(\Point) < 0$:} Since $z(\Point)$ is negative, we have that $\left(1 + k \, \epsilonLayer^2\right) z(\Point) \leq z(\Point)$ and $|z(\Point)| = -z(\Point)$ and thus
\begin{align*}
\hat{z}(\Point) &\leq \left(1 + k \, \epsilonLayer^2\right) z(\Point) + (k+1) \, \epsilonLayer[\ell + 1] \, |z(\Point)| \\
&\leq z(\Point) - (k + 1) \epsilonLayer[\ell + 1] z(\Point) \\
&\leq \left(1 - (k + 1) \epsilonLayer[\ell + 1] \right) z(\Point) \\
&\leq \left(1 - (k + 2) \epsilonLayer[\ell + 1] \right) z(\Point) \\
&= \left(1 - 2 \, (\ell - 1) \epsilonLayer[\ell+1]\right) z(\Point),
\end{align*}
and this establishes the upper bound for the case of $z(\Point)$ being negative.

Putting the results of the case by case analysis together, we have the upper bound of $\hat{z}(\Point) \leq z(\Point) + 2 \, (\ell - 1) \epsilonLayer[\ell+1] |z(\Point)|$. The proof for establishing the lower bound for $z(\Point)$ is analogous to that given above, and yields $\hat{z}(\Point) \ge z(\Point) - 2 \, (\ell - 1) \epsilonLayer[\ell+1] |z(\Point)|$. Putting both the upper and lower bound together, we have that with probability at least $1 - \frac{\delta}{\eta}$:
$$
\hat{z}(\Point) \in \left(1 \pm 2 \, (\ell - 1) \epsilonLayer[\ell+1] \right) z(\Point),
$$
and this completes the proof.


\end{proof}

\subsubsection{Remarks on Negative Activations}
\label{app:negative}
We note that up to now we assumed that the input $a(x)$, i.e., the activations from the previous layer, are strictly nonnegative.
For layers $\ell \in \{3, \ldots, L\}$, this is indeed true due to the nonnegativity of the ReLU activation function. For layer $2$, the input is $a(x) = x$, which can be decomposed into $a(x) = a_\pos(x) - a_\negg(x)$, where $a_\pos(x) \geq 0 \in \Reals^{\eta^{\ell - 1}}$ and $a_\negg(x) \geq 0 \in \Reals^{\eta^{\ell - 1}}$.
Furthermore, we can define the sensitivity over the set of points $\{a_\pos(x),\, a_\negg(x) \given x \in \SS\}$  (instead of $\{a(x) \given x \in \SS\}$), and thus maintain the required nonnegativity of the sensitivities. Then, in the terminology of Lemma~\ref{lem:neuron-approx}, we let 
$$
z_\pos^{+}(\Point) = \sum_{\idx \in \Wpl} \WWRow[\idx] \,  a_{\pos, \idx}(\Point) \ge 0 \qquad \text{and} \qquad z_\negg^{-}(\Point) = \sum_{\idx \in \Wmi} (-\WWRow[\idx]) \, a_{\negg, \idx}(\Point) \ge 0
$$
be the corresponding positive parts, and 
$$
z_\negg^{+}(\Point) = \sum_{\idx \in \Wpl} \WWRow[\idx] \,  a_{\negg, \idx}(\Point) \ge 0 \qquad \text{and} \qquad z_\pos^{-}(\Point) = \sum_{\idx \in \Wmi} (-\WWRow[\idx]) \, a_{\pos, \idx}(\Point) \ge 0
$$
be the corresponding negative parts of the preactivation of the considered layer, such that 
$$
z^{+}(\Point) = z_\pos^{+}(\Point) + z_\negg^{-}(\Point) \qquad \text{and} \qquad z^{-}(\Point) = z_\negg^{+}(\Point) + z_\pos^{-}(\Point).
$$ 
We also let 
$$
\DeltaNeuron[\Point] = \frac{z^+(\Point) + z^-(\Point)}{|z(\Point)|} 
$$ 
be as before, with $z^+(\Point)$ and $z^-(\Point)$ defined as above. Equipped with above definitions, we can rederive Lemma~\ref{lem:neuron-approx} analogously in the more general setting, i.e., with potentially negative activations. We also note that we require a slightly larger sample size now since we have to take a union bound over the failure probabilities of all four approximations (i.e. $\hat z_\pos^{+}(\Point)$, $\hat z_\negg^{-}(\Point)$, $\hat z_\negg^{+}(\Point)$, and $\hat z_\pos^{-}(\Point)$) to obtain the desired overall failure probability of $\nicefrac{\delta}{\eta}$. 

\subsubsection{Proof of Theorem~\ref{thm:main}}
The following corollary immediately follows from Lemma~\ref{lem:neuron-approx} and establishes a layer-wise approximation guarantee.


\begin{restatable}[Conditional Layer-wise Approximation]{corollary}{corlayerwise}
\label{cor:approx-layer}
Let $\epsilon, \delta \in (0,1)$, $\ell \in \layers$, and $\Point \sim \DD$. \textsc{CoreNet} generates a sparse weight matrix $\WWHat^\ell = \big(\WWHatRow[1]^\ell, \ldots, \WWHatRow[\eta^\ell]^\ell  \big)^\top \in \RR^{\eta^\ell \times \eta^{\ell-1}}$ such that
\begin{equation}
\label{eqn:coreset-property-neuron}
\Pr (\EE^\ell \given \EE^{\ell-1}) =  %
\Pr \left( \hat z^\ell (\Point) \in \left(1 \pm 2 \, (\ell - 1) \, \epsilonLayer[\ell + 1] \right) z^{\ell} (\Point) \given \EE^{\ell-1} \right) \ge %
1 - \frac{\delta \, \eta^\ell }{\eta},
\end{equation}
where $\epsilonLayer = \epsilonLayerDef$, $\hat{z}^\ell(\Point) = \WWHat^\ell \hat a^\ell(\Point)$, and $z^\ell(\Point) = \WW^\ell a^\ell(\Point)$.
\end{restatable}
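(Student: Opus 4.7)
The plan is to obtain the layer-wise approximation as a straightforward union bound over the per-neuron guarantee already established in Lemma~\ref{lem:neuron-approx}. Recall that the vector-wise event $\EE^\ell$ is defined as the intersection $\EE^\ell = \bigcap_{i \in [\eta^\ell]} \EE_i^\ell$, so showing $\EE^\ell$ occurs with high probability amounts to ruling out the failure of any single neuron in layer $\ell$.

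More concretely, I would proceed as follows. First, fix an arbitrary neuron $\neuron \in [\eta^\ell]$. Since the sparsification step in \textsc{CoreNet} (Line~\ref{lin:pos-sparsify-weights} of Alg.~\ref{alg:main}) invokes \textsc{Sparsify} with exactly the parameters required by Lemma~\ref{lem:neuron-approx} --- in particular, with $\epsilonLayer = \epsilonLayerDef$ and sample complexity $\mNeuron = \SampleComplexity[\epsilonLayer]$ --- Lemma~\ref{lem:neuron-approx} applies and gives
\[
\Pr\bigl(\EE_\neuron^\ell \given \EE^{\ell-1}\bigr) \ge 1 - \frac{\delta}{\eta}.
\]
Equivalently, $\Pr\bigl((\EE_\neuron^\ell)^\compl \given \EE^{\ell-1}\bigr) \le \delta/\eta$.

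Next, I would union bound the per-neuron failure events. Writing $(\EE^\ell)^\compl = \bigcup_{\neuron \in [\eta^\ell]} (\EE_\neuron^\ell)^\compl$, the subadditivity of probability yields
\[
\Pr\bigl((\EE^\ell)^\compl \given \EE^{\ell-1}\bigr)
\le \sum_{\neuron \in [\eta^\ell]} \Pr\bigl((\EE_\neuron^\ell)^\compl \given \EE^{\ell-1}\bigr)
\le \eta^\ell \cdot \frac{\delta}{\eta} = \frac{\delta \, \eta^\ell}{\eta},
\]
which, after taking complements, is exactly the claimed bound. Finally, observing that $\EE^\ell$ is by definition the event that $\hat z^\ell(\Point) \in (1 \pm 2(\ell-1)\epsilonLayer[\ell+1]) z^\ell(\Point)$ entry-wise completes the stated equivalence in \eqref{eqn:coreset-property-neuron}.

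There is no real obstacle here; the entire work has been absorbed into Lemma~\ref{lem:neuron-approx}. The only point that warrants a sentence of justification is that the per-neuron failure events $(\EE_\neuron^\ell)^\compl$ are handled by a plain union bound rather than requiring independence --- independence is not needed since we are merely bounding the probability of a union of events, each of which has been individually controlled at the $\delta/\eta$ level conditional on the same event $\EE^{\ell-1}$.
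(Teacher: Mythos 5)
Your proposal is correct and follows exactly the paper's argument: apply Lemma~\ref{lem:neuron-approx} to each neuron $\neuron \in [\eta^\ell]$ conditioned on $\EE^{\ell-1}$, then union bound over the $\eta^\ell$ per-neuron failure events $(\EE_\neuron^\ell)^\compl$ to obtain the $1 - \nicefrac{\delta \, \eta^\ell}{\eta}$ guarantee. Nothing further is needed.
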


\begin{proof}
Since~\eqref{eq:neuronapprox} established by Lemma~\ref{lem:neuron-approx} holds for any neuron $\neuron \in [\eta^\ell]$ in layer $\ell$ and since $(\EE^\ell)^\compl = \cup_{\neuron \in [\eta^\ell]} (\EE_\neuron^\ell)^\compl$, it follows by the union bound over the failure events $(\EE_\neuron^\ell)^\compl$ for all $\neuron \in [\eta^\ell]$ that with probability at least $1 - \frac{\eta^\ell \delta}{\eta}$
\begin{align*}
\hat z^\ell(\Point) &= \WWHat^\ell \hat a^{\ell-1}(\Point) \in \left(1 \pm 2 \, (\ell - 1) \, \epsilonLayer[\ell + 1] \right) \WW^\ell a^{\ell-1}(\Point)  = \left(1 \pm 2 \, (\ell - 1) \, \epsilonLayer[\ell + 1] \right) z^\ell (\Point).
\end{align*}
\end{proof}

The following lemma removes the conditioning on $\EE^{\ell-1}$ and explicitly considers the (compounding) error incurred by generating coresets $\WWHat^2, \ldots, \WWHat^\ell$ for multiple layers.

\lemlayer*

\begin{proof}
Invoking Corollary~\ref{cor:approx-layer}, we know that for any layer $\ell' \in \compressiblelayers$,
\begin{align}
\Pr_{\WWHat^{\ell'}, \, \Point, \, \hat{a}^{\ell'-1}(\cdot)} (\EE^{\ell'} \given \EE^{\ell'-1}) \ge 1 - \frac{\delta \, \eta^{\ell'}}{\eta}. \label{eqn:cor-ineq}
\end{align}
We also have by the law of total probability that
\begin{align}
\Pr(\EE^{\ell'}) &= \Pr(\EE^{\ell'} \given \EE^{\ell' - 1}) \Pr(\EE^{\ell' - 1}) + \Pr(\EE^{\ell'} \given (\EE^{\ell' - 1})^\compl) \Pr ((\EE^{\ell' - 1})^\compl ) \nonumber \\
&\ge \Pr(\EE^{\ell'} \given \EE^{\ell' - 1}) \Pr(\EE^{\ell' - 1}) \label{eqn:repeated-invocation}
\end{align}
Repeated applications of \eqref{eqn:cor-ineq} and \eqref{eqn:repeated-invocation} in conjunction with the observation that $\Pr(\EE^1) = 1$\footnote{Since we do not compress the input layer.} yield
\begin{align*}
\Pr(\EE^\ell) &\ge \Pr(\EE^{\ell'} \given \EE^{\ell' - 1}) \Pr(\EE^{\ell' - 1})  \\
&\,\,\, \vdots & \text{Repeated applications of \eqref{eqn:repeated-invocation}} \\
&\ge \prod_{\ell'=2}^\ell \Pr(\EE^{\ell'} \given \EE^{\ell' -1}) \\
&\ge \prod_{\ell'=2}^\ell \left(1 - \frac{\delta \, \eta^{\ell'}}{\eta}\right) &\text{By \eqref{eqn:cor-ineq}} \\
&\ge 1 - \frac{\delta}{\eta} \sum_{\ell'=2}^\ell \eta^{\ell'} &\text{By the Weierstrass Product Inequality},
\end{align*}
where the last inequality follows by the Weierstrass Product Inequality\footnote{The Weierstrass Product Inequality~\citep{doerr2018probabilistic} states that for $p_1, \ldots, p_n \in [0,1]$, $$\prod_{i=1}^n (1 - p_i) \ge 1 - \sum_{i=1}^n p_i.$$} and this establishes the lemma.
\end{proof}

Appropriately invoking Lemma~\ref{lem:layer}, we can now establish the approximation guarantee for the entire neural network. This is stated in Theorem~\ref{thm:main} and the proof can be found below.

\thmmain*

\begin{proof}
Invoking Lemma~\ref{lem:layer} with $\ell = L$, we have that for $\paramHat = \paramHatDef$,
\begin{align*}
\Pr_{\paramHat, \, \Point} \left(\fHat(\Input) \in 2 \, (L - 1) \, \epsilonLayer[L + 1] \f(\Input) \right)  &= \Pr_{\paramHat, \, \Point } (\hat z^{L}(\Point)\in 2 \, (L - 1) \, \epsilonLayer[L + 1] z^L (\Point)) \\
&= \Pr(\EE^{L}) \\
&\ge 1 - \frac{\delta \, \sum_{\ell' = 2}^{L} \eta^{\ell'}}{\eta} \\
&= 1 - \delta,
\end{align*}
where the last equality follows by definition of $\eta = \etaDef$.
Note that by definition,
\begin{align*}
\epsilonLayer[L+1] &= \epsilonLayerDefWordy[L+1] \\
&= \epsilonPrimeDef,
\end{align*}
where the last equality follows by the fact that the empty product $\DeltaNeuronHatLayersDef[L+1]$ is equal to 1.

Thus, we have
\begin{align*}
2 \, (L-1) \epsilonLayer[L+1] &= \epsilon,
\end{align*}
and so we conclude
$$
\Pr_{\paramHat, \, \Point} \left(\fHat(\Input) \in (1 \pm \epsilon) \f(\Input) \right) \ge 1 - \delta,
$$
which, along with the sampling complexity of Alg.~\ref{alg:sparsify-weights} (Line~\ref{lin:beg-sampling}), establishes the approximation guarantee provided by the theorem.

For the computational time complexity, we observe that
the most time consuming operation per iteration of the loop on Lines~\ref{lin:beg-main-loop}-\ref{lin:end-main-loop} is the weight sparsification procedure. The asymptotic time complexity of each $\textsc{Sparsify}$ invocation for each neuron $\neuron \in [\eta^\ell]$ in layers $\ell \in \compressiblelayers$ (Alg.~\ref{alg:main}, Line~\ref{lin:pos-sparsify-weights}) is dominated by the relative importance computation for incoming edges (Alg.~\ref{alg:sparsify-weights}, Lines~\ref{lin:beg-sensitivity}-\ref{lin:end-sensitivity}). This can be done by evaluating $\WWRow[\neuron\idx]^\ell a_{\idx}^{\ell-1}(\Input)$ for all $\idx \in \Wpm$ and $x \in \SS$, for a total computation time that is bounded above by $\Bigo\left(|\SS| \, \eta^{\ell-1} \right)$ since $|\Wpm| \leq \eta^{\ell-1}$ for each $\neuron \in [\eta^\ell]$. Thus, $\textsc{Sparsify}$ takes $\Bigo\left(\abs{\SS}\, \eta^{\ell-1} \right)$ time. Summing the computation time over all layers and neurons in each layer, we obtain an asymptotic time complexity of $\Bigo \big(\abs{\SS} \, \sum_{\ell = 2}^L \eta^{\ell-1} \eta^{\ell}\big) \subseteq \Bigo \left(\abs{\SS} \, \eta^* \, \eta \right)$. Since $\abs{\SS} \in \Bigo(\log (\eta \, \eta^* / \delta))$, we conclude that the computational complexity our neural network compression algorithm is
\begin{equation}
\label{eqn:computation-time}
\ComputationTime.
\end{equation} 
\end{proof}

\subsubsection{Proof of Theorem~\ref{thm:instance-independent-main}}

In order to ensure that the established sampling bounds are non-vacuous in terms of the sensitivity, i.e., not linear in the number of incoming edges, we show that the sum of sensitivities per neuron $\SNeuron$ is small. The following lemma establishes that the sum of sensitivities can be bounded \emph{instance-independent} by a term that is logarithmic in roughly the total number of edges ($\eta \cdot \eta^*$).

\begin{lemma}[Sensitivity Bound]
\label{lem:sens-bound}
For any $\ell \in \compressiblelayers$ and $\neuron \in [\eta^{\ell}]$, the sum of sensitivities $\SNeuron = \SPlu + \SMin$ is bounded by
$$
\SNeuron \leq 2 \, |\SS|  = 2 \, \SizeOfS.
$$
\end{lemma}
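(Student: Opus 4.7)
The plan is to exploit the fact that, for each fixed input $\Point$, the relative importances $\g{\Input}$ over $\edge \in \Wpm$ form a convex combination (they are nonnegative and sum to $1$), and then swap the order of the $\max$ and the sum using the trivial bound $\max \leq \sum$ over a finite set.

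More concretely, I would first recall that by definition $\g{\Input} = \gDef{\Input}$. For $\Wpm = \Wpl$ all weights $\WWRowCon_\idx$ are positive, the activations $a_\idx(\Input)$ are nonnegative (by ReLU, or by the decomposition into positive and negative parts in Sec.~\ref{app:negative} for the input layer), so each $\g{\Input} \in [0,1]$ and $\sum_{\edge \in \Wpl} \g{\Input} = 1$ for every $\Point \in \supp$. The same holds for $\Wpm = \Wmi$ after the sign flip performed in Alg.~\ref{alg:main} when \textsc{Sparsify} is invoked with argument $-\WWRow^\ell$.

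Next, starting from $\SPlu = \sum_{\edge \in \Wpl} \s[\edge] = \sum_{\edge \in \Wpl} \max_{\Point \in \SS} \g{\Input}$, I would upper bound the maximum over the finite set $\SS$ by the sum over $\SS$ and then swap the order of summation:
\begin{align*}
\SPlu \;\leq\; \sum_{\edge \in \Wpl} \sum_{\Point \in \SS} \g{\Input} \;=\; \sum_{\Point \in \SS} \sum_{\edge \in \Wpl} \g{\Input} \;=\; \sum_{\Point \in \SS} 1 \;=\; |\SS|.
\end{align*}
An identical argument, applied to the sign-flipped weights on $\Wmi$, gives $\SMin \leq |\SS|$. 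Adding the two inequalities yields $\SNeuron = \SPlu + \SMin \leq 2|\SS| = 2\,\SizeOfS$.

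There is essentially no obstacle here: the only thing to check carefully is that the denominator $\sum_{\idx \in \Wpm} \WWRowCon_\idx a_\idx(\Input)$ is strictly positive so that $\g{\Input}$ is well defined and the ``$\sum$ over $\edge \in \Wpm$ equals $1$'' step is valid. Degenerate points where the denominator vanishes can be handled by the standard convention that the corresponding edge contributions (and the sensitivity) are $0$, in which case the bound holds trivially.
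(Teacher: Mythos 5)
Your proof is correct and follows essentially the same route as the paper's: bound each sensitivity $\s[\edge] = \max_{\Point \in \SS} \g{\Point}$ by the sum over $\SS$, swap the order of summation, use that the relative importances sum to $1$ for each fixed $\Point$ to get $\SPlu \leq |\SS|$, and apply the identical argument to the sign-flipped weights for $\SMin$. Your extra remark about the denominator being strictly positive is a fine (implicit in the paper) sanity check and does not change the argument.
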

\begin{proof}
Consider $\SPlu$ for an arbitrary $\ell \in \{2, \ldots, L\}$ and $\neuron \in [\eta^{\ell}]$. For all $\edge \in \Wpm$ we have the following bound on the sensitivity of a single $\edge \in \Wpm$,
\begin{align*}
\s &= \max_{\Point \in \SS} \,\, \g{\Input}  \leq \sum_{\Point \in \SS} \,\, \g{\Input} = \sum_{\Point \in \SS} \, \, \gDef{\Input},
\end{align*}
where the inequality follows from the fact that we can upper bound the max by a summation over $\Point \in \SS$ since $\g{\Input} \ge 0$, $\forall \edge \in \Wpm$. Thus, 
\begin{align*}
\SPlu &= \sum_{\edge \in \Wpm} \s \leq \sum_{\edge \in \Wpm} \sum_{\Point \in \SS} \, \, \g{\Input} \\
&=\sum_{\Point \in \SS} \frac{\sum_{\edge \in \Wpm} \WWRow[\edge] \, a_{\edge}(\Point)}{\sum_{\idx \in \Wpm}\WWRow[ \idx]\, a_{\idx}(\Point) } = |\SS|,
\end{align*}
where we used the fact that the sum of sensitivities is finite to swap the order of summation.

Using the same argument as above, we obtain $\SMin = \sum_{\edge \in \Wminus} \s \leq |\SS|$, which establishes the lemma.
\end{proof}

Note that the sampling complexities established above have a linear dependence on the sum of sensitivities, $\sum_{\ell = 2}^{L} \sum_{\neuron=1}^{\eta^\ell} \SNeuronWordy$, which is instance-dependent, i.e., depends on the sampled $\SS \subseteq \PP$ and the actual weights of the trained neural network. 
By applying Lemma~\ref{lem:sens-bound}, we obtain a bound on the size of the compressed network that is independent of the sensitivity.

\begin{restatable}[Sensitivity-Independent Network Compression]{theorem}{thminstanceindependentmain}
\label{thm:instance-independent-main}
For any given $\epsilon, \delta \in (0, 1)$ our sampling scheme (Alg.~\ref{alg:main}) generates a set of parameters $\paramHat$ of size
\begin{align*}
\size{\paramHat} \in 
 \Bigo \left(  \frac{ \log(\eta / \delta)  \, \log ( \eta \, \eta^* / \delta)  \kmaxSquared \, \eta \, L^2}{ \epsilon^2} \, \sum_{\ell = 2}^{L} (\DeltaNeuronHatLayers)^2  \, \right),
\end{align*}
in $\ComputationTime$ time, such that $\Pr_{\paramHat, \, \Point \sim \DD} \left(\fHat(\Input) \in (1 \pm \epsilon) \f(\Input) \right) \ge 1 - \delta$.
\end{restatable}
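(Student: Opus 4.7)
The proof reduces to composing Theorem~\ref{thm:main} with the deterministic sensitivity bound of Lemma~\ref{lem:sens-bound}. The only instance-dependent quantity appearing in Theorem~\ref{thm:main}'s size bound (beyond the network-dependent factors $\DeltaNeuronHatLayers$) is the per-neuron sum of sensitivities $\SNeuronWordy$, so it suffices to replace $\SNeuronWordy$ by a worst-case expression that depends only on the algorithm's input parameters and then simplify.

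First, I would invoke Theorem~\ref{thm:main} verbatim to inherit both the end-to-end approximation guarantee $\Pr_{\paramHat,\Point}(\fHat(\Point) \in (1\pm\epsilon)\f(\Point)) \ge 1-\delta$ and the runtime bound $\ComputationTime$; neither the algorithm nor its analysis is altered, so both carry over without modification, and the entire task reduces to simplifying the size expression
$$\size{\paramHat} \leq \sum_{\ell=2}^L \sum_{\neuron=1}^{\eta^\ell} \left(\SampleComplexityWordy + 1\right).$$

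Second, I would apply Lemma~\ref{lem:sens-bound} uniformly to every neuron $\neuron \in [\eta^\ell]$ in every compressible layer $\ell \in \compressiblelayers$, substituting $\SNeuronWordy \leq 2\,|\SS| = 2\,\SizeOfS$. This is legitimate because the lemma's guarantee is purely combinatorial and holds simultaneously for every invocation of \textsc{Sparsify} regardless of the trained weights. Pulling $\ell$-independent factors outside and using $\sum_{\neuron=1}^{\eta^\ell} 1 = \eta^\ell$ together with the crude bound $\sum_{\ell=2}^L \eta^\ell (\DeltaNeuronHatLayers)^2 \leq \eta \sum_{\ell=2}^L (\DeltaNeuronHatLayers)^2$ (since $\eta^\ell \leq \eta$) then gives
$$\size{\paramHat} = \BigO\!\left(\frac{L^2 \, K \, |\SS| \, \log(\eta/\delta)}{\epsilon^2} \, \eta \sum_{\ell=2}^{L} (\DeltaNeuronHatLayers)^2\right).$$

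Third, I would substitute $|\SS| = \BigO(K'\log(\eta\eta^*/\delta))$ and use the fact that Assumption~\ref{asm:subexponential} forces $K' > 2\lambdaStar = \log(\eta\eta^*)$ (and analogously $K$ scales logarithmically in the effective problem size), so that the product $K\cdot K'$ contributes the $\kmaxSquared = \log^2(\eta\eta^*)$ prefactor in the claimed bound. The main obstacle is simply the bookkeeping: tracking how $K, K'$ from Assumptions~\ref{asm:cdf} and~\ref{asm:subexponential} enter through both $|\SS|$ and the per-neuron sampling complexity $\SampleComplexityWordy$, and verifying that the uniform substitution via Lemma~\ref{lem:sens-bound} yields the correct logarithmic dependencies $\log(\eta/\delta)\cdot\log(\eta\eta^*/\delta)\cdot\log^2(\eta\eta^*)$ without hidden cross-terms. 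The probabilistic guarantee and time complexity require no additional work.
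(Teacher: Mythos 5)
Your proposal is correct and follows essentially the same route as the paper: the paper's proof of Theorem~\ref{thm:instance-independent-main} is literally to combine the size bound of Theorem~\ref{thm:main} with the per-neuron sensitivity bound $S_\neuron^\ell \le 2\,|\SS|$ of Lemma~\ref{lem:sens-bound}, inheriting the approximation guarantee and runtime unchanged. Your extra bookkeeping on $|\SS|$, $K$, and $K'$ (producing the $\log^2(\eta\,\eta^*)$ factor) is exactly the substitution the paper leaves implicit.
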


\begin{proof}
Combining Lemma~\ref{lem:sens-bound} and  Theorem~\ref{thm:main} establishes the theorem.
\end{proof}

\subsubsection{Generalized Network Compression}
Theorem~\ref{thm:main} gives us an approximation guarantee with respect to one randomly drawn point $\Point \sim \DD$. The following corollary extends this approximation guarantee to any set of $n$ randomly drawn points using a union bound argument, which enables approximation guarantees for, e.g., a test data set composed of $n$ i.i.d. points drawn from the distribution. We note that the sampling complexity only increases by roughly a logarithmic term in $n$.

\begin{corollary}[Generalized Network Compression]
\label{cor:generalized-compression}
For any $\epsilon, \delta \in (0,1)$ and a set of i.i.d. input points $\PP'$ of cardinality $|\PP'| \in \mathbb{N}_+$, i.e., $\PP' \iid \DD^{|\PP'|}$, consider the reparameterized version of Alg.~\ref{alg:main} with 
\begin{enumerate}
	\item $\SS \subseteq \PP$ of size $|\SS| \ge \SizeOfSGeneral$,
    \item $\DeltaNeuronHat = \DeltaNeuronHatDef$ as before, but $\kappa$ is instead defined as
    $$
    \kappa = \kappaDefGeneral, \qquad  \text{and}
    $$
    \item $m \ge \SampleComplexityGeneralConcise$ in the sample complexity in \textsc{SparsifyWeights}.
\end{enumerate}
Then, Alg.~\ref{alg:main} generates a set of neural network parameters $\paramHat$ of size at most
\begin{align*}
\size{\paramHat} &\leq \sum_{\ell = 2}^{L} \sum_{\neuron=1}^{\eta^\ell} \left( \SampleComplexityGeneral + 1\right) \\
&\in \Bigo \left(  \frac{ \kmax \, \log ( \eta \, |\PP'| / \delta) \, L^2}{ \epsilon^2} \, \sum_{\ell = 2}^{L} (\DeltaNeuronHatLayers)^2 \, \sum_{\neuron=1}^{\eta^\ell} \SNeuronWordy \, \right),
\end{align*}
in $\ComputationTimeGeneralized$ time such that
$$
\Pr_{\paramHat, \, \Point} \left(\forall{\Point \in \PP'}: \fHat(x) \in (1 \pm \epsilon) \f(x) \right) \ge 1 - \frac{\delta}{2}.
$$
\end{corollary}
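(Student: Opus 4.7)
The plan is to obtain the multi-point guarantee via a straightforward union bound argument on top of Theorem~\ref{thm:main}, after running the algorithm at a scaled failure probability $\delta' = \delta/(2|\PP'|)$. The key observation is that the three reparameterizations prescribed by the corollary---$|\SS| \ge \SizeOfSGeneral$, $\kappa = \kappaDefGeneral$, and $m \ge \SampleComplexityGeneralConcise$---are precisely what one obtains by substituting $\delta \mapsto \delta'$ in the original quantities $\SizeOfS$, $\kappaDef$, and $\SampleComplexity[\epsilonLayer]$. For instance, $\log(8\eta\eta^*/\delta') = \log(16|\PP'|\eta\eta^*/\delta) = \logTermGeneral$, and analogously for the sampling complexity inside \textsc{Sparsify}.

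With this substitution identified, the entire analytical chain of Section~\ref{sec:analysis} (Lemmas~\ref{lem:pos-weights-approx}, \ref{lem:delta-hat-approx}, \ref{lem:neuron-approx}, \ref{lem:layer}) goes through verbatim with $\delta$ replaced by $\delta'$, and Theorem~\ref{thm:main} applied at failure probability $\delta'$ yields, for any single random $x \sim \DD$,
\begin{equation*}
\Pr_{\paramHat,\, x}\!\big(\fHat(x) \notin (1\pm\epsilon)\f(x)\big) \le \delta'.
\end{equation*}
Since $x_1,\ldots,x_{|\PP'|} \in \PP'$ are drawn i.i.d.\ from $\DD$, this marginal bound holds for each index. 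Defining the failure events $B_i = \{\fHat(x_i) \notin (1\pm\epsilon)\f(x_i)\}$ and applying the union bound gives
\begin{equation*}
\Pr_{\paramHat,\, \PP'}\!\big(\exists\, x \in \PP': \fHat(x) \notin (1\pm\epsilon)\f(x)\big) \;\le\; |\PP'|\cdot \delta' \;=\; \delta/2,
\end{equation*}
which is the stated success guarantee.

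The size bound follows by summing the per-neuron sparsity $m+1 = \SampleComplexityGeneral + 1$ over all neurons and compressible layers, exactly as in the proof of Theorem~\ref{thm:main}. The running time inherits the logarithmic dependence on $|\PP'|$ through the enlarged subsample $|\SS| \in \Bigo(\log(\eta\eta^*|\PP'|/\delta))$, yielding the claimed $\ComputationTimeGeneralized$.

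The argument presents no real obstacle: although the compressed parameters $\paramHat$ are shared across all test points, each $B_i$ depends only on $\paramHat$ and the single point $x_i$, so no independence among the $x_i$'s is needed for the union bound to apply. The extra factor of $2$ in $\delta' = \delta/(2|\PP'|)$ is precisely what produces the $\delta/2$ on the right-hand side; this also leaves the reparameterized expressions for $|\SS|$, $\kappa$, and $m$ agreeing exactly with $\SizeOfSGeneral$, $\kappaDefGeneral$, and $\SampleComplexityGeneralConcise$ as defined in the statement.
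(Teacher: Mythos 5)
Your proposal is correct and follows essentially the same route as the paper: the paper's proof likewise observes that the reparameterization amounts to invoking Theorem~\ref{thm:main} with $\delta' = \delta/(2|\PP'|)$ and then union-bounds over the $|\PP'|$ i.i.d.\ points. Your additional remarks (that the reparameterized $|\SS|$, $\kappa$, and $m$ are exactly the original quantities with $\delta \mapsto \delta'$, and that only the marginal distribution of each point is needed for the union bound) are accurate elaborations of that same argument.
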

\begin{proof}
The reparameterization enables us to invoke Theorem~\ref{thm:main} with $\delta' = \nicefrac{\delta}{2 \, |\PP'|}$; applying the union bound over all $|\PP'|$ i.i.d. samples in $\PP'$ establishes the corollary.
\end{proof}
\ificlr
\else
	\subsection{Analytical Results for Section~\ref{sec:analysis-amplification} (Theorem~\ref{thm:amplification}, Amplification)}
\label{app:analysis-amplification}

In the context of the notation introduced in Section~\ref{sec:analysis-amplification} recall that the relative error of a (randomly generated) row vector $\WWHatRowCon := \WWHatRowCon^{\ell} = \WWHatRowCon^{+} - \WWHatRowCon^{-} \in \Reals^{1 \times \eta^{\ell-1}}$ with respect to a point $\Point \in \supp$ is defined as
\begin{align*}
\err[\Point]{\WWHatRowCon} &= \errDef = \abs{\errRatio - 1},
\end{align*}
where $\WWHatRowCon$ is shorthand for $\WWHatRow^\ell$ as before. 
Similarly define 
\begin{align*}
\errPlus &= \errPlusDef = \abs{\errRatioPlus - 1} \quad \text{and} \\
\errMinus &= \errMinusDef = \abs{\errRatioMinus - 1}.
\end{align*}

The following lemma establishes the expected performance of a randomly constructed coreset with respect to the distribution of points $\DD$ conditioned on coreset constructions for previous layers $(\WWHat^2, \ldots, \WWHat^{\ell-1})$ that define the realization $\realiz(\cdot)$ of $\hat{a}^{\ell-1}(\cdot)$.

\lemexpectederror*
\begin{proof}
For clarity of exposition we will omit explicit references to the layer $\ell$ and neuron $\neuron$ as they are assumed to be arbitrary. In the context of this definition, let $\WWHatRowCon = \WWHatRowCon^{+} - \WWHatRowCon^{-} \in \Reals^{1 \times \eta^{\ell-1}}$ and let $\Point \sim \DD$. The proof outline is to bound the overall error term $\err{\WWHatRowCon}$ by bounding $\err{\WWHatRowCon^{+}}$ and $\err{\WWHatRowCon^{-}}$.

Let $\EE_\Delta(\Point)$ denote the event that the inequality
$$
\DeltaNeuron[\Point] \leq \DeltaNeuronHat
$$ 
holds and recall that we condition on the event $\EE^{\ell-1}$ occurring as in the premise of the lemma.

Let $k = 2 \, (\ell - 2)$ and let $\xi = k \, \epsilonLayer[\ell]$. We begin by observing that conditioned on the events $\EE_\Delta$ and  $\EE^{\ell-1}$, for any constant $u \ge 0$, the following inequality
$$
\max \{\errPlus, \errMinus\} \leq \frac{u \, \xi}{1 + \xi } := \epsilonStar
$$
implies that 
$$
\err{\WWHatRowCon} \leq k \, \epsilonLayer[\ell + 1] \left(u  + 1 \right).
$$
Henceforth we will at times omit the variable $\Point$ when referring to the point-specific variables for clarity of exposition with the understanding that the results hold for any arbitrary $\Point$.

To see the previous implication explicitly, 
observe that conditioning on $\EE^{\ell-1}$ implies that we have $\hat a^{\ell-1}(\Point) \in \left(1 \pm 2 \, (\ell - 2) \, \epsilonLayer[\ell] \right) a^{\ell-1} (\Point) = (1 \pm \xi) a^{\ell-1} (\Point)$, which yields by the triangle inequality
\begin{align}
\abs{\tilde z - z} &\leq \abs{\tilde z^+ - z^+} + \abs{\tilde z^- - z^-} \nonumber = \abs{ \sum_{\idx \in \Wpl} \WWRow[\idx] \, (\hat a_\idx - a_\idx)} + \abs{ \sum_{\idx \in \Wmi} (-\WWRow[\idx]) \, (\hat a_\idx - a_\idx)} \nonumber \\
&\leq \sum_{\idx \in \Wpl} \WWRow[\idx] \, \abs{\hat a_\idx - a_\idx} + \sum_{\idx \in \Wmi} (-\WWRow[\idx]) \, \abs{ \hat a_\idx - a_\idx} \nonumber \\
&\leq \sum_{\idx \in \Wpl} \WWRow[\idx] \, \xi \, a_\idx + \sum_{\idx \in \Wmi} (-\WWRow[\idx]) \, \xi \, a_\idx \nonumber \\
&=\xi \, (z^+ + z^-). \label{eqn:z-tilde-ineq}
\end{align}

Moreover, via a similar triangle-inequality type argument and the premise $\max \{\errPlus, \errMinus\} \leq \epsilonStar$ we obtain
\begin{align}
\abs{\hat z - \tilde z} &\leq \abs{\hat z^+ - \tilde z^+} + \abs{\hat z^- - \tilde z^-} \nonumber \\
&\leq \epsilonStar \left( \tilde z^+ + \tilde z^- \right) \nonumber \\
&\leq \epsilonStar ((1 + \xi)z^+ + (1 + \xi)z^-) &\text{By event $\EE^{\ell-1}$} \nonumber \\
&= \epsilonStar \, (1 + \xi) (z^+ + z^-). \label{eqn:z-bar-ineq}
\end{align}

Combining the inequalities~\eqref{eqn:z-tilde-ineq} and \eqref{eqn:z-bar-ineq}, we obtain
\begin{align*}
\abs{\hat z - z} &\leq \abs{\hat z - \tilde z} + \abs{\tilde z - z} \\
&\leq \epsilonStar (1 + \xi) \, (z^+ + z^-) + \xi (z^+ + z^-) \\
&= |z| \, \DeltaNeuron[\Point]  \left( \epsilonStar \, ( 1 + \xi)  + \xi \right) \\
&= |z| \, \DeltaNeuron[\Point]  \left(u \xi  + \xi \right) \\
&\leq |z| \, \DeltaNeuronHat  \xi \left(u  + 1 \right) &\text{By event $\EE_\Delta$} \\
&= |z| \, \DeltaNeuronHat  k \, \epsilonLayer \, \left(u  + 1 \right) &\text{By definition of $\xi = k \epsilonLayer$} \\
&= |z| \, k \, \epsilonLayer[\ell + 1] \left(u  + 1 \right), &\text{By definition of $\epsilonLayer[\ell + 1] = \DeltaNeuronHat \, \epsilonLayer$}
\end{align*}
and dividing both sides by $|z|$ yields the bound on $\err{\WWHatRowCon}$.

Let $\ZZ \subseteq \supp$ denote the set of \emph{well-behaved} points, i.e., the set of points that satisfy the sensitivity inequality with respect to edges in both $\Wplus$ and $\Wminus$, i.e.,
$$
\ZZ = \left\{\Input' \in \supp \, : \, \gHat{\Input'} \leq C \s  \quad \forall{\edge \in \Wplus \cup \Wminus} \right \},
$$
and let $\EE_{\ZZ(\Point)}$ denote the event $\Point \in \ZZ$. Let $\condAmplif = \condAmplifDef$ denote the \emph{good} event that both of the events $\EE_{\ZZ(\Point)}$ and $\EE_{\Delta(\Point)}$ occur.

Note that since $\err[\Point]{\WWHatRowCon} \ge 0$ for all $\Point$ and $\WWHatRowCon$, we obtain by the equivalent formulation of expectation of non-negative random variables
\begin{align}
\E[\err[\Point]{\WWHatRowCon} \givenAmplif] &= \int_0^\infty \Pr \left(\err[\Point]{\WWHatRowCon} \ge v \givenAmplif \right) \, dv \nonumber \\
&\leq k \, \epsilonLayer[\ell + 1] + \int_{k \, \epsilonLayer[\ell + 1]}^\infty \Pr \left(\err[\Point]{\WWHatRowCon} \ge v \givenAmplif \right) \, dv \nonumber \\
&= k \, \epsilonLayer[\ell + 1] \left( 1 + \int_0^\infty \Pr \left(\err[\Point]{\WWHatRowCon} \ge k \epsilonLayer[\ell + 1] (u + 1)  \givenAmplif \right) \, du \right) \label{eqn:err-integral}
\end{align}
where the last equality follows by the change of variable $v = k \, \epsilonLayer[\ell + 1] (u + 1)$.

Recall that by the argument presented in the proof of Lemma~\ref{lem:pos-weights-approx}, we have by Bernstein's inequality for $t \ge 0$
\begin{align*}
\Pr \left (\errPlus \ge  t \givenAmplif \right) &\leq 2 \exp \left(-\frac{3 t^2 m}{ S \, C \left(6 + 2 t \right)} \right) \\
&\leq 2 \exp \left(- \frac{8 \, \log(8 \eta / \delta) }{6 + 2 t} \cdot \left(\frac{t}{\epsilonLayer}\right)^2 \right) \\
&= 2 \exp \left(- \frac{a \, t^2}{6 + 2 t} \right)
\end{align*}
where  
$$
a := \frac{8 \log(8 \eta / \delta)}{\epsilonLayer^2},
$$
in the last equality and the second inequality follows by definition of $m = \SampleComplexity$ and the fact that $C = \CDef$. Via the same reasoning, we have for $\errMinus$:
$$
\Pr \left (\errMinus \ge  t \givenAmplif \right) \leq 2 \exp \left(- \frac{a \, t^2}{6 + 2 t} \right).
$$

Hence, combining the implication established in the beginning of the proof with the bounds established above, we invoke the union bound to obtain
\begin{align*}
\Pr \left(\err[\Point]{\WWHatRowCon} \ge k \epsilonLayer[\ell + 1] (u + 1)  \givenAmplif \right) &\leq \Pr \left(\max \{\errPlus, \errMinus\} > \frac{u \, \xi}{1 + \xi } \givenAmplif \right) \\
&\leq \min \left \{ 4 \exp \left(- \frac{a \, t^2}{6 + 2 t} \right), 1 \right\},
\end{align*}
where $t = \frac{u \, \xi}{1 + \xi}$ and as before, $a = \frac{8 \log(8 \eta / \delta)}{\epsilonLayer^2}$. From the expression above, we see that for a value of $t$ satisfying 
$$
t \ge \frac{2 \sqrt{a \log 8 + \log^2 2} + \log 4}{a},
$$
we have $ 4 \exp \left(- \frac{a \, t^2}{6 + 2 t} \right) \leq 1$. Bounding the expression above via elementary computations, we have
\begin{align*}
\frac{2 \sqrt{a \log 8 + \log^2 2} + \log 4}{a} &\leq \frac{2 \sqrt{2 a \log 8} + \log 4}{a} \\
&\leq \frac{3 \sqrt{2 a \log 8}}{a} \\
&\leq \frac{7}{\sqrt{a}} \\
&:= t^*.
\end{align*}
Now note that for $t \ge t^*$, we have
\begin{align*}
\exp \left(- \frac{a \, t^2}{6 + 2 t} \right) &\leq \exp \left(- \frac{a \, t^2}{6 \, (t / t^*)+ 2 t} \right) \\
&= \exp \left(- \frac{a \, t \, t^*}{6  + 2 t^*} \right).
\end{align*}
Let
$$
b = \frac{\xi}{1 + \xi}
$$
and recall that  $t = \frac{u \, \xi}{1 + \xi} = ub$. Letting 
$$
u^* =  \frac{t^*}{b} = \frac{7}{b \sqrt{a}},
$$
we reformulate the bound above in terms of $u$ and $u^*$,
\begin{align*}
\exp \left(- \frac{a \, t \, t^*}{6  + 2 t^*} \right) &= \exp \left(- \frac{a b^2 u^* \, u}{6  + 2 u^* b} \right) \\
&= \exp \left(- \left(\frac{a b^2 u^* }{6  + 2 u^* b}\right) u \right) \\
&= \exp(-c \, u),
\end{align*}
where 
$$
c = \frac{a b^2 u^* }{6  + 2 u^* b}.
$$
This implies that for $u \ge u^*$, we have
$$
\Pr \left(\err[\Point]{\WWHatRowCon} \ge k \epsilonLayer[\ell + 1] (u + 1)  \givenAmplif \right) \leq 4 \exp(-c u),
$$
and for $u \in [0, u^*]$, we trivially have 
$$
\Pr \left(\err[\Point]{\WWHatRowCon} \ge k \epsilonLayer[\ell + 1] (u + 1)  \givenAmplif \right) \leq 1.
$$

Putting it all together, we bound the integral from \eqref{eqn:err-integral} as follows
\begin{align*}
\int_0^\infty \Pr \left(\err[\Point]{\WWHatRowCon} \ge k \epsilonLayer[\ell + 1] (u + 1)  \givenAmplif \right) \, du &\leq \int_0^{u^*} 1 \, du + 4\, \int_{u^*}^\infty \exp(- c u) \, du \\
&= u^* + \frac{4 \exp(-c u^*)}{c} \\
&\leq u^* + \frac{4 \exp(-2)}{c}  \\
&\leq u^* + \frac{80 \exp(-2)}{49} \, u^* \\
&\leq 2 u^*,
\end{align*}
where the first inequality follows from the definitions of $u^*$ and $c$, which imply that $u^* b= 7 / \sqrt{a}$ and so by straightforward simplification,
\begin{align*}
c \, u^* &= \left(\frac{ab (u^* b)}{6 + 2 (u^*b)}\right) \, u^* = \left(\frac{7 a b}{6 \sqrt{a} + 14}\right) \, u^* \\
&= \frac{49 \sqrt{a}}{6 \sqrt{a}+ 14} \\
&\ge \frac{49 \sqrt{a}}{6 \sqrt{a}+ 14\sqrt{a}} = \frac{49}{20} > 2,
\end{align*}
where we used the inequality $a = \frac{8 \log(8 \eta / \delta)}{\epsilonLayer^2} \ge 1$. This implies that $\exp(-cu^*) \leq \exp(-2)$. Similarly, the second inequality follows from the calculations above and the definition of $u^*$
\begin{align*}
\frac{1}{c} &\leq \frac{20}{7 b \, \sqrt{a}} = \frac{20}{49} \, u^*.
\end{align*}
Plugging this inequality on the integral back to our bound on the conditional expectation \eqref{eqn:err-integral}, we establish
\begin{align*}
\E[\err[\Point]{\WWHatRowCon} \givenAmplif] &\leq k \, \epsilonLayer[\ell + 1] \left( 1 + 2 \, u^*\right).
\end{align*}

To bound the conditional expectation given the event $ (\condAmplif)^\compl$ we first observe that since $\WWHatRowCon^+$ and $\WWHatRowCon^-$ are unbiased estimators, we have
$$
\E[\dotp{\WWHatRowCon^+}{\cdot} \givenAmplifCompl, \xx] = \dotp{\WWRowCon^+}{\cdot} \quad \text{and} \quad \E[\dotp{\WWHatRowCon^-}{\cdot} \givenAmplifCompl, \xx] = \dotp{\WWRowCon^-}{\cdot}.
$$

Moreover, note that conditioning on event $\EE^{\ell-1}$ implies that for any $\Point \in \supp$
\begin{align*}
\abs{\hat{z}(\Point)} &\leq \abs{\tilde z(\Point)} + \xi \left(\tilde z^+ (\Point) + \tilde z^- (\Point) \right),
\end{align*}
where $\xi = k \, \epsilonLayer[\ell]$ as before. Thus, invoking the triangle inequality and applying the definition of $\DeltaNeuron[\Point]$, we bound $\err[\Point]{\WWHatRowCon}$ as
\begin{align*}
\err[\Point]{\WWHatRowCon} &= \abs{\frac{\hat z(\Point)}{z(\Point)} - 1} \leq \abs{\frac{\hat z(\Point)}{z(\Point)}} + 1 \\
&\leq \DeltaNeuron[\Point] \, \frac{\abs{\tilde z(\Point)} + \xi \left(\tilde z^+ (\Point) + \tilde z^- (\Point) \right)}{z^+(\Point) + z^-(\Point)} + 1 \\
&\leq \DeltaNeuron[\Point] \, \frac{\tilde z^+(\Point) + \tilde z^-(\Point) + \xi \left(\tilde z^+ (\Point) + \tilde z^- (\Point) \right)}{z^+(\Point) + z^-(\Point)}  + 1 \\
&= \left(1 + \xi \right) \DeltaNeuron[\Point] \, \frac{\tilde z^+ (\Point) + \tilde z^- (\Point)}{z^+(\Point) + z^-(\Point)} + 1.
\end{align*}

Since the above bound holds for any arbitrary $\Point$, we obtain by monotonicity of expectation, law of iterated expectation, and the unbiasedness of our estimators
\begin{align*}
&\E[\err[\Point]{\WWHatRowCon} \givenAmplifCompl] \\
&\quad \leq \left(1 + \xi \right) \, \E \left[ \DeltaNeuron[\Point] \, \frac{\tilde z^+ (\Point) + \tilde z^- (\Point)}{z^+(\Point) + z^-(\Point)}  \givenAmplifCompl \right] + 1 \\
&\quad = \left(1 + \xi \right) \, \E_{\Point} \left[\frac{\DeltaNeuron[\Point]}{z^+(\Point) + z^-(\Point)} \, \E \left[\tilde z^+ (\Point) + \tilde z^- (\Point) \givenAmplifCompl, \Point \right]  \givenAmplifCompl \right] + 1 \\
&\quad = \left(1 + \xi \right) \, \E_{\Point} \left[\frac{\DeltaNeuron[\Point]}{z^+(\Point) + z^-(\Point)} \, \left(z^+(\Point) + z^-(\Point) \right) \givenAmplifCompl \right] + 1 \\
&\quad = \left(1 + \xi \right) \, \E_{\Point} \left[\DeltaNeuron[\Point]  \givenAmplifCompl \right] + 1 \\
&\quad = \left(1 + \xi \right) \, \E_{\Point} \left[\DeltaNeuron[\Point]  \given \condAmplif^\compl \right] + 1 \\
&\quad = \left(1 + \xi \right) \, \E_{\Point} \left[\DeltaNeuron[\Point] \given \DeltaNeuron[\Point] > \DeltaNeuronHat \right] + 1 \\
&\quad \leq 2 \left(1 + \xi \right) \, \E_{\Point} \left[\DeltaNeuron[\Point] \given \DeltaNeuron[\Point] > \DeltaNeuronHat \right].
\end{align*}

By the law of total expectation, we obtain
\begin{align*}
\E[\err[\Point]{\WWHatRowCon} \given \EE^{\ell-1}] &= \underbrace{\E[\err[\Point]{\WWHatRowCon} \givenAmplif]}_{=A} \Pr(\condAmplif) + \underbrace{\E[\err[\Point]{\WWHatRowCon} \givenAmplifCompl]}_{=B} \Pr(\condAmplif^\compl) \\
&= A \, \left(1 - \Pr(\condAmplif^\compl) \right) + B \, \Pr(\condAmplif^\compl) \\
&\leq A_\mathrm{max} \, \left(1 - \Pr(\condAmplif^\compl) \right) + B_\mathrm{max} \, \Pr(\condAmplif^\compl),
\end{align*}
where $A_\mathrm{max}$ and $B_\mathrm{max}$ are the upper bounds on the conditional expectations as established above:
$$
A_\mathrm{max} = k \, \epsilonLayer[\ell + 1] \left( 1 + 2 \, u^*\right) \qquad \text{and} \qquad B_\mathrm{max} = 2 \left(1 + \xi \right) \, \E_{\Point} \left[\DeltaNeuron[\Point] \given \DeltaNeuron[\Point] > \DeltaNeuronHat \right].
$$
We now bound $\Pr(\condAmplif^\compl)$ by the union bound and applications of Lemma~\ref{lem:sensitivity-approximation} (twice, one for positive and the other for negative weights) and Lemma~\ref{lem:delta-hat-approx}
\begin{align*}
\Pr(\condAmplif^\compl) &\leq \Pr(\EE_{\ZZ(\Point)}^\compl) + \Pr(\EE_{\Delta(\Point)}^\compl) \\
&\leq \left( \frac{\delta}{8 \eta} +   \frac{\delta}{8 \eta} \right) + \frac{\delta}{4 \eta} \\
&= \frac{\delta}{2 \eta}.
\end{align*}

Moreover, by definition of $a$, $\xi$, $u^*$ we have
\begin{align*}
A_\mathrm{max} &= \xi \DeltaNeuronHat \left(1 + 2 u^*\right) \\
&=  \DeltaNeuronHat \left(\xi + \frac{14 (1 + \xi)}{\sqrt{a}} \right) \\
&\leq \DeltaNeuronHat \left(\xi + \frac{5 \, \epsilonLayer \, (1 + \xi)}{\sqrt{\log(8 \eta/\delta)}} \right) \\
&= \epsilonLayer \, \DeltaNeuronHat \left(k + \frac{5 \, (1 + \xi)}{\sqrt{\log(8 \eta/\delta)}} \right).
\end{align*}
Putting it all together, we establish
\begin{align*}
\E[\err[\Point]{\WWHatRowCon} \given \EE^{\ell-1}] &\leq \left(1 - \frac{\delta}{2 \eta}\right) A_\mathrm{max} + \frac{\delta}{2\eta} B_\mathrm{max} \\
&\leq A_\mathrm{max} + \frac{\delta}{2\eta} B_\mathrm{max} \\
&\leq \epsilonLayer \, \DeltaNeuronHat \left(k + \frac{5 \, (1 + \xi)}{\sqrt{\log(8 \eta/\delta)}} \right) +  \frac{\delta \, \left(1 + \xi \right)}{\eta} \, \E_{\Point } \left[\DeltaNeuron[\Point] \given \DeltaNeuron[\Point] > \DeltaNeuronHat \right] \\
&= \epsilonLayer \, \DeltaNeuronHat \left(k + \frac{5 \, (1 + k \epsilonLayer)}{\sqrt{\log(8 \eta/\delta)}} \right) +  \frac{\delta \, \left(1 +  k \epsilonLayer \right)}{\eta} \, \E_{\Point } \left[\DeltaNeuron[\Point] \given \DeltaNeuron[\Point] > \DeltaNeuronHat \right]
\end{align*}
and this concludes the proof.
\end{proof}

Next, consider $\tau \in \NN_+$ coreset constructions corresponding to the approximations $\{(\WWHatRowCon^\ell)_1, \ldots, (\WWHatRowCon^\ell)_\tau\}$ generated as in Alg.~\ref{alg:sparsify-weights} for layer $\ell \in \compressiblelayers$ and neuron $\neuron \in [\eta^{\ell}]$. We overload the $\mathrm{err}_\CC(\cdot)$ function so that the error with respect to the set $\TT$ is defined as the mean error, i.e., 
\begin{equation}
\err[\TT]{\WWHatRowCon^\ell} = \frac{1}{|\TT|} \sum_{\Point \in \TT} \err[\Point]{\WWHatRowCon^\ell}.
\end{equation}
Equipped with this definition, we proceed to prove Theorem~\ref{thm:amplification}.

\thmamplification*

\begin{proof}
Let 
$$
\xi = k \epsilonLayer[\ell+1].
$$
We observe that the reparameterization above enables us to invoke Lemma~\ref{lem:neuron-approx} with $\delta' = \frac{\delta}{4 |\PP| \tau}$ to obtain
\begin{align*}
\Pr(\err[\Point]{\WWHatRowCon} \ge \xi \given \EE^{\ell-1}) \leq \frac{\delta}{4 \, |\PP| \, \tau \, \eta}.
\end{align*}

Now let $\BB$ denote the event that the inequality
$$
\max_{\WWHatRowCon \in \{(\WWHatRowCon)_1, \ldots, (\WWHatRowCon)_\tau\}} \, \max_{\xx \in \TT } \, \err{\WWHatRowCon} < \xi
$$
holds, where $\TT \subseteq (\PP \setminus \SS)$ is a set of size $\SizeOfT$.

By the probabilistic inequality established above, we have by the union bound
\begin{align*}
\Pr(\BB^\compl \given \EE^{\ell-1}) &=
\Pr \left(\max_{\WWHatRowCon \in \{(\WWHatRowCon)_1, \ldots, (\WWHatRowCon)_\tau\}} \, \max_{\xx \in \TT } \, \err{\WWHatRowCon} \ge \xi \given \EE^{\ell-1} \right) \\
& \leq \sum_{\WWHatRowCon \in \{(\WWHatRowCon)_1, \ldots, (\WWHatRowCon)_\tau\}} \sum_{\xx \in \TT} \Pr \left(\err{\WWHatRowCon^\ell} \ge \xi \given \EE^{\ell-1} \right) \\
&\leq \frac{\tau \, |\TT| \, \delta }{4 \, |\PP| \, \tau \, \eta} \\
&\leq \frac{\delta}{4 \, \eta},
\end{align*}
where the last inequality follows from the fact that $|\TT| \leq |\PP|$.

Conditioning on $\BB$ enables us to reason about the bounded random variables $\err{\WWHatRowCon^\ell}$ for each $\Point \in \TT$ via Hoeffding's inequality to establish that for any $\WWHatRowCon^\ell \in \{(\WWHatRowCon)_1, \ldots, (\WWHatRowCon)_\tau\}$
\begin{align*}
\Pr \left(|\err[\TT]{\WWHatRowCon^\ell} - \E_{\Point | \WWHatRowCon^\ell} \, [\err[\Point]{\WWHatRowCon^\ell} \given \WWHatRowCon^\ell, \BB \cap \EE^{\ell-1}]| \ge \frac{\xi}{4} \given \BB \cap \EE^{\ell-1} \right) &\leq 2 \, \exp \left( - \frac{(\xi \, |\TT|)^2}{ 8 \, (\xi)^2 |\TT|} \right) \\
&=  2 \, \exp \left( - \frac{|\TT|}{ 8 } \right)
\end{align*}
where, as stated earlier, we implicitly condition on the realization $\realiz(\cdot)$ of $\hat{a}^{\ell}(\cdot)$ in the expression above and in the subsequent parts of the proof since it can be marginalized out and does not affect our bounds.
Applying the union bound, we further obtain
\begin{align}
&\Pr \left(\max_{\WWHatRowCon^\ell \in \{(\WWHatRowCon^\ell)_1, \ldots, (\WWHatRowCon^\ell)_\tau\}} \, \,  |\err[\TT]{\WWHatRowCon^\ell} -\E_{\Point | \WWHatRowCon^\ell} \, [\err[\Point]{\WWHatRowCon^\ell} \given \WWHatRowCon^\ell, \BB \cap \EE^{\ell-1}]| \ge \frac{\xi}{4} \given \BB \cap \EE^{\ell-1} \right) \nonumber \\
&\qquad \qquad \qquad \qquad \qquad \qquad \qquad \qquad \qquad \qquad\leq 2 \, \tau \exp \left( - \frac{|\TT|}{ 8 } \right) \\
&\qquad \qquad \qquad \qquad \qquad \qquad \qquad \qquad \qquad \qquad \leq \frac{\delta}{4 \, \eta} \label{eqn:hoeffding-bound},
\end{align}
where the last inequality follows by our choice of $|\TT|$:
$$
|\TT| =  \SizeOfT.
$$
Let $\LLL$ denote the event that 
$$
\max_{\WWHatRowCon^\ell \in \{(\WWHatRowCon^\ell)_1, \ldots, (\WWHatRowCon^\ell)_\tau\}} \, \,  |\err[\TT]{\WWHatRowCon^\ell} -\E_{\Point | \WWHatRowCon^\ell} \, [\err[\Point]{\WWHatRowCon^\ell} \given \WWHatRowCon^\ell, \BB \cap \EE^{\ell-1}]| \leq \frac{\xi}{4}.
$$
By law of total probability, we have
\begin{align*}
\Pr( \LLL^\compl \given \EE^{\ell-1}) &= \Pr( \LLL^\compl \, | \, \BB, \EE^{\ell-1}) \Pr( \BB \given \EE^{\ell-1}) + \Pr( \LLL^\compl \, | \, \BB^\compl, \EE^{\ell-1}) \Pr( \BB^\compl \given \EE^{\ell-1})  \\
&\leq \frac{\delta}{4 \, \eta} \left(1 - \Pr(\BB^\compl \given \EE^{\ell-1}) \right) + \Pr(\BB^\compl \given \EE^{\ell-1}) \\
&= \frac{\delta}{4 \, \eta} + \Pr(\BB^\compl \given \EE^{\ell-1}) \left(1 - \frac{\delta}{4 \, \eta} \right) \\
&\leq \frac{\delta}{4 \, \eta} + \frac{\delta}{4 \, \eta}\\
&= \frac{\delta}{2 \, \eta}.
\end{align*}

Now let $\WWHatRowCon^{\dagger}$ denote the \emph{true} minimizer of $\E_{\Point | \WWHatRowCon^\ell} \, [\err[\Point]{\WWHatRowCon^\ell} \given \WWHatRowCon^\ell, \EE^{\ell-1}]$ among $\WWHatRowCon^\ell \in \{(\WWHatRowCon^\ell)_1, \ldots, (\WWHatRowCon^\ell)_\tau\}$ (note that it is not necessarily the case that $\WWHatRowCon^{\dagger} = \WWHatRowCon^*$), i.e., 
$$
\WWHatRowCon^{\dagger} = \argmin_{\WWHatRowCon \in \{(\WWHatRowCon)_1, \ldots, (\WWHatRowCon)_\tau\}} \E_{\Point | \WWHatRowCon} \, [\err[\Point]{\WWHatRowCon} \given \WWHatRowCon, \EE^{\ell-1}].
$$
For each constructed $(\WWHatRowCon)_t, \, t \in [\tau]$, invoking Markov's inequality and the result of  Lemma~\ref{lem:expected-error} corresponding to the adjusted size of $\SS$ and sample complexity $\mNeuron$ yields
\begin{align*}
&\Pr \left(\E_{\Point | (\WWHatRowCon)_t} \, [\err[\Point]{(\WWHatRowCon)_t} \given (\WWHatRowCon^\ell)_t, \EE^{\ell-1}] \ge \frac{\xi}{4} \given \EE^{\ell-1} \right) \\
&\alignspace \leq \frac{4 \, \E [\mathrm{err}_{(\WWHatRowCon^\ell)_t}(\Point) \given \EE^{\ell-1}]}{\xi} \\
&\alignspace \leq \frac{4 \epsilonLayer \, \DeltaNeuronHat}{\xi} \left(k + \frac{5 \, (1 + k \epsilonLayer)}{\sqrt{\log(8 \eta/\delta)}} \right) +  \frac{4 \, \delta \, \left(1 +  k \epsilonLayer \right)}{\xi \, \eta} \, \E_{\Point \sim \DD} \left[\DeltaNeuron[\Point] \given \DeltaNeuron[\Point] > \DeltaNeuronHat \right] \\
&\alignspace \leq \frac{9}{10},
\end{align*}
where the last inequality follows for $\frac{\delta}{\eta}$ small enough. Thus, the event $\E[\mathrm{err}_{\WWHatRowCon^{\dagger}}(\Point) \given \WWHatRowCon^{\dagger}, \EE^{\ell-1}] \ge \xi/4$ occurs if and only if we fail (i.e., exceed $\xi/4$ expected error) in \emph{all} $\tau$ trials. This implies that
\begin{align*}
\Pr \left(\E_{\Point | \WWHatRowCon^{\dagger}}[\mathrm{err}_{\WWHatRowCon^{\dagger}}(\Point) \, | \, \WWHatRowCon^{\dagger}, \EE^{\ell-1}] \ge \frac{\xi}{4} \given \EE^{\ell-1}\right) &= \Pr \left(\forall{t \in [\tau]} : \, \E_{\Point | (\WWHatRowCon^\ell)_t} \, [\err[\Point]{(\WWHatRowCon^\ell)_t} \given (\WWHatRowCon^\ell)_t, \EE^{\ell-1}] \ge \frac{\xi}{4} \given \EE^{\ell-1} \right) \\
&\leq \left(\frac{9}{10}\right)^\tau \\
&\leq \frac{\delta}{4 \, \eta},
\end{align*}
where the last inequality follows by our choice of $\tau$:
$$
\tau = \tauDef.
$$
Let $\GG$ denote the event that $\E_{\Point | \WWHatRowCon^*}[\mathrm{err}_{\WWHatRowCon^{\dagger}}(\Point) \, | \, \WWHatRowCon^{\dagger}, \EE^{\ell-1}] \leq \frac{\xi}{4}$ and recall that 
$$
\WWHatRowCon^* = \argmin_{\WWHatRowCon^\ell \in \{(\WWHatRowCon^\ell)_1, \ldots, (\WWHatRowCon^\ell)_\tau\}} \err[\TT]{\WWHatRowCon^\ell}.
$$
If events $\BB, \LLL$, and $\GG$ all occur, i.e.,  $\BB \cap \LLL \cap \GG \neq \emptyset$, then we obtain
\begin{align*}
&\E_{\Point | \WWHatRowCon^*} \, [\mathrm{err}_{\WWHatRowCon^*}(\Point) \given \WWHatRowCon^*, \EE^{\ell-1}] \\
&\quad = \E_{\TT | \WWHatRowCon^*}[\err[\TT]{\WWHatRowCon^*} \given \WWHatRowCon^*, \EE^{\ell-1}] & \\
&\quad= \E[\err[\TT]{\WWHatRowCon^*} \given \WWHatRowCon^*, \BB, \EE^{\ell-1}] \Pr(\BB \given \EE^{\ell-1}) \\
&\quad\quad\quad\quad\quad \quad + \E [\err[\TT]{\WWHatRowCon^*} \given \WWHatRowCon^*, \BB^\compl, \EE^{\ell-1}] \Pr(\BB^\compl \given \EE^{\ell-1}) \\
&\quad \leq \E [\mathrm{err}_{\WWHatRowCon^*}(\TT) \given \WWHatRowCon^*, \BB, \EE^{\ell-1}] + \E \, [\mathrm{err}_{\WWHatRowCon^*}(\Point) \given \WWHatRowCon^*, \BB^\compl, \EE^{\ell-1}] \, \left(\frac{\delta}{4 \, \eta} \right) & \\
&\quad \leq  \E \, [\mathrm{err}_{\WWHatRowCon^*}(\TT) \given \WWHatRowCon^*,  \BB, \EE^{\ell-1}] + \frac{\xi}{4} &\text{for $\frac{\delta}{\eta}$ small enough} \,\, \\
&\quad \leq \mathrm{err}_{\WWHatRowCon^*}(\TT) + \frac{\xi}{2} &\text{By $\BB \cap \LLL \neq \emptyset$} \\
&\quad \leq \mathrm{err}_{\WWHatRowCon^{\dagger}}(\TT) + \frac{\xi}{2} &\text{By definition of $\WWHatRowCon^*$} \\
&\quad \leq \E_{\TT | \WWHatRowCon^\dagger} \, [\mathrm{err}_{\WWHatRowCon^{\dagger}}(\TT) \, | \, \WWHatRowCon^{\dagger}, \, \BB, \EE^{\ell-1}] + \frac{3 \, \xi}{4} &\text{By $\BB \cap \LLL \neq \emptyset$} \\
&\quad \leq \E_{\Point \given \WWHatRowCon^{\dagger}} [\mathrm{err}_{\WWHatRowCon^{\dagger}}(\Point) \, | \, \WWHatRowCon^{\dagger}, \EE^{\ell-1}] + \frac{3 \, \xi}{4} & \\
&\quad \leq \xi &\text{By $\GG \neq \emptyset$},
\end{align*}
where in the second to last inequality, we used the fact that conditioning on $\BB$ leads to a decrease in the expected value relative to the unconditional expectation.

By the union bound over the failure events, we have that the sequence of inequalities above holds with probability at least $1- \delta/ \eta$:
\begin{align*}
\Pr \left( \E_{\Point | \WWHatRowCon^*} \, [\err[\Point]{\WWHatRowCon^*} \given \WWHatRowCon^*, \EE^{\ell-1}] \leq \xi \given \EE^{\ell-1} \right) &\ge \Pr( \BB \cap \LLL \cap \GG \given \EE^{\ell-1}) \\
&= 1 - \Pr \left( (\BB \cap \LLL \cap \GG)^\compl \given \EE^{\ell-1} \right) \\
&\geq 1 - \left( \Pr(\BB^\compl \given \EE^{\ell-1}) + \Pr(\LLL^\compl \given \EE^{\ell-1}) + \Pr(\GG^\compl \given \EE^{\ell-1}) \right) \\
&\geq 1 - \left( \frac{\delta}{4 \, \eta} + \frac{\delta}{2 \, \eta} + \frac{\delta}{4 \, \eta} \right) \\
&= 1 - \frac{\delta}{\eta},
\end{align*}
and this establishes the theorem.
\end{proof}

\fi

	\ifnotanalysisonly
		\section{Additional Results}
\label{app:results}
In this section, we give more details on the evaluation of our compression algorithm on popular benchmark data sets and varying fully-connected neural network configurations. In the experiments, we compare the effectiveness of our sampling scheme in reducing the number of non-zero parameters of a network to that of uniform sampling and the singular value decomposition (SVD). 
All algorithms were implemented in Python using the PyTorch library~\citep{paszke2017automatic} and simulations were conducted on a computer with a 2.60 GHz Intel i9-7980XE processor (18 cores total) and 128 GB RAM.

For training and evaluating the algorithms considered in this section, we used the following off-the-shelf data sets:
\begin{itemize}
\setlength\itemsep{0.25em}
  \item \textit{MNIST}~\citep{lecun1998gradient} --- $70,000$ images of handwritten digits between 0 and 9 in the form of $28 \times 28$ pixels per image.
  \item \textit{CIFAR-10}~\citep{krizhevsky2009learning} --- $60,000$ $32 \times 32$ color images, a subset of the larger CIFAR-100 dataset, each depicting an object from one of 10 classes, e.g., airplanes.
  \item \textit{FashionMNIST}~\citep{xiao2017}  --- A recently proposed drop-in replacement for the MNIST data set that, like MNIST, contains $60,000$, $28 \times 28$ grayscale images, each associated with a label from 10 different categories.
\end{itemize}
We considered a diverse set of network configurations for each of the data sets. We varied the number of hidden layers between 2 and 5 and used either a constant width across all hidden layers between 200 and 1000 or a linearly decreasing width (denoted by "Pyramid" in the figures). Training was performed for 30 epochs on the normalized data sets using an Adam optimizer with a learning rate of 0.001 and a batch size of 300. The test accuracies were roughly 98\% (MNIST), 45\% (CIFAR10), and 96\% (FashionMNIST), depending on the network architecture. To account for the randomness in the training procedure, for each data set and neural network configuration, we averaged our results across 4 trained neural networks.

\subsection{Details on the Compression Algorithms}
We evaluated and compared the performance of the following algorithms on the aforementioned data sets.
\begin{enumerate}
\setlength\itemsep{0.25em}
\item \textit{Uniform (Edge) Sampling} --- A uniform distribution is used, rather than our sensitivity-based importance sampling distribution, to sample the incoming edges to each neuron in the network. Note that like our sampling scheme, uniform sampling edges generates an unbiased estimator of the neuron value. However, unlike our approach which explicitly seeks to minimize estimator variance using the bounds provided by empirical sensitivity, uniform sampling is prone to exhibiting large estimator variance.

\item \textit{Singular Value Decomposition} (SVD) --- The (truncated) SVD decomposition is used to generate a low-rank (rank-$r$) approximation for each of the weight matrices $(\WWHat^2, \ldots, \WWHat^L)$ to obtain the corresponding parameters $\paramHat = (\WWHat^2_r, \ldots, \WWHat^L_r)$ for various values of $r \in \NN_+$. Unlike the compared sampling-based methods, SVD does not sparsify the weight matrices. Thus, to achieve fair comparisons of compression rates, we compute the size of the rank-$r$ matrices constituting $\paramHat$ as,
$$
\size{\paramHat} = \sum_{\ell = 2}^L \sum_{i = 1}^r \left( \nnz{u_i^\ell} + \nnz{v_i^\ell} \right),
$$
where $\WW^\ell = U^\ell \Sigma^\ell (V^\ell)^\top$ for each $\ell \in \layers$, with $\sigma_1 \ge \sigma_2 \ldots  \ge \sigma_{\eta^{\ell-1}}$ and $u_i^\ell$ and $v_i^\ell$ denote the $i$th columns of $U^\ell$ and $V^\ell$ respectively.

\item \textit{$\ell_1$  Sampling~\citep{achlioptas2013matrix}} --- An entry-wise sampling distribution based on the ratio between the absolute value of a single entry and the (entry-wise) $\ell_1$ - norm of the weight matrix is computed, and the weight matrix is subsequently sparsified by sampling accordingly. In particular, entry $w_{ij}$ of some weight matrix $\WW$ is sampled with probability
$$
p_{ij} = \frac{\abs{w_{ij}}}{\norm{W}_{\ell_1}},
$$
and reweighted to ensure the unbiasedness of the resulting estimator.

\item \textit{$\ell_2$ Sampling~\citep{drineas2011note}}  --- The entries $(i,j)$ of each weight matrix $\WW$ are sampled with distribution
\[
p_{ij} = \frac{w_{ij}^2}{\norm{W}_F^2},
\]
where $\norm{\cdot}_F$ is the Frobenius norm of $W$, and reweighted accordingly.

\item \textit{$\frac{\ell_1 + \ell_2}{2}$ Sampling~\citep{kundu2014note}} -- The entries $(i,j)$ of each weight matrix $\WW$ are sampled with distribution
\[
p_{ij} = \frac{1}{2} \left(\frac{w_{ij}^2}{\norm{W}_F^2} + \frac{|w_{ij}|}{\norm{W}_{\ell_1}} \right),
\]
where $\norm{\cdot}_F$ is the Frobenius norm of $W$, and reweighted accordingly. We note that~\cite{kundu2014note} constitutes the current state-of-the-art in data-oblivious matrix sparsification algorithms.

\item \textit{CoreNet} (Edge Sampling) --- Our core algorithm for edge sampling shown in Alg.~\ref{alg:sparsify-weights}, but without the neuron pruning procedure.

\item \textit{CoreNet}\verb!+! (CoreNet \& Neuron Pruning) --- Our algorithm shown in Alg.~\ref{alg:main} that includes the neuron pruning step.

\item \textit{CoreNet}\verb!++! (CoreNet\verb!+! \& Amplification) --- In addition to the features of \textit{Corenet}\verb!+!, multiple coresets $\CC_1, \ldots, \CC_\tau$ are constructed over $\tau \in \NN_+$ trials, and the best one is picked by evaluating the empirical error on a subset $\TT \subseteq \PP \setminus \SS$ (see Sec.~\ref{sec:method} for details).
\end{enumerate}

\subsection{Preserving the Output of a Neural Network}
We evaluated the accuracy of our approximation by comparing the output of the compressed network with that of the original one and compute the $\ell_1$-norm of the relative error vector. We computed the error metric for both the uniform sampling scheme as well as our compression algorithm (Alg.~\ref{alg:main}). Our results were averaged over 50 trials, where for each trial, the relative approximation error was averaged over the entire test set. In particular, for a test set $\PP_\mathrm{test} \subseteq \Reals^d$ consisting of $d$ dimensional points, the average relative error of with respect to the $\fHat$ generated by each compression algorithm was computed as
$$
\mathrm{error}_{\PP_\mathrm{test}}(\fHat) = \frac{1}{|\PP_\mathrm{test}|} \sum_{\Point \in \PP_\mathrm{test}} \norm{\fHat(\Point) - \f(\Point)}_1.
$$
Figures~\ref{fig:mnist_error},~\ref{fig:cifar_error}, and~\ref{fig:fashion_error} depict the average performance of the compared algorithms for various network architectures trained on MNIST, CIFAR-10, and FashionMNIST, respectively. Our algorithm is able to compress networks trained on MNIST and FashionMNIST to about 10\% of their original size without significant loss of accuracy. On CIFAR-10, a compression rate of 50\% yields classification results comparable to that of uncompressed networks. The shaded region corresponding to each curve represents the values within one standard deviation of the mean.

\subsection{Preserving the Classification Performance}
We also evaluated the accuracy of our approximation by computing the loss of prediction accuracy on a test data set, $\PP_{\mathrm{test}}$. In particular, let $\mathrm{acc}_{\PP_{\mathrm{test}}}(\f)$ be the average accuracy of the neural network $\f$, i.e,. 
$$
\mathrm{acc}_{\PP_{\mathrm{test}}}(\f) = \frac{1}{|\PP_\mathrm{test}|} \sum_{\Point \in \PP_\mathrm{test}} \1 \left( \argmax_{\neuron \in [\eta^L]} \f(\Input) \neq y(\Point) \right), 
$$
where $y(\Input)$ denotes the (true) label associated with $\Input$. Then the drop in accuracy is computed as 
$$
\mathrm{acc}_{\PP_{\mathrm{test}}}(\f) - \mathrm{acc}_{\PP_{\mathrm{test}}}(\fHat).
$$
Figures~\ref{fig:mnist_acc},~\ref{fig:cifar_acc}, and~\ref{fig:fashion_acc} depict the average performance of the compared algorithms for various network architectures trained on MNIST, CIFAR-10, and FashionMNIST respectively. The shaded region corresponding to each curve represents the values within one standard deviation of the mean.

\subsection{Preliminary Results with Retraining}
We compared the performance of our approach with that of the popular weight thresholding heuristic -- henceforth denoted by WT -- of~\cite{Han15} when retraining was allowed after the compression, i.e., pruning, procedure.
Our comparisons with retraining for the networks and data sets mentioned in Sec.~\ref{sec:results} are as follows. For MNIST, WT required 5.8\% of the number of parameters to obtain the classification accuracy of the original model (i.e., 0\% drop in accuracy), whereas for the same percentage (5.8\%) of the parameters retained, CoreNet++ incurred a classification accuracy drop of 1\%. For CIFAR, the approach of~\cite{Han15} matched the original model’s accuracy using ~3\% of the parameters, whereas CoreNet++ reported an accuracy drop of 9.5\% for 3\% of the parameters retained. Finally, for FashionMNIST, the corresponding numbers were 4.1\% of the parameters to achieve 0\% loss for WT, and a loss of 4.7\% in accuracy for CoreNet++ with the same percentage of parameters retained.

\subsection{Discussion}
As indicated in Sec.~\ref{sec:results}, the simulation results presented here validate our theoretical results and suggest that empirical sensitivity can lead to effective, more informed sampling compared to other methods. Moreover, we are able to outperform networks that are compressed via state-of-the-art matrix sparsification algorithms. We also note that there is a notable difference in the performance of our algorithm between different datasets. In particular, the difference in performance of our algorithm compared to the other method for networks trained on FashionMNIST and MNIST is much more significant than for networks trained on CIFAR. We conjecture that this is partially due to considering only fully-connected networks as these network perform fairly poorly on CIFAR (around~45\% classification accuracy) and thus edges have more uniformly distributed sensitivity as the information content in the network is limited. We envision that extending our guarantees to convolutional neural networks may enable us to further reason about the performance on data sets such as CIFAR. 

\begin{figure}[htb!]
\centering
\includegraphics[width=0.32\textwidth]{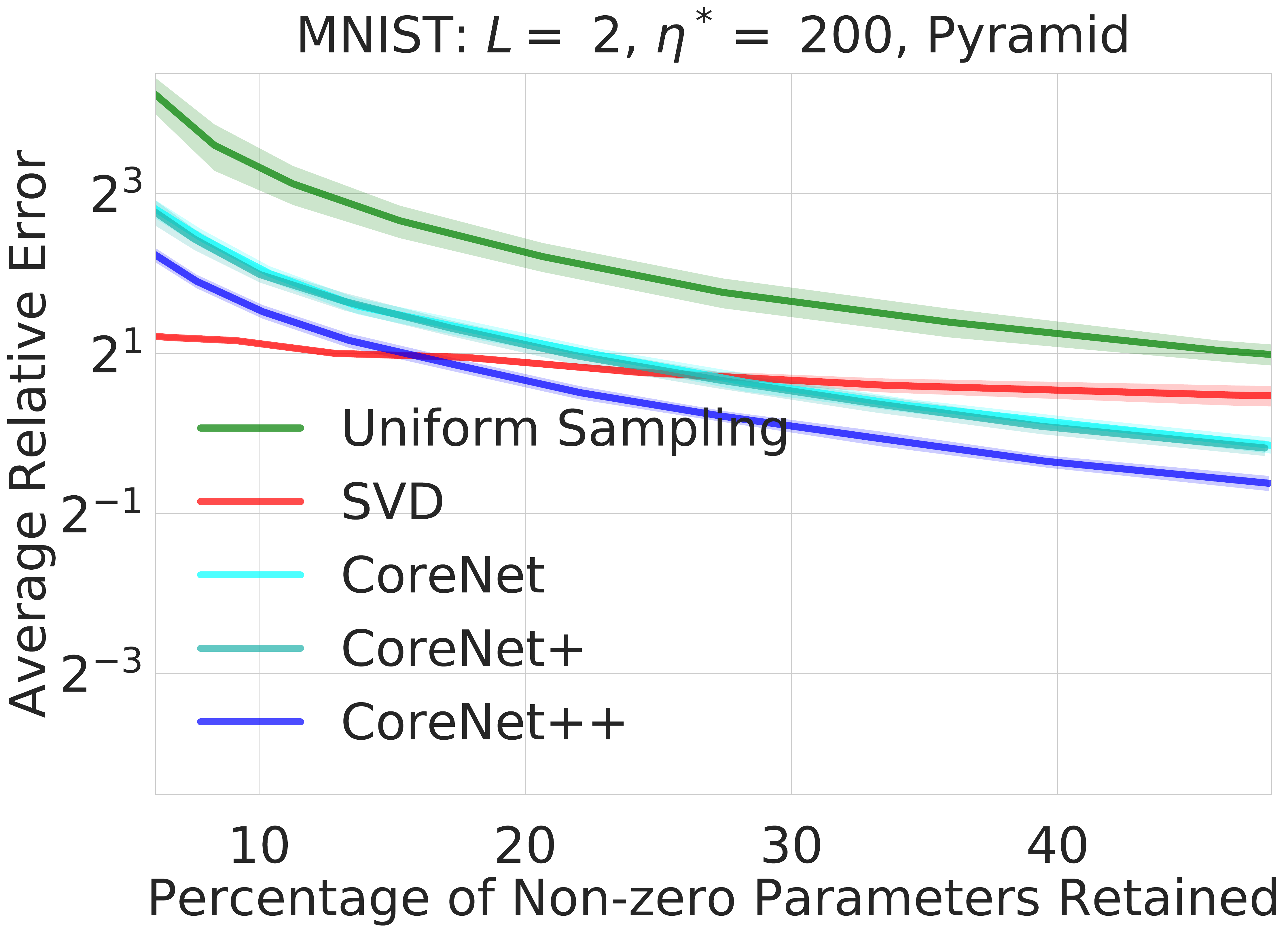} 
\includegraphics[width=0.32\textwidth]{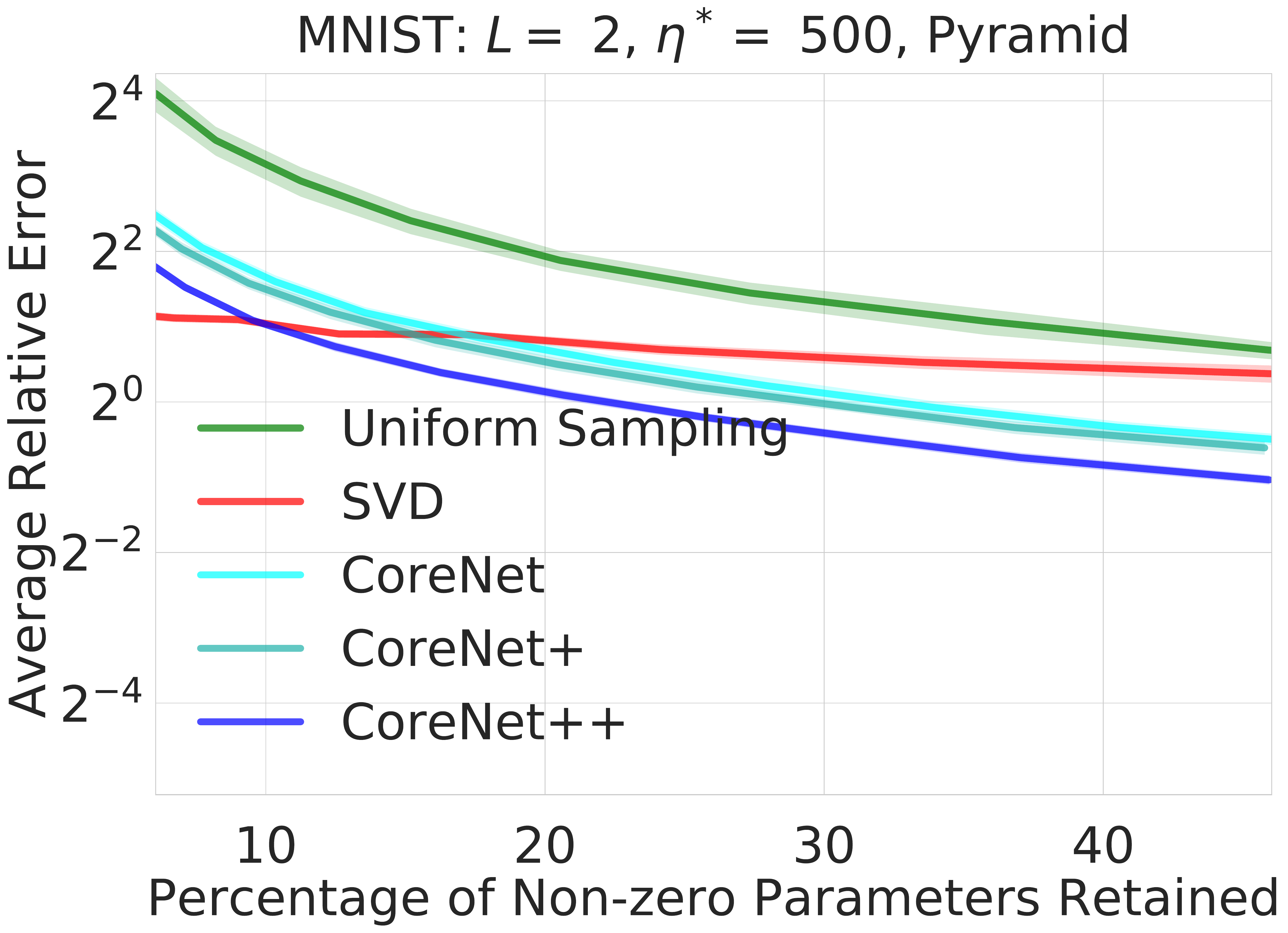} 
\includegraphics[width=0.32\textwidth]{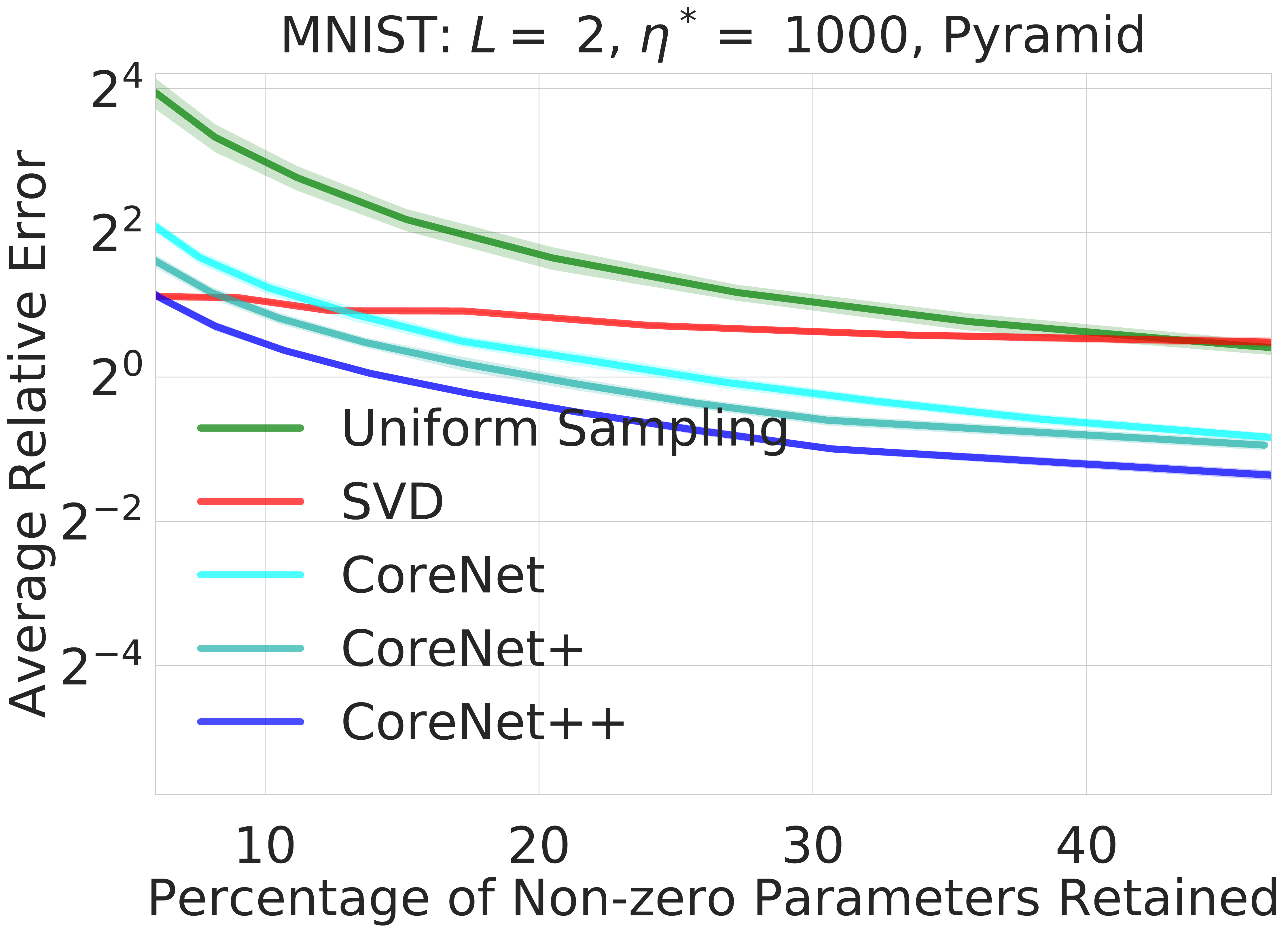}%
\vspace{0.5mm}
\includegraphics[width=0.32\textwidth]{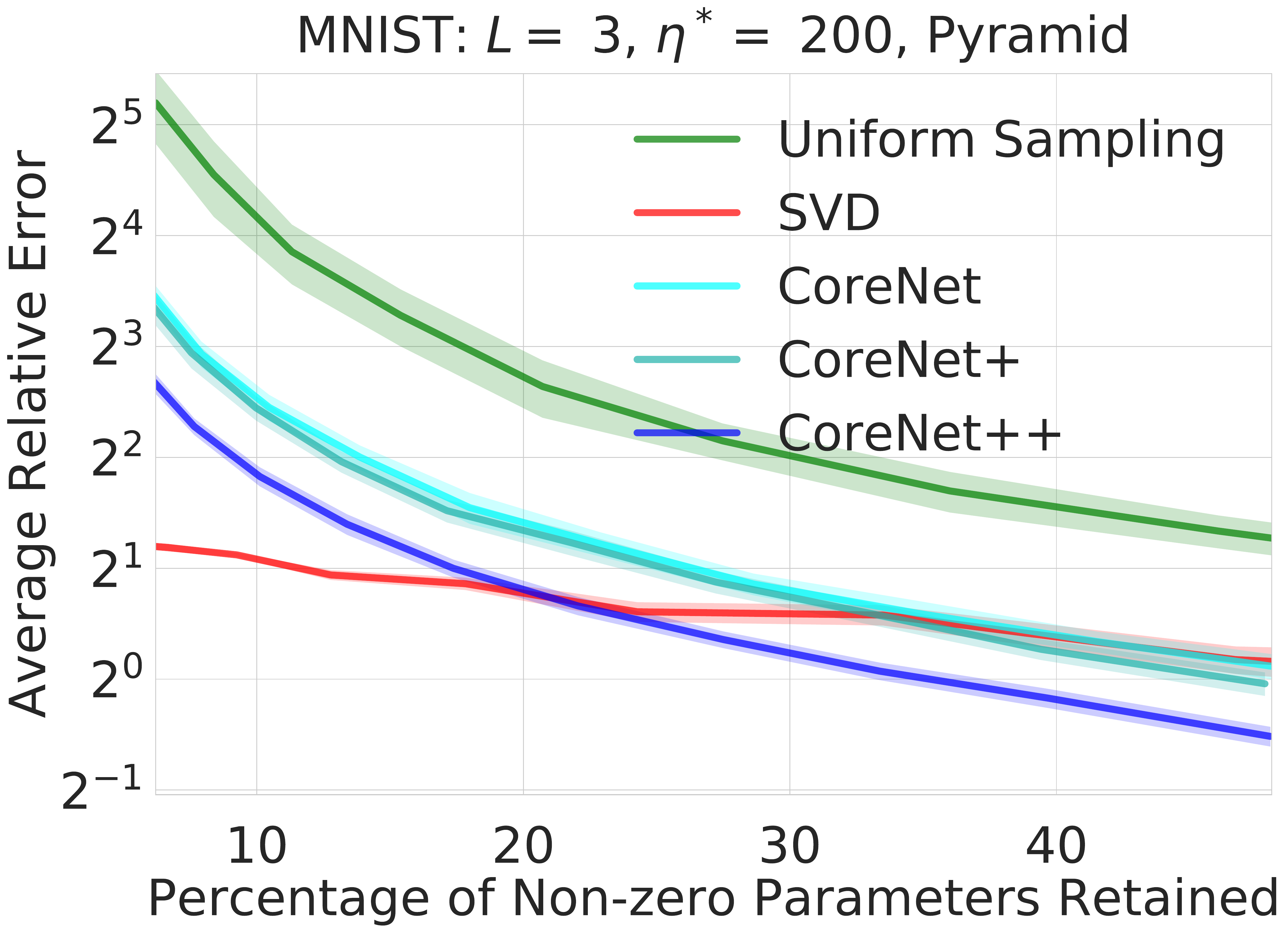} 
\includegraphics[width=0.32\textwidth]{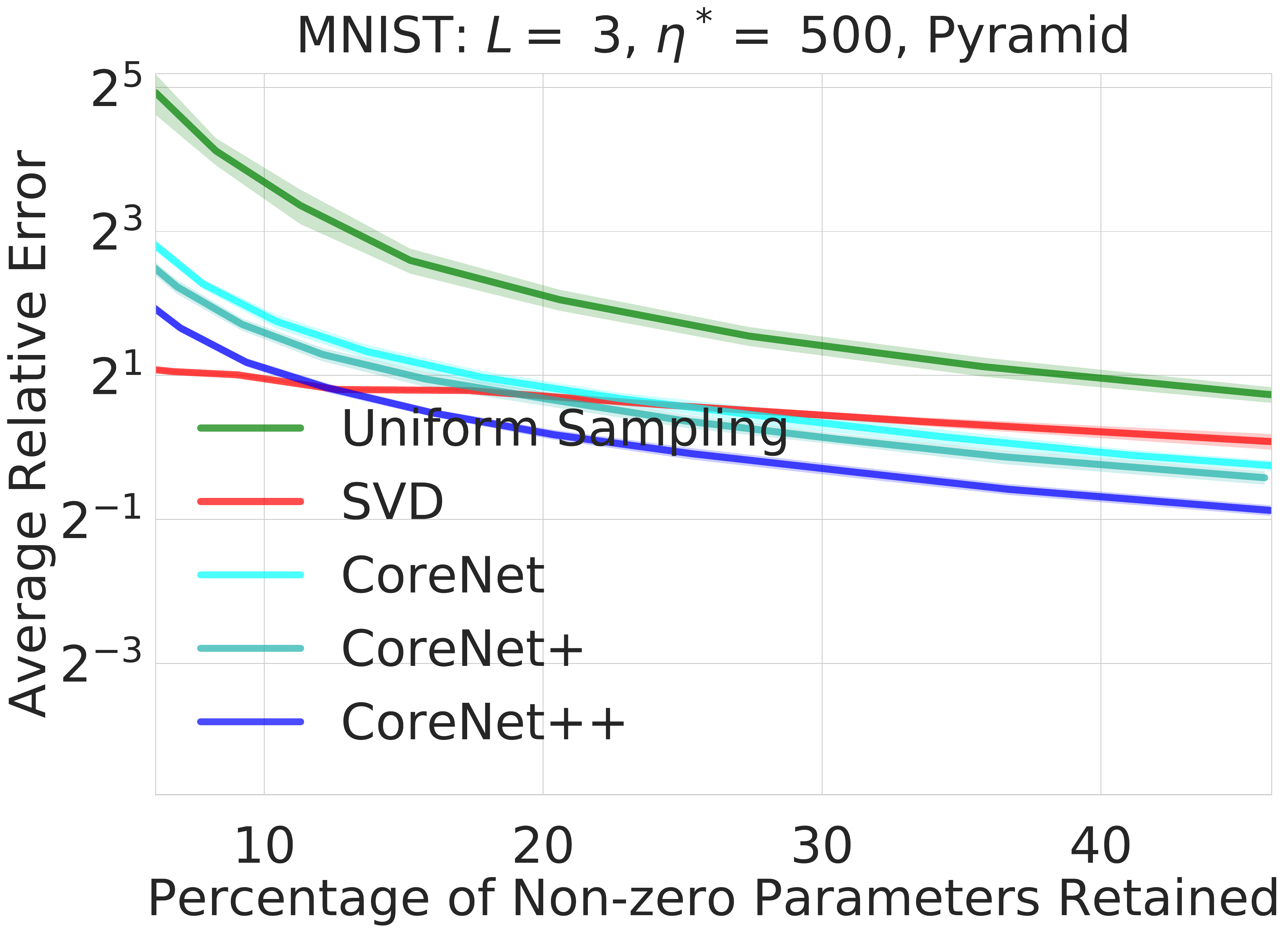} 
\includegraphics[width=0.32\textwidth]{figures/error/MNIST_l3_h1000_pyramid}%
\vspace{0.5mm}
\includegraphics[width=0.32\textwidth]{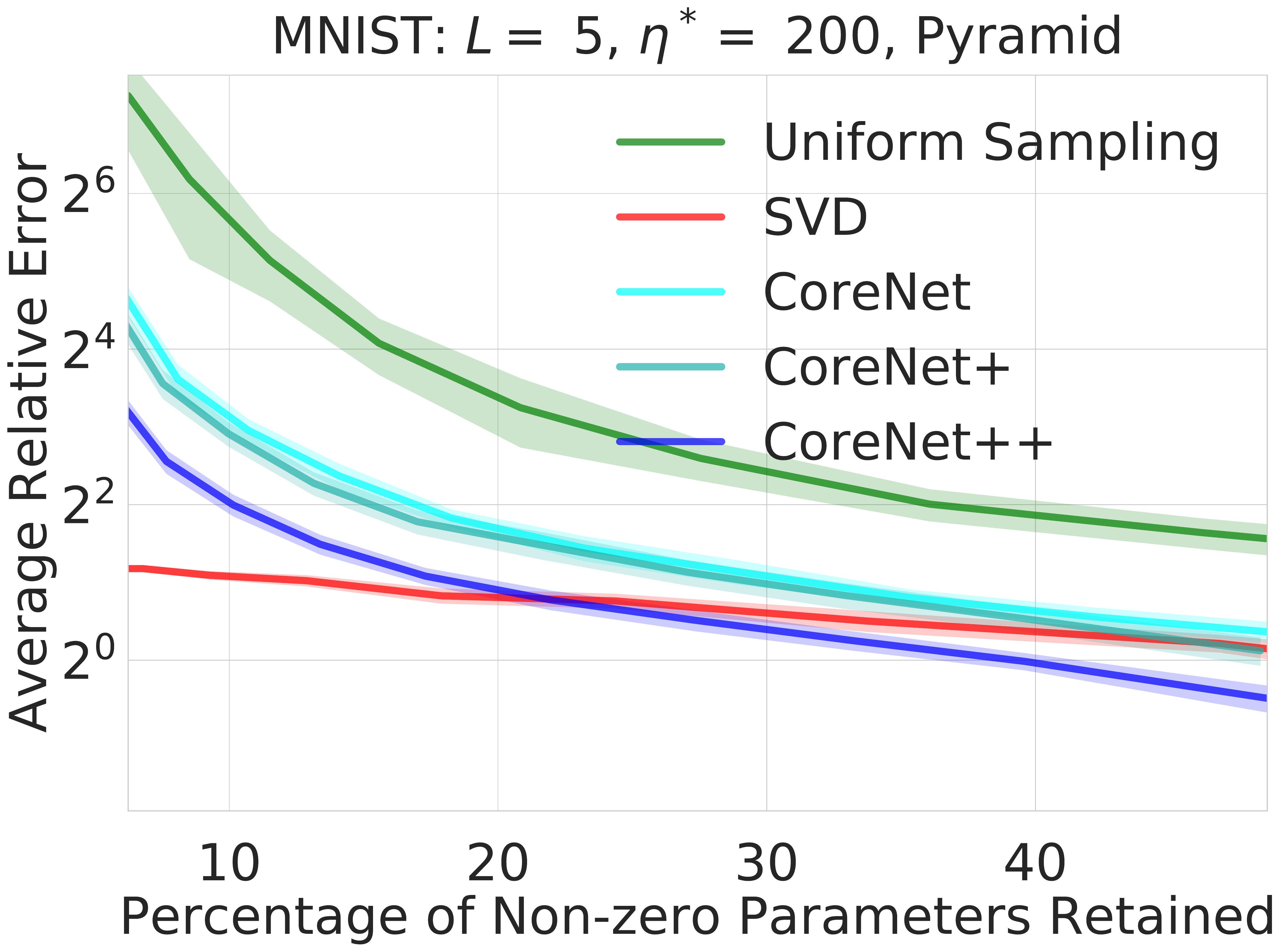} 
\includegraphics[width=0.32\textwidth]{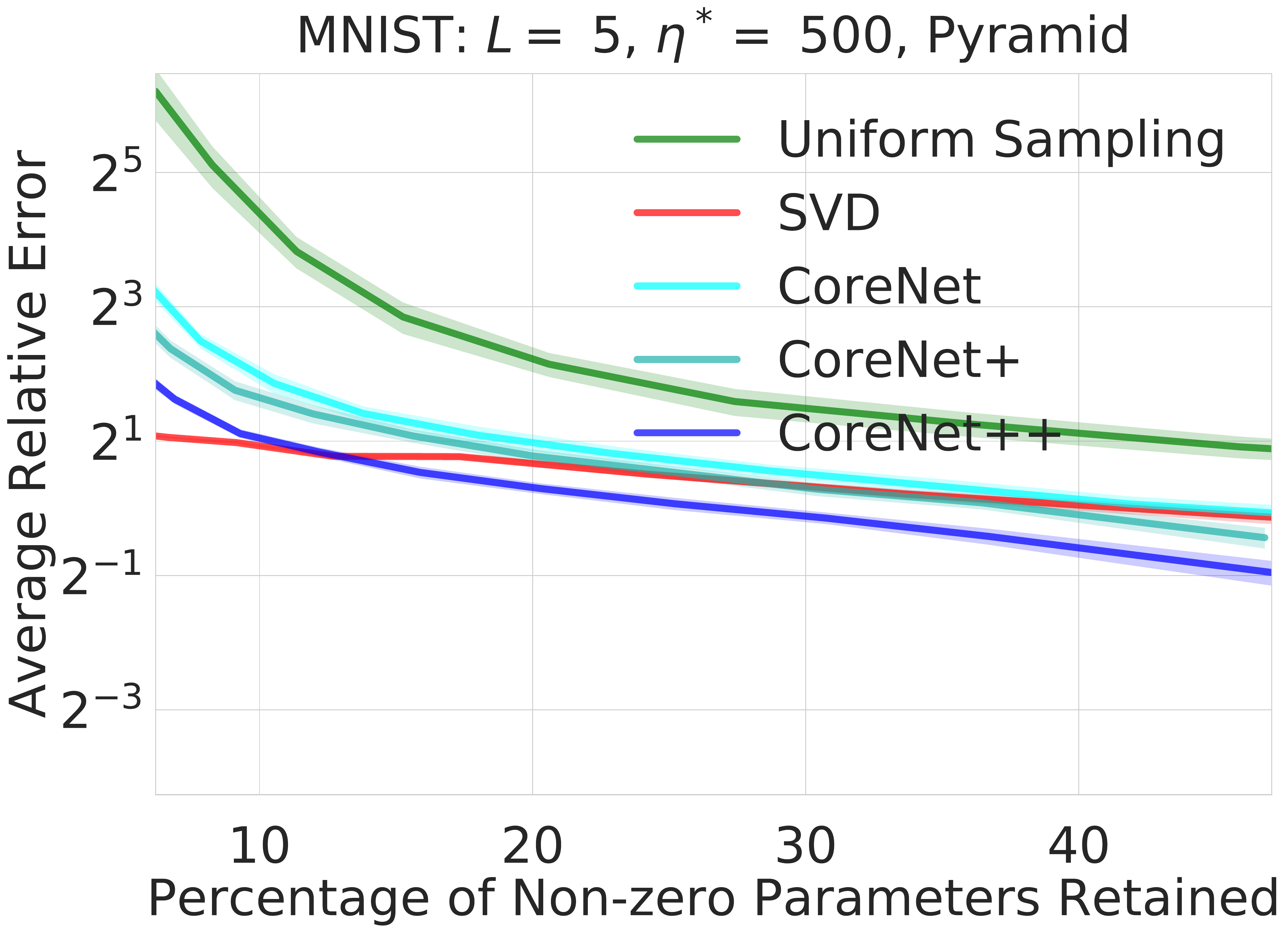} 
\includegraphics[width=0.32\textwidth]{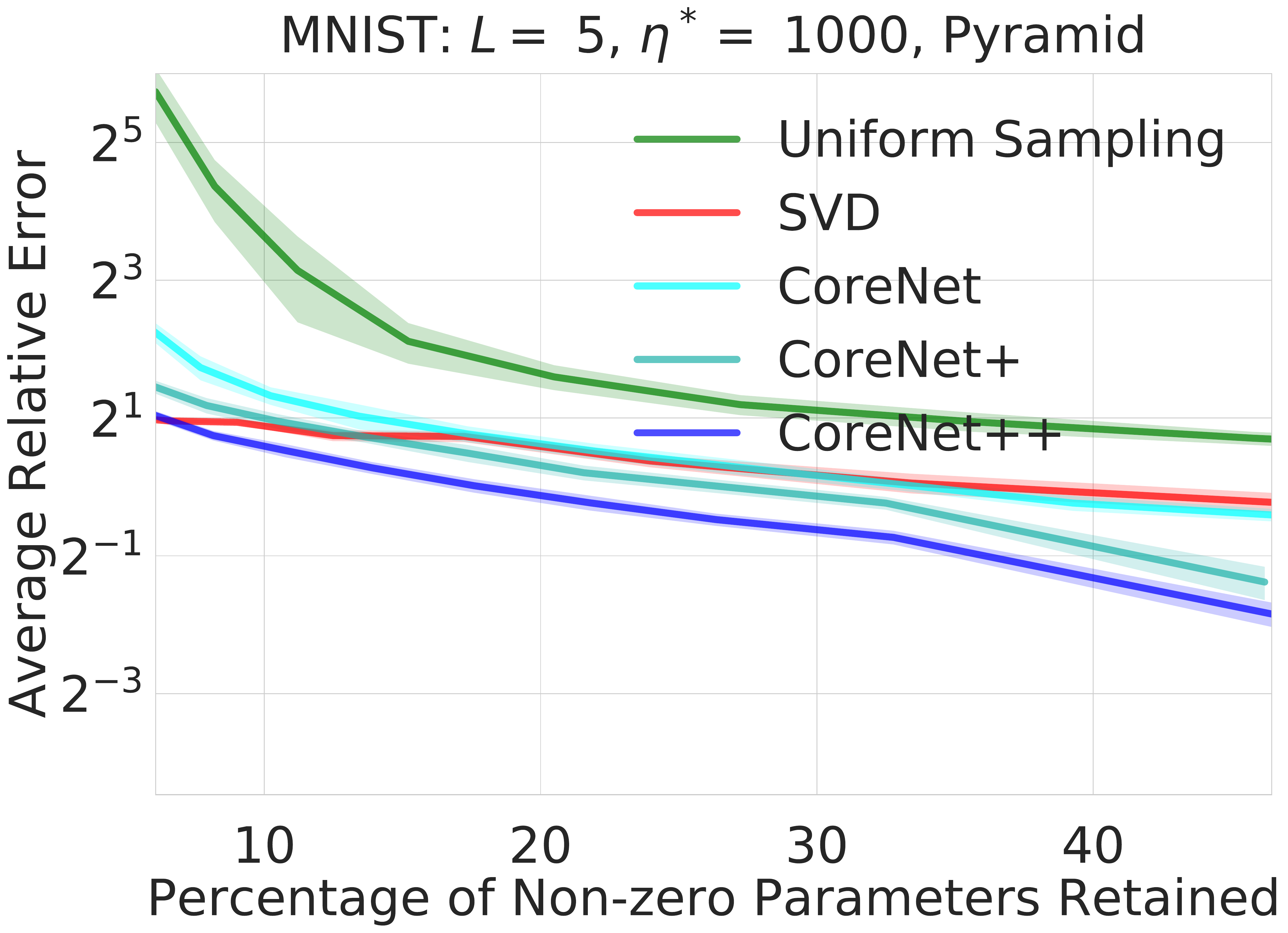}%
\vspace{0.5mm}
\includegraphics[width=0.32\textwidth]{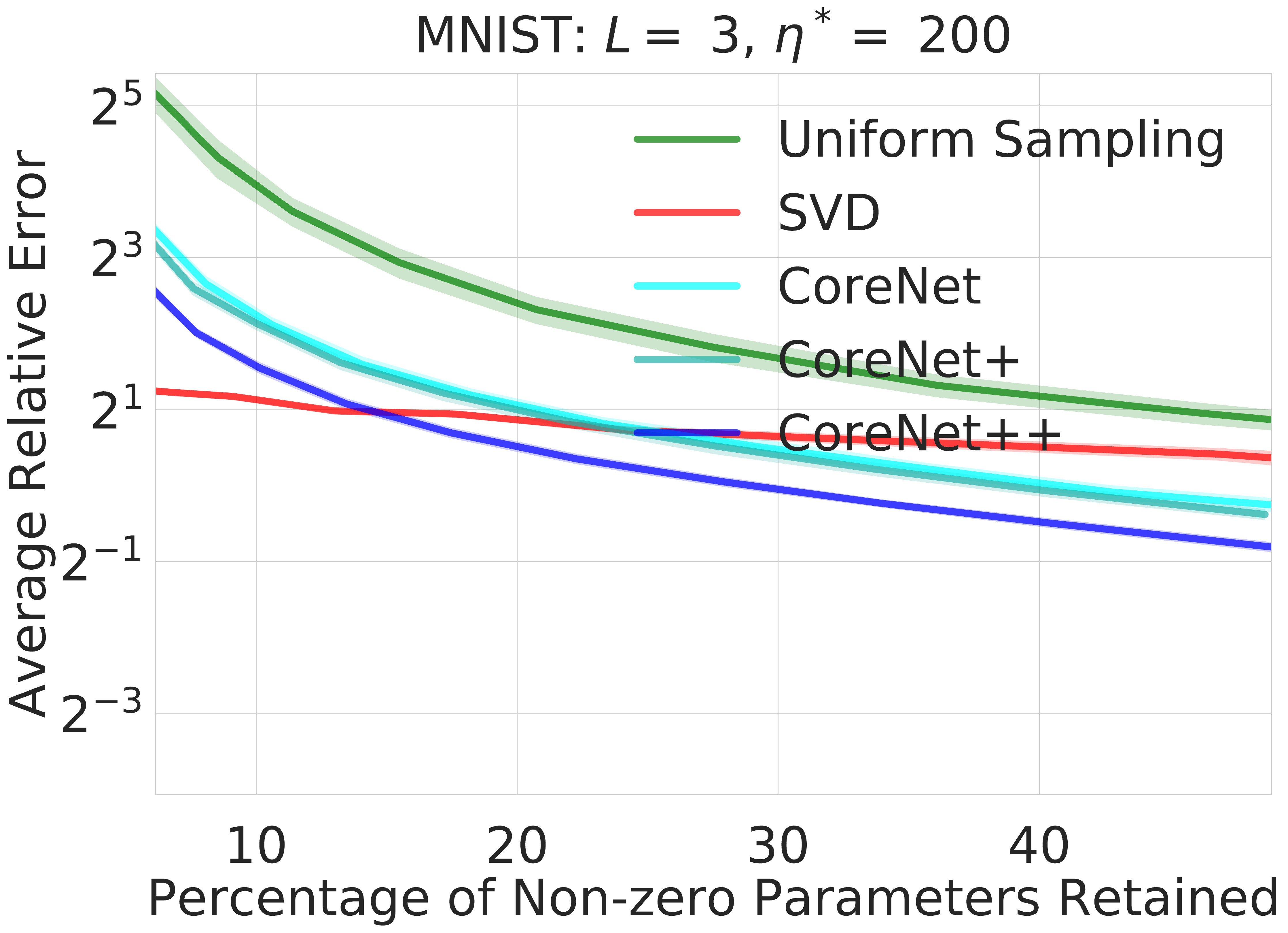} 
\includegraphics[width=0.32\textwidth]{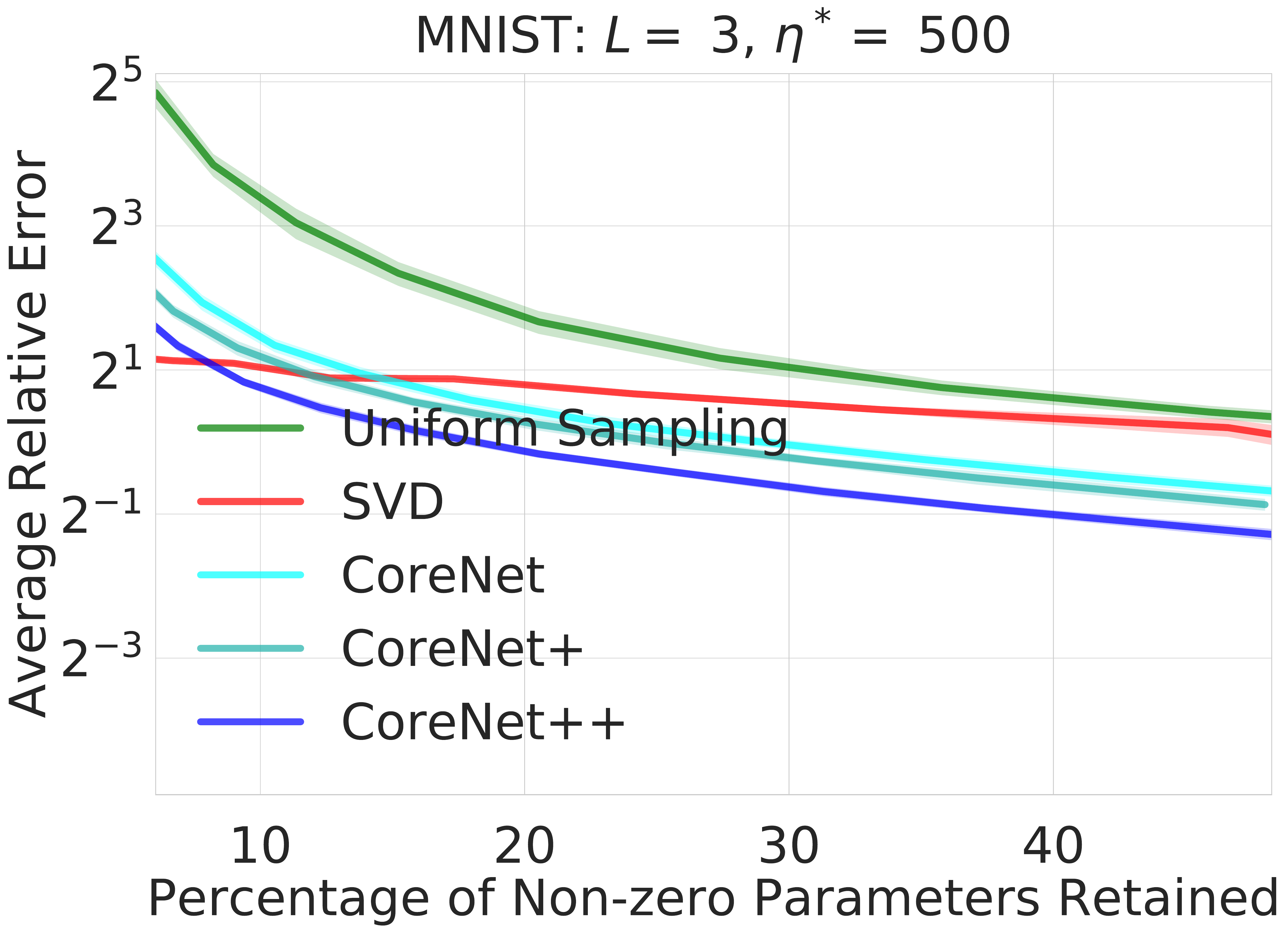}
\includegraphics[width=0.32\textwidth]{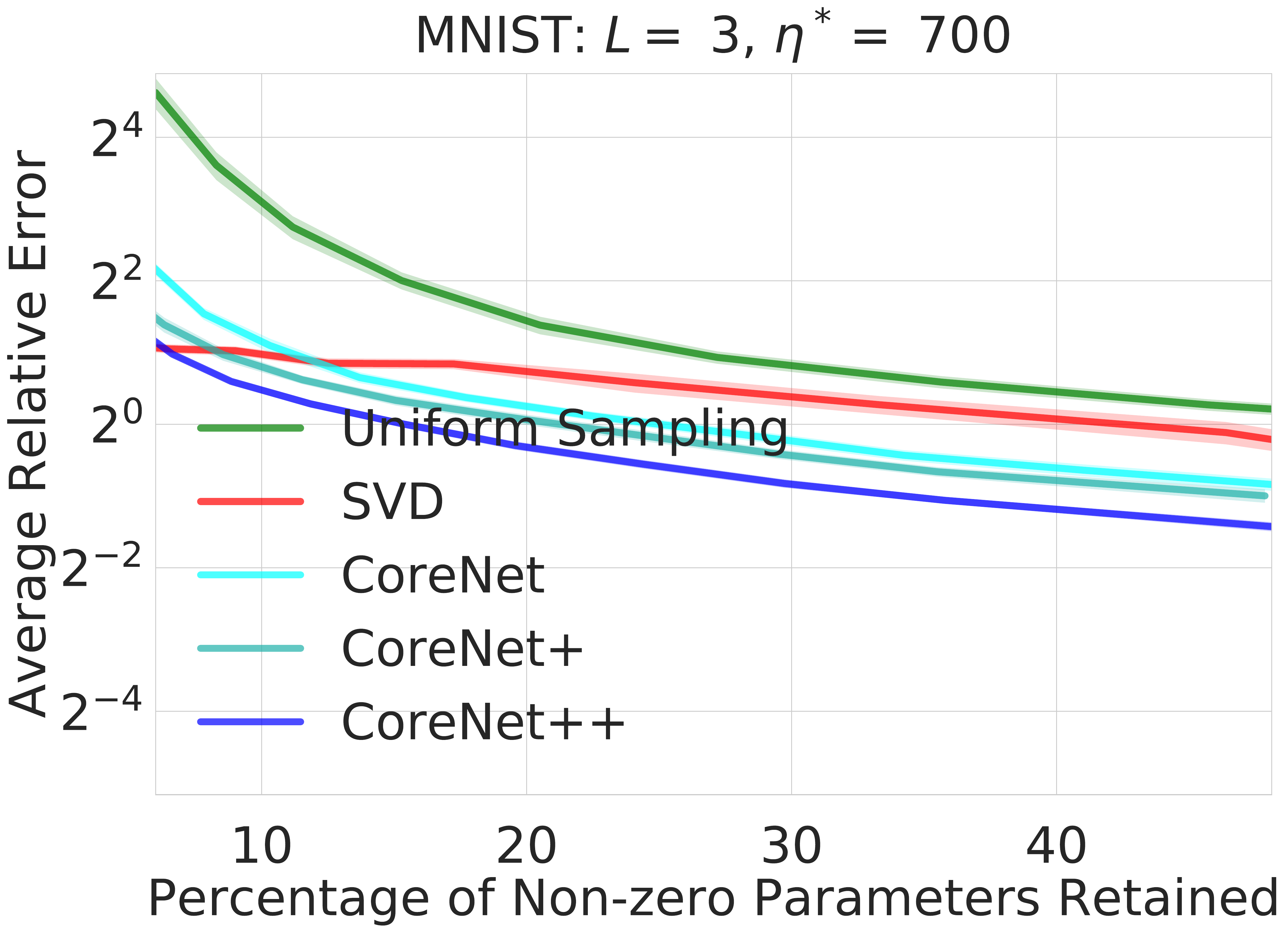}%
\vspace{0.5mm}
\includegraphics[width=0.32\textwidth]{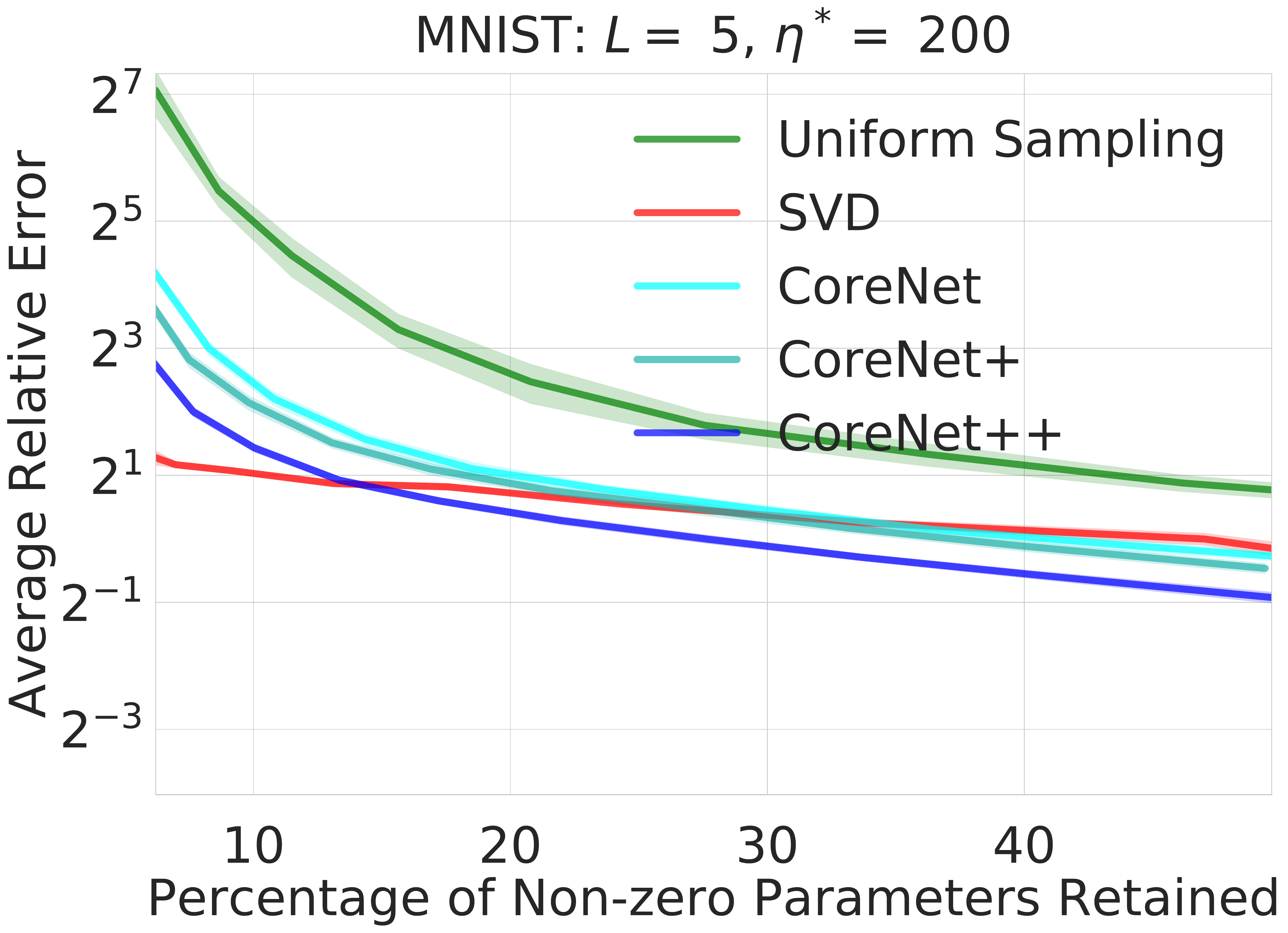} 
\includegraphics[width=0.32\textwidth]{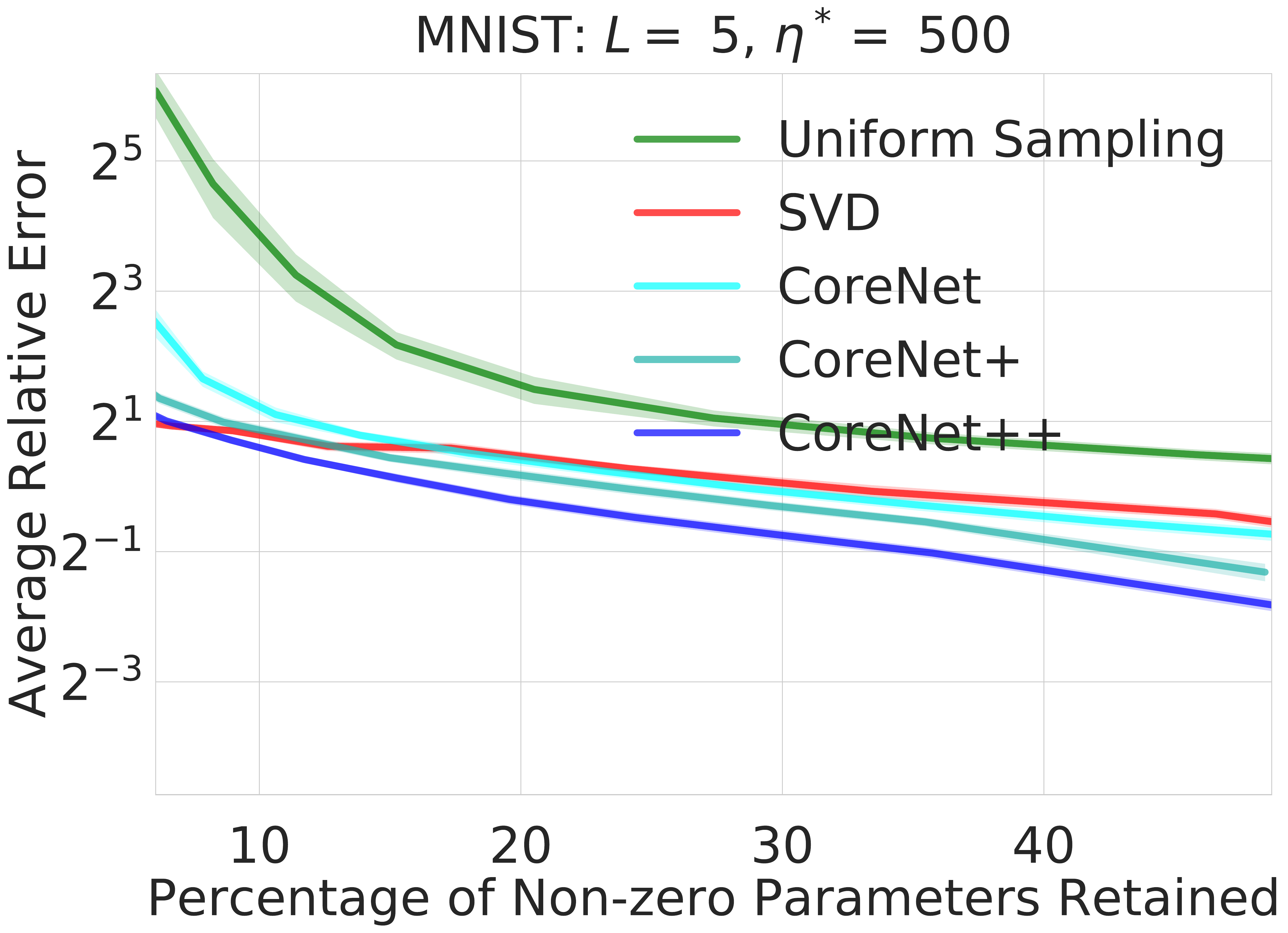} 
\includegraphics[width=0.32\textwidth]{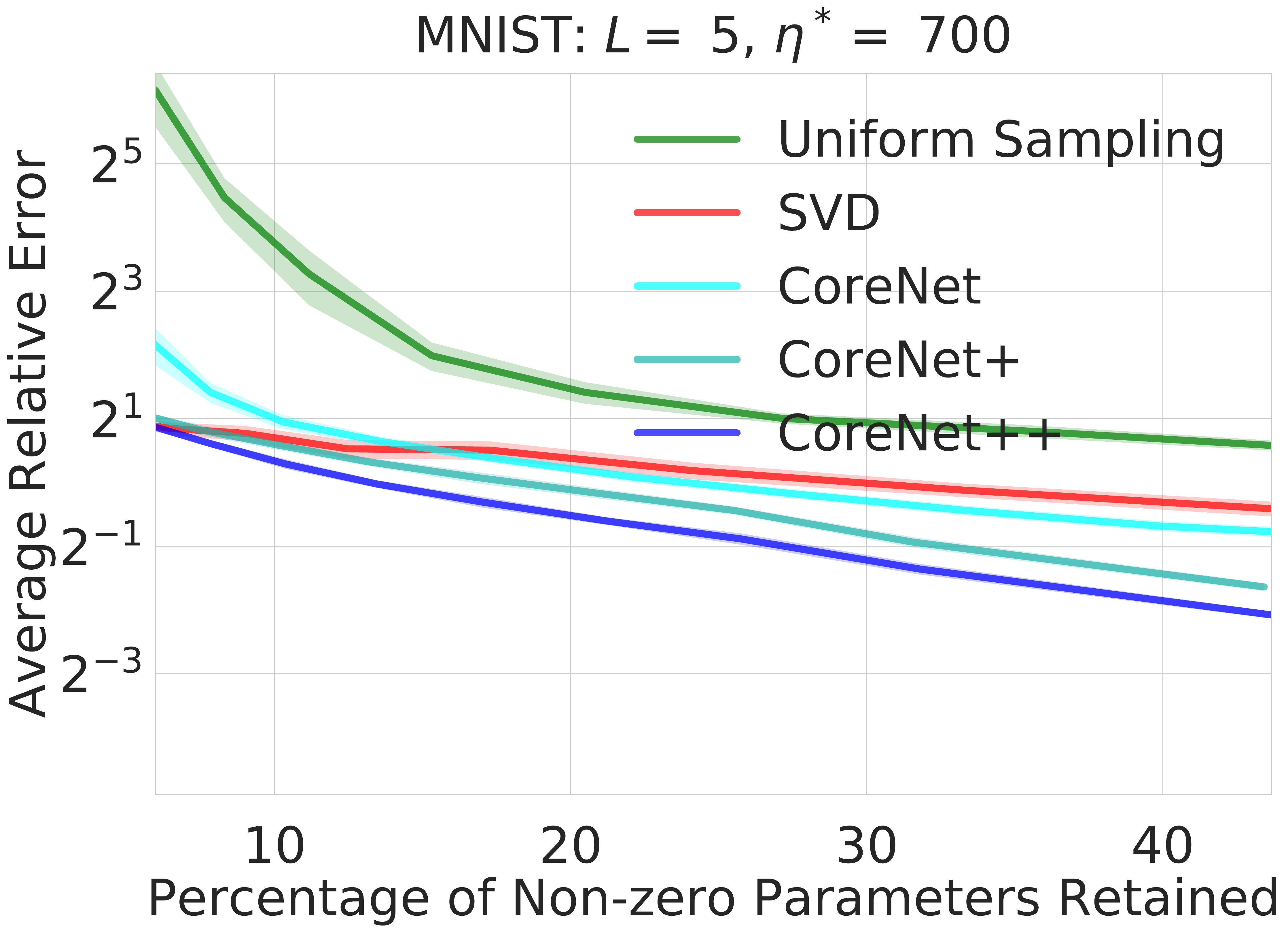}%
\caption{Evaluations against the MNIST dataset with varying number of hidden layers ($L$) and number of neurons per hidden layer ($\eta^*$). Shaded region corresponds to values within one standard deviation of the mean. The figures show that our algorithm's relative performance increases as the number of layers (and hence the number of redundant parameters) increases.}
\label{fig:mnist_error}
\end{figure}

\begin{figure}[htb!]
\centering
\includegraphics[width=0.32\textwidth]{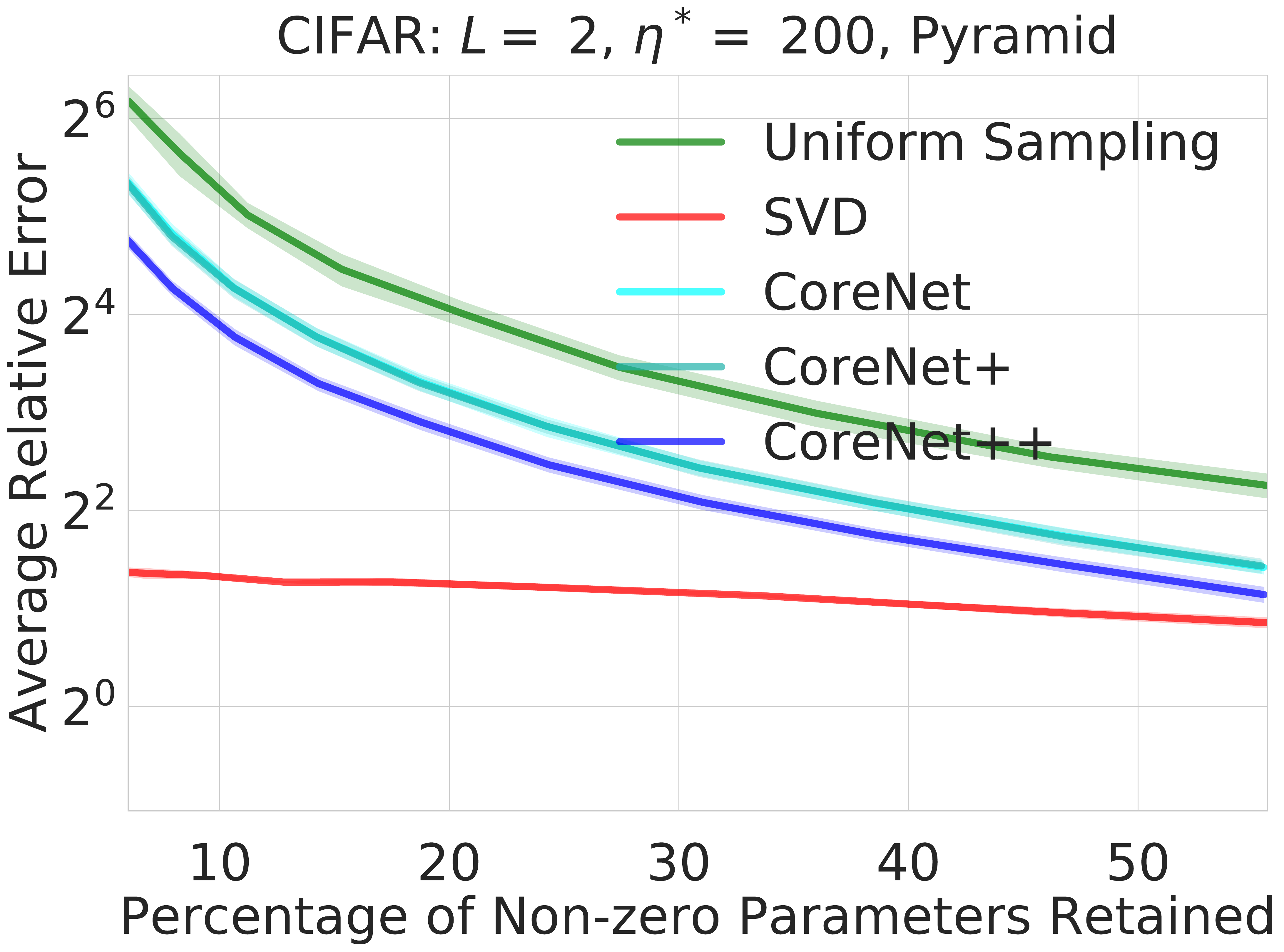} 
\includegraphics[width=0.32\textwidth]{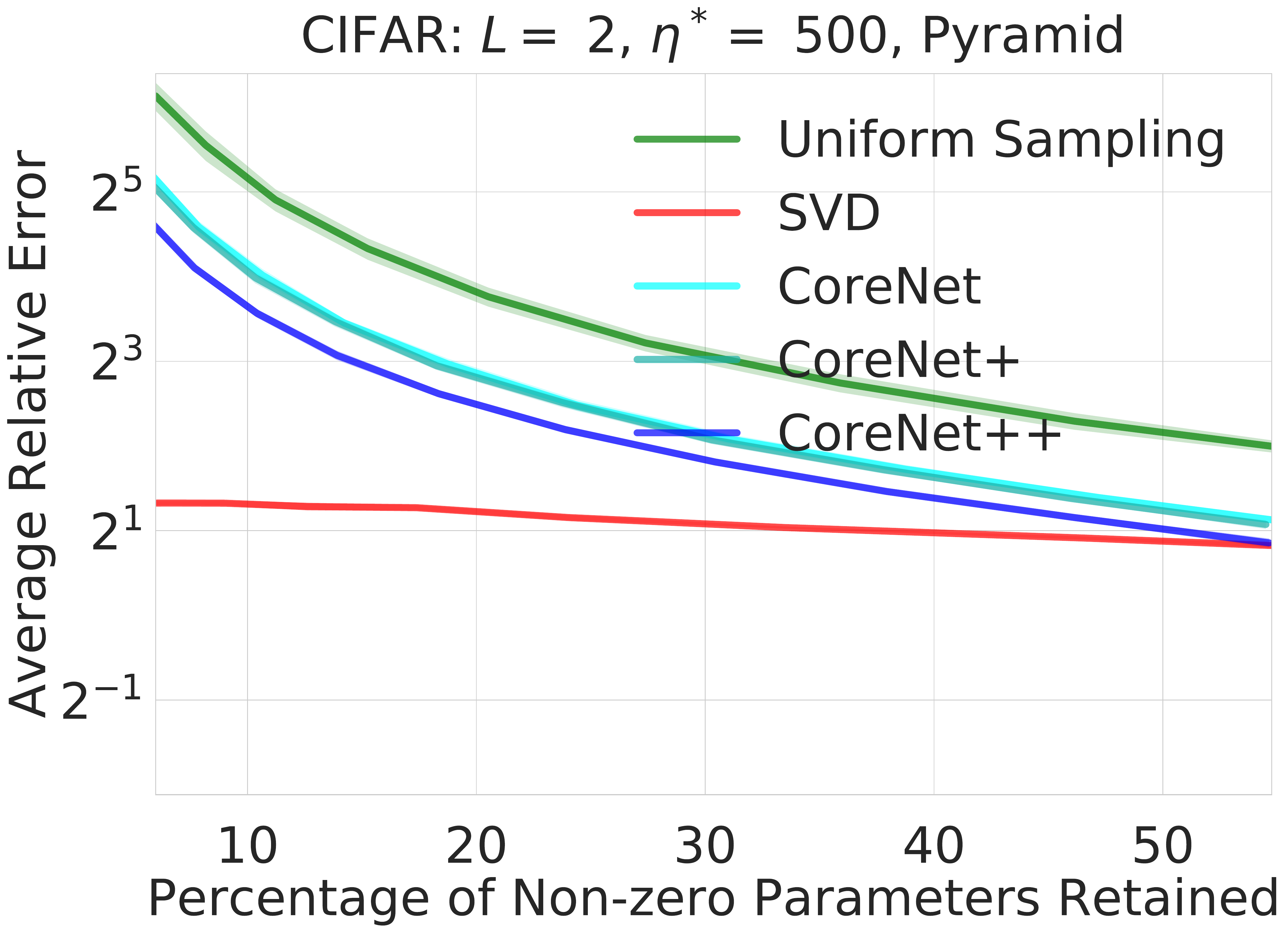} 
\includegraphics[width=0.32\textwidth]{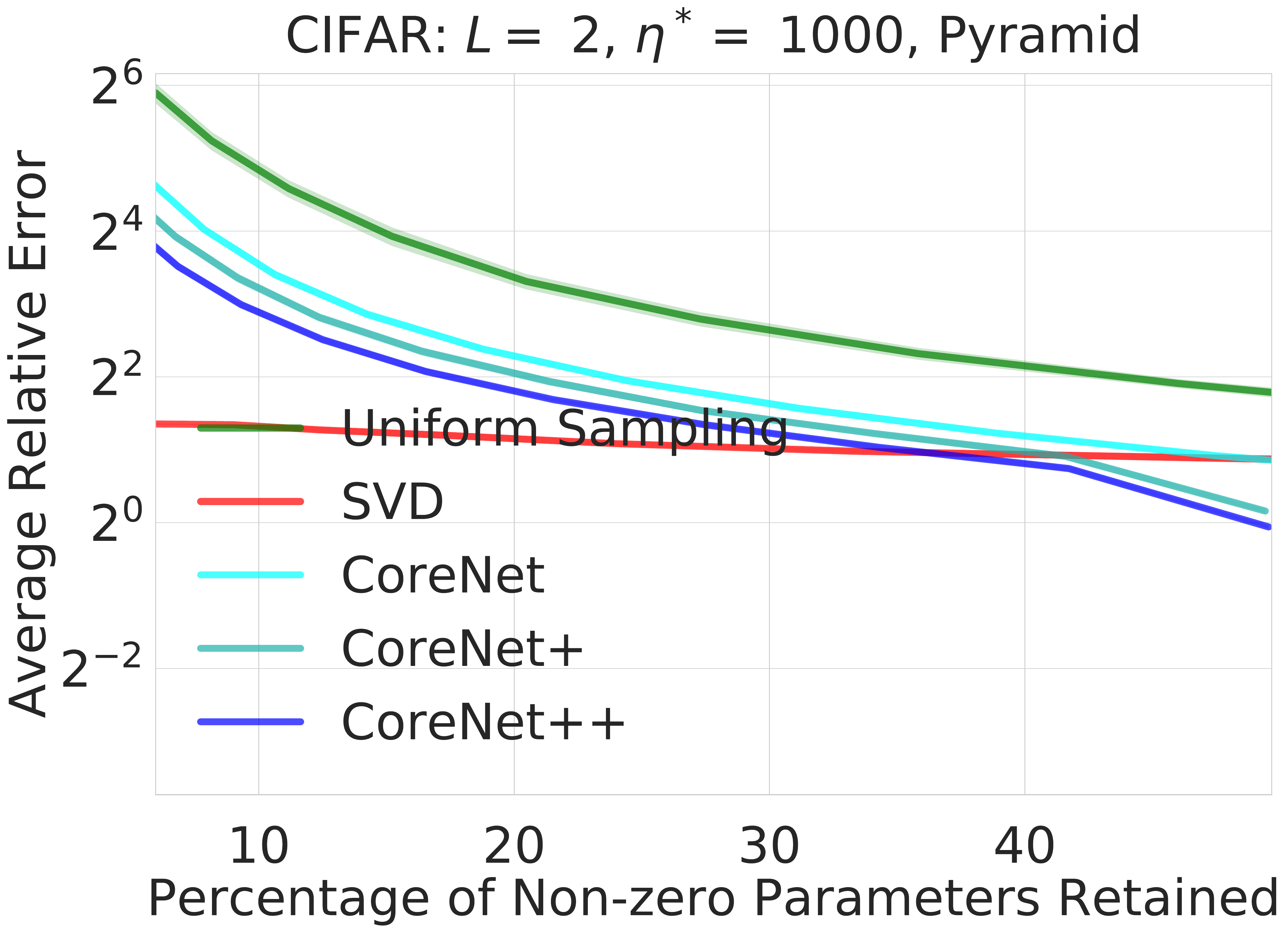}%
\vspace{0.5mm}
\includegraphics[width=0.32\textwidth]{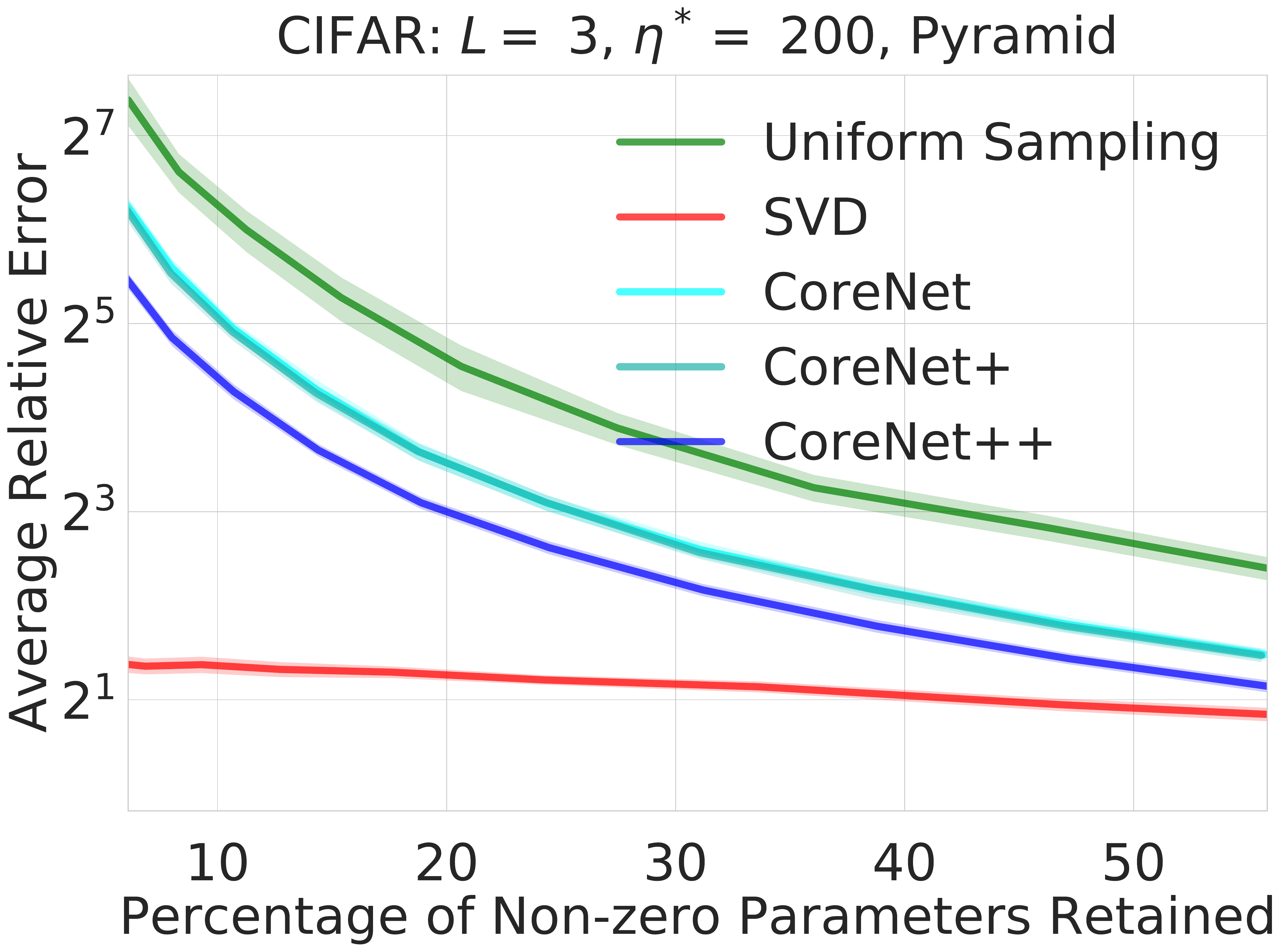} 
\includegraphics[width=0.32\textwidth]{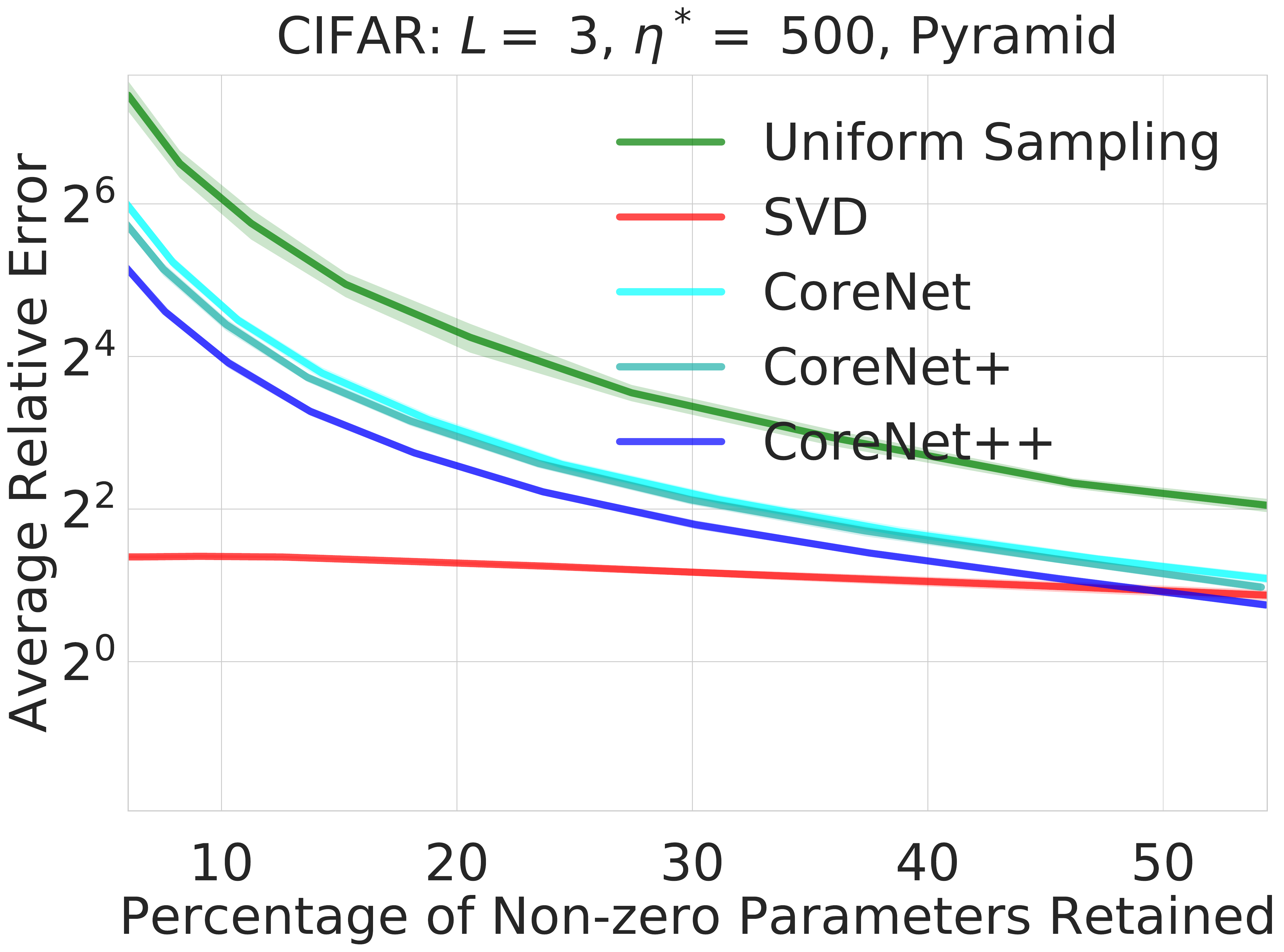} 
\includegraphics[width=0.32\textwidth]{figures/error/CIFAR_l3_h1000_pyramid}%
\vspace{0.5mm}
\includegraphics[width=0.32\textwidth]{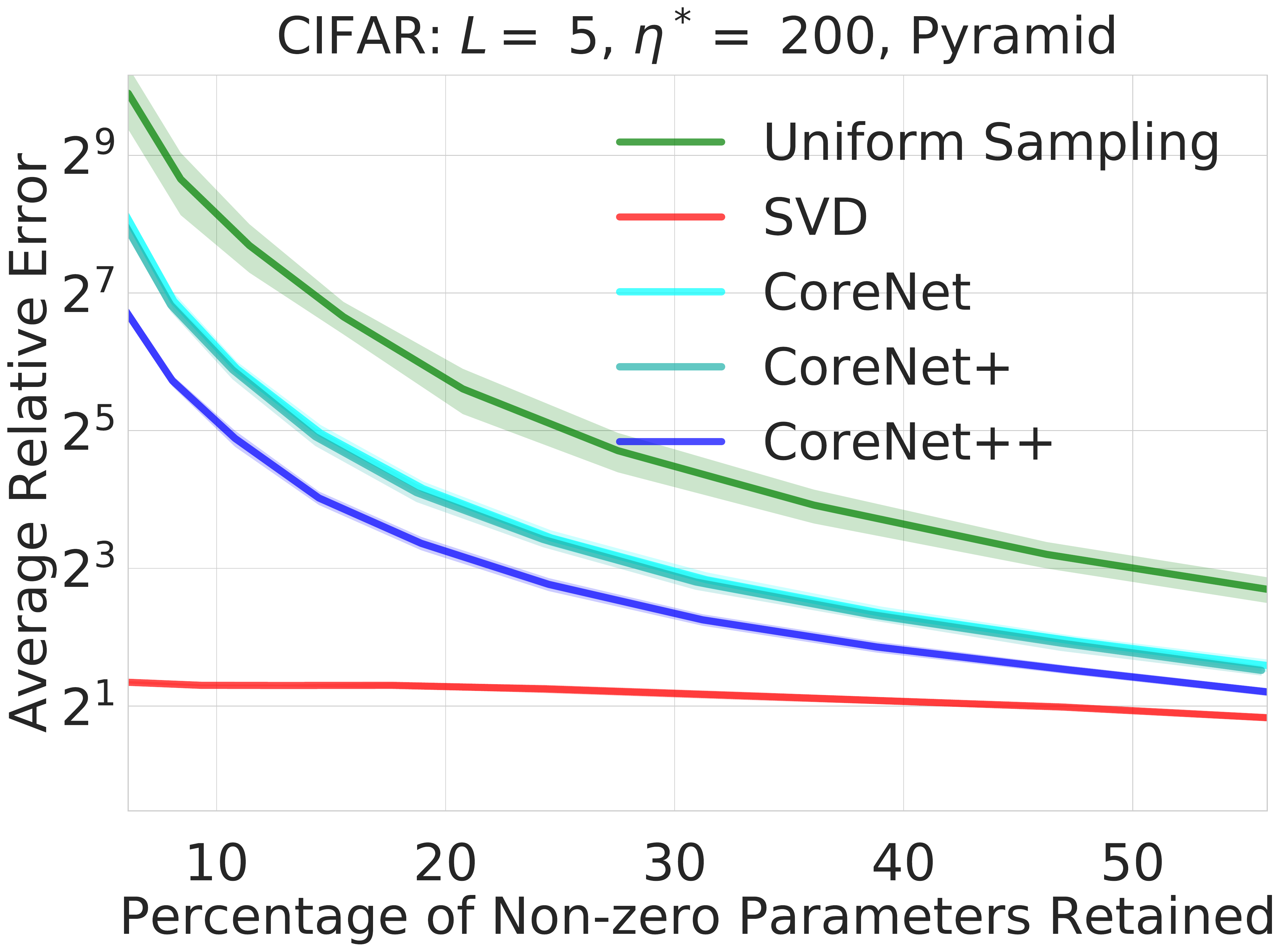} 
\includegraphics[width=0.32\textwidth]{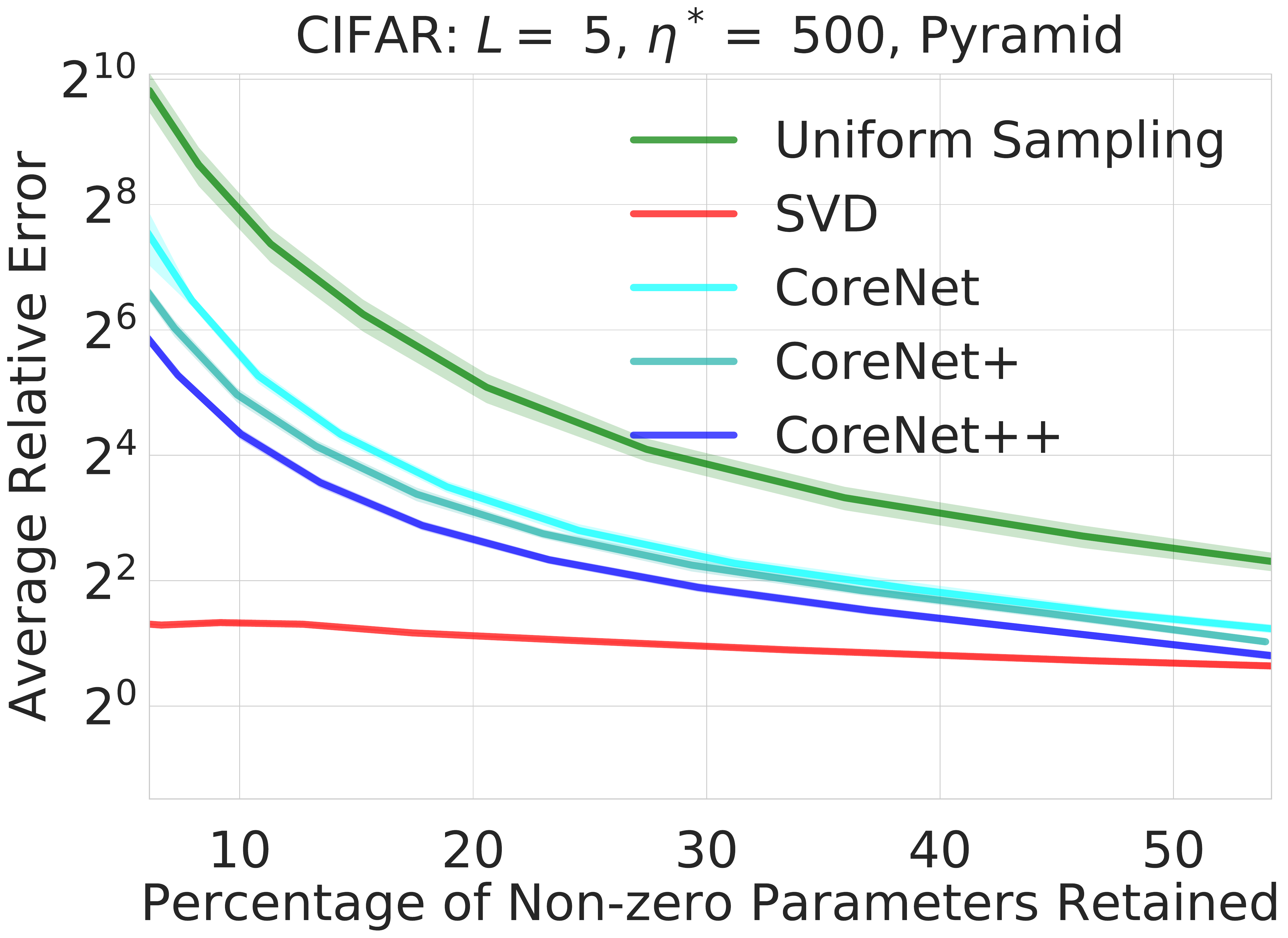} 
\includegraphics[width=0.32\textwidth]{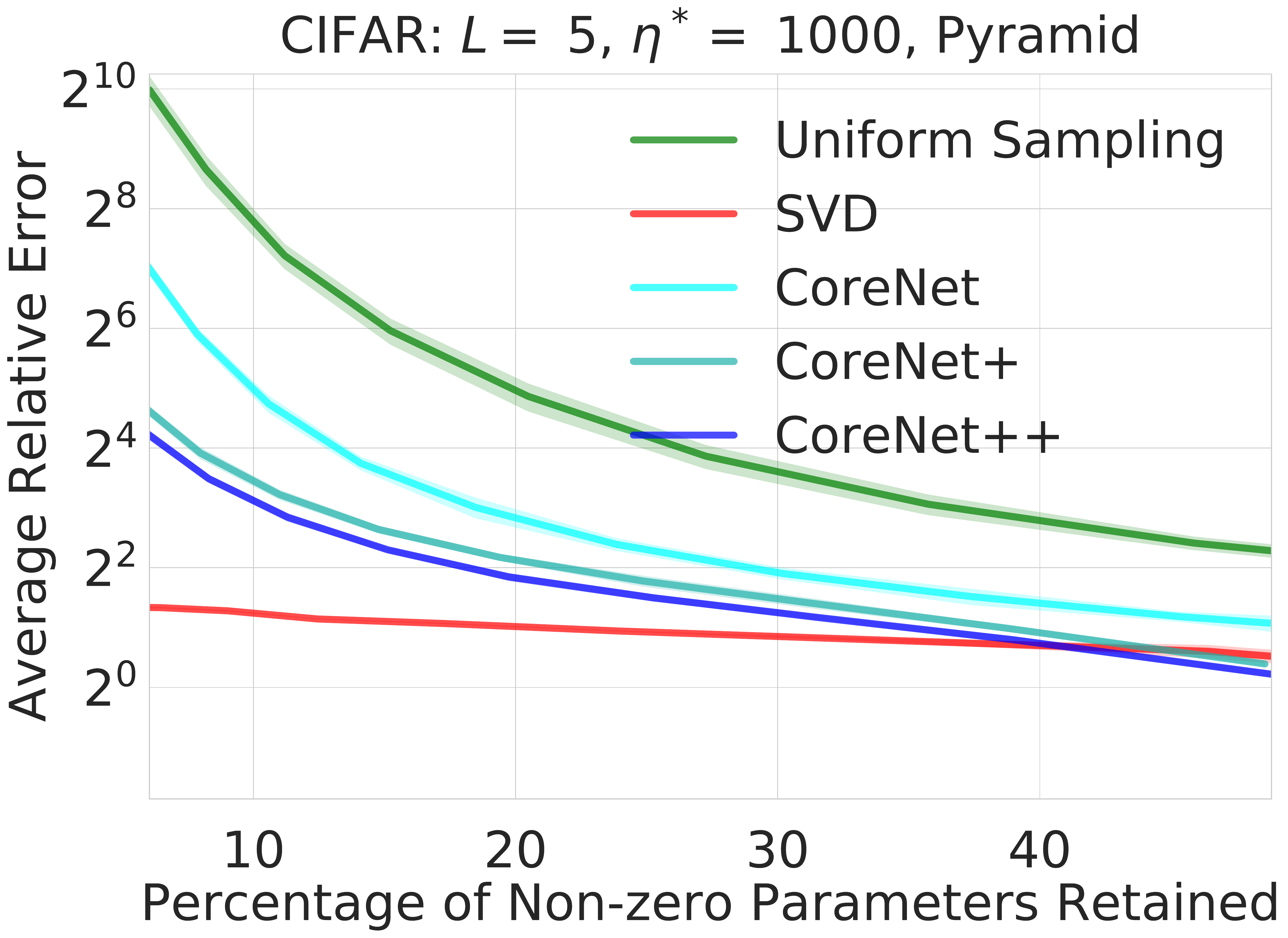}%
\vspace{0.5mm}
\includegraphics[width=0.32\textwidth]{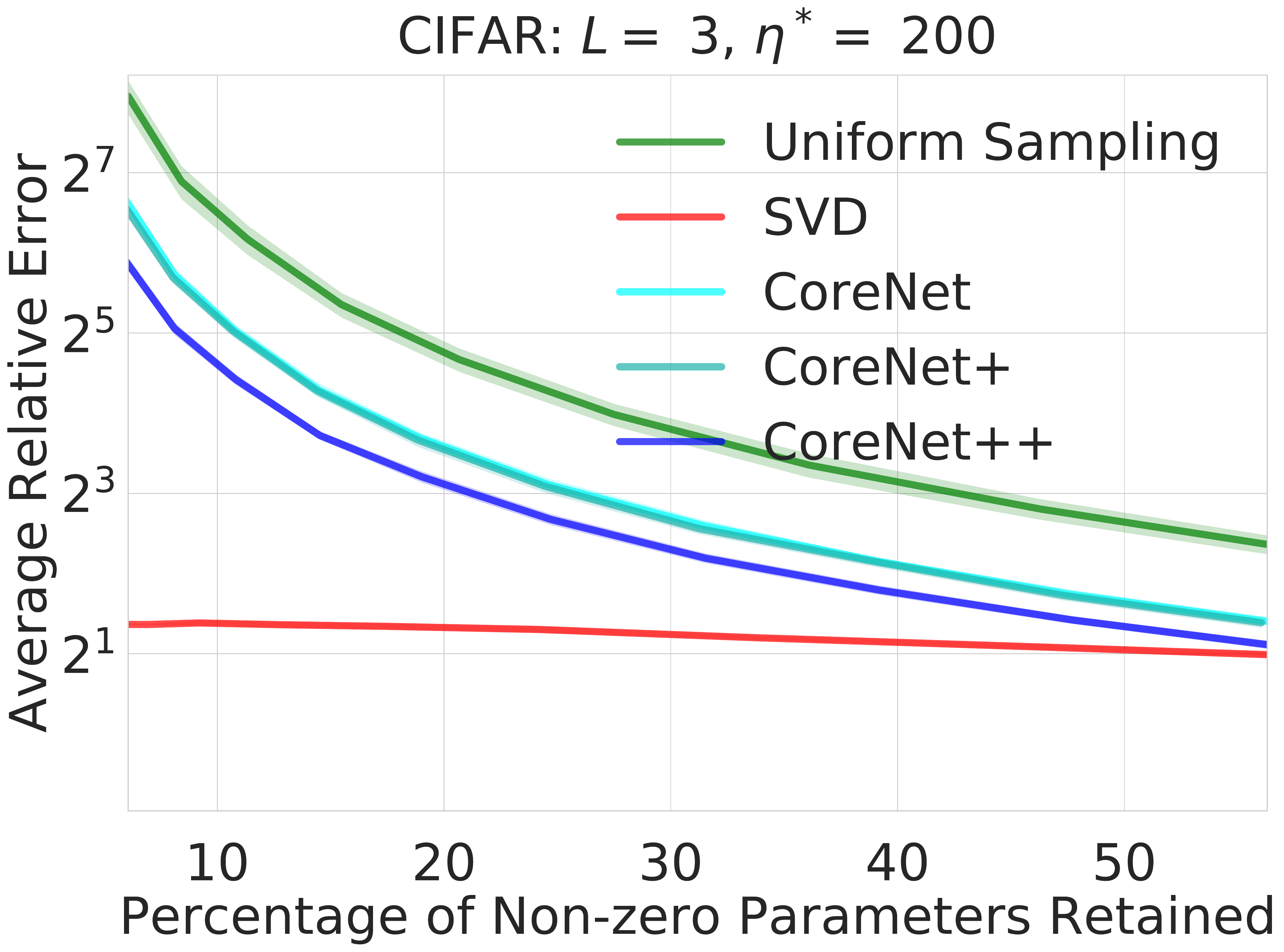} 
\includegraphics[width=0.32\textwidth]{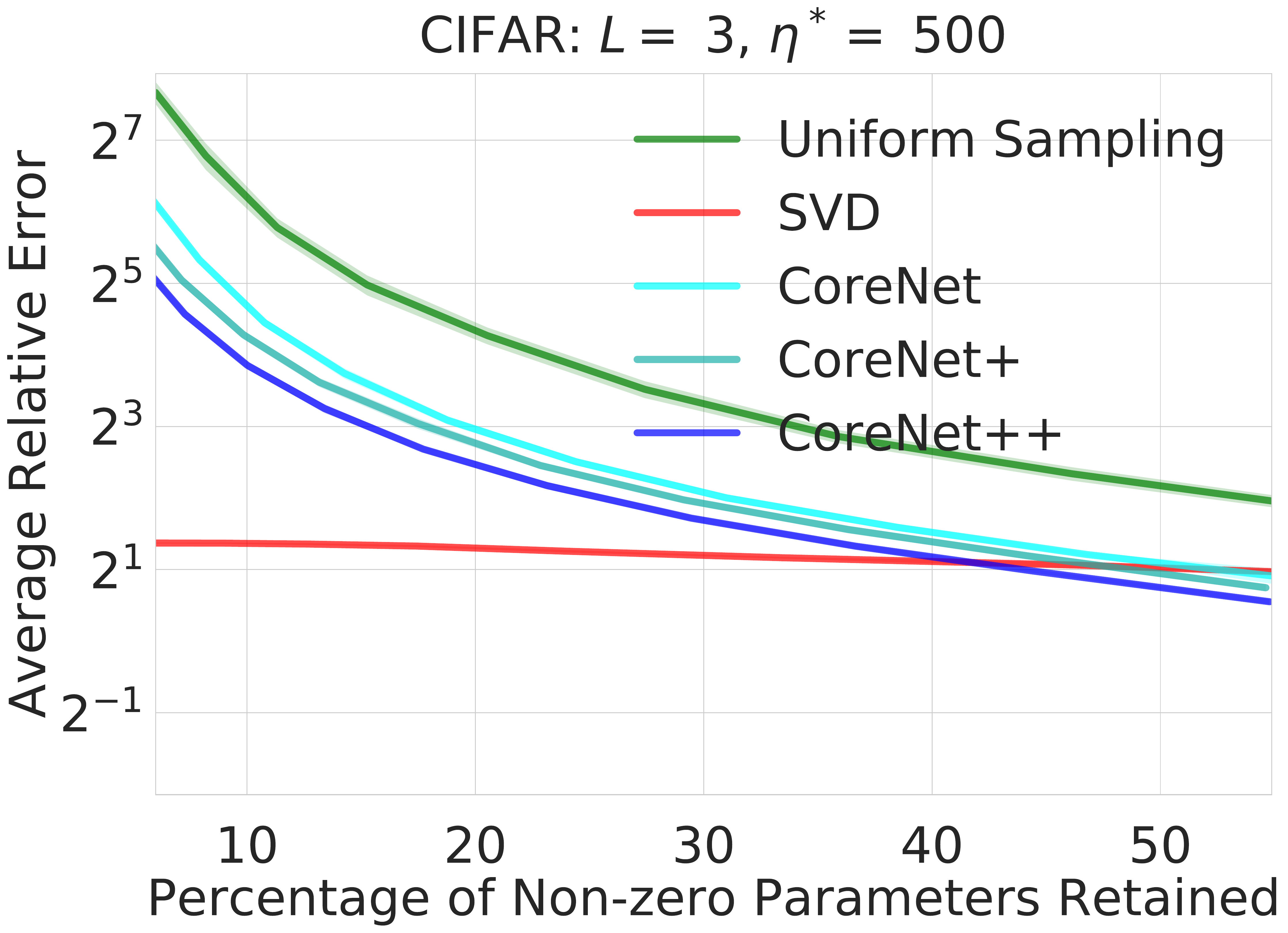} 
\includegraphics[width=0.32\textwidth]{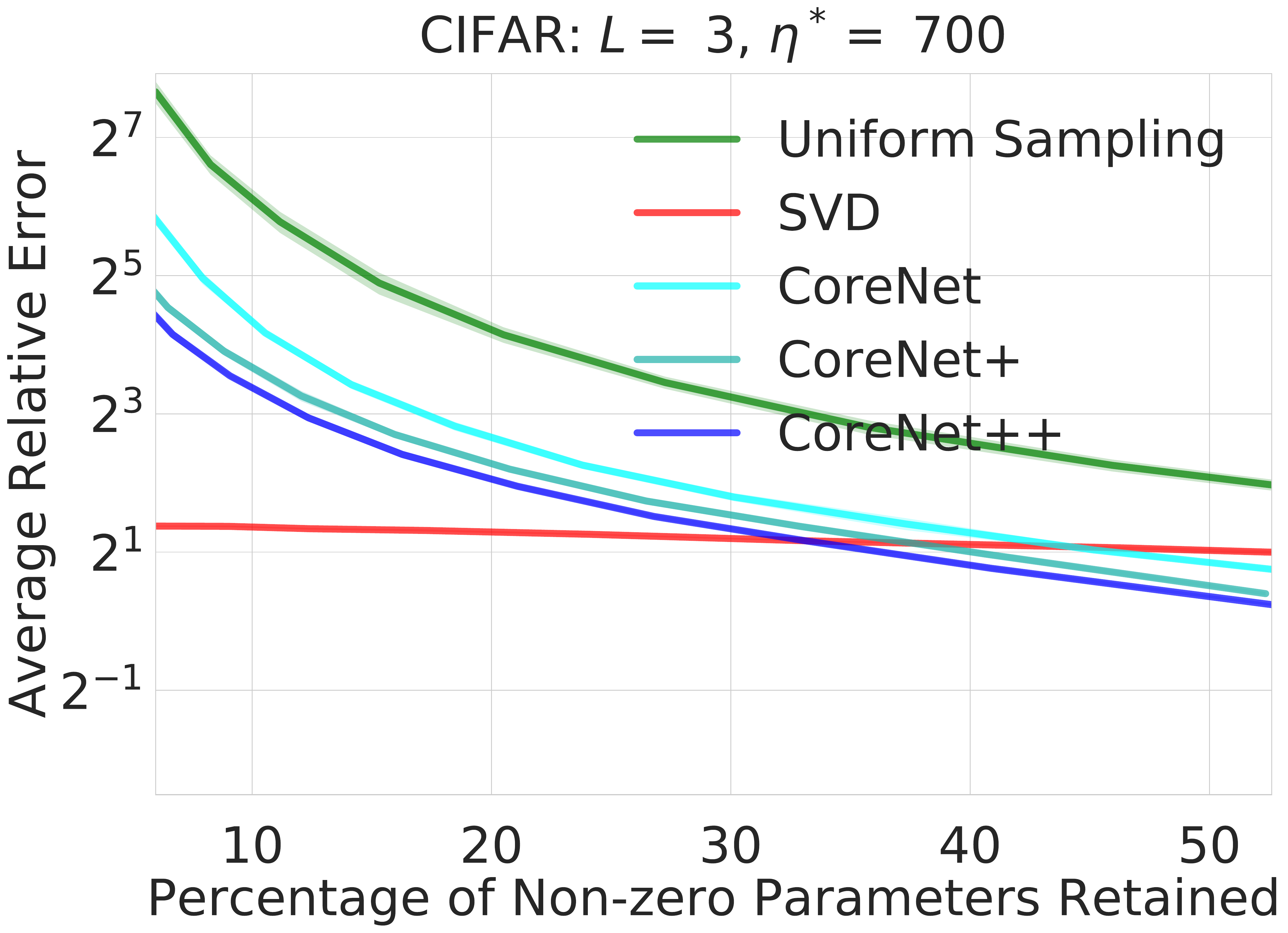}%
\vspace{0.5mm}
\includegraphics[width=0.32\textwidth]{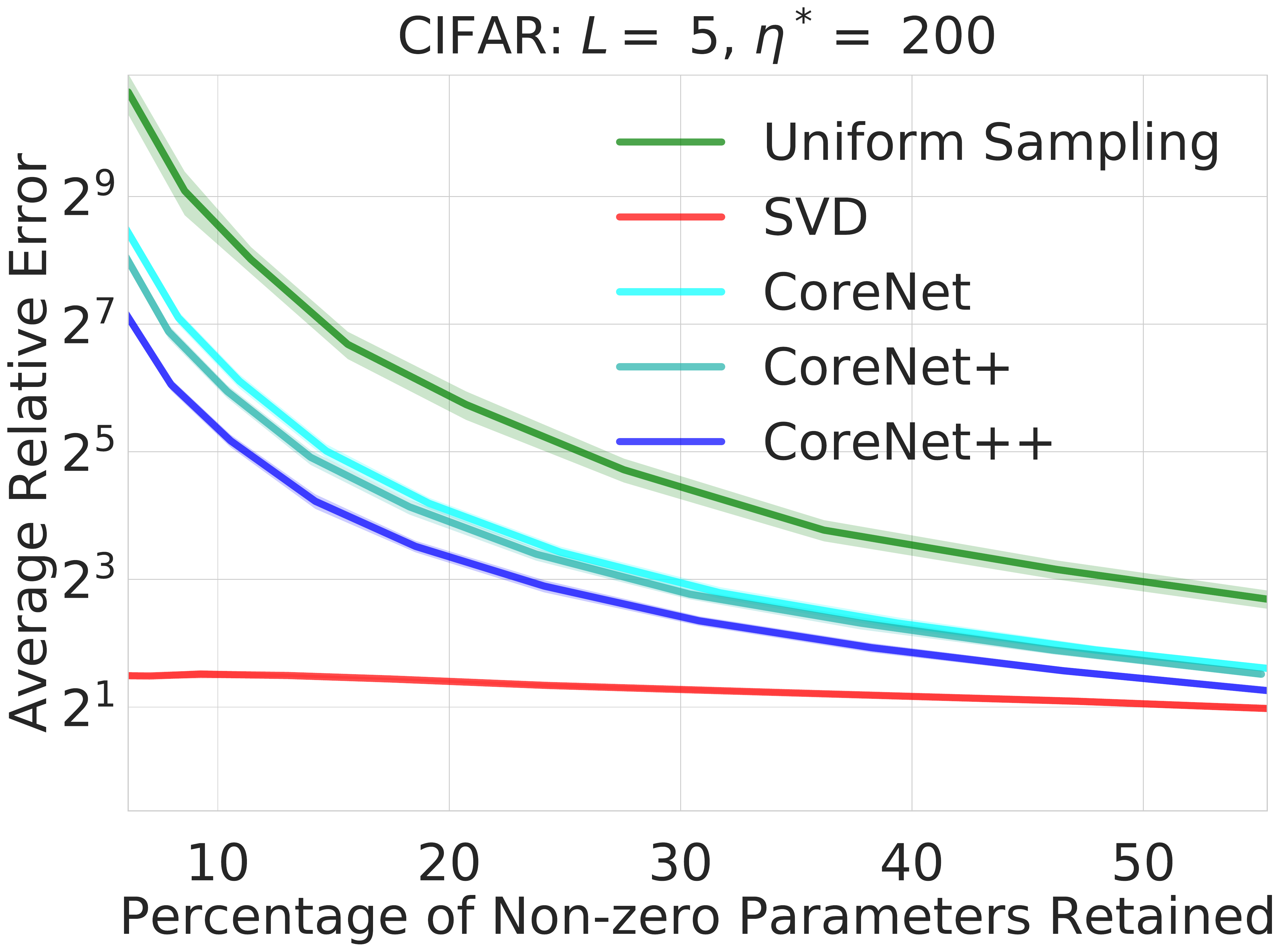}
\includegraphics[width=0.32\textwidth]{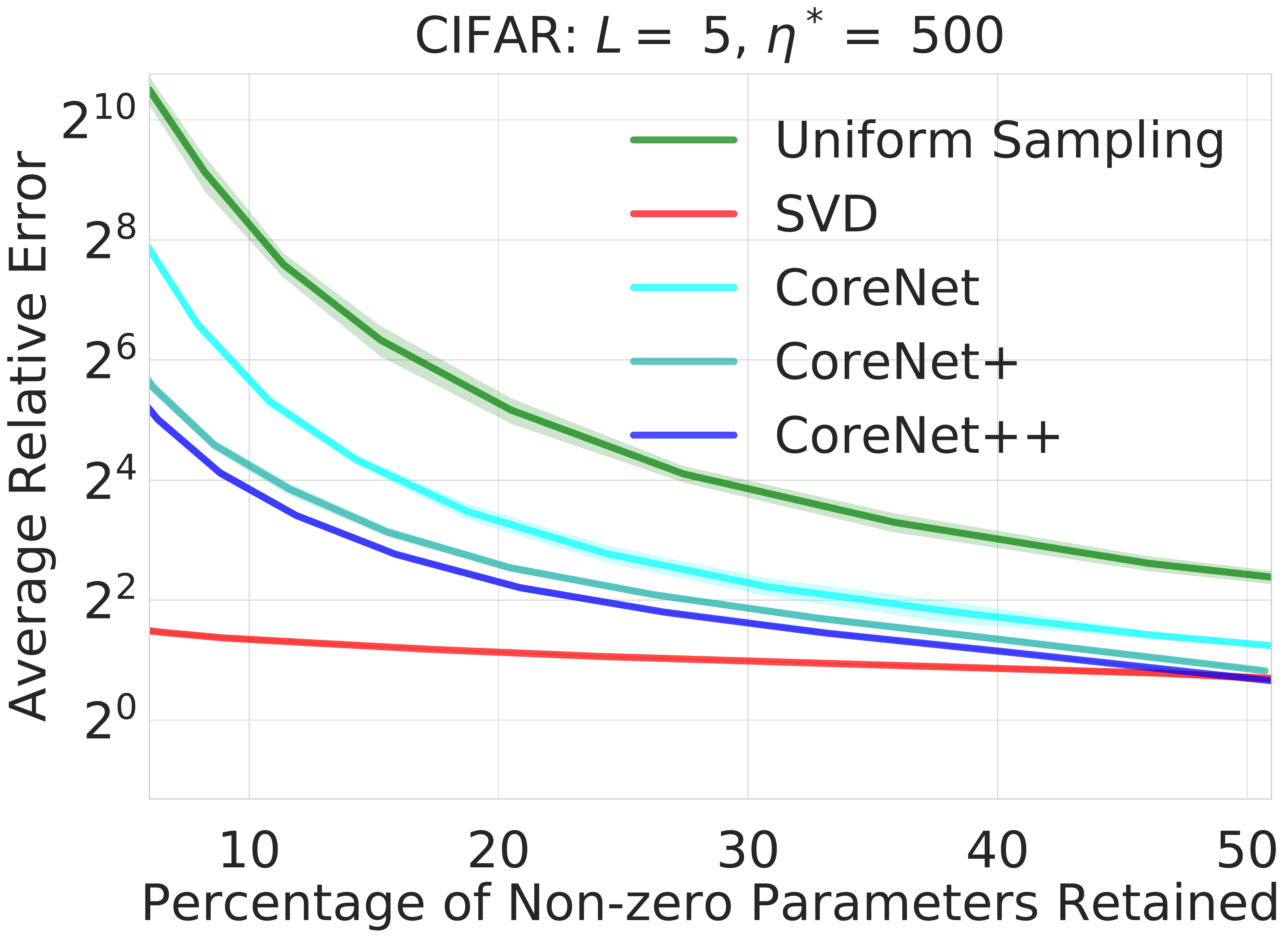}
\includegraphics[width=0.32\textwidth]{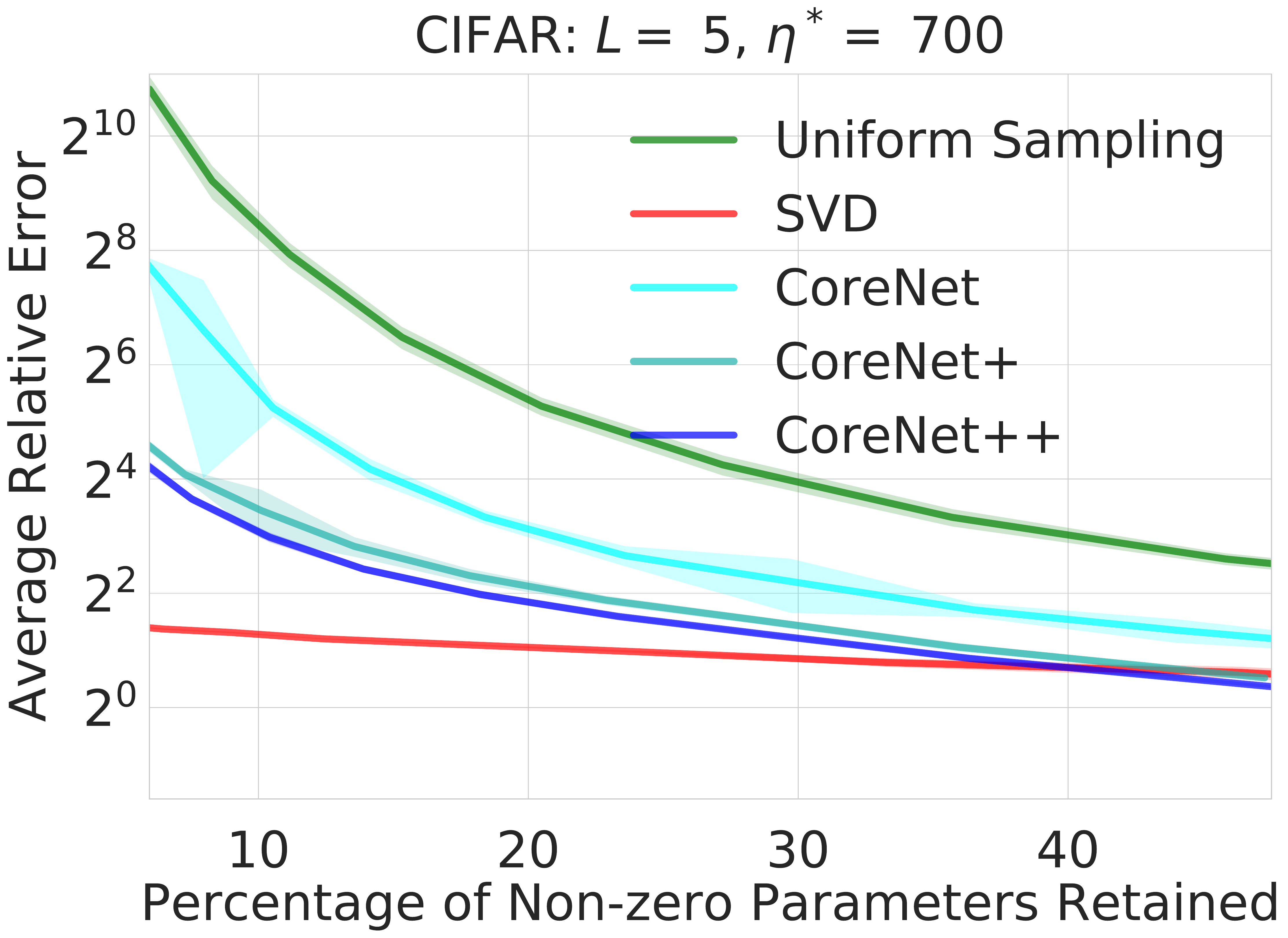}%
\caption{Evaluations against the CIFAR-10 dataset with varying number of hidden layers ($L$) and number of neurons per hidden layer ($\eta^*$). The trend of our algorithm's improved relative performance as the number of parameters increases (previously depicted in Fig.~\ref{fig:mnist_error}) also holds for the CIFAR-10 data set.}
\label{fig:cifar_error}
\end{figure}

\begin{figure}[htb!]
\centering
\includegraphics[width=0.32\textwidth]{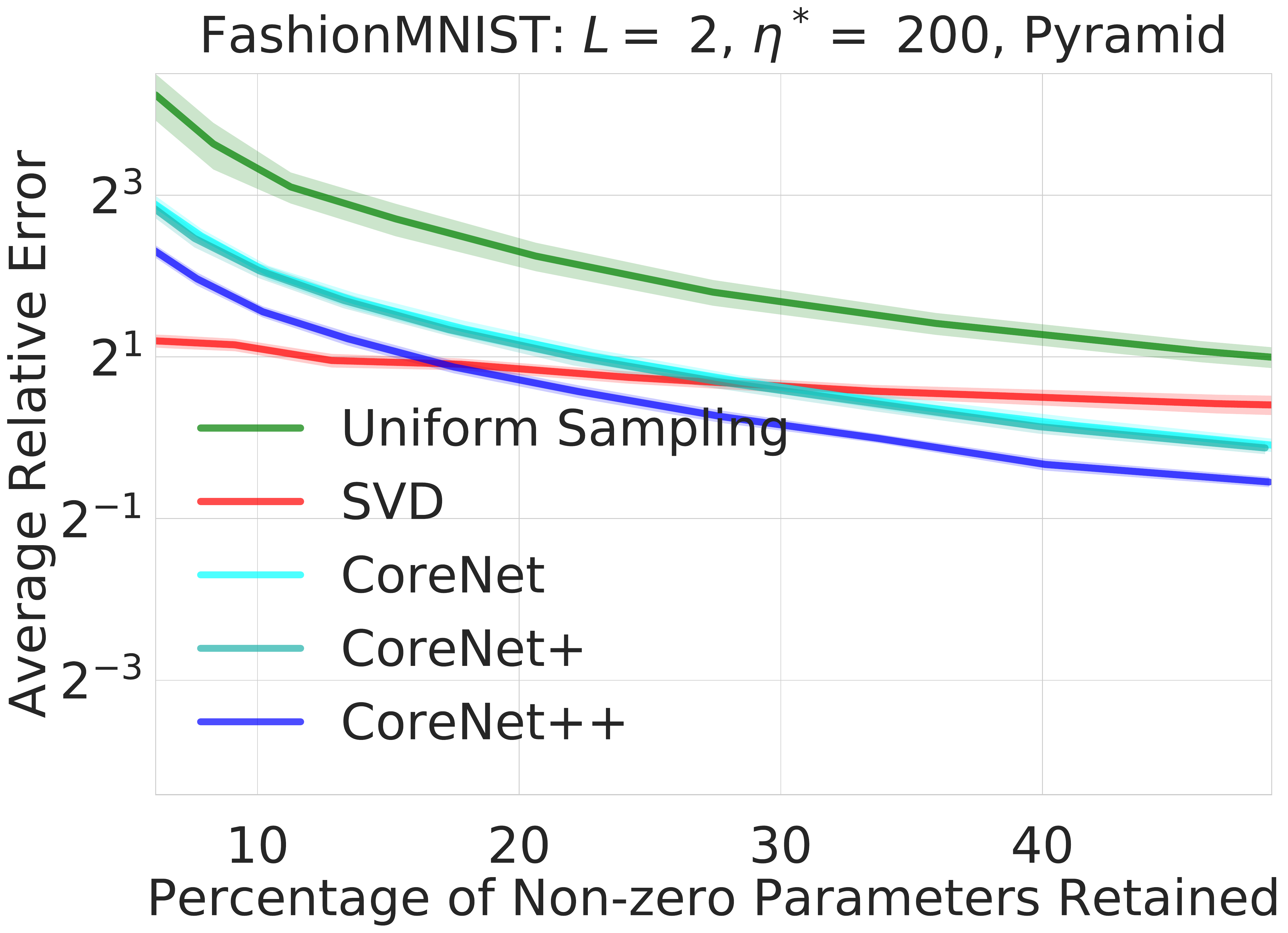} 
\includegraphics[width=0.32\textwidth]{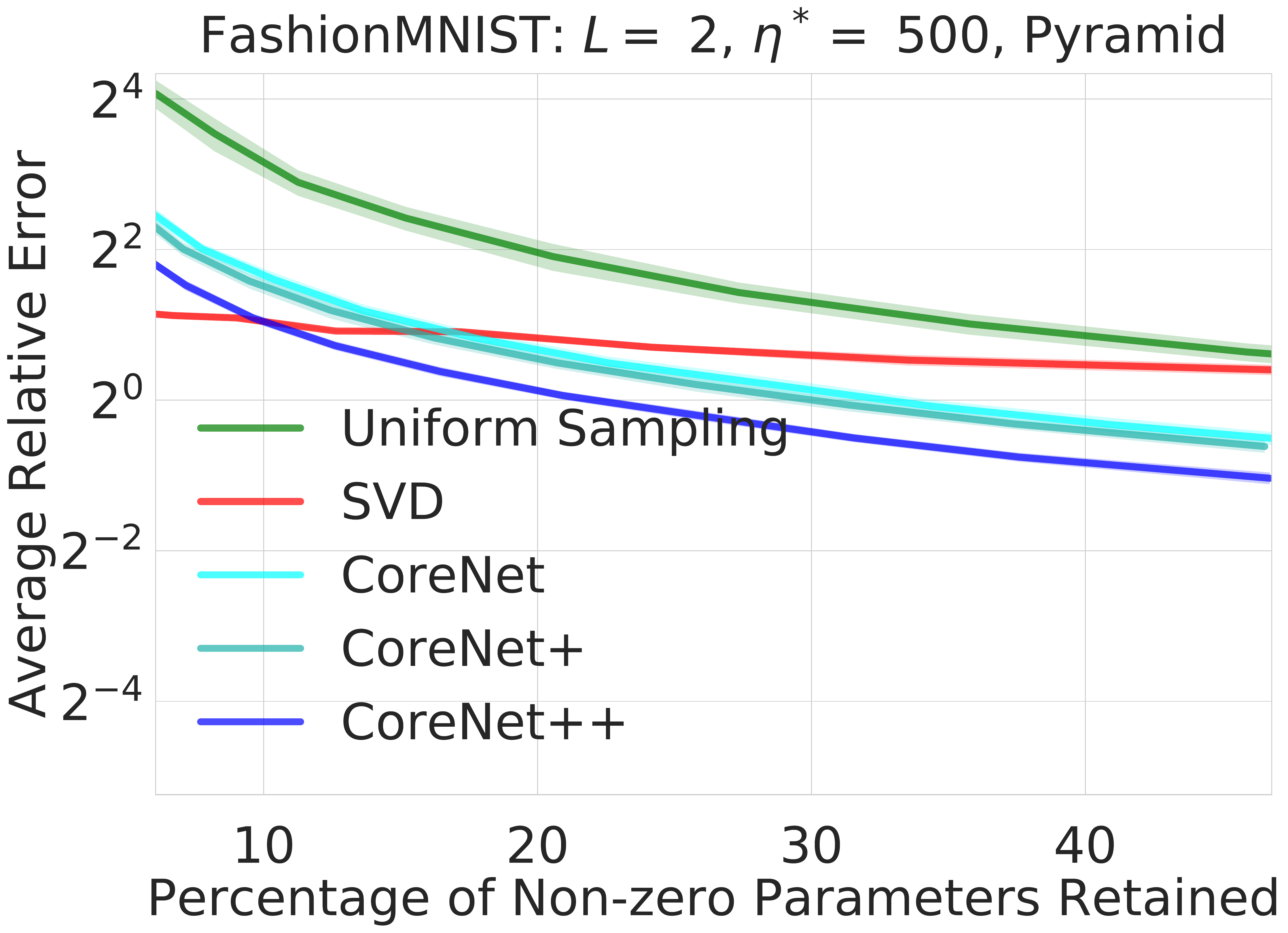} 
\includegraphics[width=0.32\textwidth]{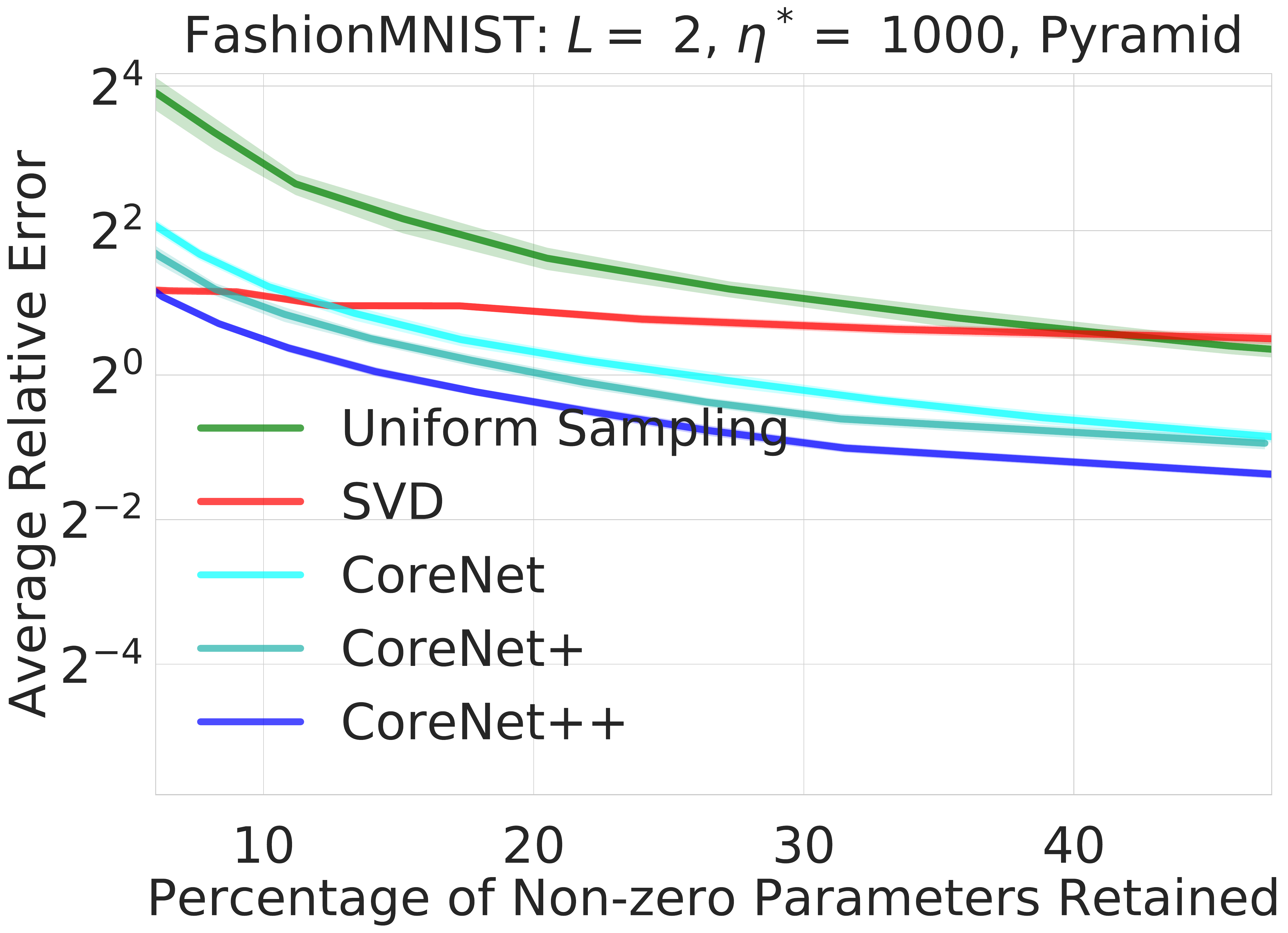}%
\vspace{0.5mm}
\includegraphics[width=0.32\textwidth]{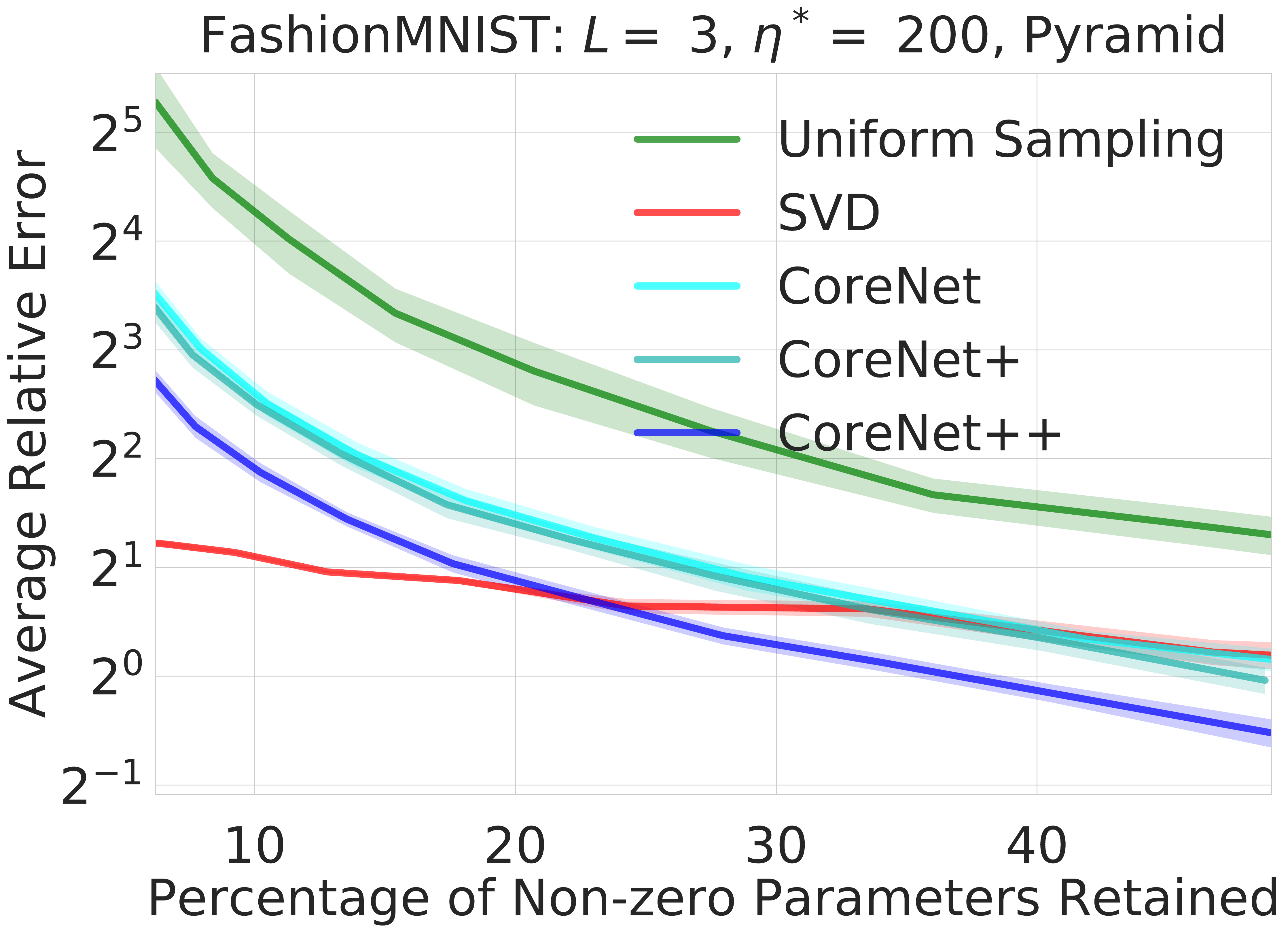} 
\includegraphics[width=0.32\textwidth]{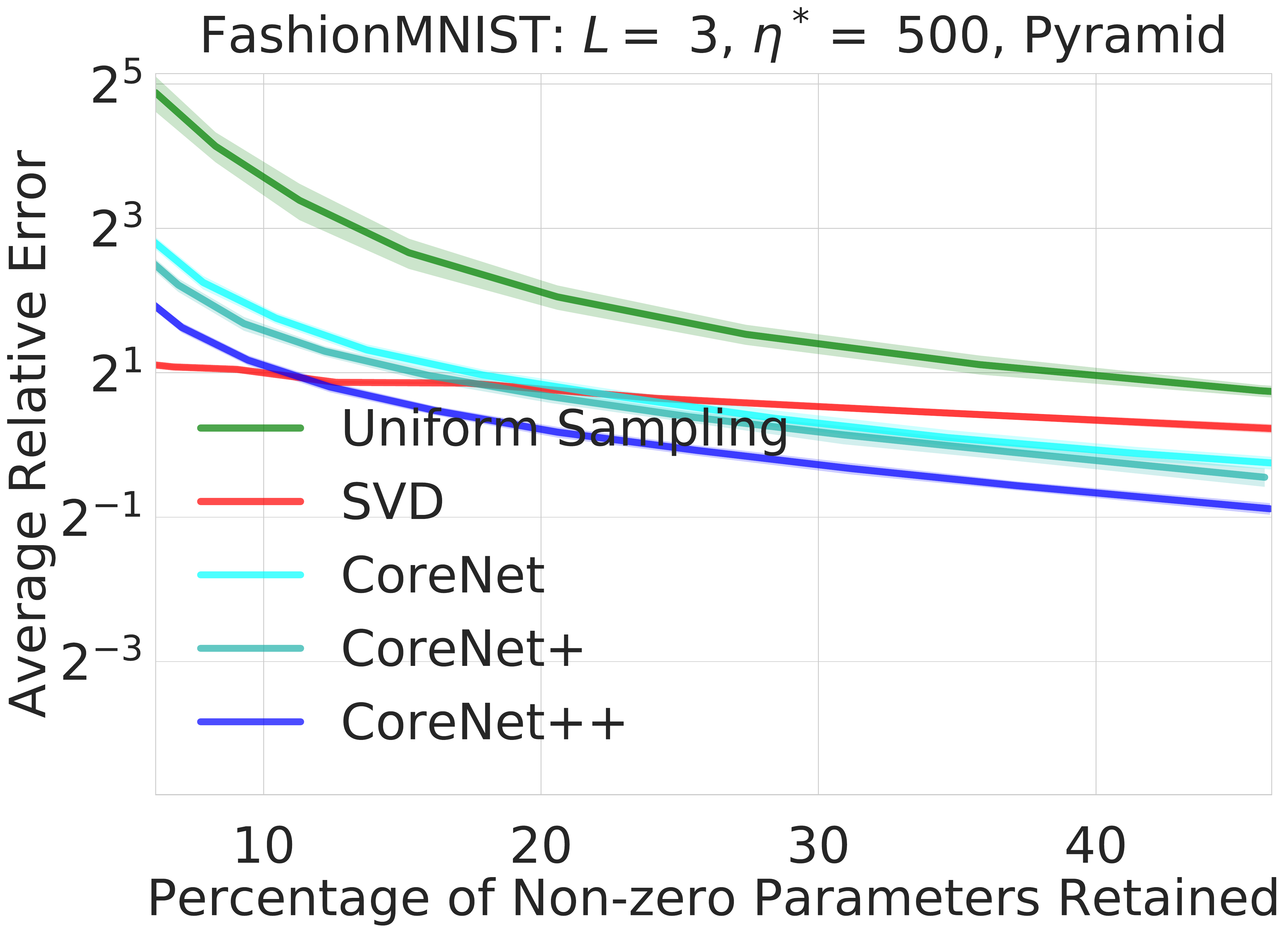} 
\includegraphics[width=0.32\textwidth]{figures/error/FashionMNIST_l3_h1000_pyramid}%
\vspace{0.5mm}
\includegraphics[width=0.32\textwidth]{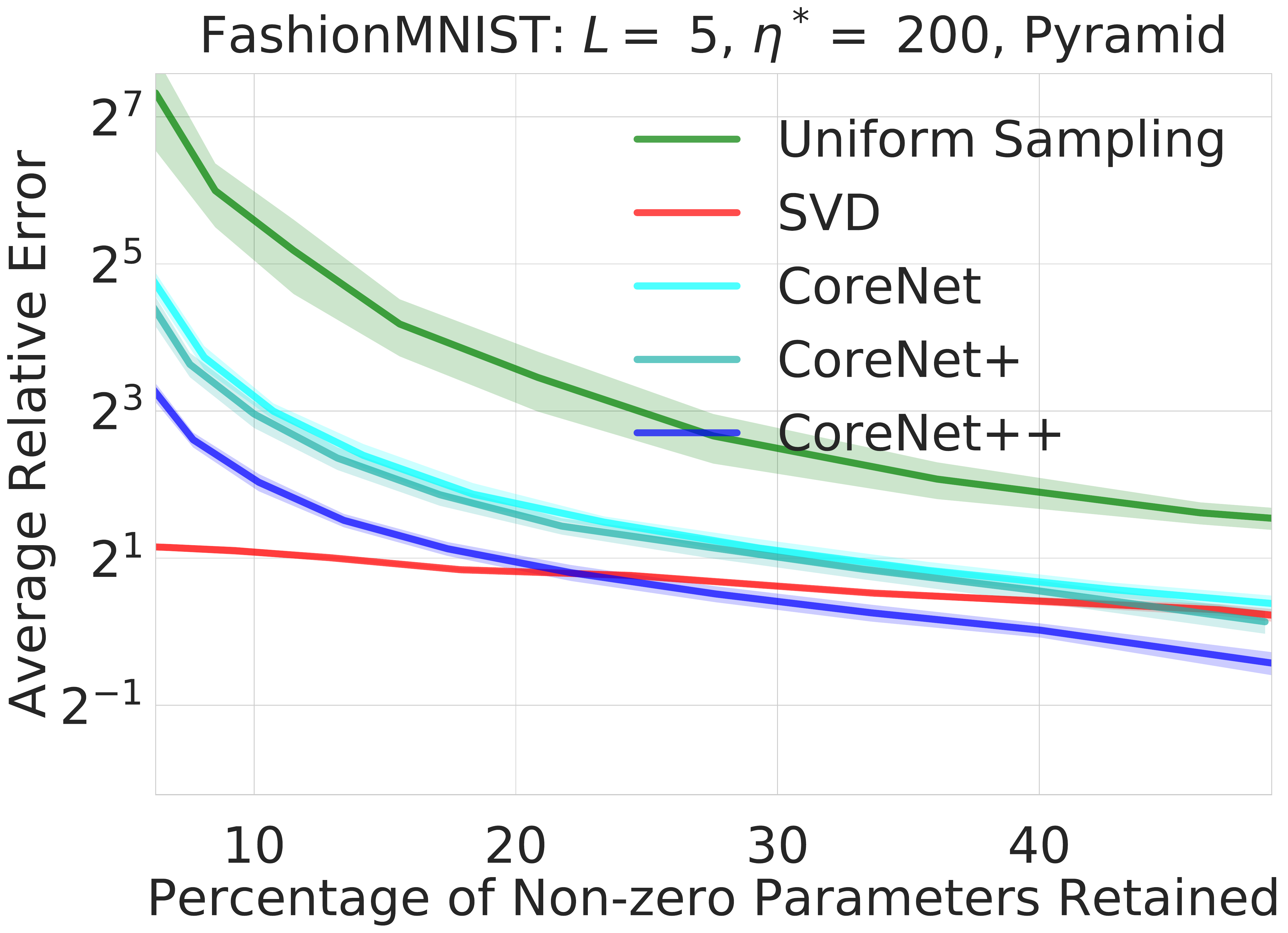} 
\includegraphics[width=0.32\textwidth]{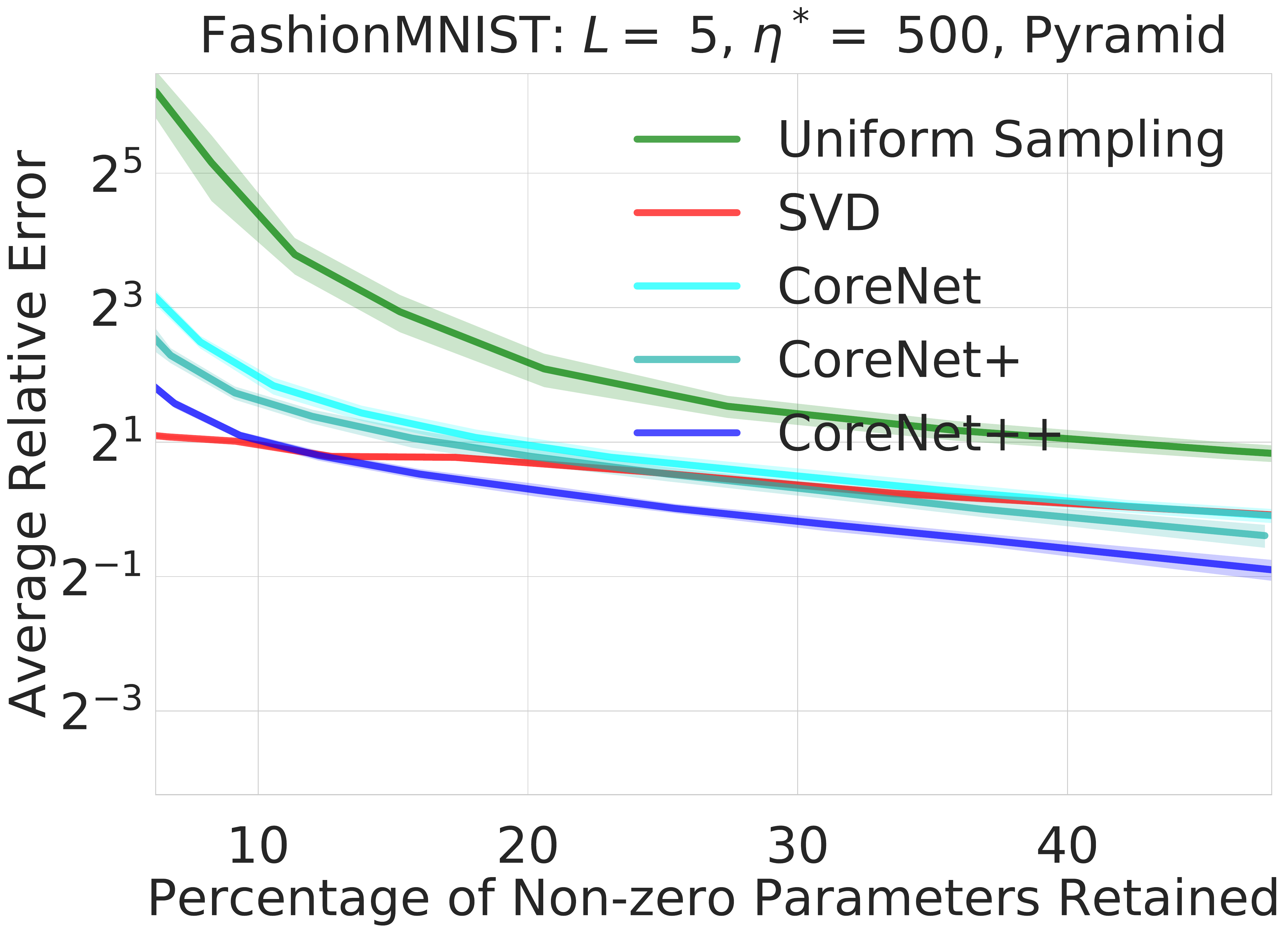} 
\includegraphics[width=0.32\textwidth]{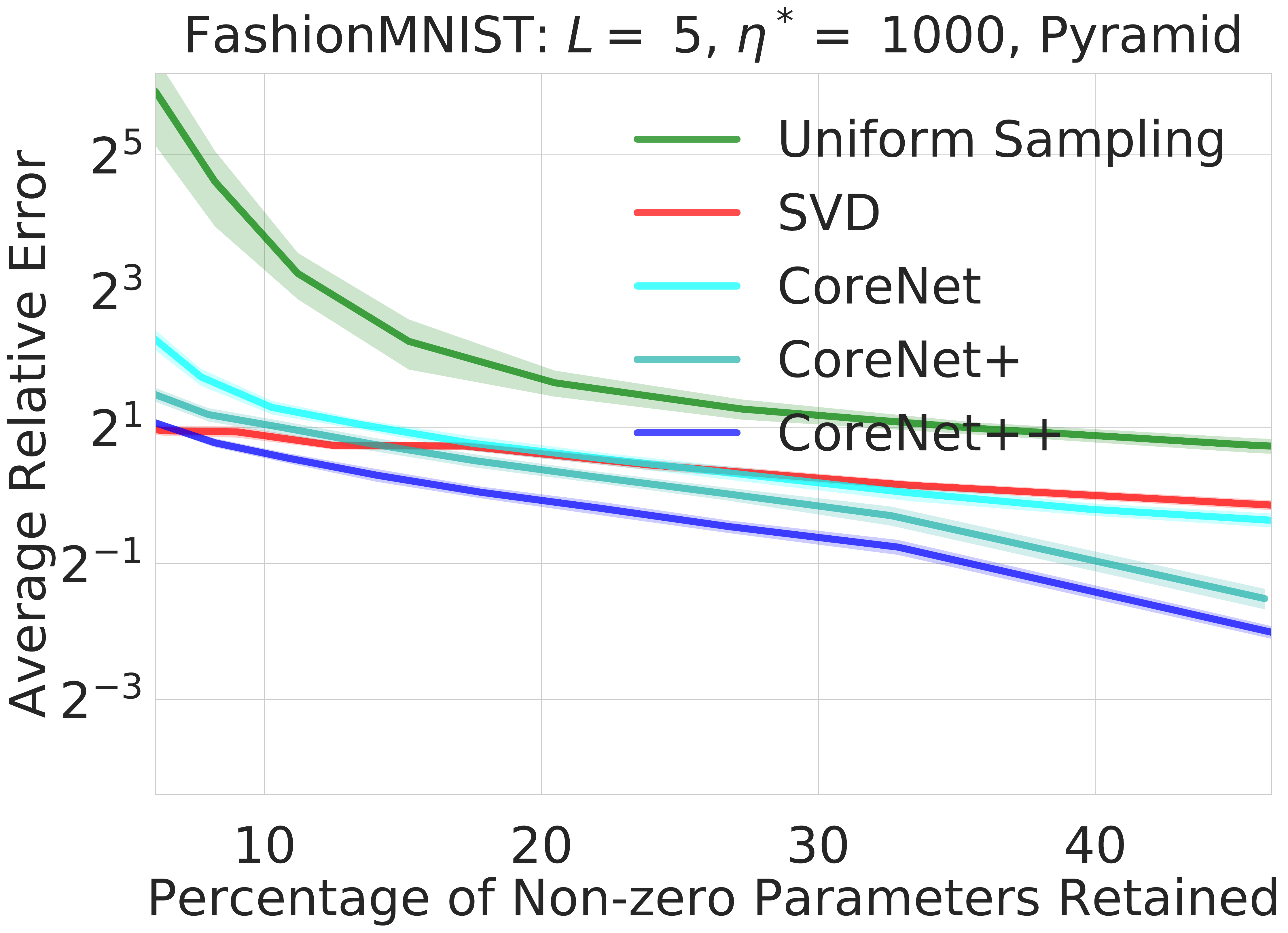}%
\vspace{0.5mm}
\includegraphics[width=0.32\textwidth]{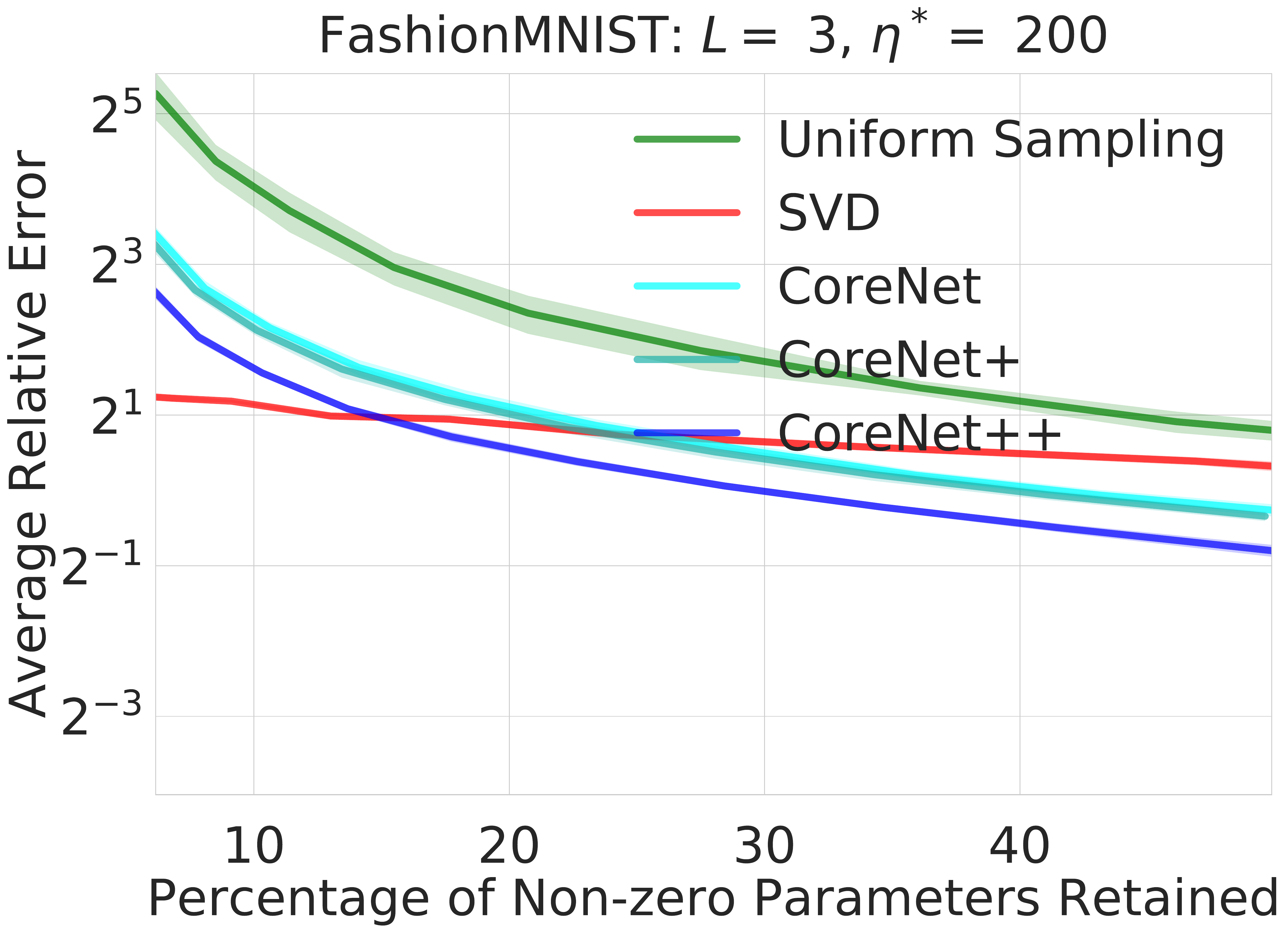} 
\includegraphics[width=0.32\textwidth]{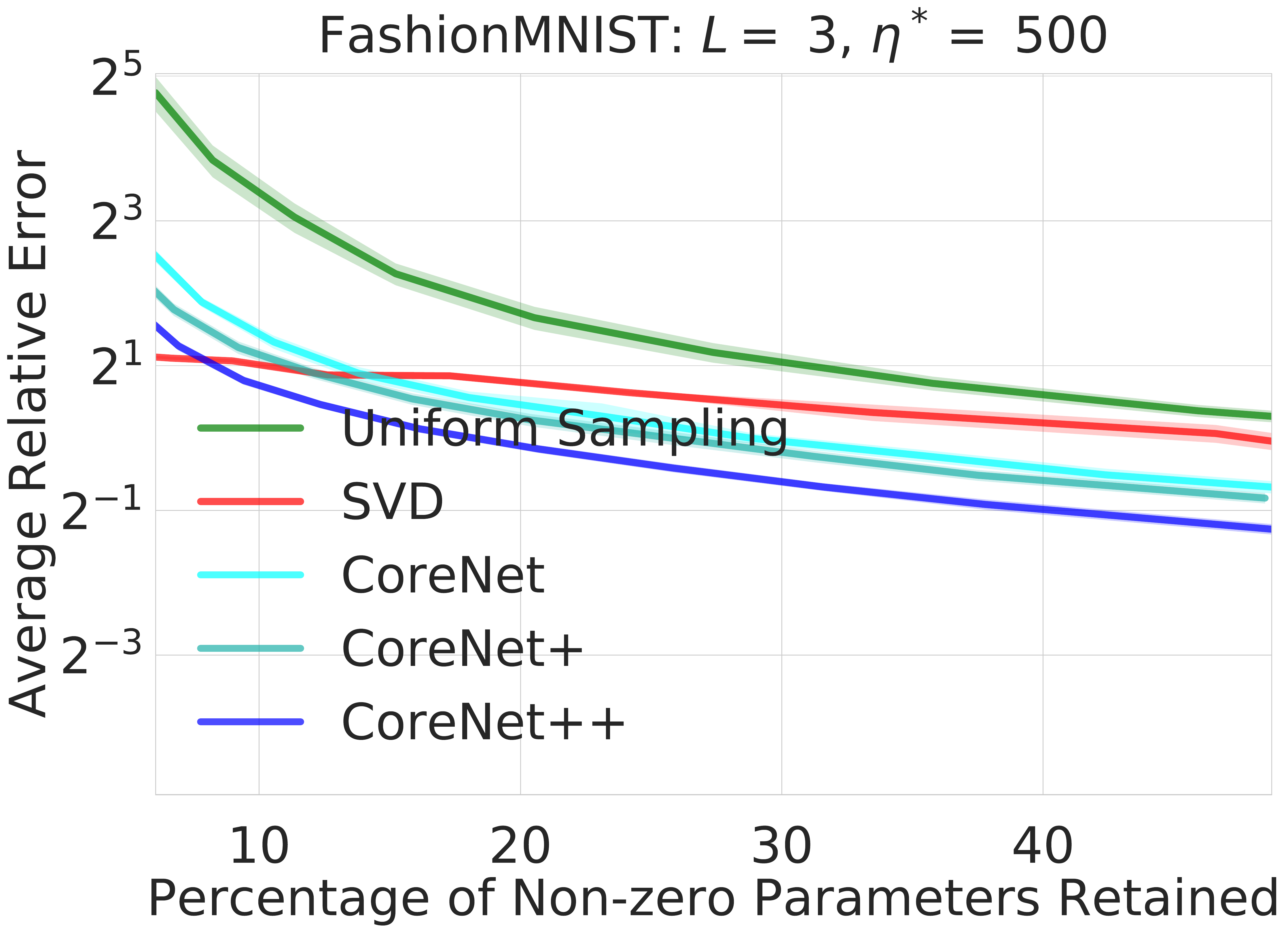}
\includegraphics[width=0.32\textwidth]{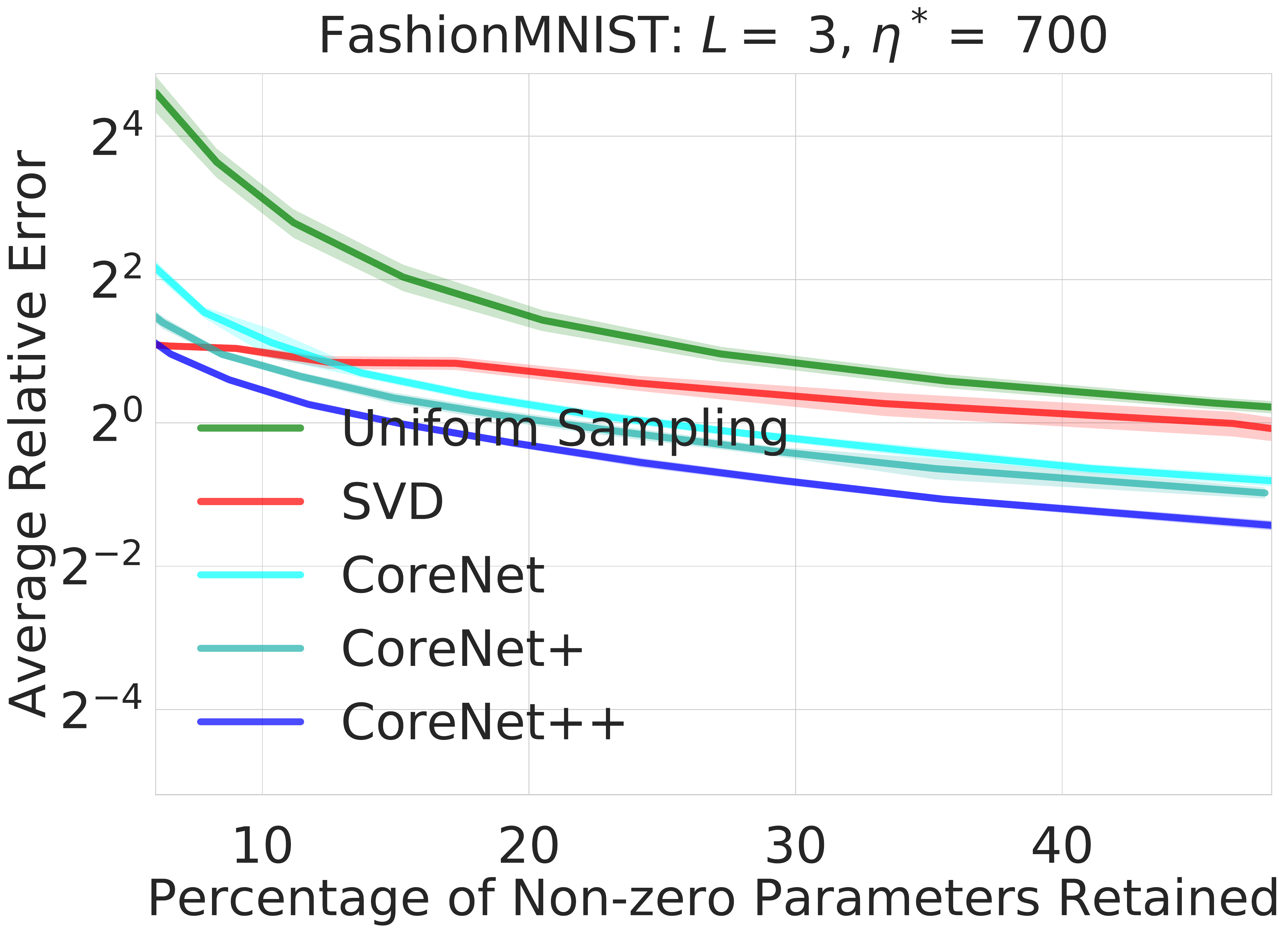}%
\vspace{0.5mm}
\includegraphics[width=0.32\textwidth]{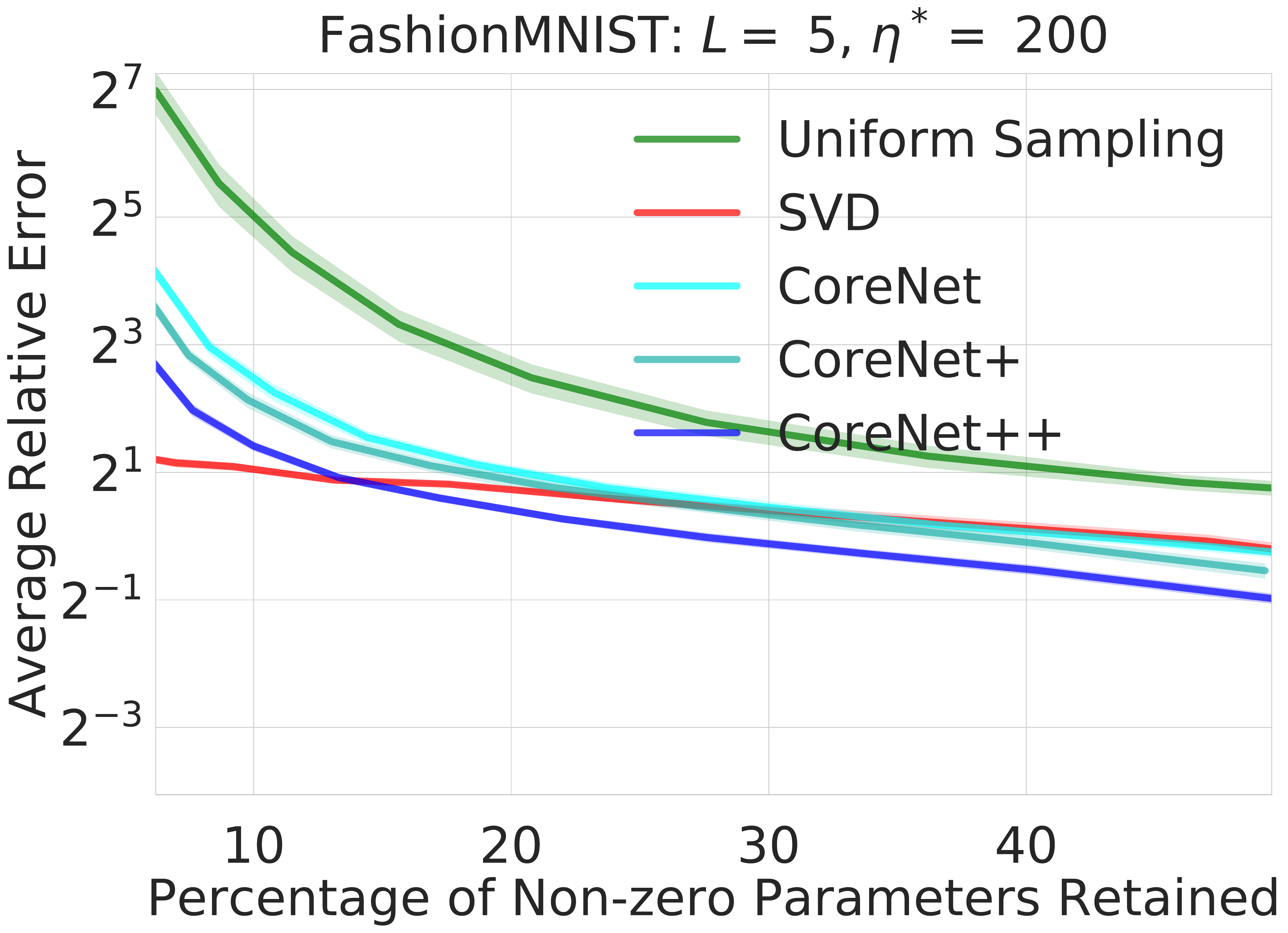}
\includegraphics[width=0.32\textwidth]{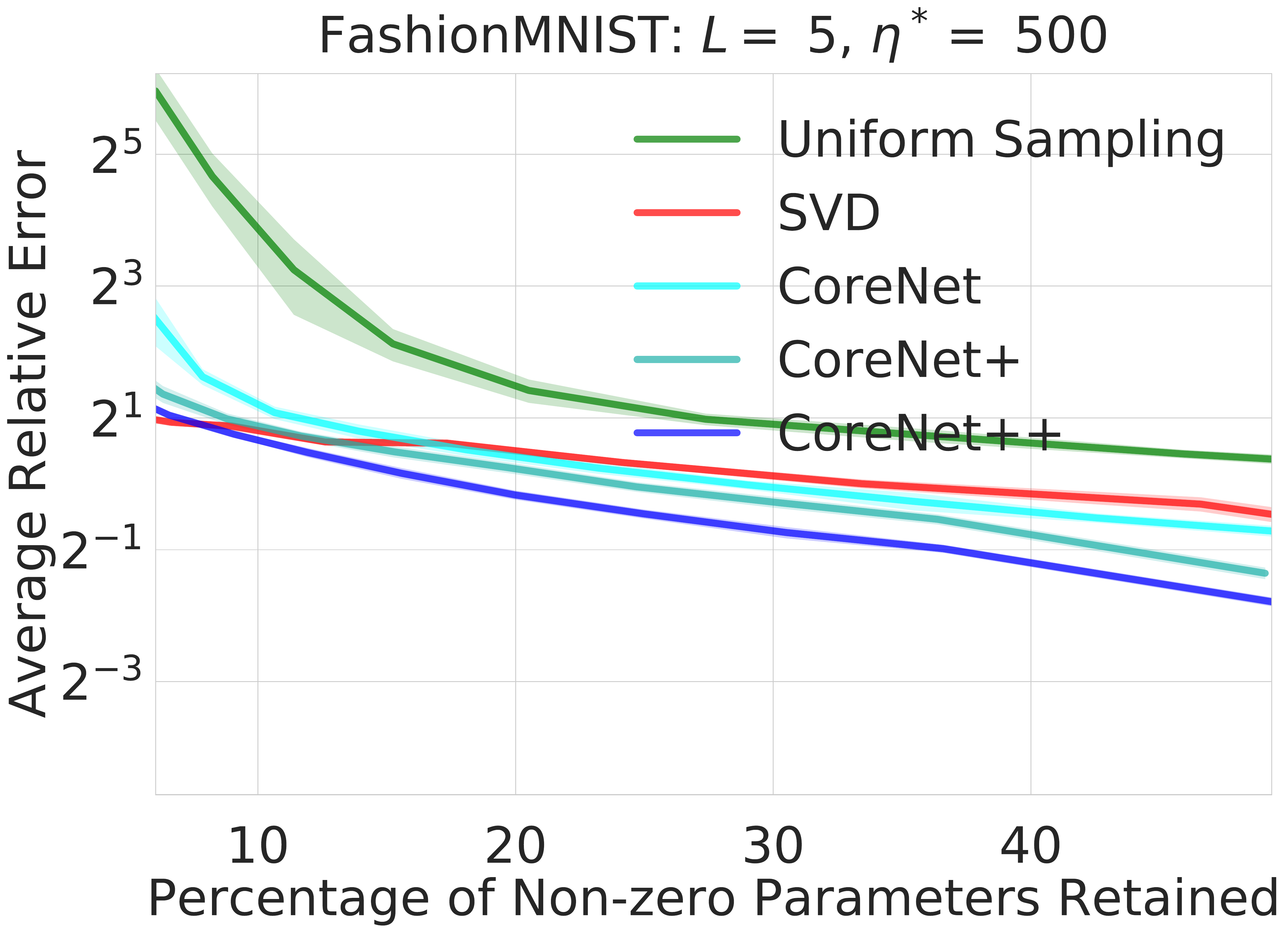}%
\includegraphics[width=0.32\textwidth]{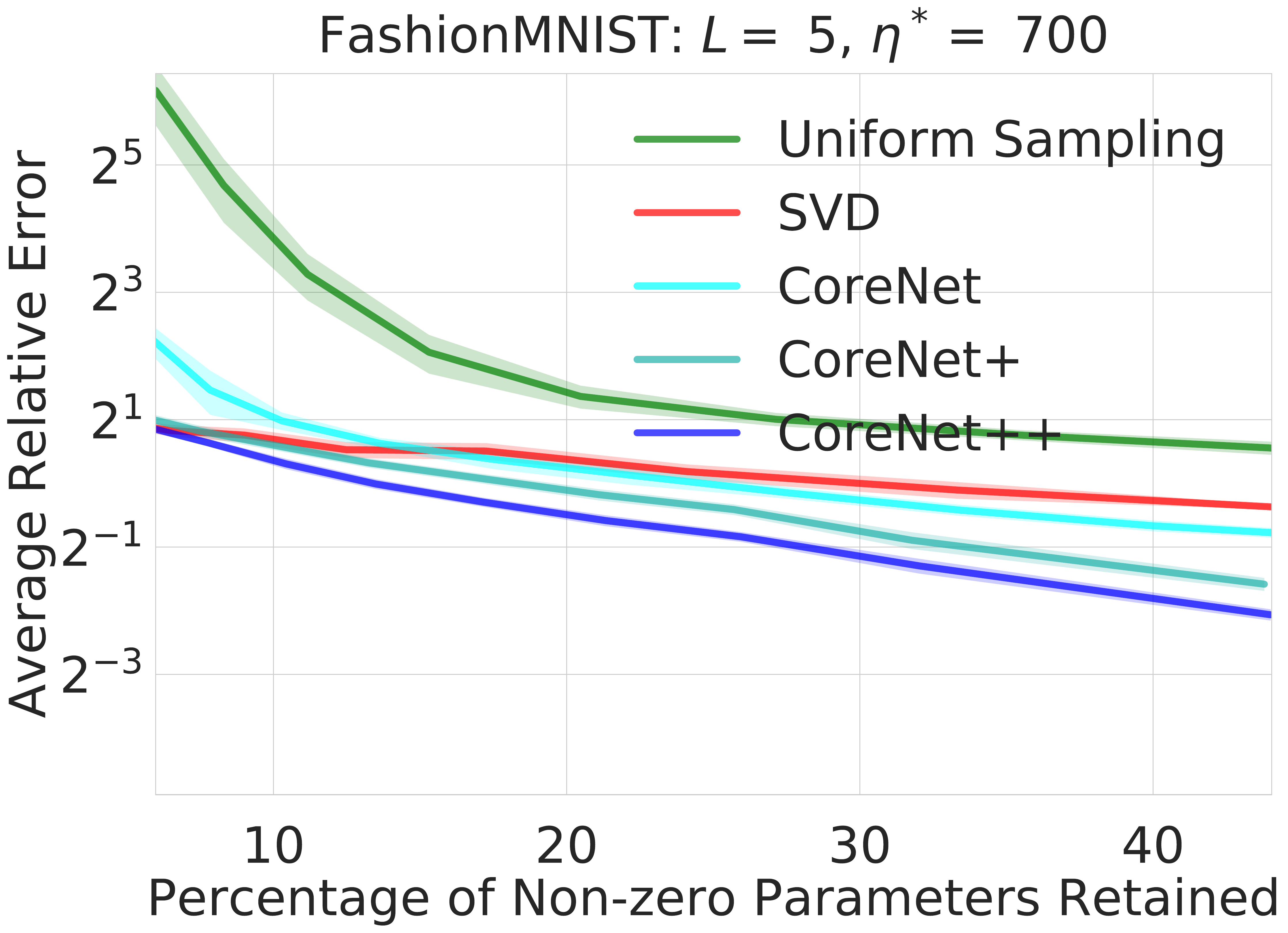}%
\caption{Evaluations against the FashionMNIST dataset with varying number of hidden layers ($L$) and number of neurons per hidden layer ($\eta^*$).}
\label{fig:fashion_error}
\end{figure}

\begin{figure}[htb!]
\centering
\includegraphics[width=0.32\textwidth]{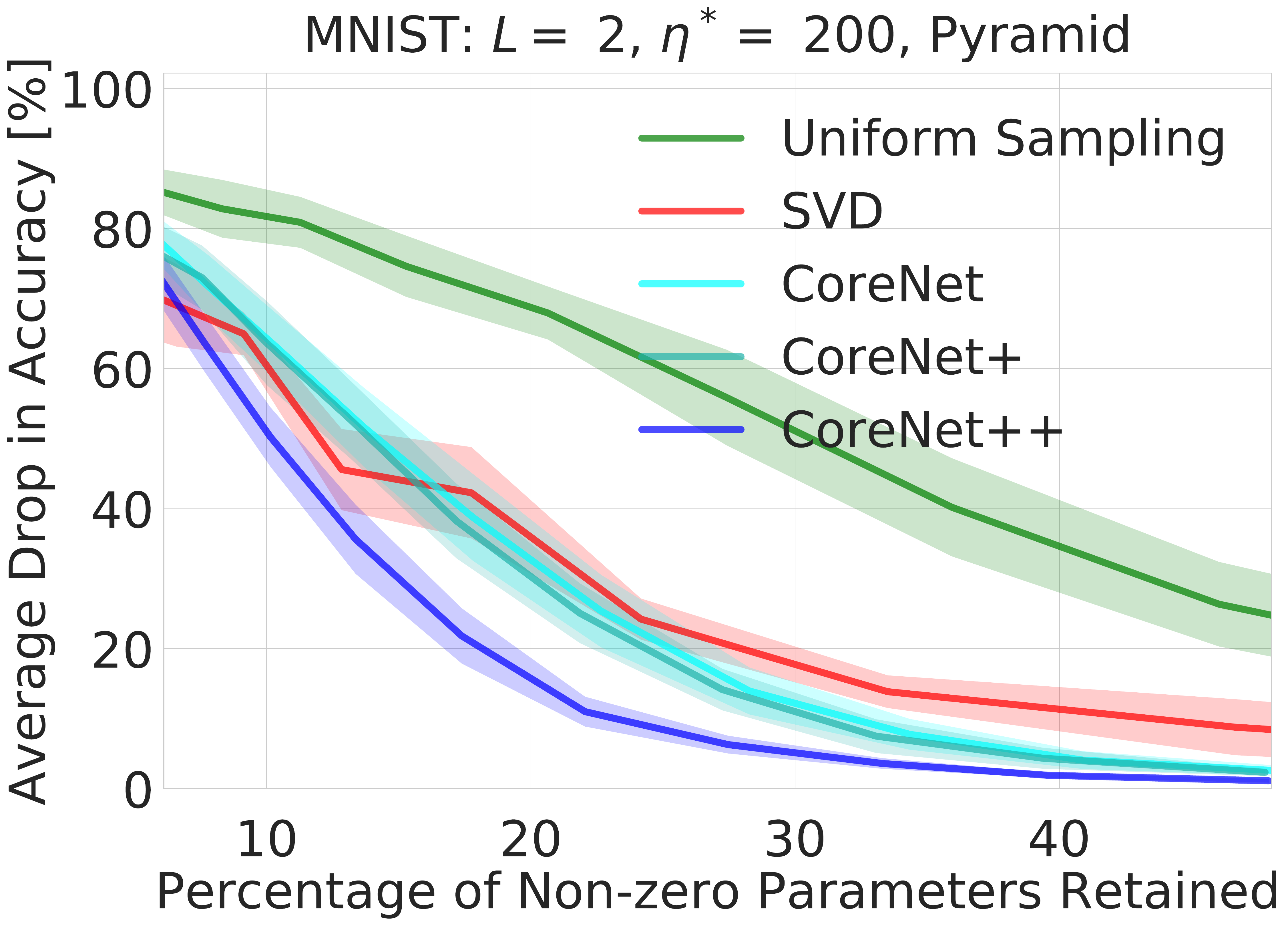} 
\includegraphics[width=0.32\textwidth]{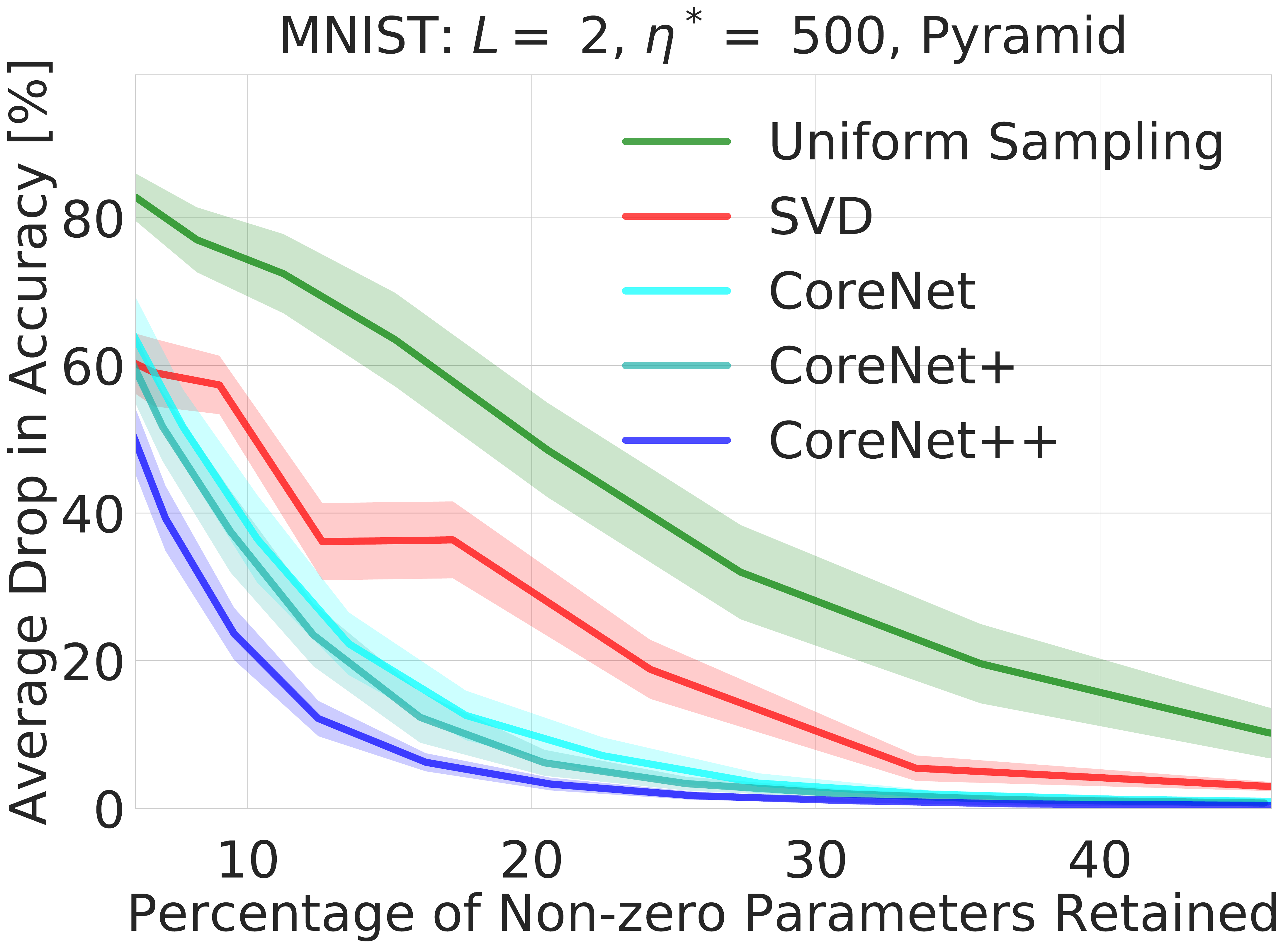} 
\includegraphics[width=0.32\textwidth]{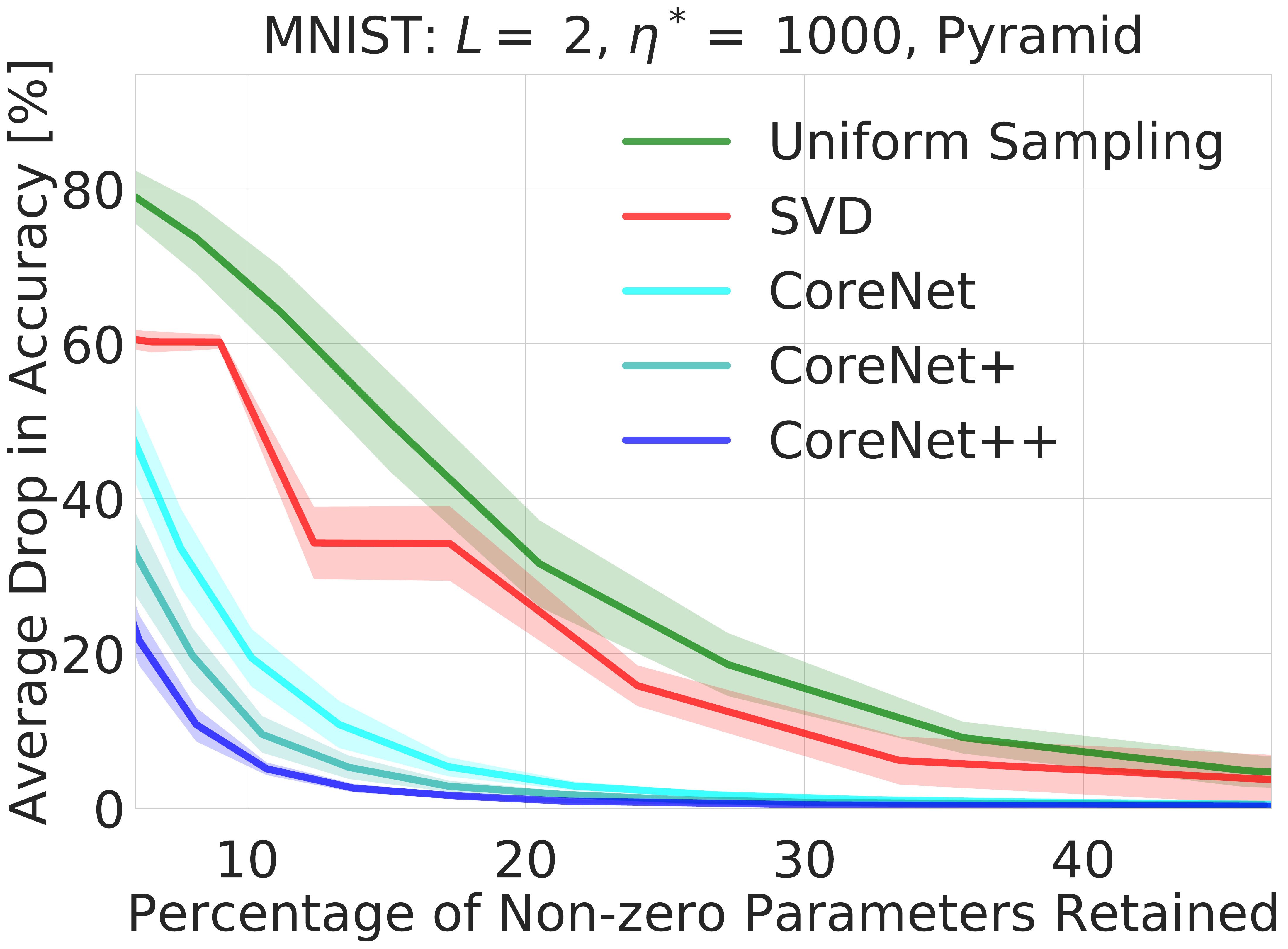}%
\vspace{0.5mm}
\includegraphics[width=0.32\textwidth]{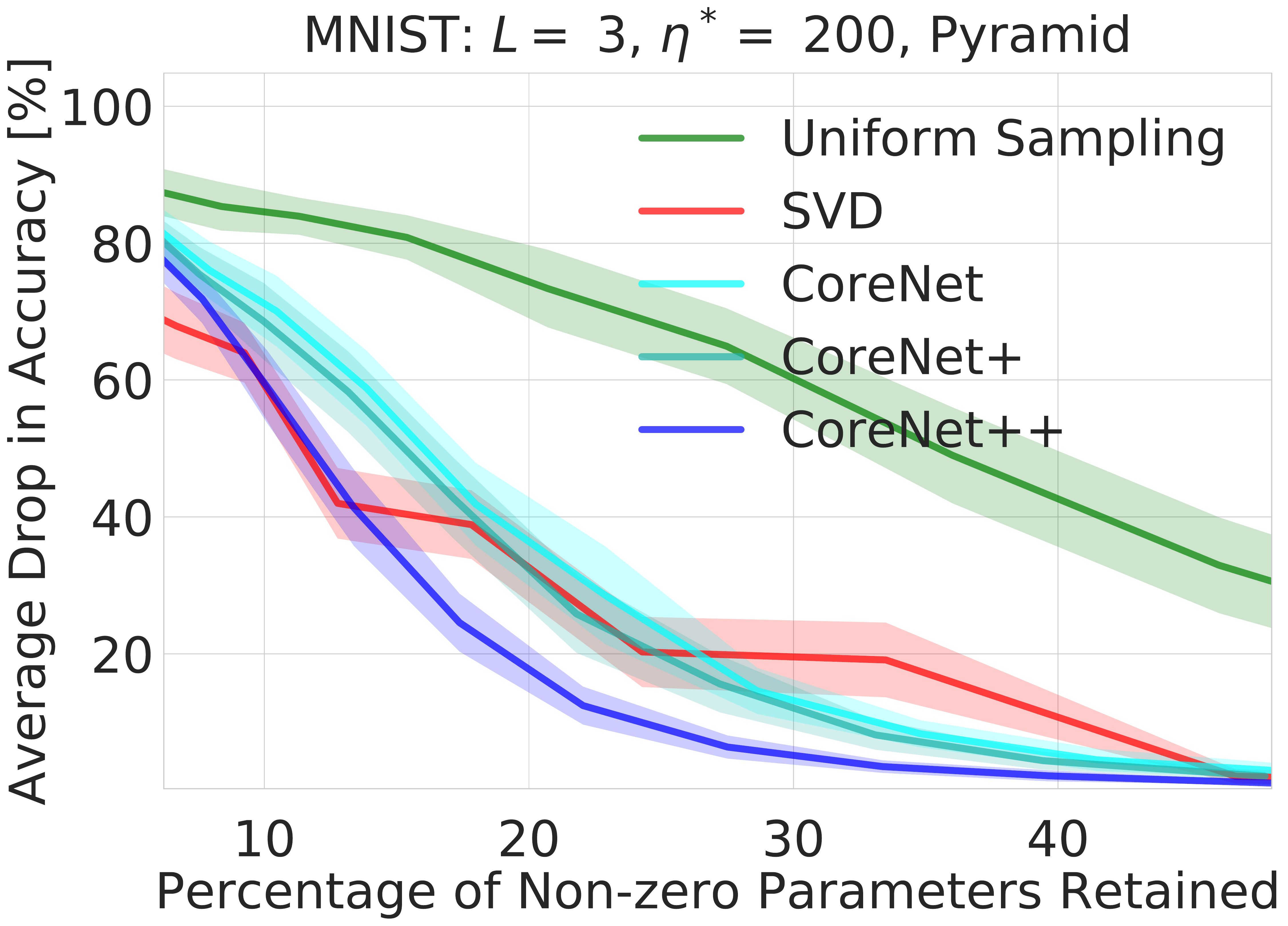} 
\includegraphics[width=0.32\textwidth]{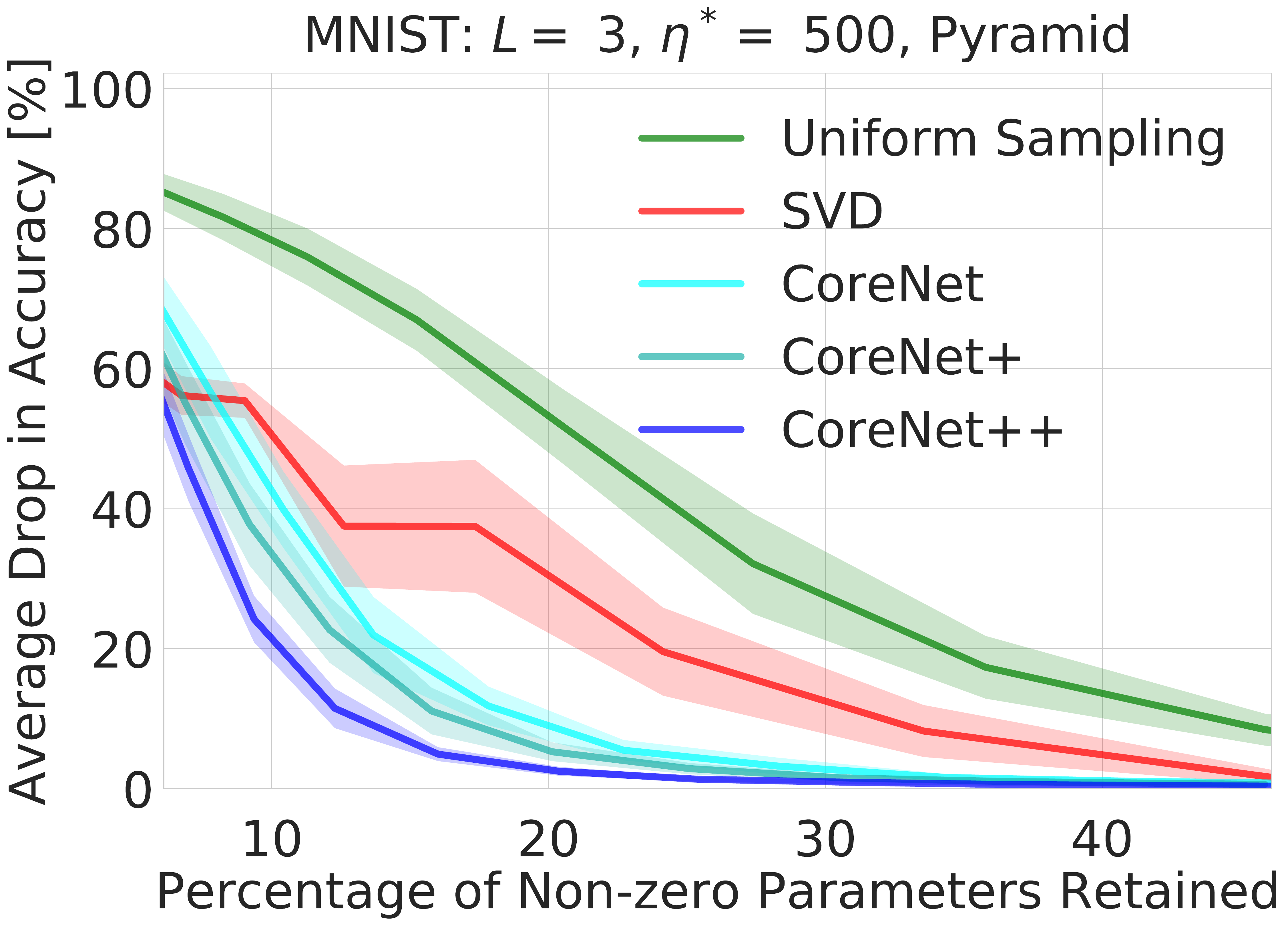} 
\includegraphics[width=0.32\textwidth]{figures/acc/MNIST_l3_h1000_pyramid}%
\vspace{0.5mm}
\includegraphics[width=0.32\textwidth]{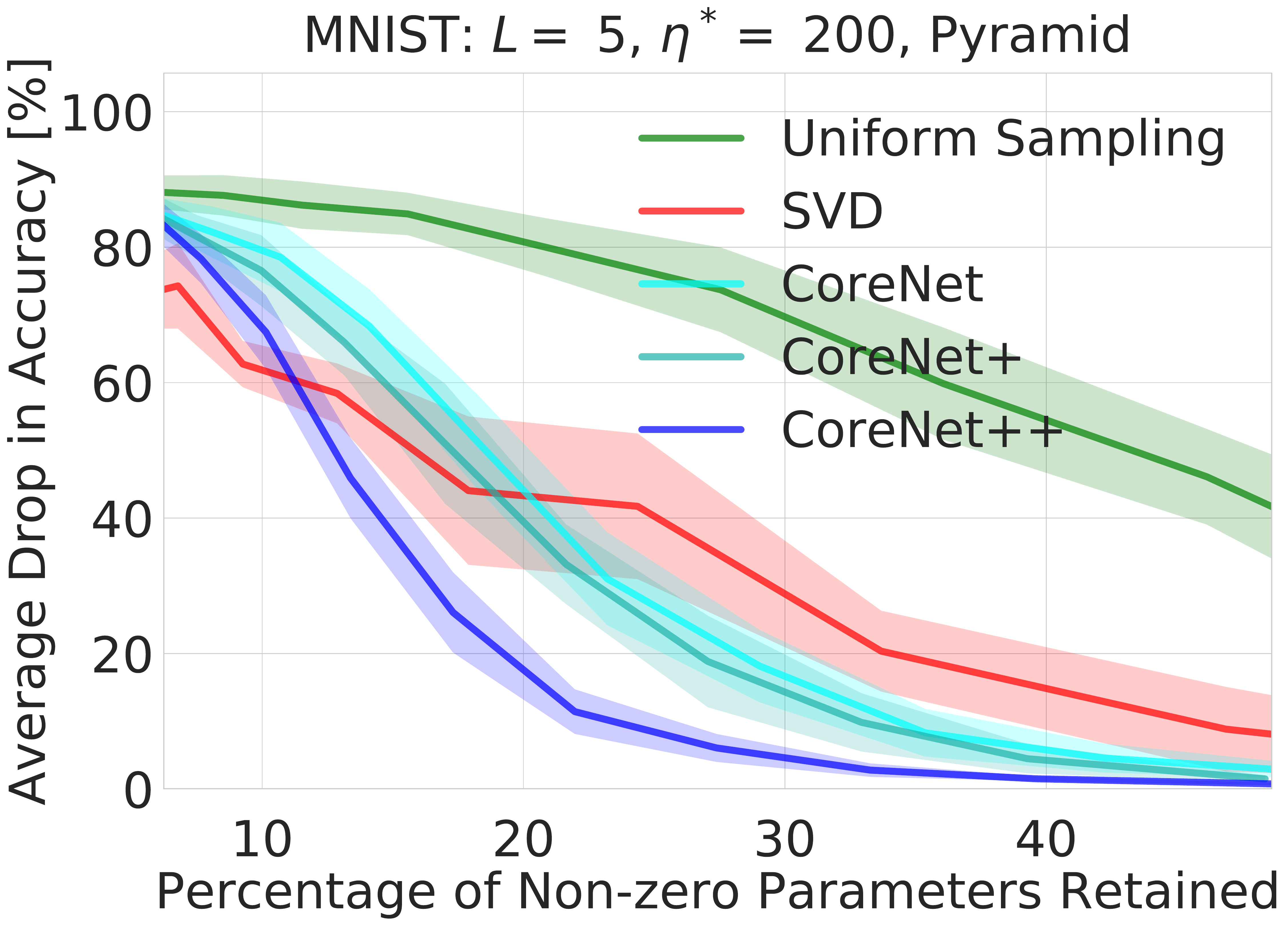} 
\includegraphics[width=0.32\textwidth]{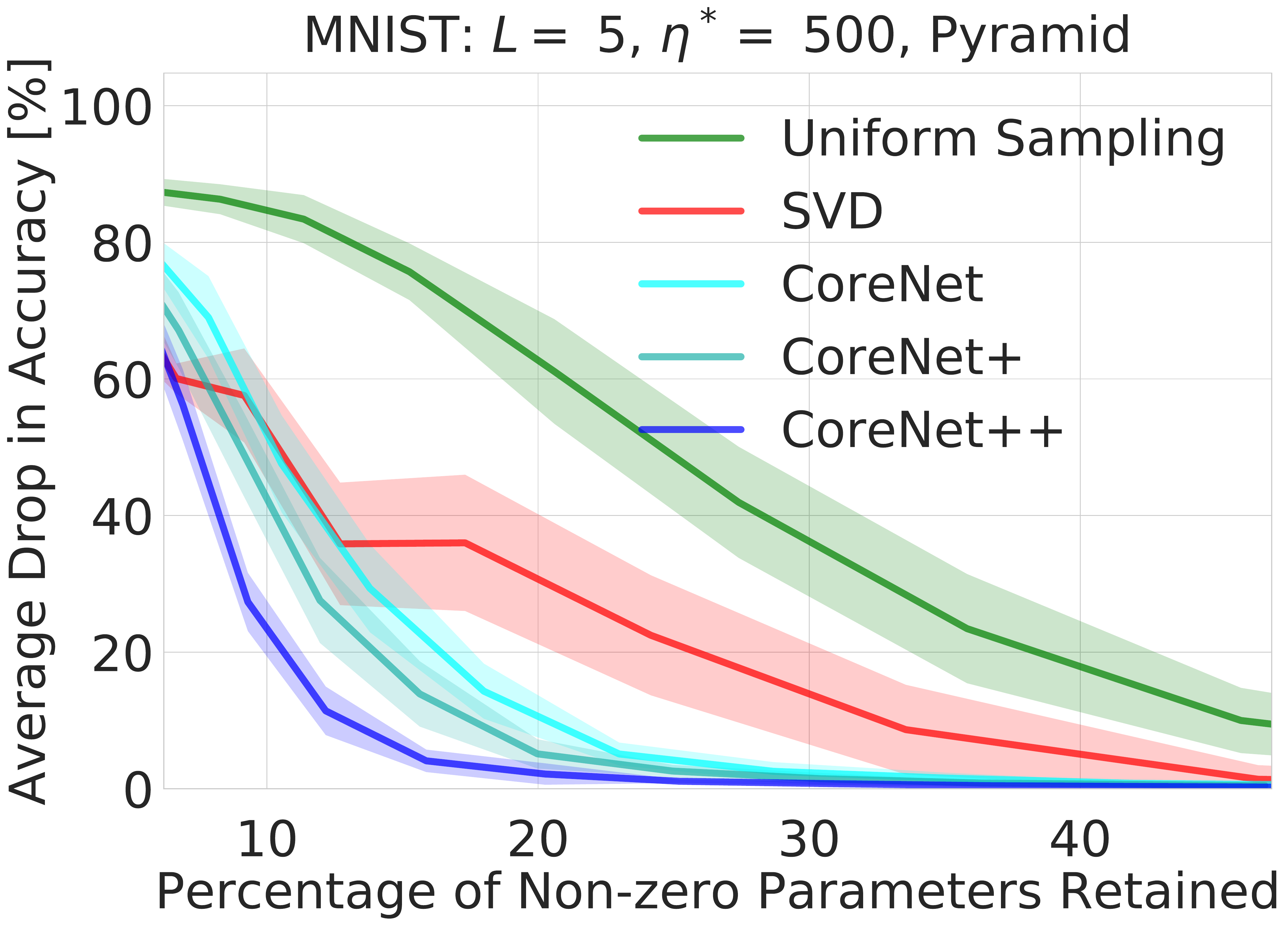} 
\includegraphics[width=0.32\textwidth]{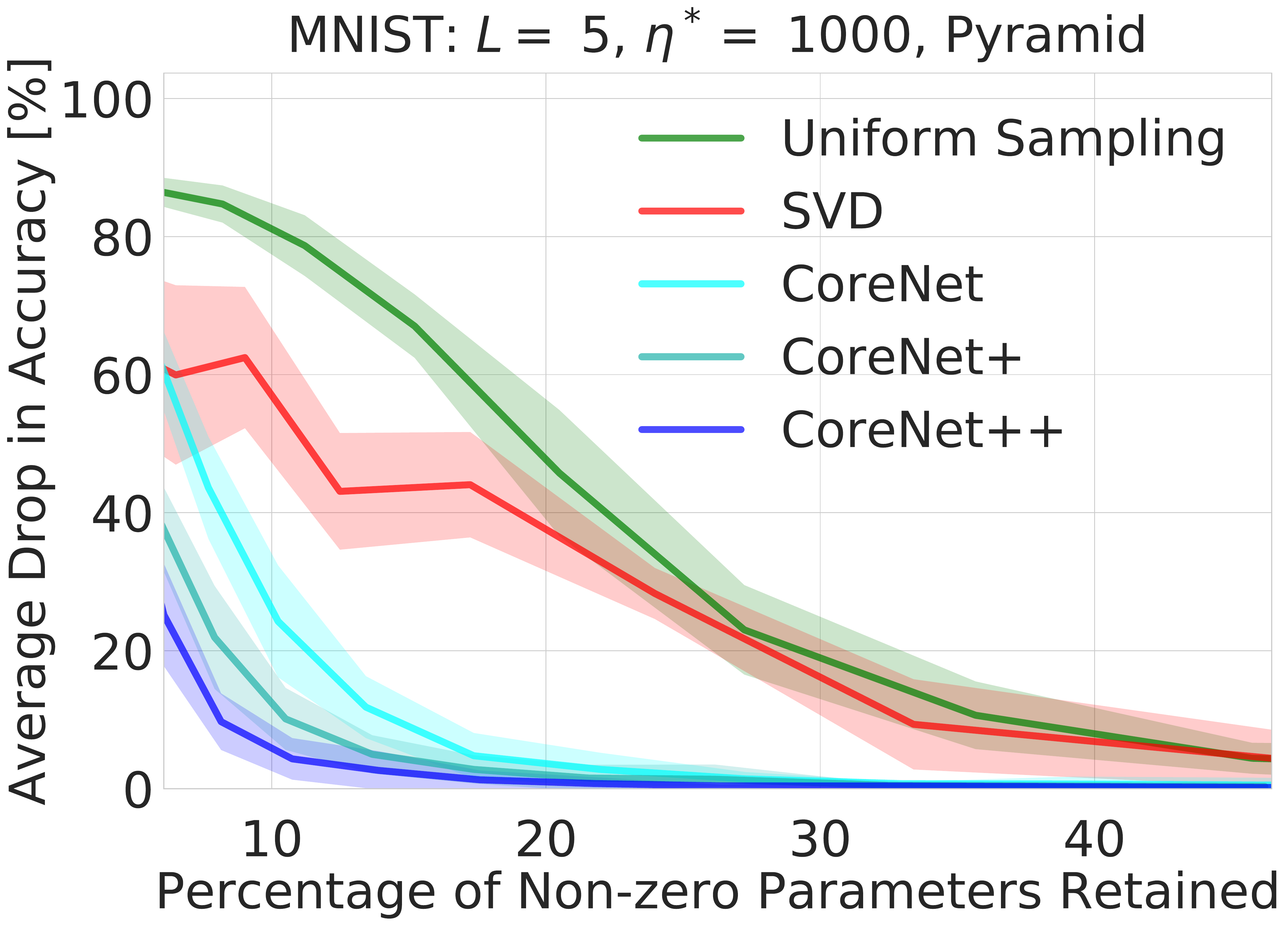}%
\vspace{0.5mm}
\includegraphics[width=0.32\textwidth]{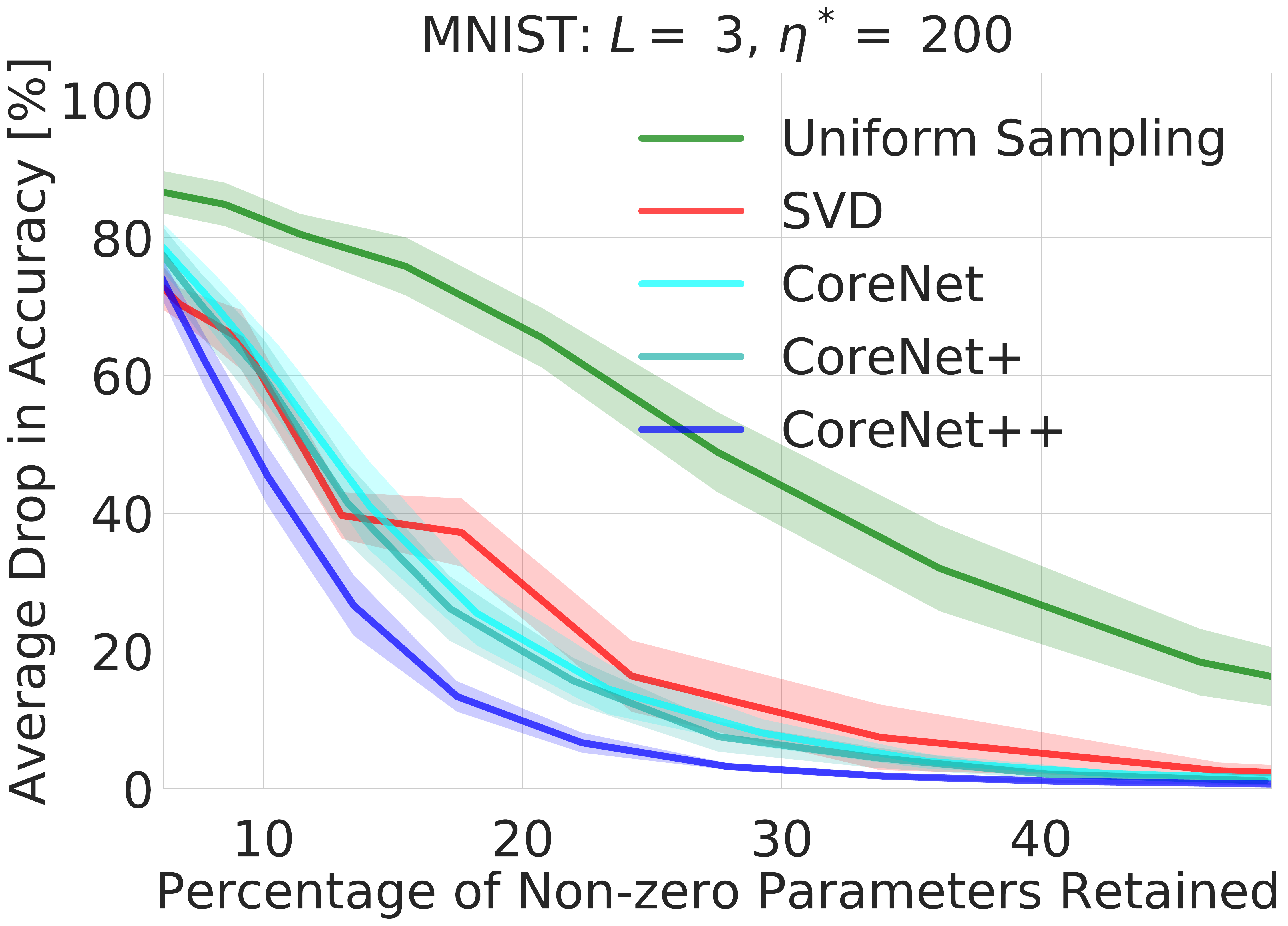} 
\includegraphics[width=0.32\textwidth]{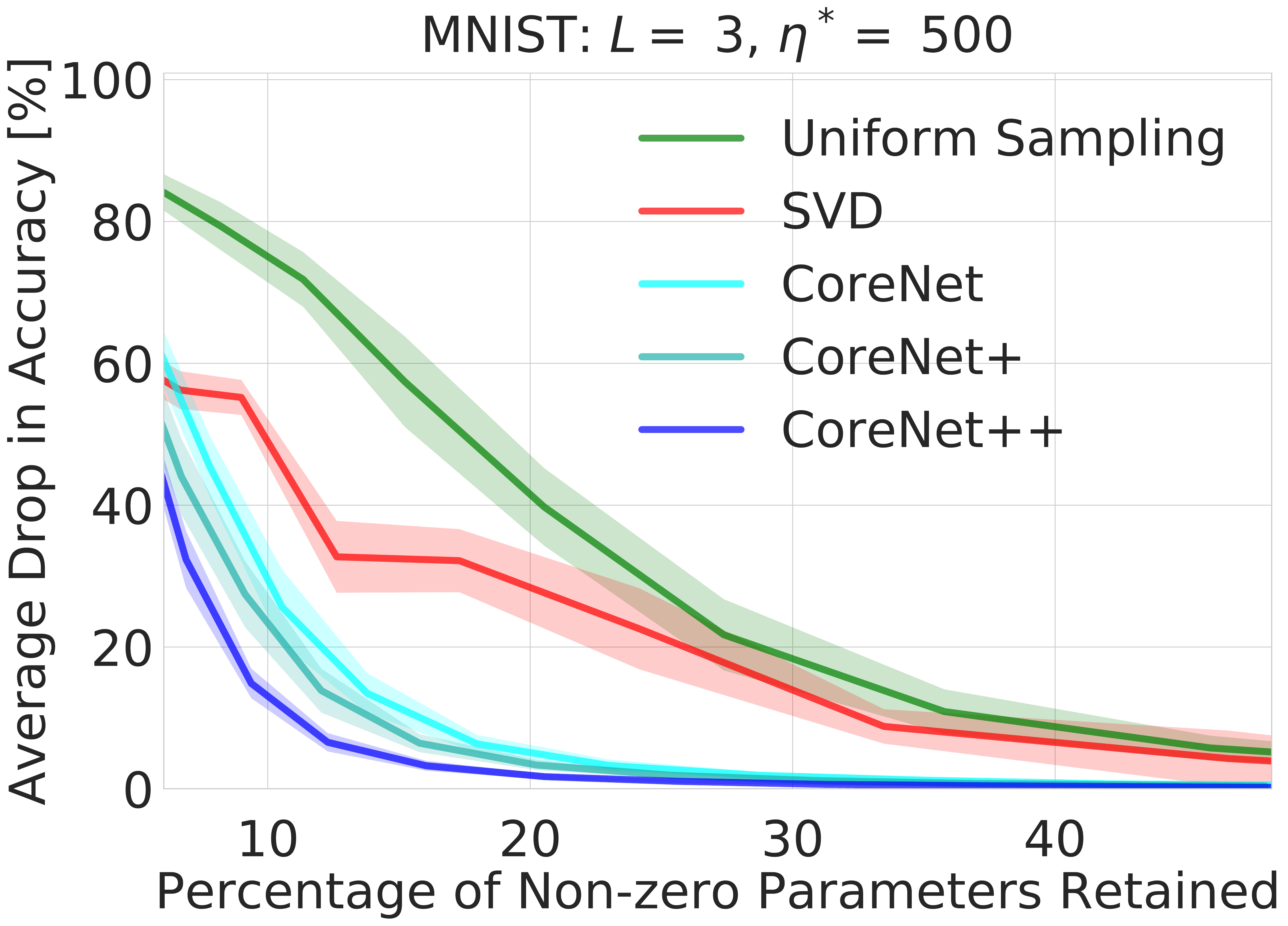}
\includegraphics[width=0.32\textwidth]{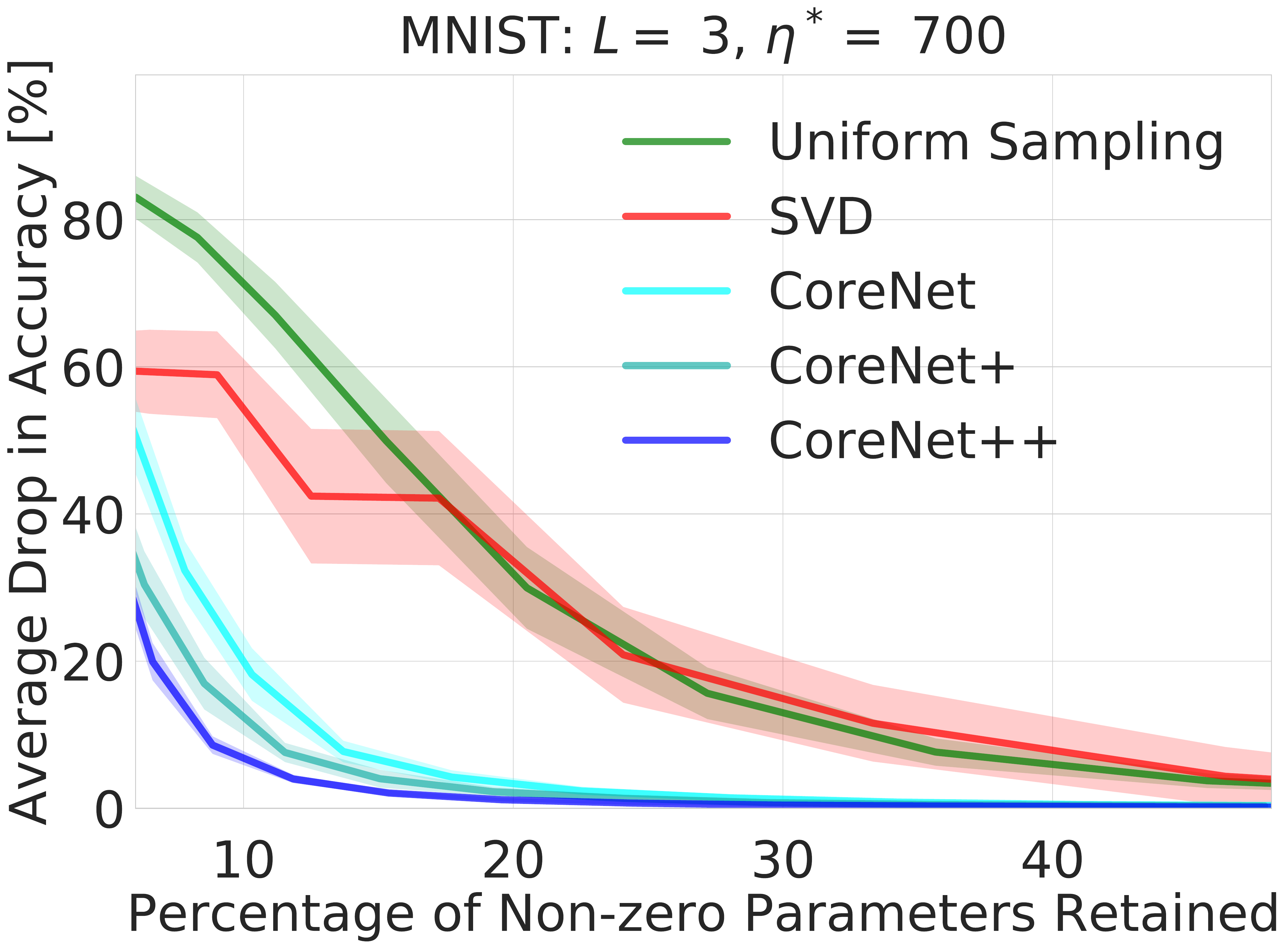}%
\vspace{0.5mm}
\includegraphics[width=0.32\textwidth]{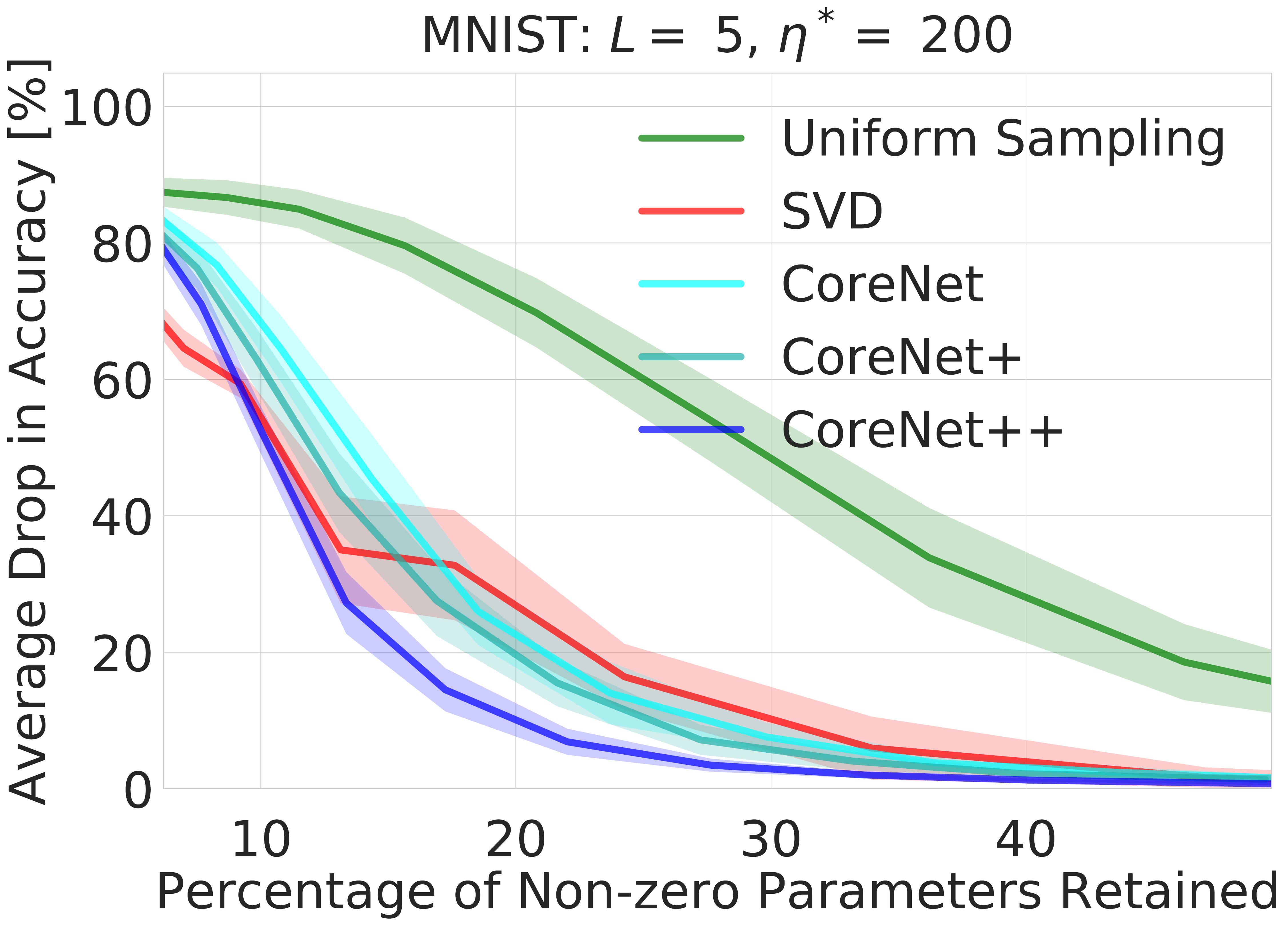} 
\includegraphics[width=0.32\textwidth]{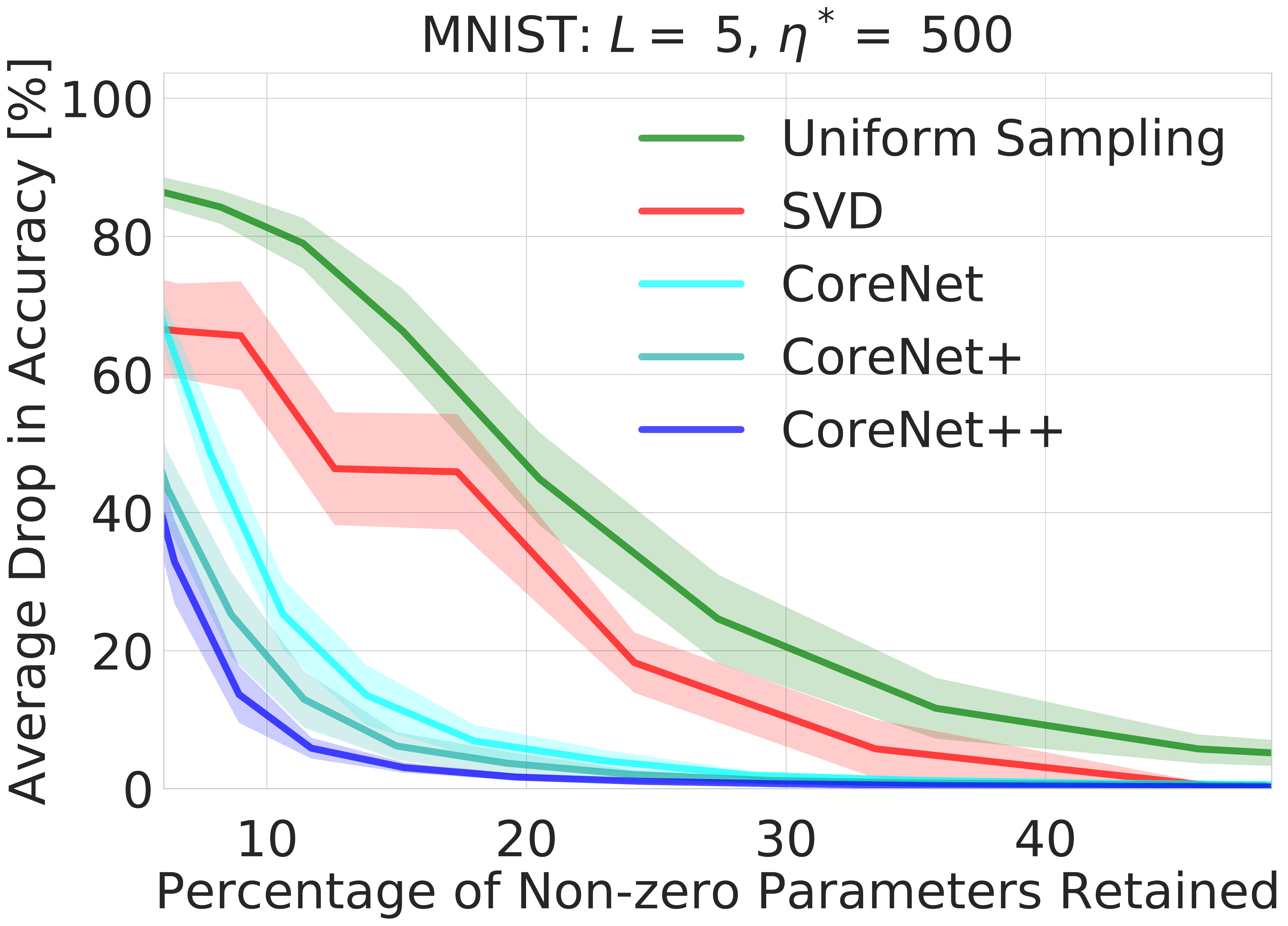}
\includegraphics[width=0.32\textwidth]{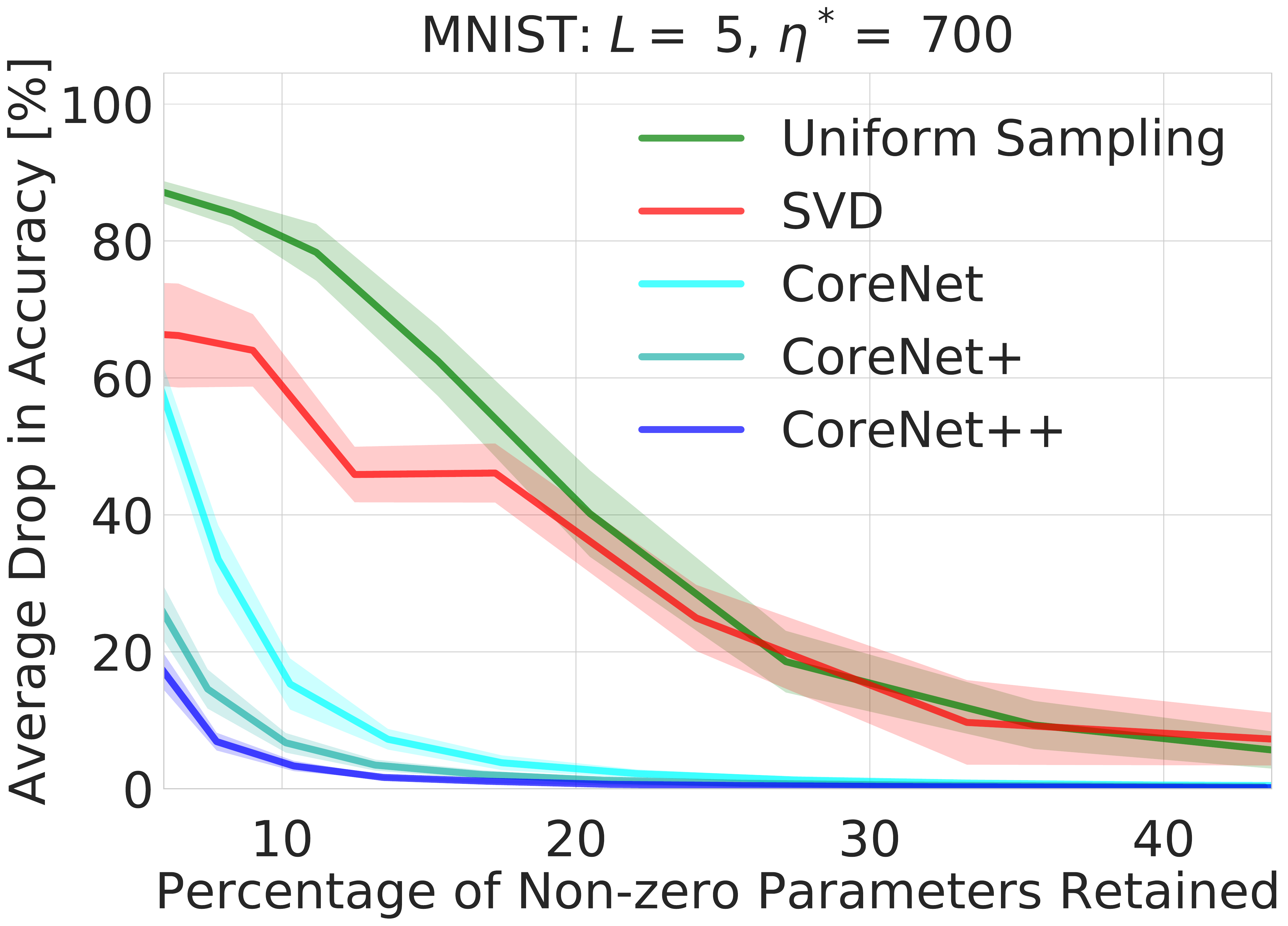}%
\caption{Evaluations against the MNIST dataset with varying number of hidden layers ($L$) and number of neurons per hidden layer ($\eta^*$). Shaded region corresponds to values within one standard deviation of the mean. The figures show that our algorithm's relative performance increases as the number of layers (and hence the number of redundant parameters) increases.}
\label{fig:mnist_acc}
\end{figure}
\begin{figure}[htb!]
\centering
\includegraphics[width=0.32\textwidth]{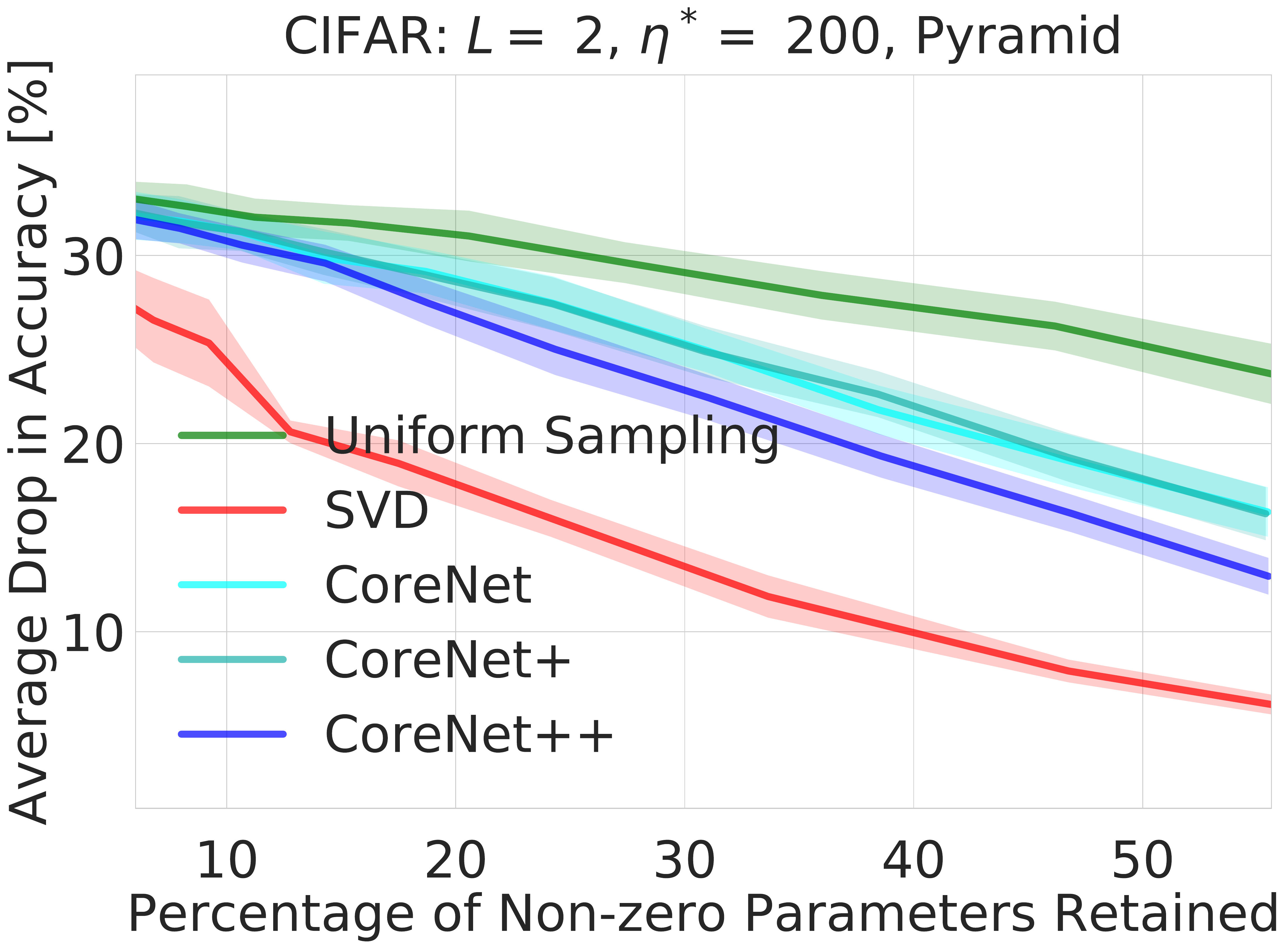} 
\includegraphics[width=0.32\textwidth]{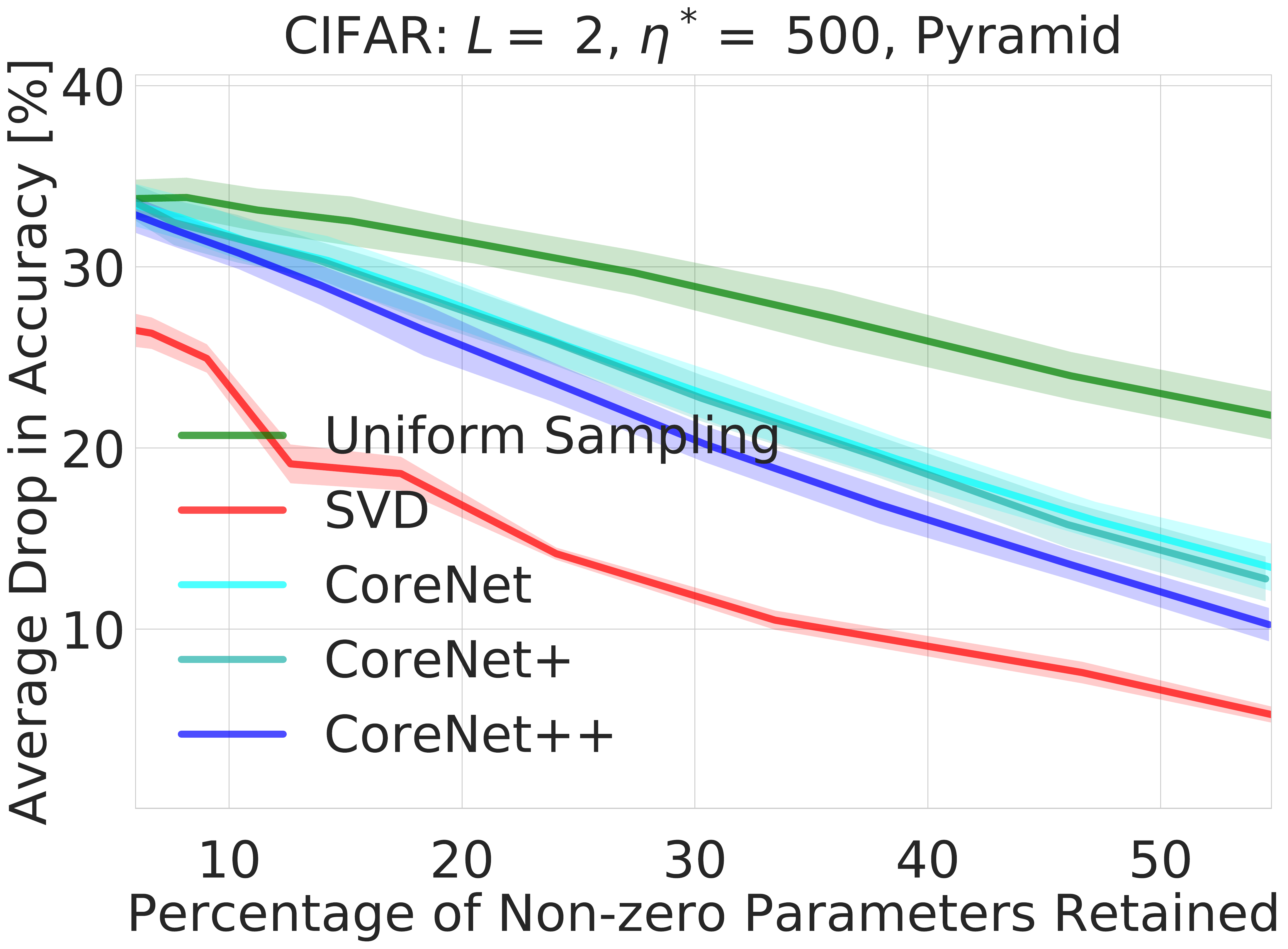} 
\includegraphics[width=0.32\textwidth]{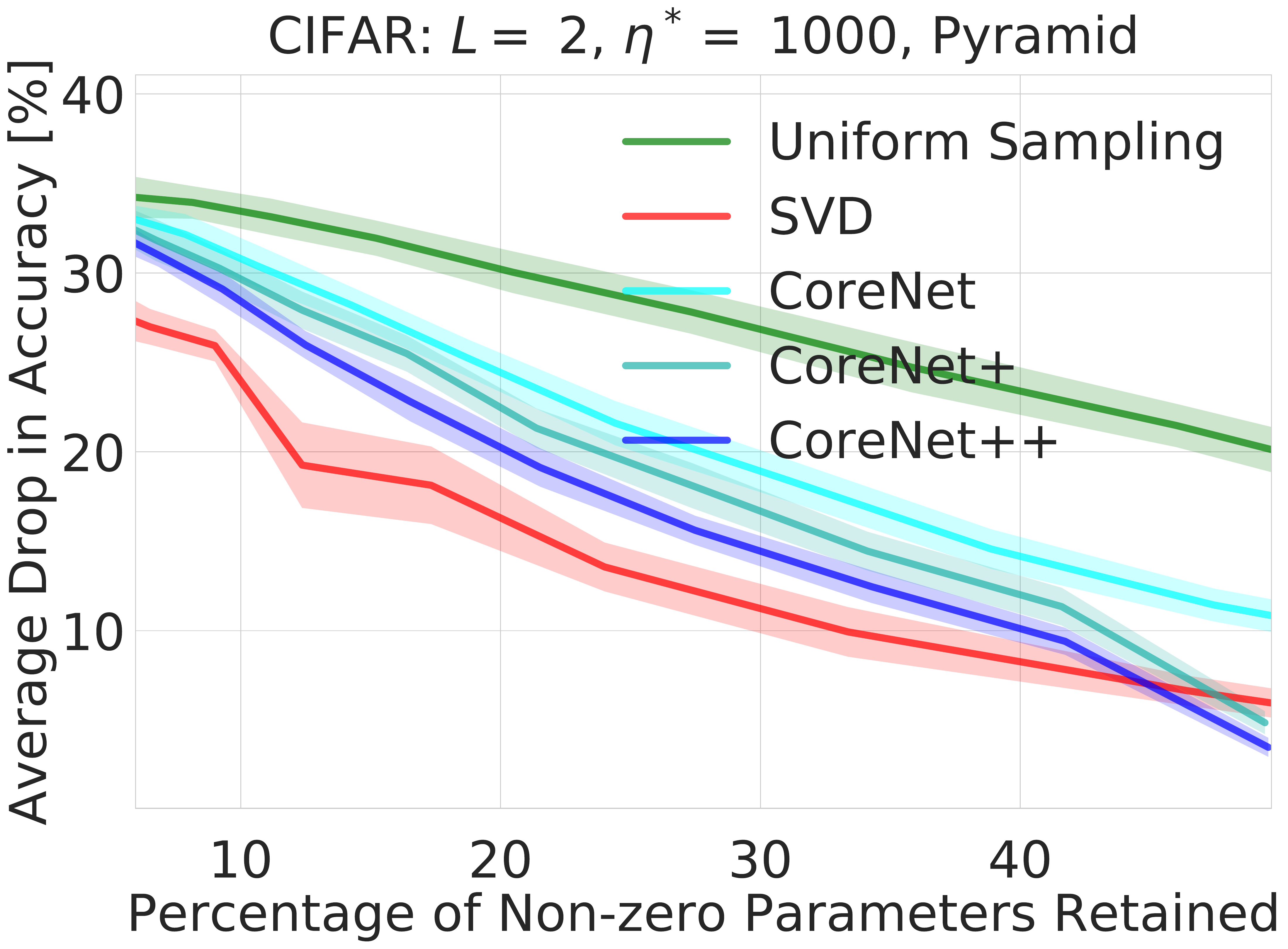}%
\vspace{0.5mm}
\includegraphics[width=0.32\textwidth]{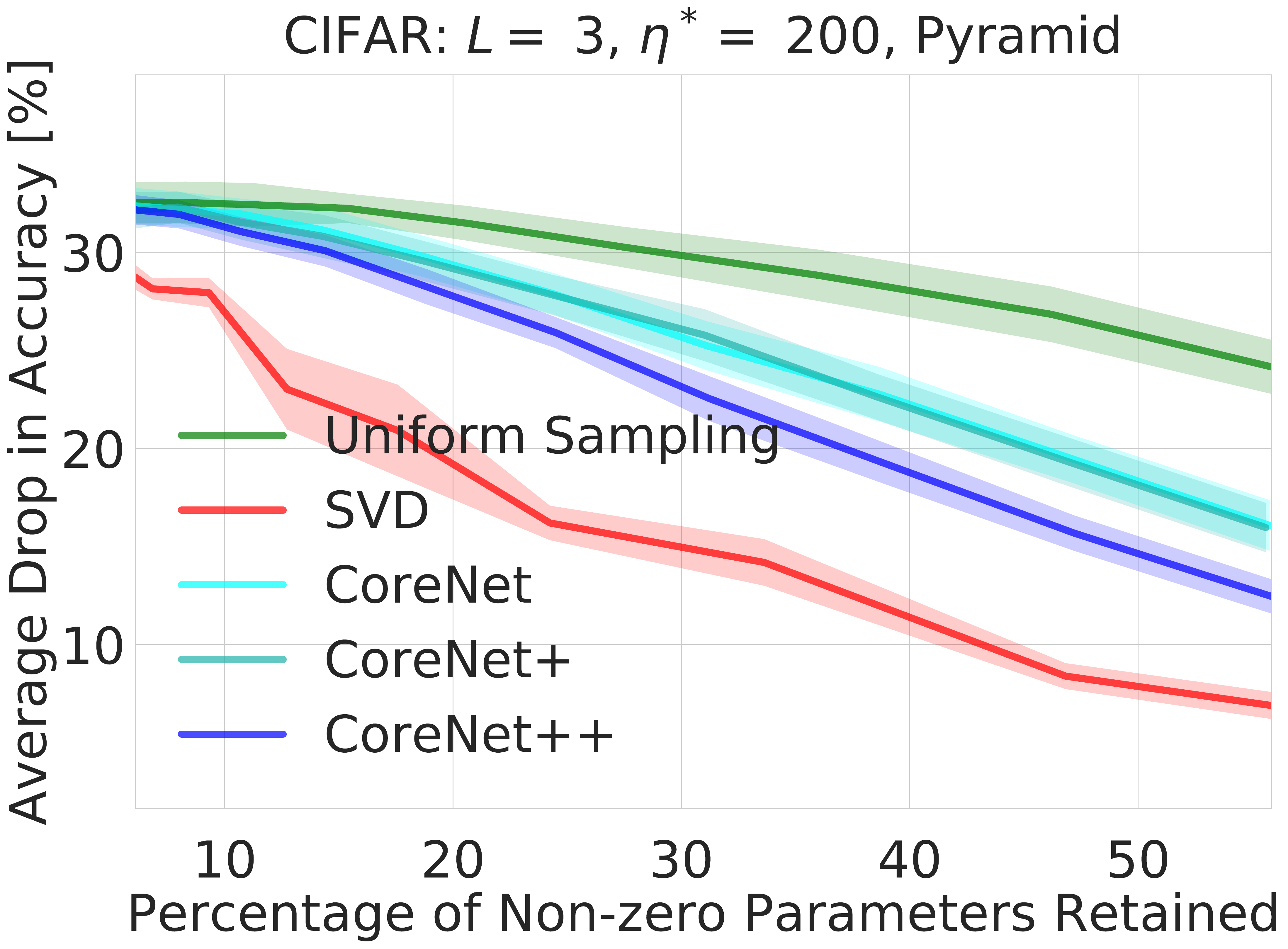} 
\includegraphics[width=0.32\textwidth]{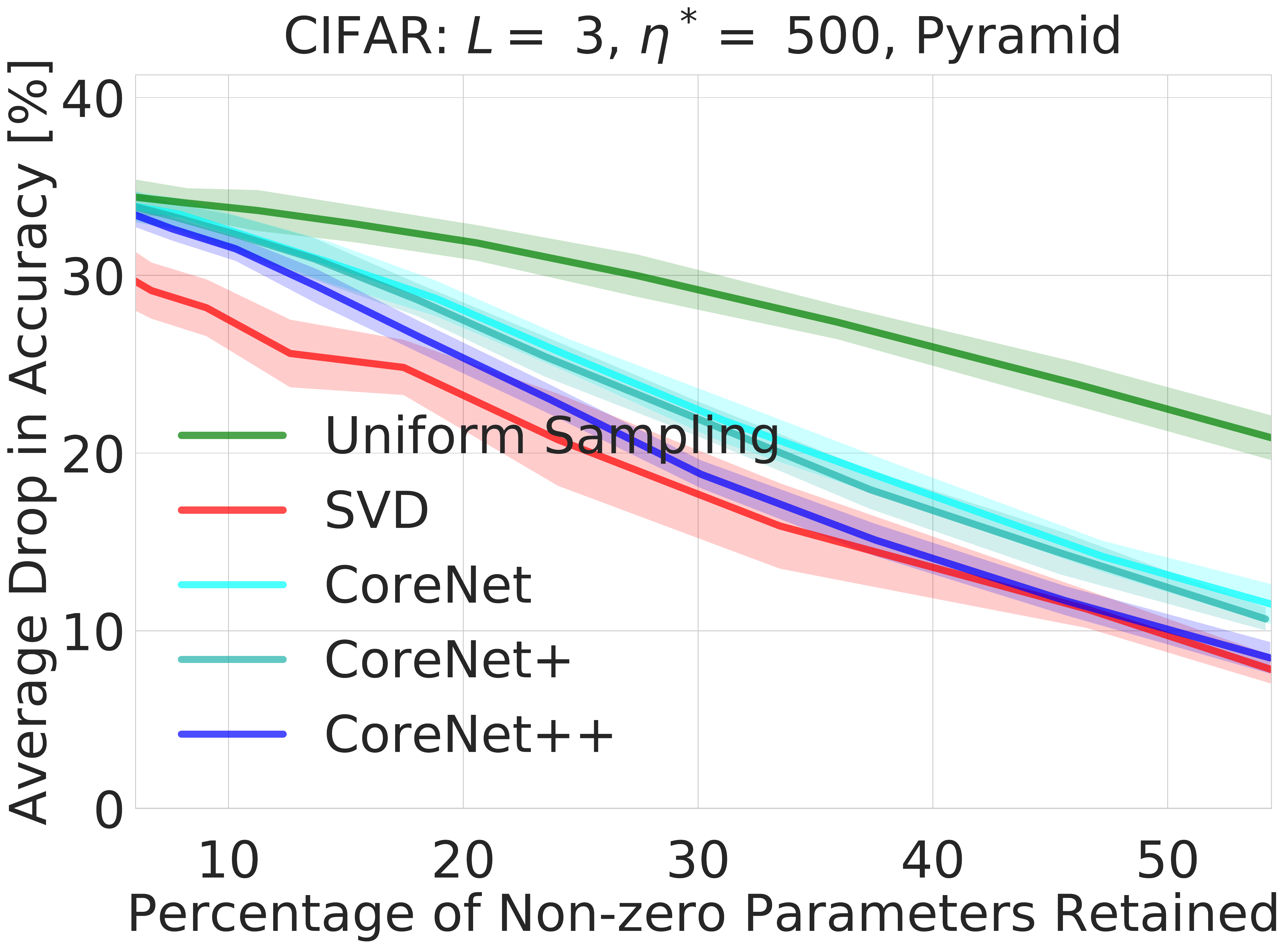} 
\includegraphics[width=0.32\textwidth]{figures/acc/CIFAR_l3_h1000_pyramid}%
\vspace{0.5mm}
\includegraphics[width=0.32\textwidth]{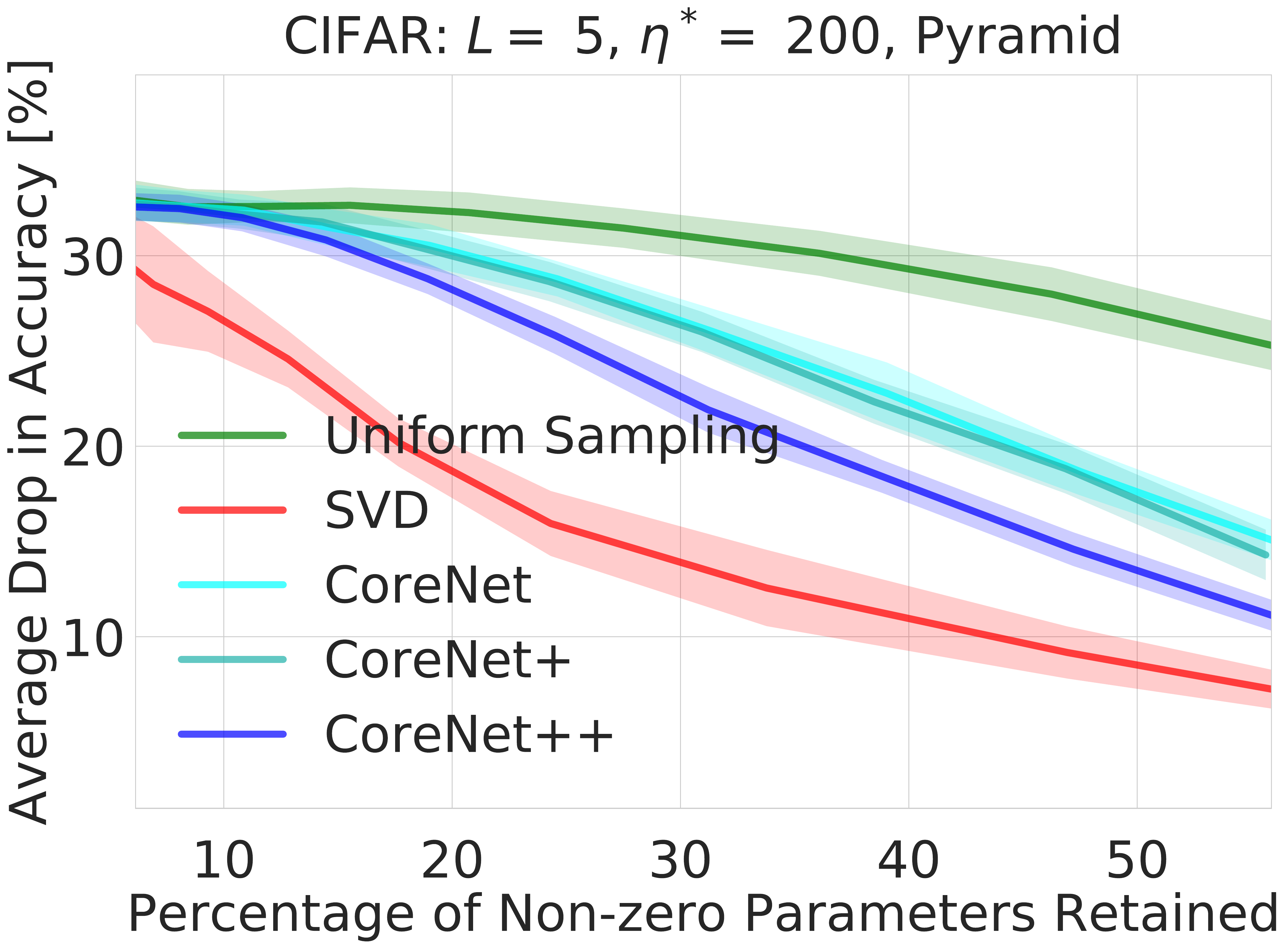} 
\includegraphics[width=0.32\textwidth]{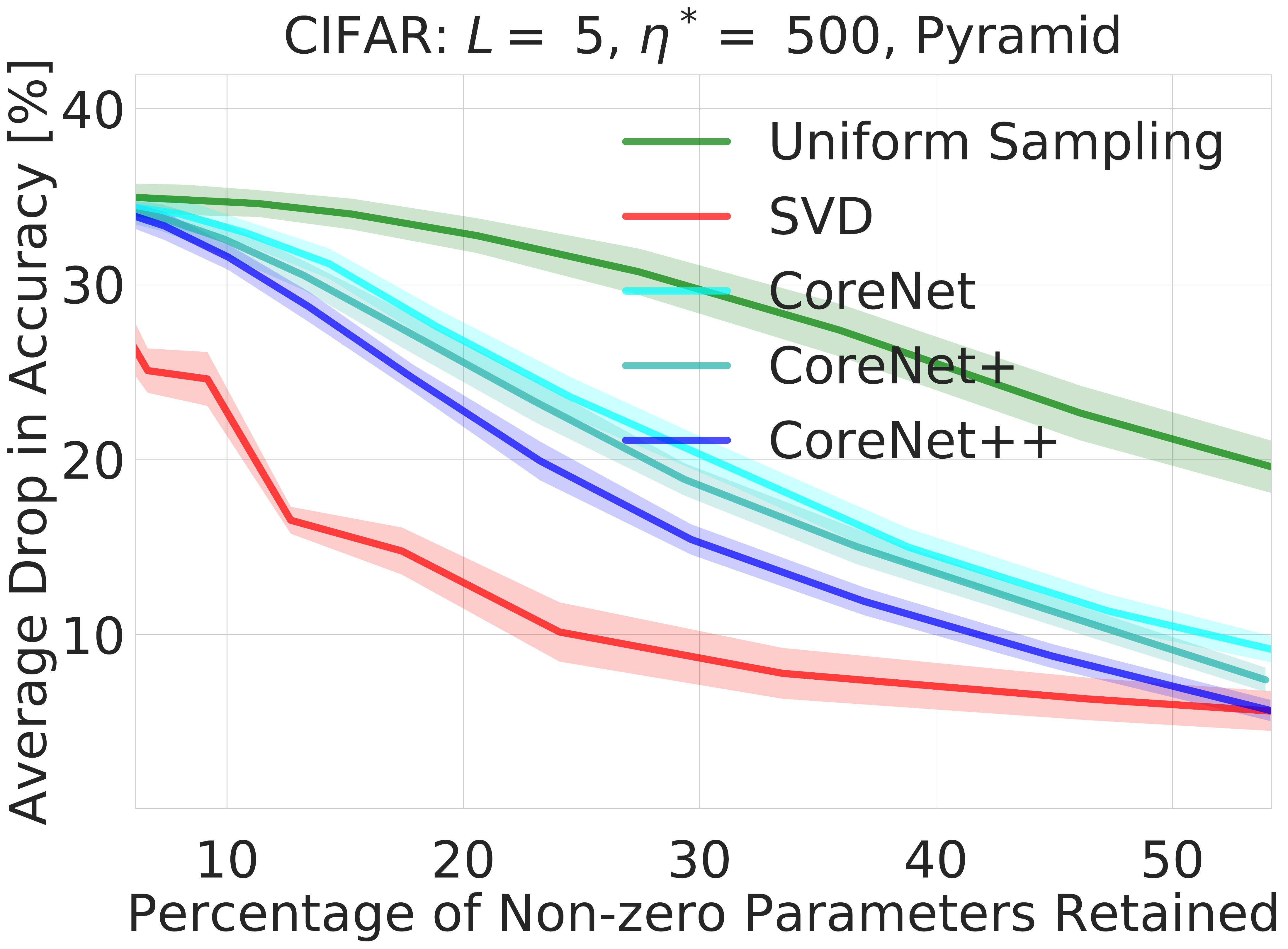} 
\includegraphics[width=0.32\textwidth]{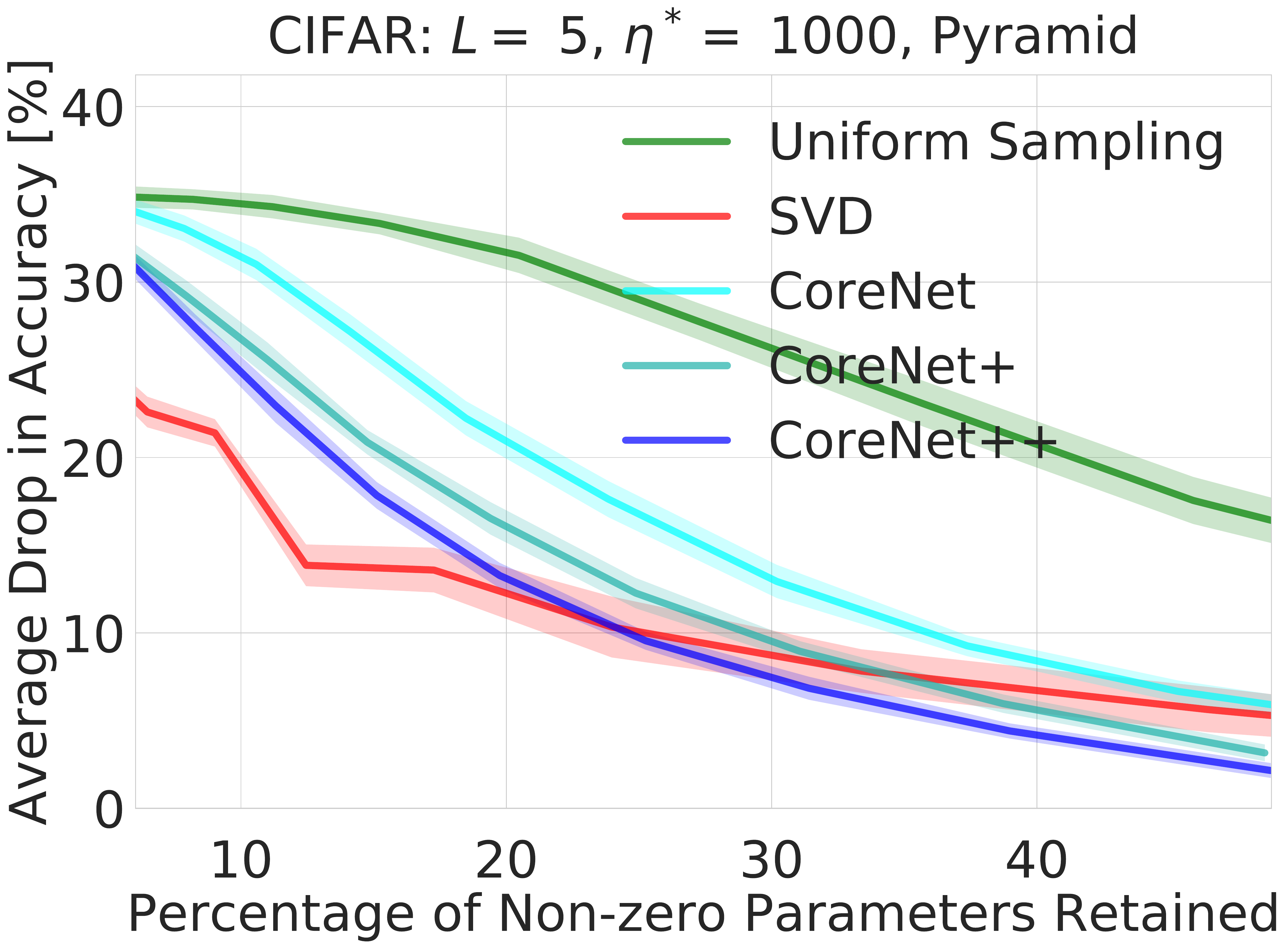}%
\vspace{0.5mm}
\includegraphics[width=0.32\textwidth]{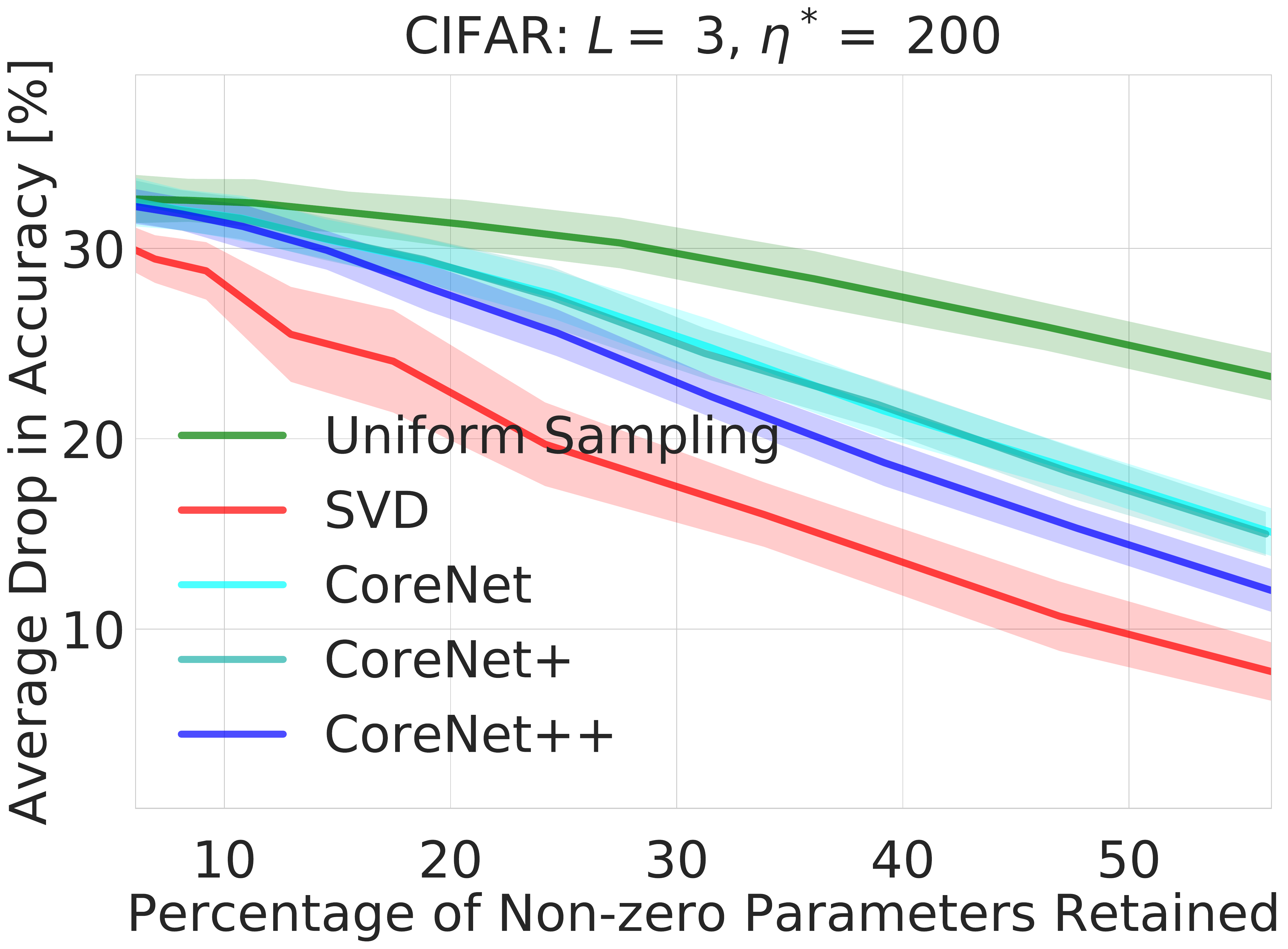} 
\includegraphics[width=0.32\textwidth]{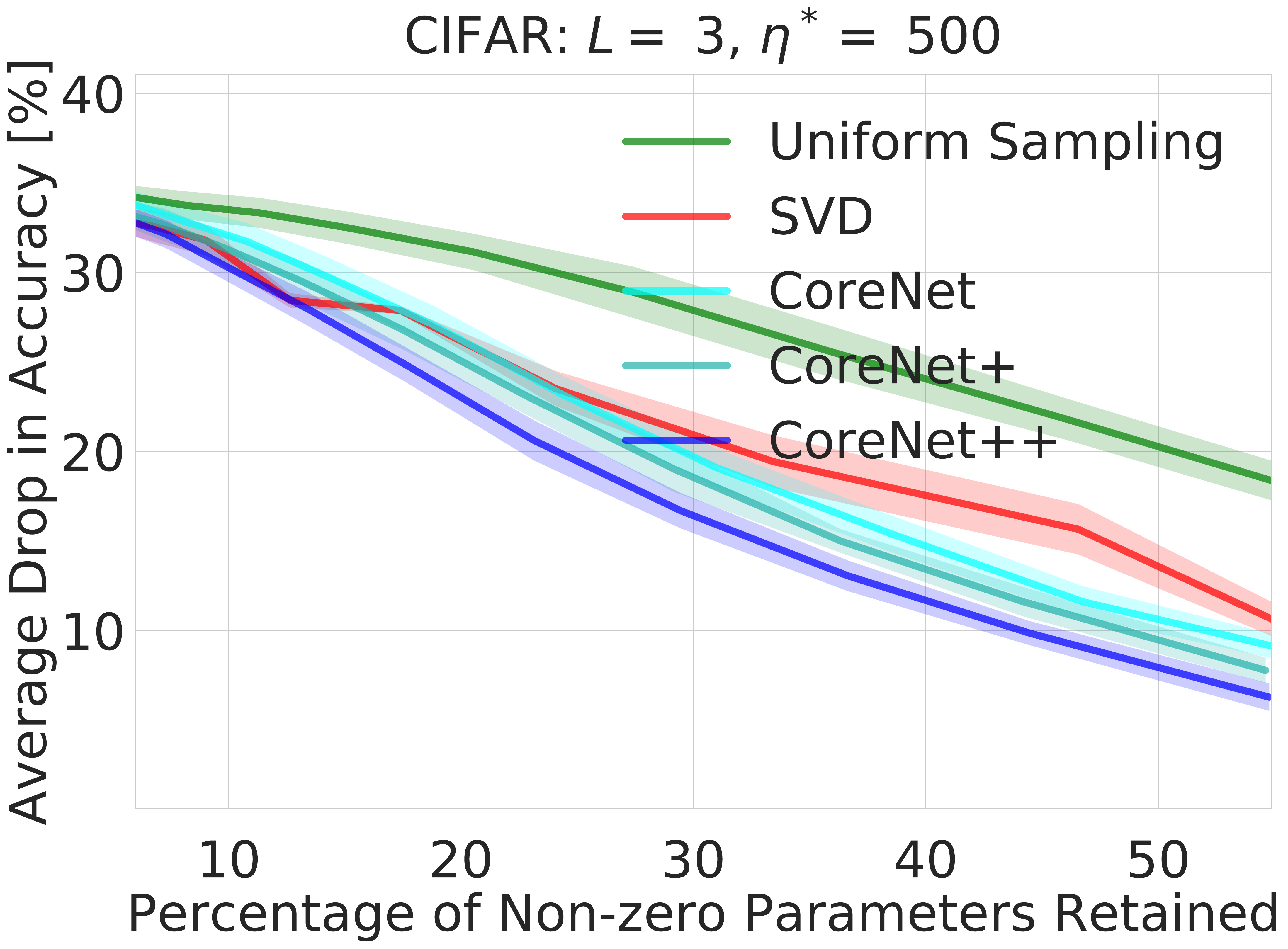}
\includegraphics[width=0.32\textwidth]{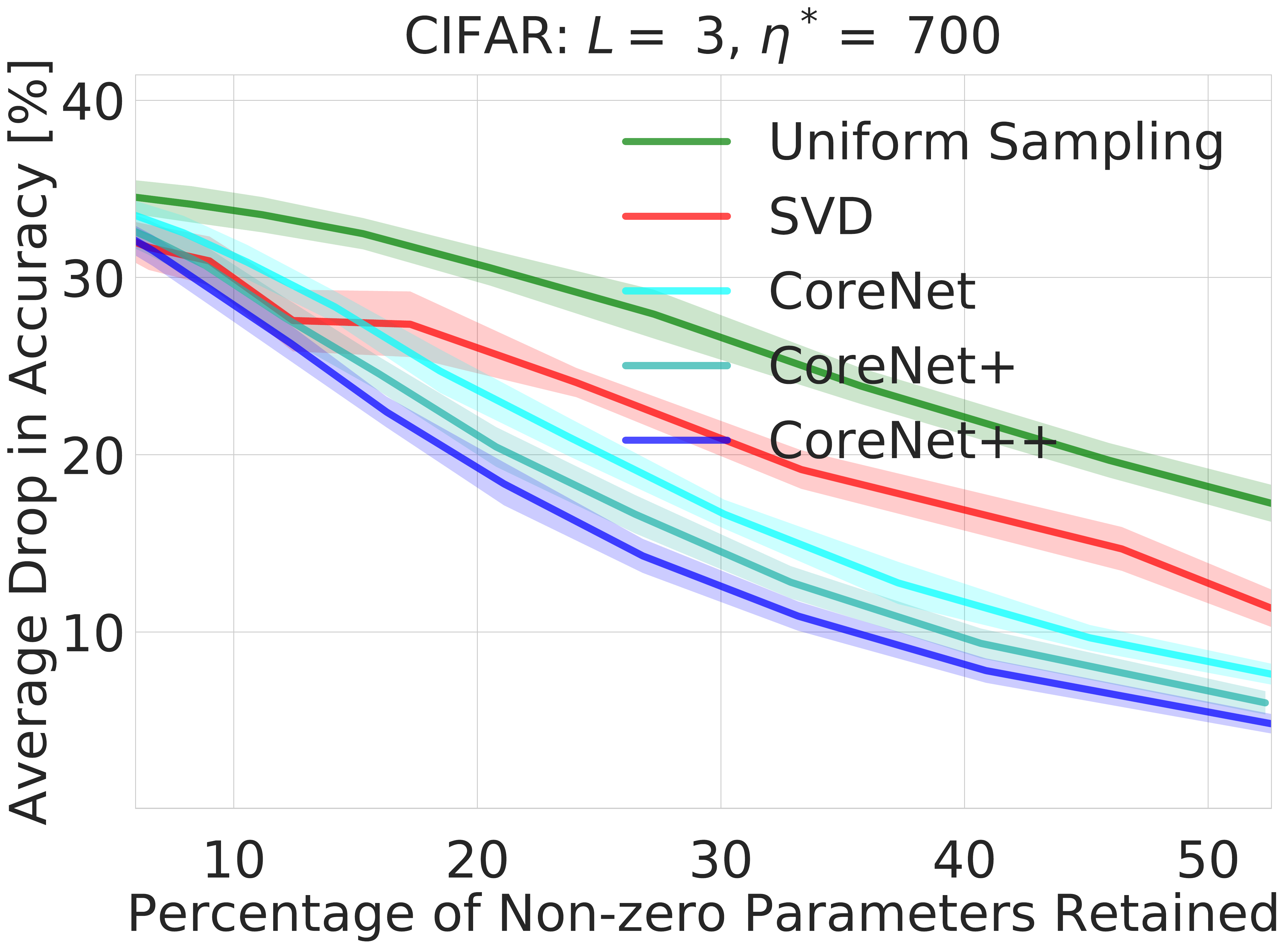}%
\vspace{0.5mm}
\includegraphics[width=0.32\textwidth]{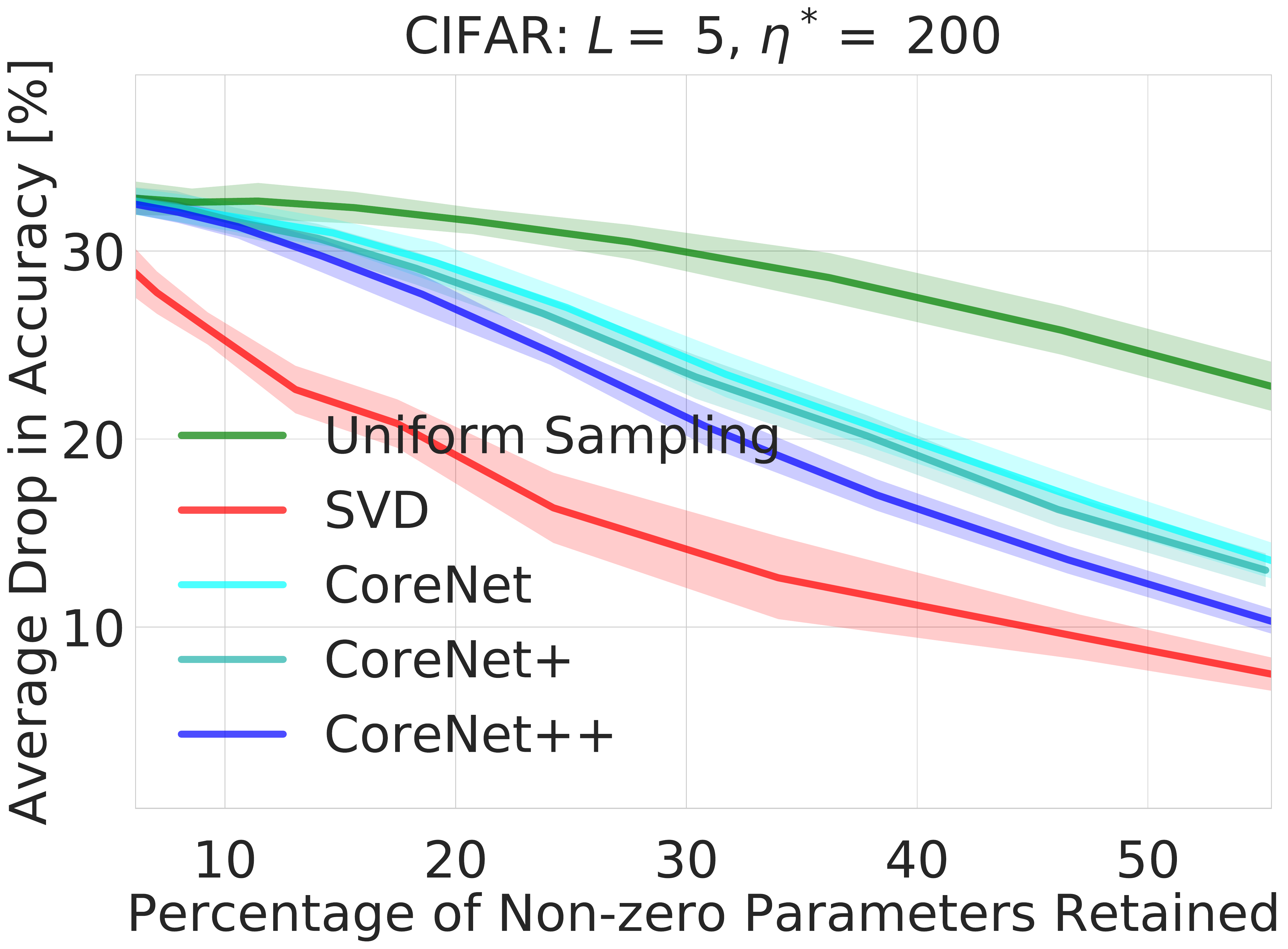}
\includegraphics[width=0.32\textwidth]{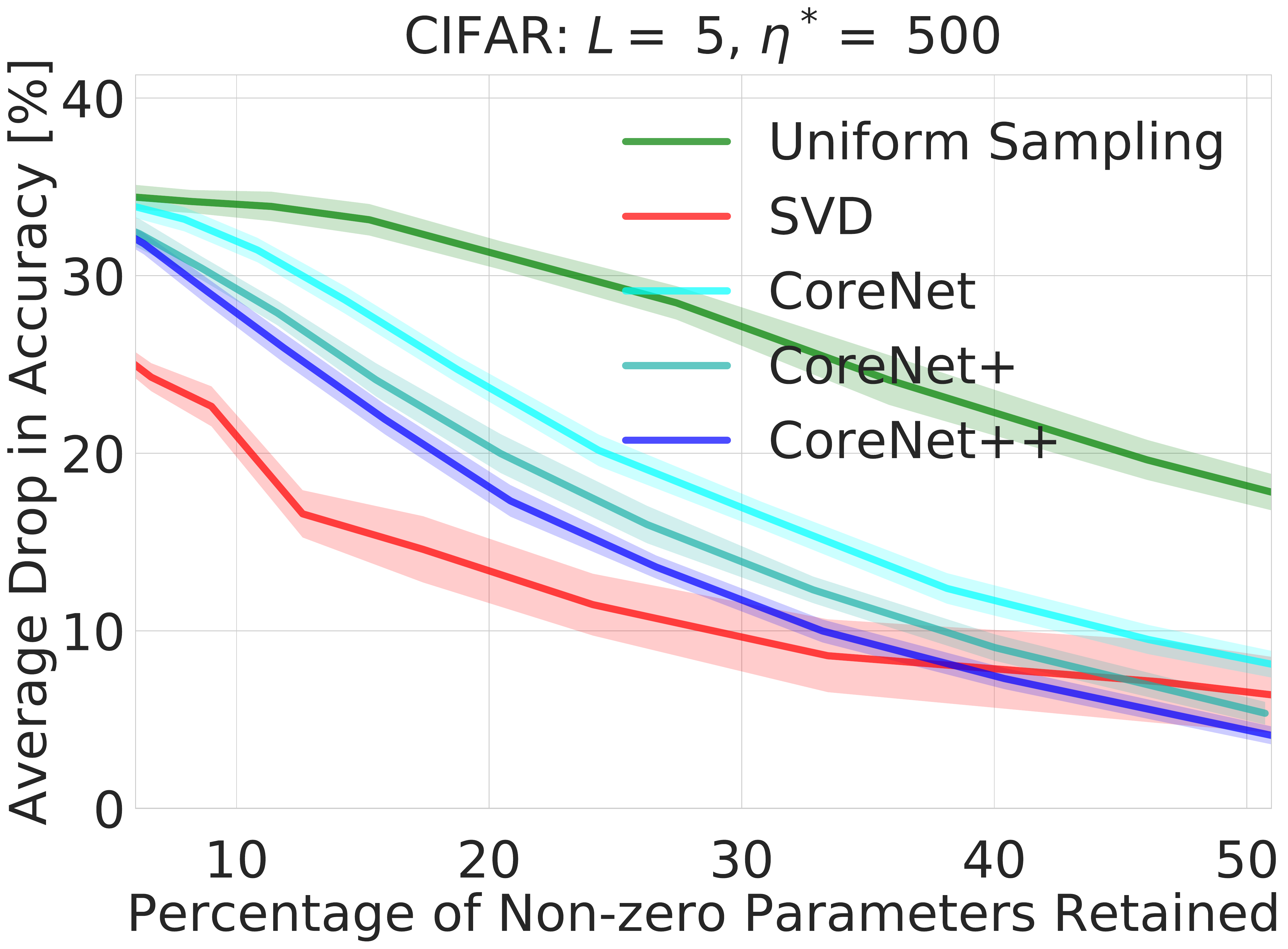}
\includegraphics[width=0.32\textwidth]{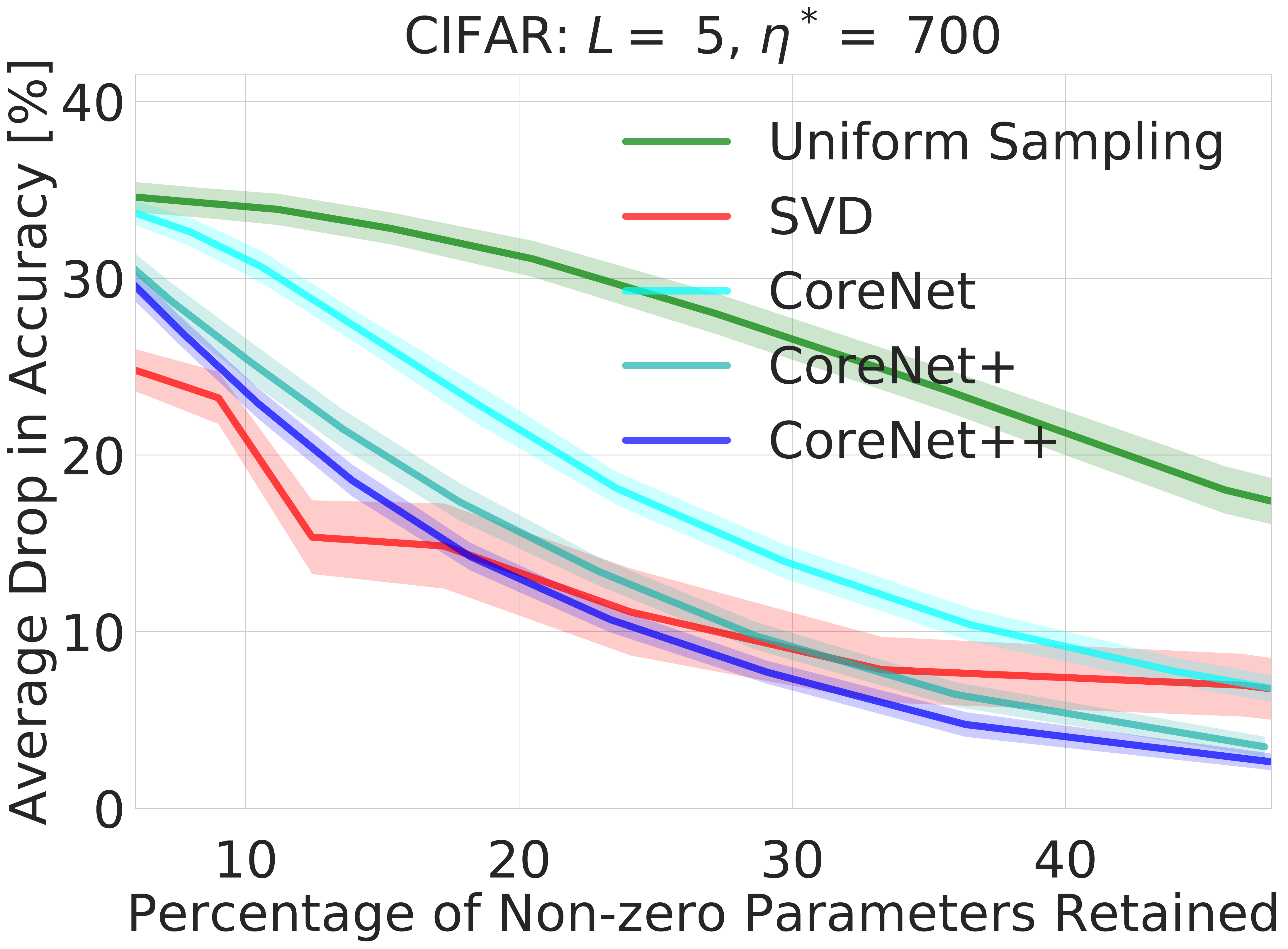}%
\caption{Evaluations against the CIFAR-10 dataset with varying number of hidden layers ($L$) and number of neurons per hidden layer ($\eta^*$). The trend of our algorithm's improved relative performance as the number of parameters increases (previously depicted in Fig.~\ref{fig:mnist_acc}) also holds for the CIFAR-10 data set.}
\label{fig:cifar_acc}
\end{figure}
\begin{figure}[htb!]
\centering
\includegraphics[width=0.32\textwidth]{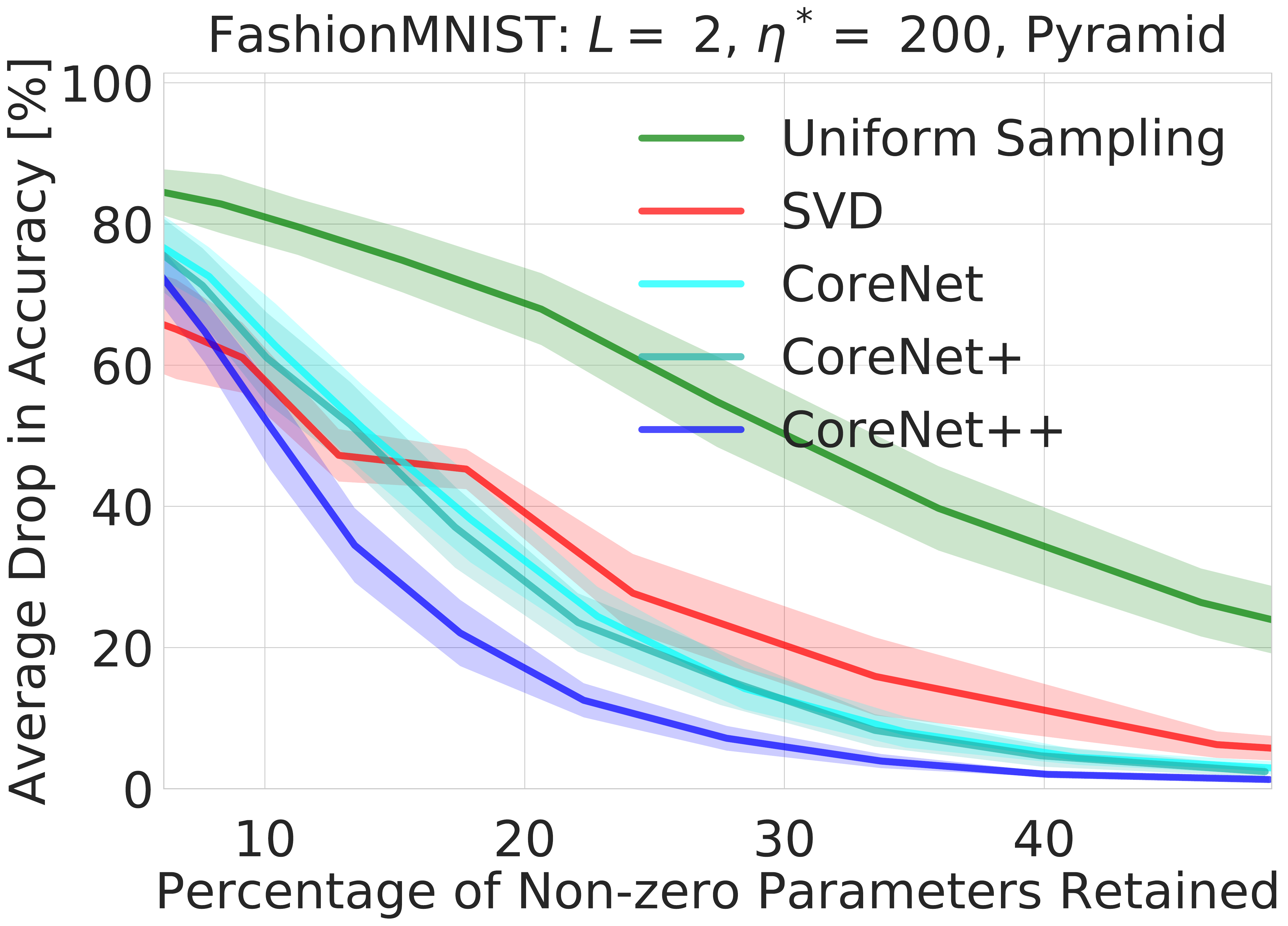} 
\includegraphics[width=0.32\textwidth]{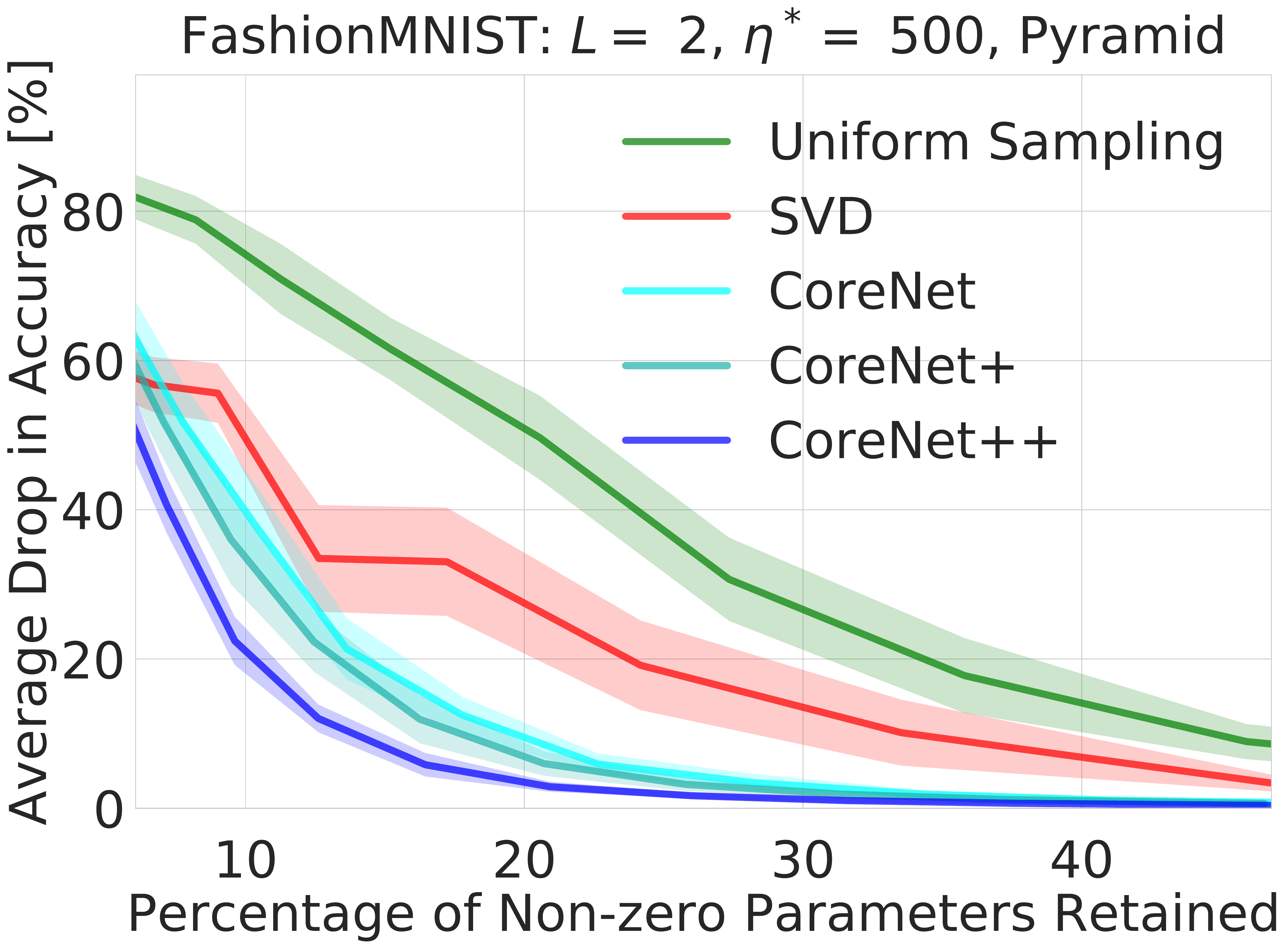} 
\includegraphics[width=0.32\textwidth]{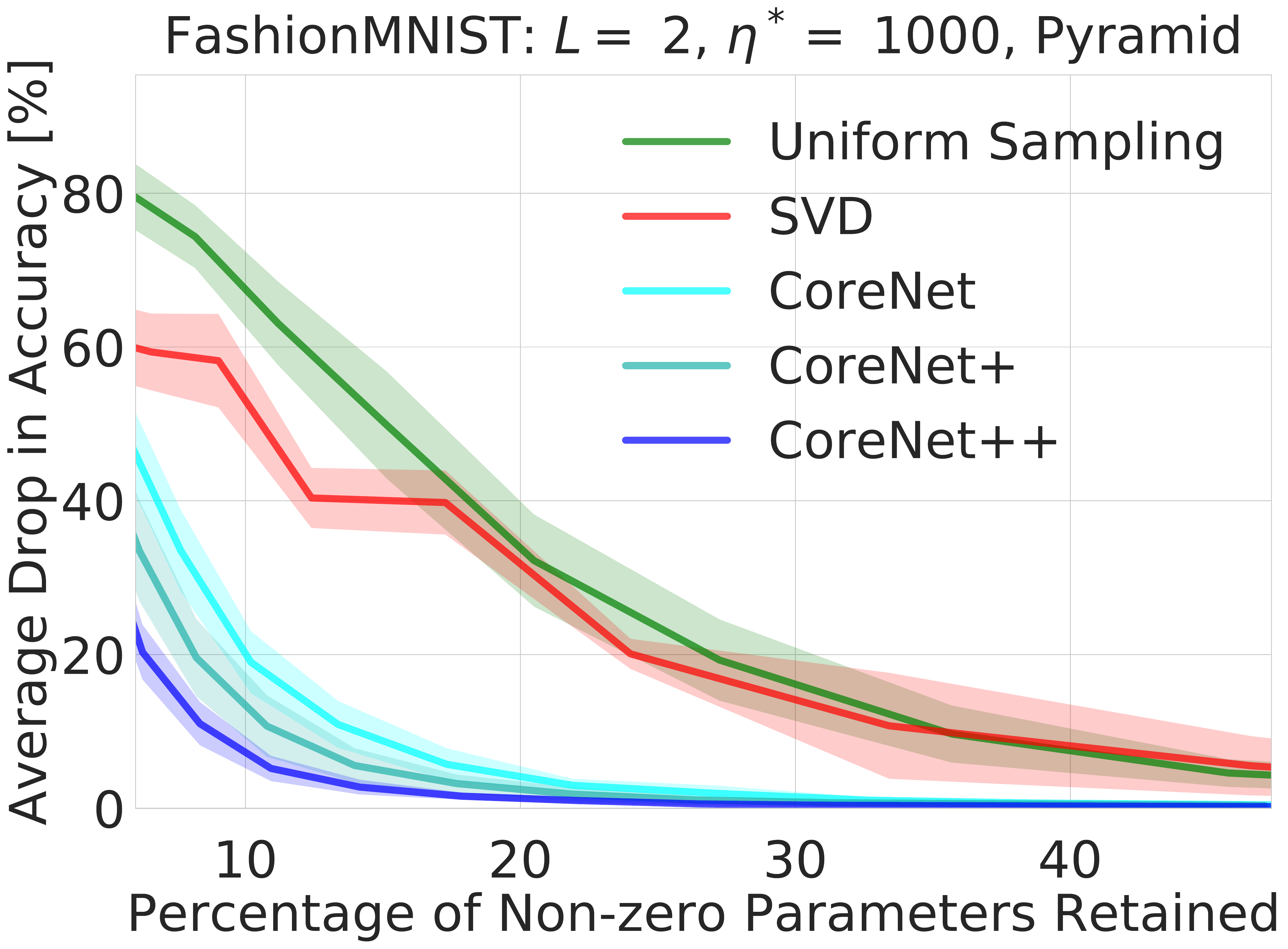}%
\vspace{0.5mm}
\includegraphics[width=0.32\textwidth]{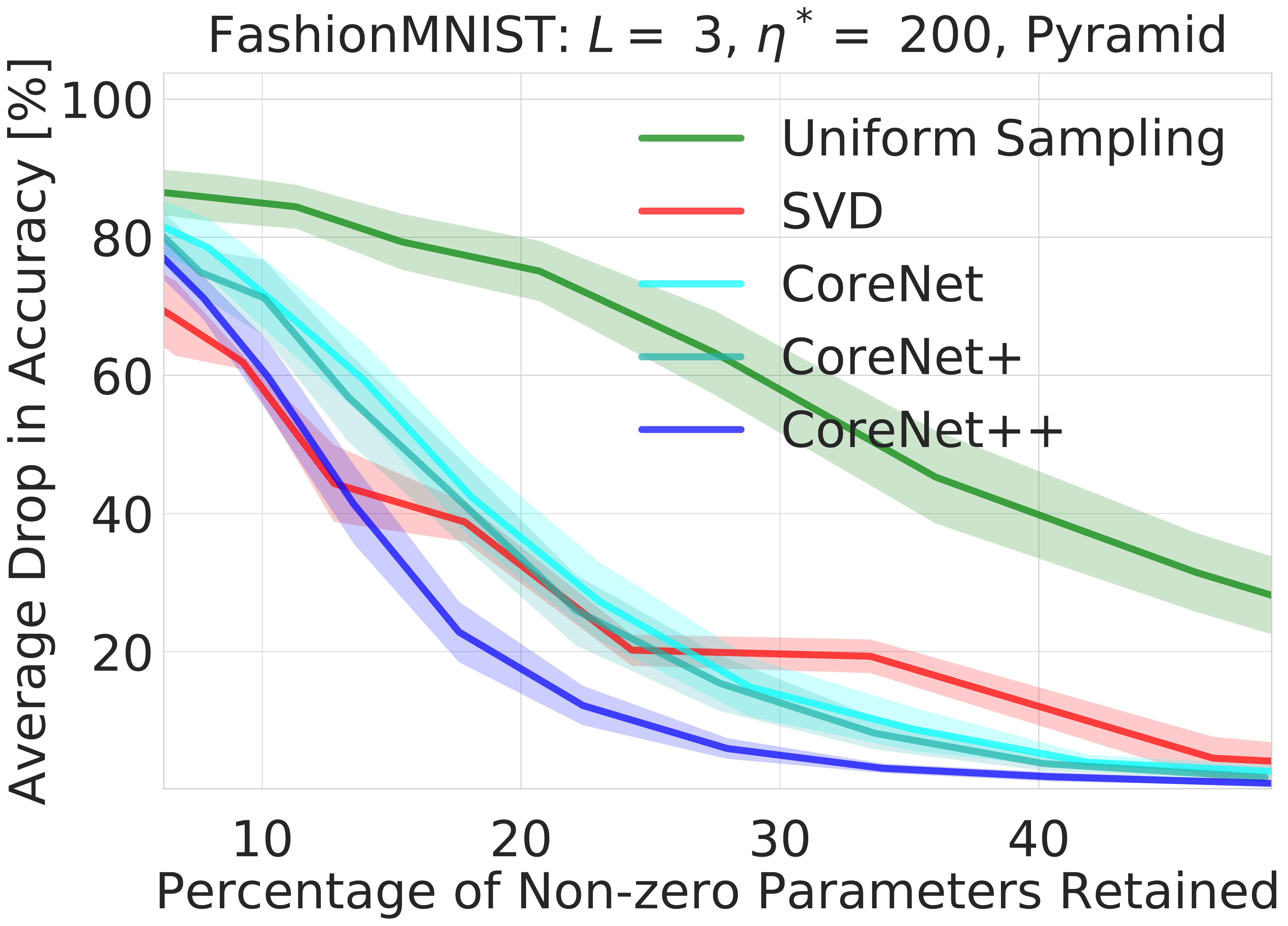} 
\includegraphics[width=0.32\textwidth]{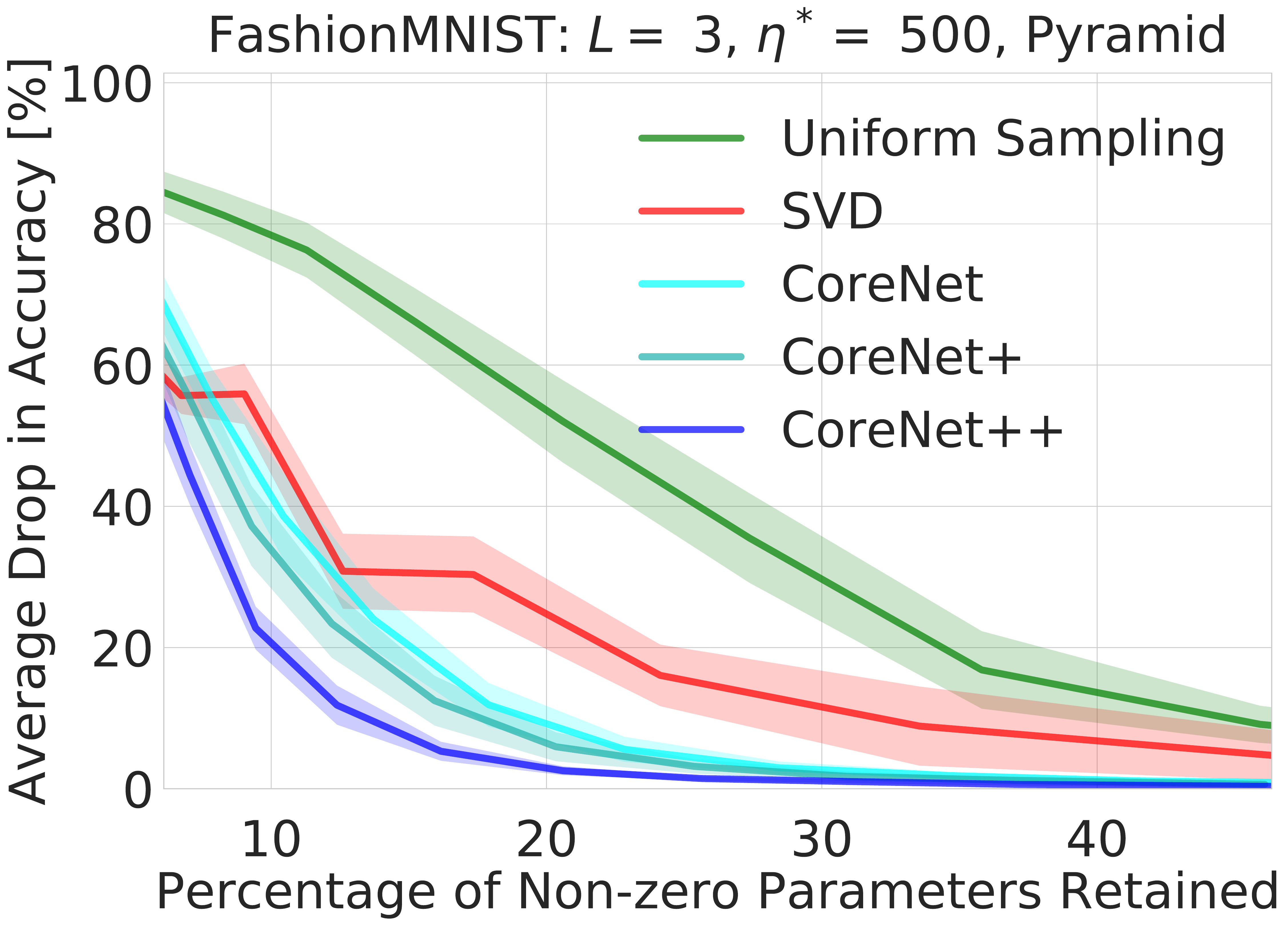} 
\includegraphics[width=0.32\textwidth]{figures/acc/FashionMNIST_l3_h1000_pyramid}%
\vspace{0.5mm}
\includegraphics[width=0.32\textwidth]{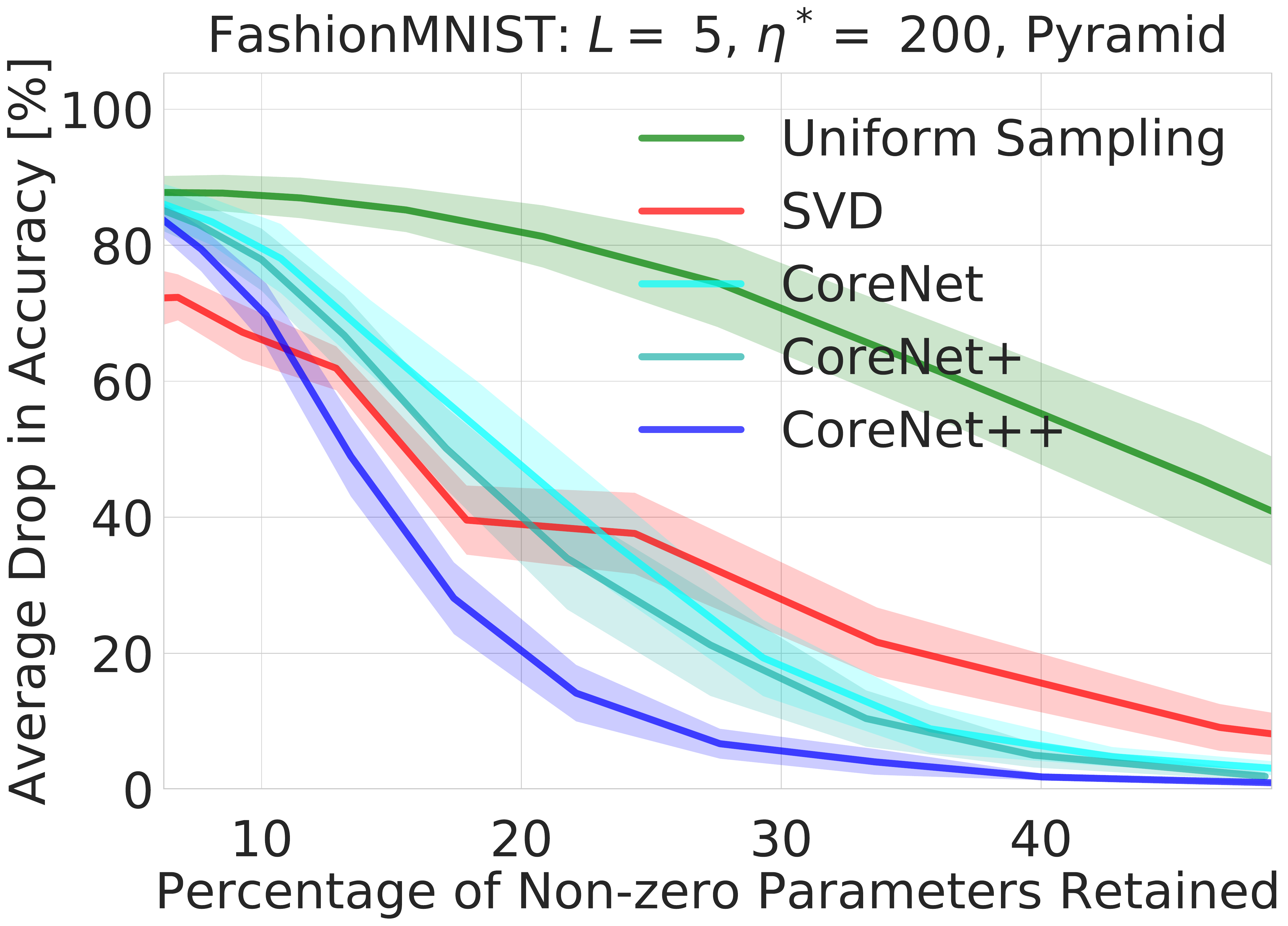} 
\includegraphics[width=0.32\textwidth]{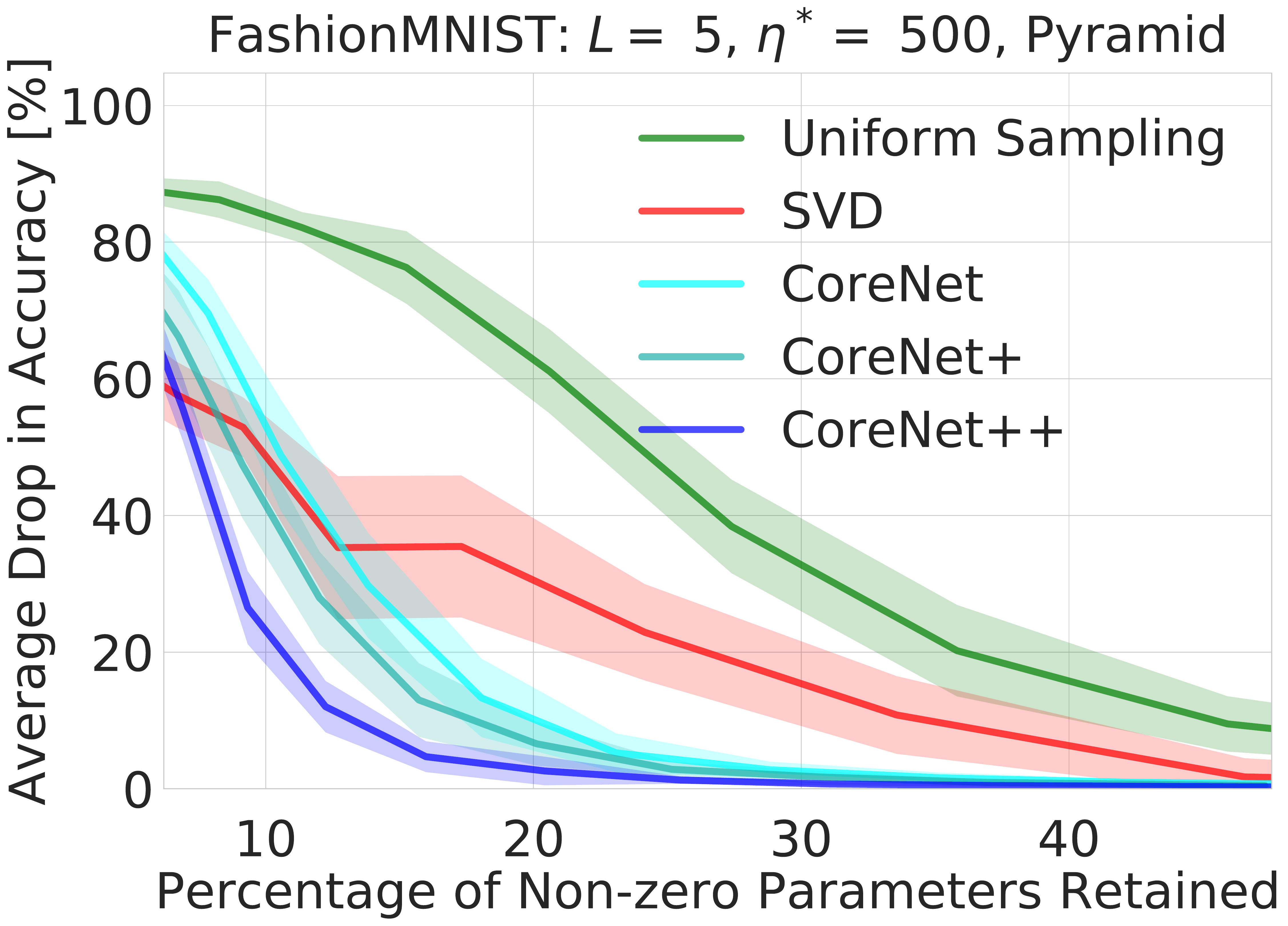} 
\includegraphics[width=0.32\textwidth]{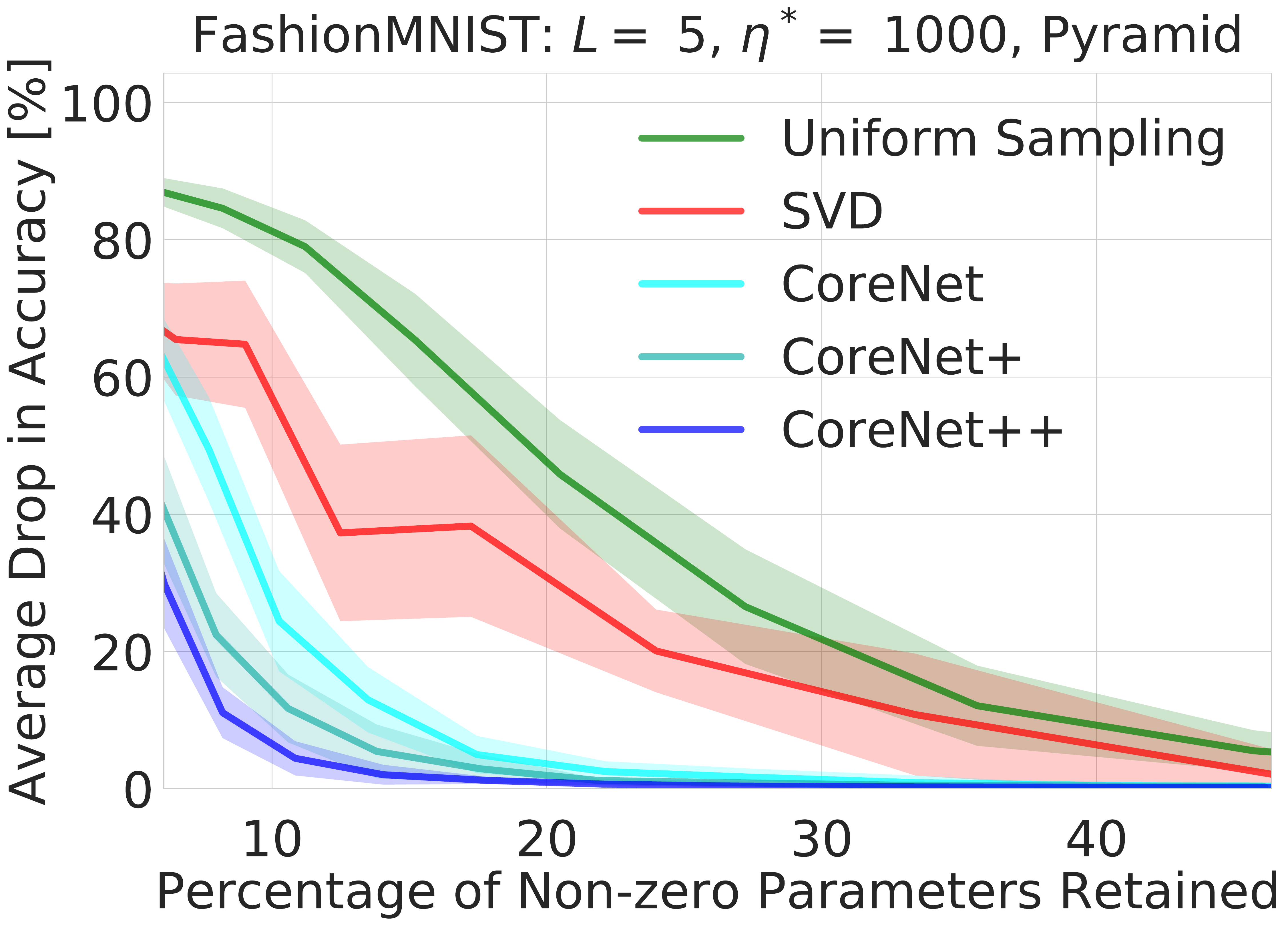}%
\vspace{0.5mm}
\includegraphics[width=0.32\textwidth]{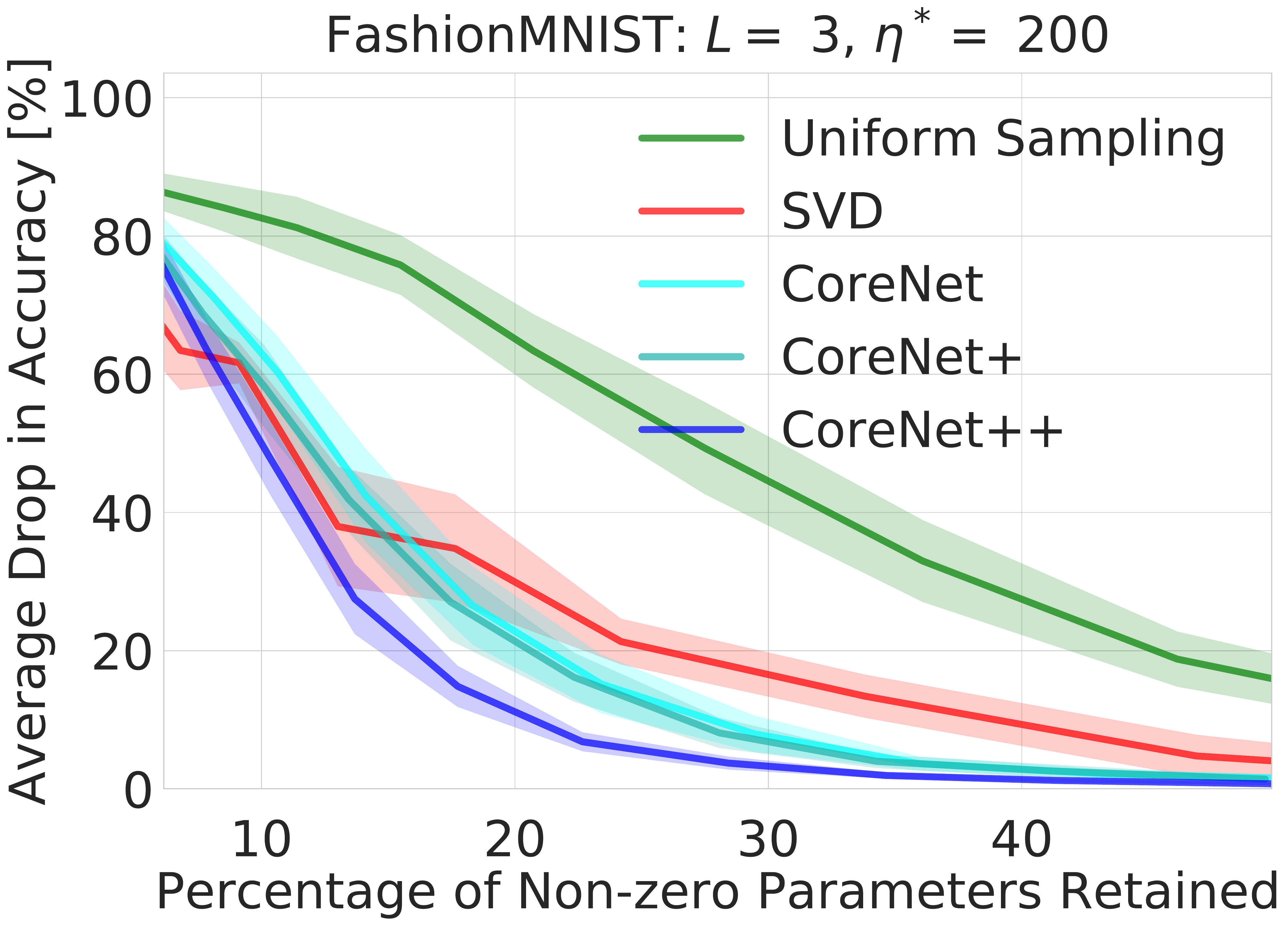} 
\includegraphics[width=0.32\textwidth]{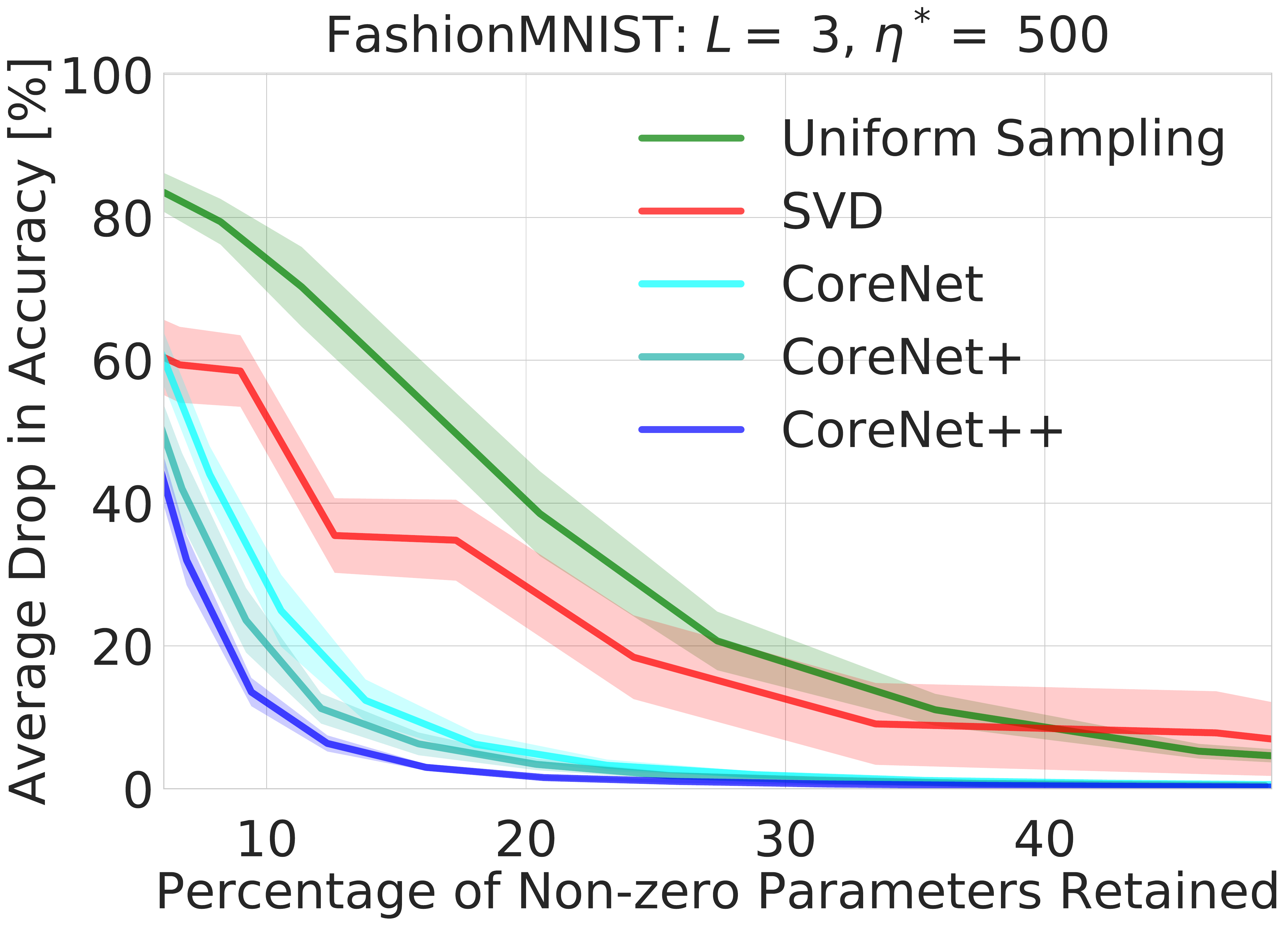}
\includegraphics[width=0.32\textwidth]{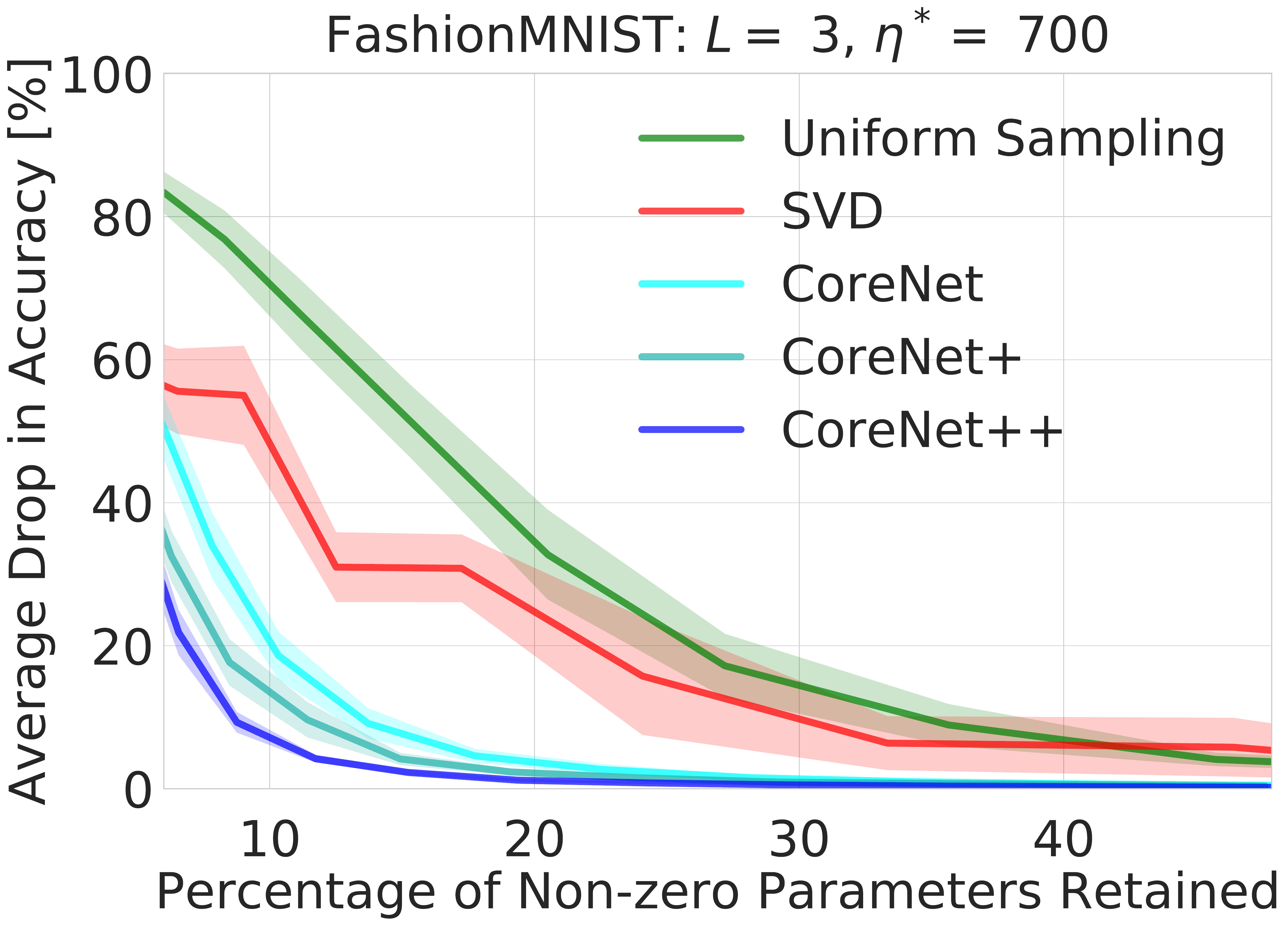}%
\vspace{0.5mm}
\includegraphics[width=0.32\textwidth]{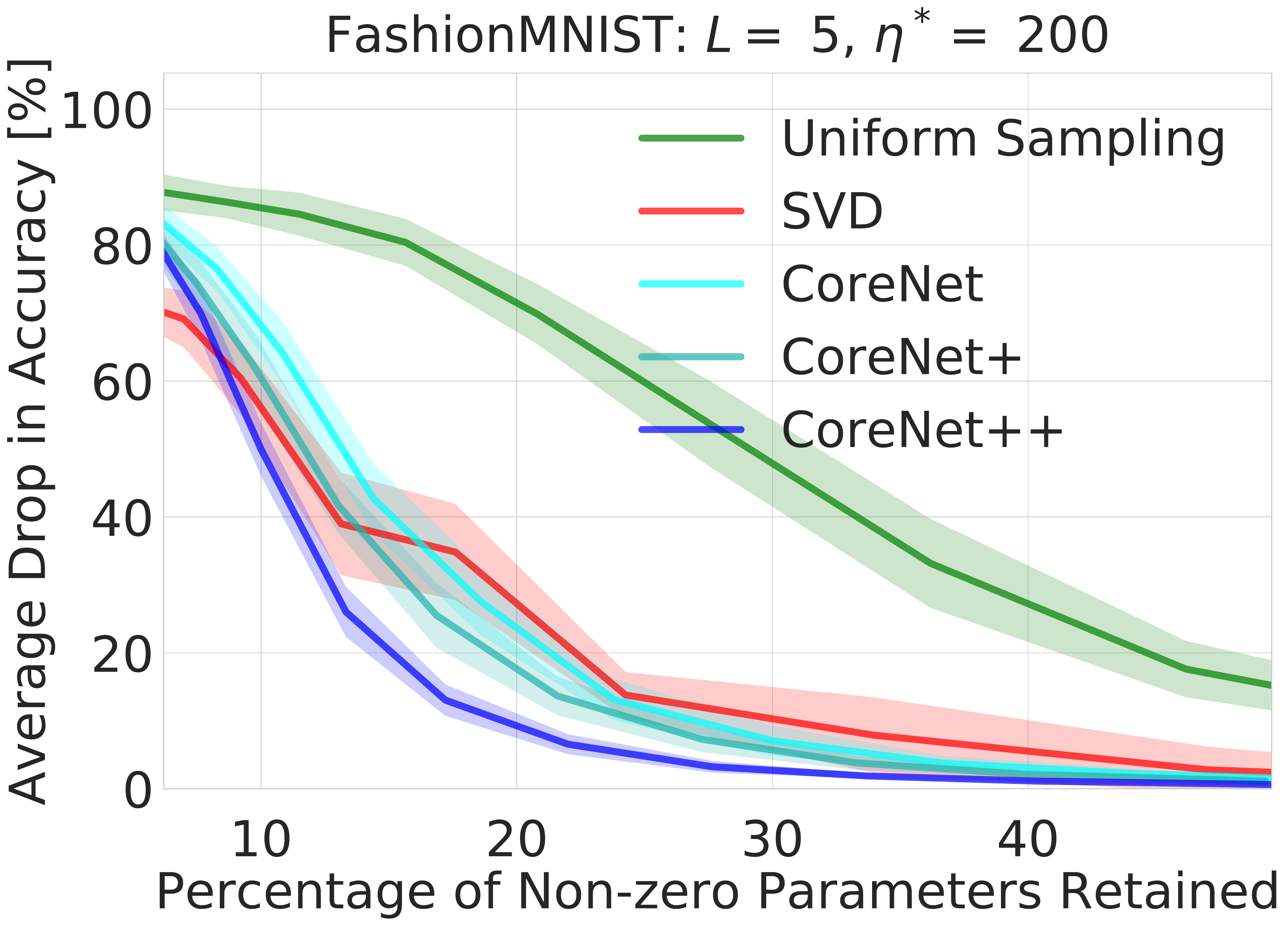}
\includegraphics[width=0.32\textwidth]{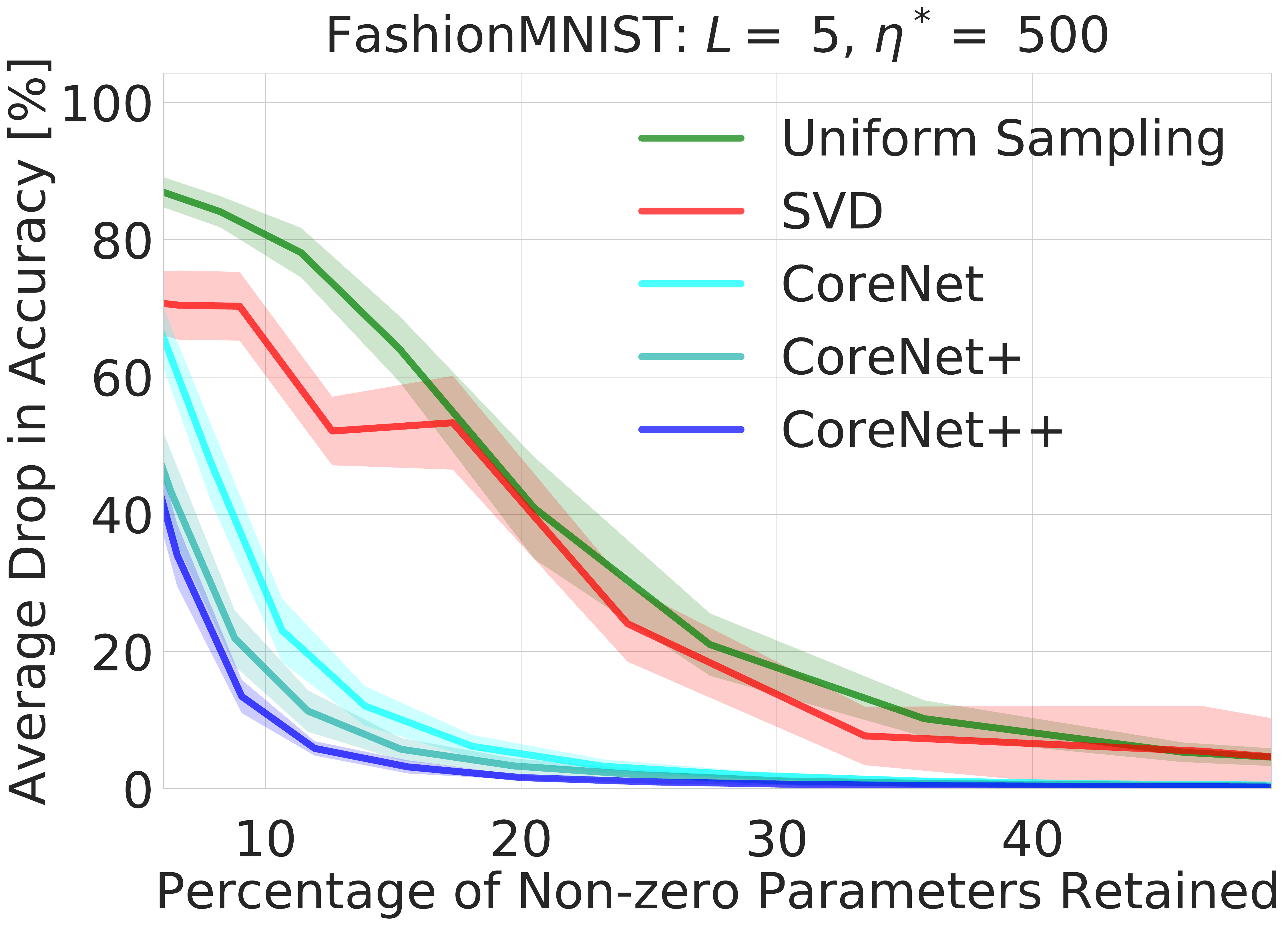}
\includegraphics[width=0.32\textwidth]{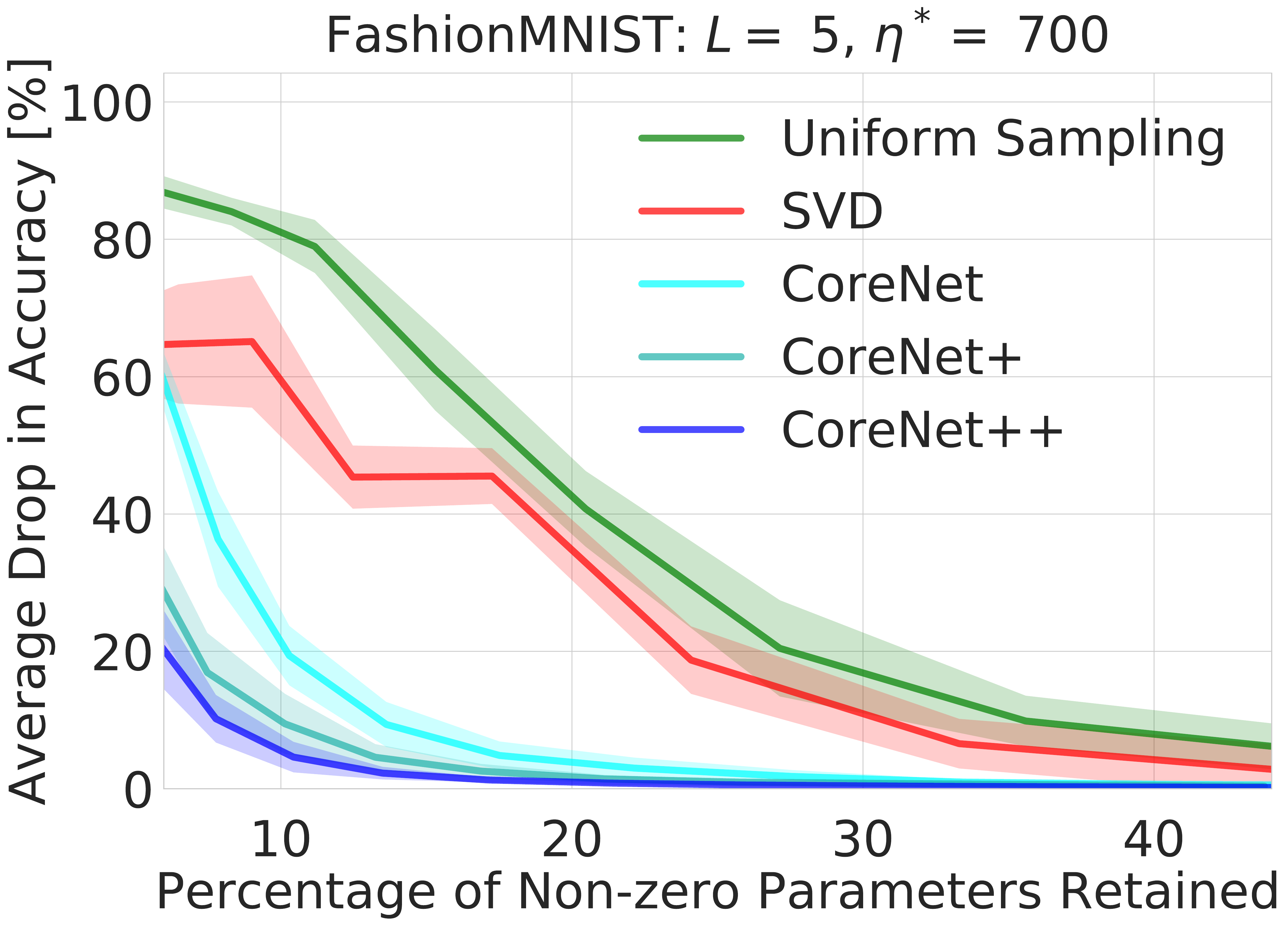}%
\caption{Evaluations against the FashionMNIST dataset with varying number of hidden layers ($L$) and number of neurons per hidden layer ($\eta^*$).}
\label{fig:fashion_acc}
\end{figure}

	\fi
\fi

\end{document}